\crefname{figure}{Fig.}{Figs.}
\Crefname{figure}{Fig.}{Figs.}
\crefname{equation}{Eq.}{Eqs.}
\Crefname{equation}{Eq.}{Eqs.}
\crefname{table}{Tab.}{Tabs.}
\Crefname{table}{Tab.}{Tabs.}
\crefname{section}{Sec.}{Secs.}
\Crefname{section}{Sec.}{Secs.}
\crefname{lemma}{Lem.}{Lems.}
\Crefname{lemma}{Lem.}{Lems.}
\crefname{remark}{Remark}{Rem.}
\Crefname{remark}{Remark}{Rems.}
\crefname{proposition}{Proposition}{Prop.}
\Crefname{proposition}{Proposition}{Props.}
\crefname{definition}{Definition}{Def.}
\Crefname{definition}{Definition}{Defs.}
\crefname{appendix}{App.}{App.}
\newenvironment{qbox}
{\begin{tcolorbox}[enhanced jigsaw, drop shadow=black!50!white,colback=white, width=0.95\linewidth, center, left=2pt,right=2pt,top=1pt,bottom=1pt]}
{\end{tcolorbox}}
\theoremstyle{plain}
\newtheorem{theorem}{Theorem}[section]
\newtheorem{proposition}[theorem]{Proposition}
\newtheorem{lemma}[theorem]{Lemma}
\theoremstyle{definition}
\newtheorem{definetitle}[theorem]{Definition}
\theoremstyle{remark}
\definecolor{myblue}{RGB}{235, 244, 246}
\definecolor{mygreen}{RGB}{234, 240, 225}
\newcommand{\MAE}{\mathsf{MAE}}
\newcommand{\F}{\mathsf{F}}
\newcommand{\SMAE}{\mathsf{SI\text{-}MAE}}
\newcommand{\SF}{\mathsf{SI\text{-}F}}
\newcommand{\SAUC}{\mathsf{SI\text{-}AUC}}
\newcommand{\AUC}{\mathsf{AUC}}
\newcommand{\E}{\mathsf{E}}
\newcommand{\Sm}{\mathsf{S}}
\newcommand{\BCE}{\mathsf{BCE}}
\newcommand{\MSE}{\mathsf{MSE}}
\newcommand{\IOU}{\mathsf{IOU}}
\newcommand{\TP}{\mathsf{TP}}
\newcommand{\FP}{\mathsf{FP}}
\newcommand{\FN}{\mathsf{FN}}
\newcommand{\TN}{\mathsf{TN}}
\newcommand{\TPR}{\mathsf{TPR}}
\newcommand{\FPR}{\mathsf{FPR}}
\newcommand{\FNR}{\mathsf{FNR}}
\newcommand{\SI}{\mathsf{SI}}
\begin{document}

\title{Towards Size-invariant Salient Object Detection: A Generic Evaluation and Optimization Approach}
\author{Shilong~Bao,
        Qianqian~Xu*,~\IEEEmembership{Senior~Member,~IEEE},
        Feiran~Li,
        Boyu~Han,
        Zhiyong~Yang,
        \\ Xiaochun~Cao,~\IEEEmembership{Senior~Member,~IEEE},
        and~Qingming~Huang*,~\IEEEmembership{Fellow,~IEEE}%
        \IEEEcompsocitemizethanks{%
            \IEEEcompsocthanksitem Shilong Bao and Zhiyong Yang are with the School of Computer Science and Technology,
            University of Chinese Academy of Sciences, Beijing 101408, China (e-mail: \texttt{\{baoshilong,yangzhiyong21\}@ucas.ac.cn}).
            \IEEEcompsocthanksitem Qianqian Xu is with the State Key Laboratory of AI Safety, Institute of Computing Technology,
            Chinese Academy of Sciences, Beijing 100190, China, and also with Peng Cheng Laboratory, Shenzhen 518055, China
            (e-mail: \texttt{xuqianqian@ict.ac.cn}).
            \IEEEcompsocthanksitem Feiran Li is with State Key Laboratory of Information Security (SKLOIS),
            Institute of Information Engineering, Chinese Academy of Sciences, Beijing 100093, China, and also with
            School of Cyber Security, University of Chinese Academy of Sciences, Beijing 100049, China
            (e-mail: \texttt{lifeiran@iie.ac.cn}).
            \IEEEcompsocthanksitem Boyu Han is with the State Key Laboratory of AI Safety, Institute of Computing Technology,
            Chinese Academy of Sciences, Beijing 100190, China, and with the School of Computer Science and Technology,
            University of Chinese Academy of Sciences, Beijing 101408, China (e-mail: \texttt{hanboyu23z@ict.ac.cn}).
            \IEEEcompsocthanksitem Xiaochun Cao is with School of Cyber Science and Technology, Shenzhen Campus of
            Sun Yat-sen University, Shenzhen 518107, China (e-mail: \texttt{caoxiaochun@mail.sysu.edu.cn}).
            \IEEEcompsocthanksitem Qingming Huang is with the School of Computer Science and Technology,
            University of Chinese Academy of Sciences, Beijing 101408, China, and also with the State Key Laboratory of AI Safety,
            Institute of Computing Technology, Chinese Academy of Sciences, Beijing 100190, China
            (e-mail: \texttt{qmhuang@ucas.ac.cn}).
            \IEEEcompsocthanksitem *~Corresponding authors.
        }%
    }
	\markboth{IEEE TRANSACTIONS ON PATTERN ANALYSIS AND MACHINE INTELLIGENCE}%
	{Shell \MakeLowercase{\textit{et al.}}: Bare Demo of IEEEtran.cls for Computer Society Journals}		
    
\IEEEtitleabstractindextext{%
\begin{abstract}
\justifying
This paper investigates a fundamental yet underexplored issue in Salient Object Detection (SOD): the size-invariant property for evaluation protocols, particularly in scenarios when multiple salient objects of significantly different sizes appear within a single image. We first present a novel perspective to expose the inherent size sensitivity of existing widely used SOD metrics. Through careful theoretical derivations, we show that the evaluation outcome of an image under current SOD metrics can be essentially decomposed into a sum of several separable terms, with the contribution of each term being directly proportional to its corresponding region size. Consequently, the prediction errors would be dominated by the larger regions, while smaller yet potentially more semantically important objects are often overlooked, leading to biased performance assessments and practical degradation. To address this challenge, a generic Size-Invariant Evaluation (SIEva) framework is proposed. The core idea is to evaluate each separable component individually and then aggregate the results, thereby effectively mitigating the impact of size imbalance across objects. Building upon this, we further develop a dedicated optimization framework (SIOpt), which adheres to the size-invariant principle and significantly enhances the detection of salient objects across a broad range of sizes. Notably, SIOpt is model-agnostic and can be seamlessly integrated with a wide range of SOD backbones. Theoretically, we also present generalization analysis of SOD methods and provide evidence supporting the validity of our new evaluation protocols. Finally, comprehensive experiments speak to the efficacy of our proposed approach. The code is available at \href{https://github.com/Ferry-Li/SI-SOD}{https://github.com/Ferry-Li/SI-SOD}.



\end{abstract}
\begin{IEEEkeywords}
			Salient Object Detection, Size-Invariant Evaluation, AUC Optimization, Learning Theory
\end{IEEEkeywords}}

	\maketitle
	\IEEEdisplaynontitleabstractindextext
	\IEEEpeerreviewmaketitle

\section{Introduction}\label{pami:intro}
\IEEEPARstart{S}alient object detection (SOD), also known as salient object segmentation, is a fundamental computer vision task that emulates human visual attention mechanisms to identify the most visually salient objects or regions within complex scenes \cite{IDSurvey_2022, Borji_2019_survey}. To do this, a SOD model typically takes an image as input and generates a saliency map quantifying the salient degree of each pixel. Unlike other segmentation tasks that require fine-grained object classification, SOD is inherently class-agnostic, making it highly versatile for a wide range of downstream tasks, including image segmentation \cite{AUCSeg,ji2023multispectral}, object recognition \cite{DBLP:journals/pami/TanLLYYHO23,mIOU}, medical analysis \cite{DBLP:conf/iccv/Yuan0SY21, DBLP:journals/sensors/LiHWYYL23} and adversarial learning \cite{DBLP:journals/corr/abs-2301-13862,DBLP:journals/tits/ZhengLGZSWCZW24}. 

Over the past few years, SOD has witnessed remarkable progress \cite{tracking_2009,scene_2014,tang2017Tri,li2019deep,zhang2020causal,jiang2023Hierarchical,tang2024remote}, largely driven by the substantial advancements in deep learning. In general, similar to other machine learning tasks, the success of SOD fundamentally hinges on two critical factors: \textbf{the constant evolution of model architectures} and \textbf{the principled design of evaluation and optimization strategies}. The former plays a pivotal role in capturing diverse object feature patterns, evolving from early handcrafted feature-based methods \cite{tracking_2009} to modern deep-based approaches. Notable architectures include Convolutional Neural Networks (CNNs) \cite{EDN,Luo_2017_CVPR,MENet,Ma_2021_AAAI,U-Net} and Transformer \cite{zhang_2021_nips,SwinNet,DBLP:journals/tmm/WuHLX24,VST2}, both of which have demonstrated promising SOD performance. Moreover, given the strong generalization ability of large-scale pretrained foundation models in computer vision \cite{SAM}, integrating these models into SOD has emerged as a promising research direction, with SAM-based adaptations \cite{SAM1, SAMAd, SAM3} gaining increasing attention. The second factor determines the selection and deployment of the best models for practical applications. Generally, a well-performing SOD model should simultaneously embrace a high \text{True Positive Rate ($\TPR$)} and a low \text{False Positive Rate ($\FPR$)} \cite{Borji_2019_survey,pAUC}. To facilitate a more comprehensive assessment of SOD models, various evaluation metrics have been explored \cite{chen_2021_ijcv,sun_2022_ijcv}, spanning pixel-wise errors (e.g., Mean Absolute Error ($\MAE$)), ranking-based measures (e.g., Area Under the ROC Curve ($\AUC$) \cite{MAUC,AUC_eval,AUC_rank}), and region-aware metrics (e.g., F-measure \cite{F_measure}, E-measure \cite{Emeasure}, and S-measure \cite{S_measure1,S_measure2}). Meanwhile, corresponding loss functions~\cite{chen_2021_ijcv,sun_2022_ijcv} are increasingly integrated into end-to-end training frameworks to enhance SOD performance concerning these metrics.


Despite significant advances, this paper identifies a fundamental limitation in existing SOD evaluation metrics—namely, their \textbf{inherent sensitivity to object size}. As illustrated in the motivating study in \cref{fig:msod_num_area}, we can see that real-world SOD scenarios typically involve \textbf{multiple salient objects} (\cref{fig:msod_num}) with \textbf{diverse sizes} (\cref{fig:msod_area}).  In such cases, existing evaluation methods often \ul{struggle to detect small yet semantically crucial objects}, as prediction errors are proportionately influenced by larger regions. For example, in \cref{fig:eg}, traditional $\MAE$ scores tend to favor predictions that capture large objects while entirely missing smaller ones (see \textcolor[rgb]{0.72, 0.42, 0.94}{\textbf{Example 2}}). In contrast, a prediction (the second-best in the first row in \textcolor[rgb]{0.72, 0.42, 0.94}{\textbf{Example 2}}) that successfully covers both large and small objects may receive a worse $\MAE$, despite being more consistent with human perception. This size-related bias can lead to misleading model evaluations and degraded performance in practical applications, particularly in applications where small objects are critical (e.g., traffic lights in automatic driving). Motivated by this issue, this paper centers on the following key  question: 

\begin{figure}[t]
\centering
\subfigure[Average size of objects]{   
\begin{minipage}{0.47\linewidth}
\includegraphics[width=\linewidth]{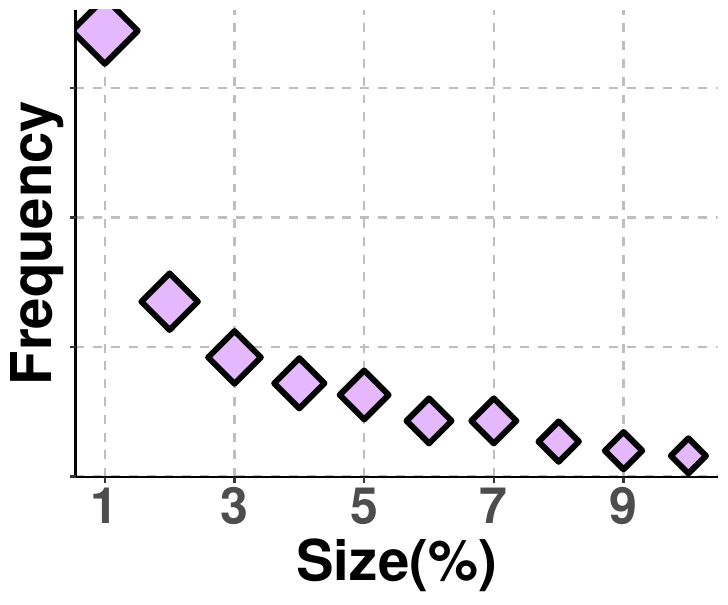}  
\label{fig:msod_area}
\end{minipage}
}
\subfigure[Average number of objects]{   
\begin{minipage}{0.47\linewidth}
\includegraphics[width=\linewidth]{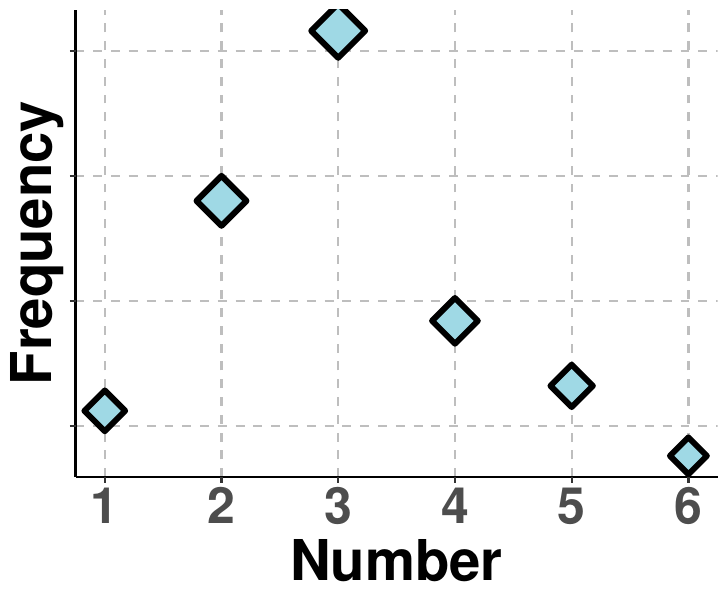}  
\label{fig:msod_num}
\end{minipage}
}
\caption{Statistics on dataset MSOD. \cref{fig:msod_area}: The $x$-axis indicates object size ranges, where Size(\%) refers to the proportion of an object's area (i.e., each connected region as defined in Sec.~\cref{principles of SI_eval}) relative to the total image area. Each positive integer $k$ on the $x$-axis represents the number of objects whose size ratio falls within the interval $[(k-1)\%, k\%]$. This figure highlights small objects ($1 \leq k \leq 10$) to clearly illustrate the motivation behind our study.
\cref{fig:msod_num}: Depicts the average number of objects frequently present in each image, revealing that real-world SOD scenarios typically involve multiple salient objects. }
\label{fig:msod_num_area}
\end{figure}

\begin{qbox}
\begin{center}
\textbf{\textit{How to develop an effective criterion to accommodate the size-variant challenge in SOD?}}
\end{center}
\end{qbox}
\noindent \textbf{Contributions.} In search of an answer, this paper begins with a systematic analysis of current SOD performance measures and presents a novel unified framework revealing the underlying causes of their size-sensitive limitations. To be specific, for a commonly used SOD criterion, we derive that its evaluation concerning an image $\boldsymbol{X}$ could be expressed as a weighted (denoted by $\color{blue}{\lambda(\boldsymbol{X}_k})$) sum of several independent components, where each weighted term $\color{blue}{\lambda(\boldsymbol{X}_k})$ is \textbf{positively correlated with the size} of the corresponding part. This analysis explains why current SOD evaluation protocols produce an inductive bias towards larger objects, and also provides a "golden standard" to ameliorate the reliability and effectiveness of these metrics. Motivated by this, we then propose a simple but effective paradigm for Size-Invariant SOD evaluation (SIEva). The critical idea is to \ul{modify the size-related term $\color{blue}{\lambda(\boldsymbol{X}_k})$ into a size-invariant constant}, ensuring equal consideration for each salient object regardless of size. As demonstrated in \cref{fig:eg}, our proposed metric $\SMAE$ successfully prioritizes predictions that preserve both large and small salient regions, offering a more perceptually aligned and reliable evaluation.

\begin{figure}[!t]
    \centering
    \includegraphics[width=0.98\linewidth]{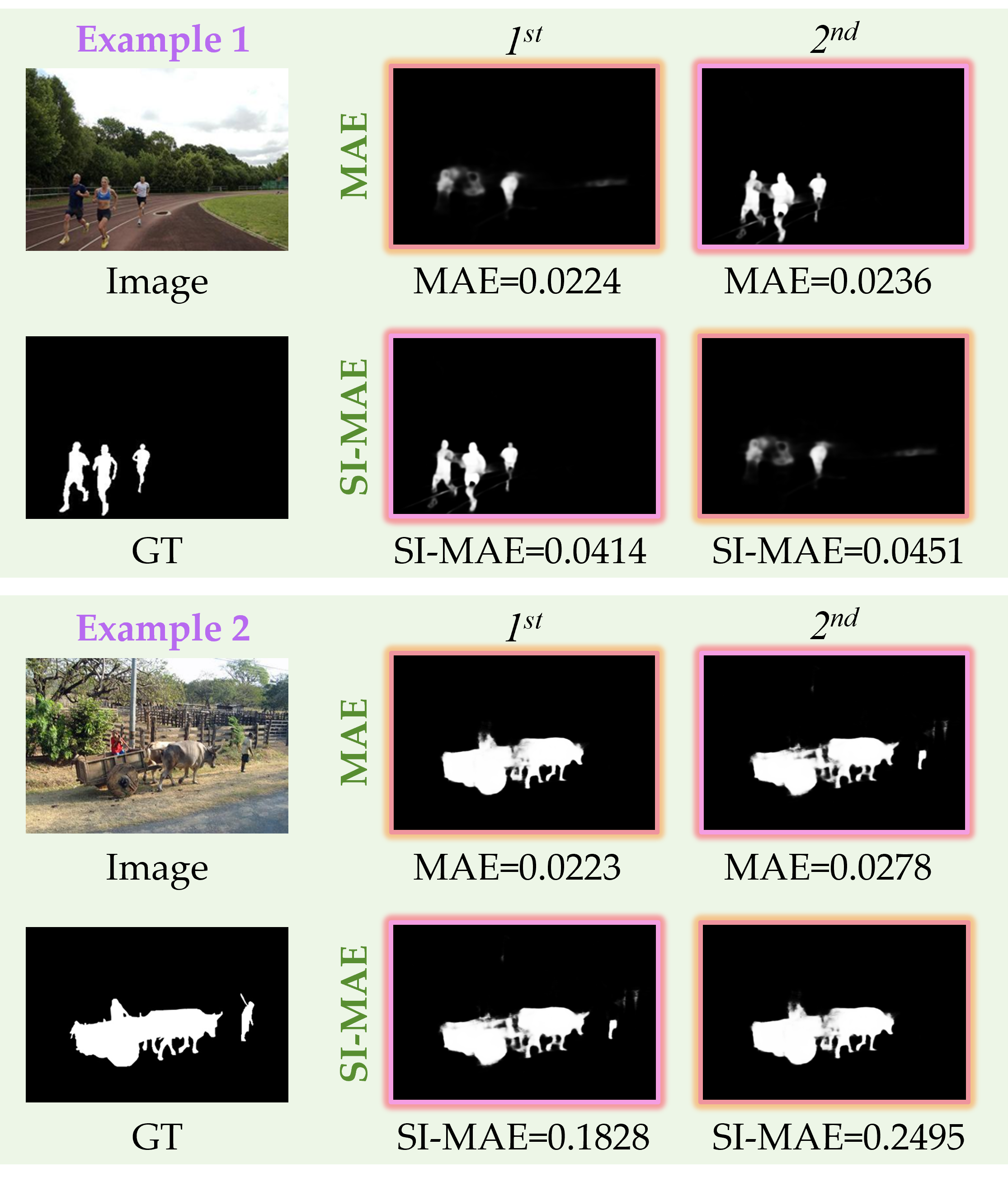}  
    \caption{Illustration of the strengths of our proposed size-invariant metrics over conventional size-sensitive evaluations. Two representative examples are presented with $\MAE$ and $\SMAE$. Apparently, $\SMAE$ offers a more proper assessment of model performance in multiple SOD scenarios. More examples are deferred in \cref{app_fig:eg} in the Appendix. 
    }
    \label{fig:eg}
\end{figure}

Beyond evaluation, we further propose a general Size-Invariant SOD Optimization (SIOpt) framework to enable models with a size-invariant property to be practically applied. Following our proposed SIEva framework, we realize that the most widely adopted loss functions for SOD optimization, including binary cross entropy ($\BCE$), square-based AUC, $\mathsf{Dice}$ and $\mathsf{IOU}$ losses, are also size-sensitive. To deal with this, it is natural to calculate the losses of each part independently first, and then merge them for backward updates, staying the same as SIEva. However, different from other instance-wise pixel-level losses (such as $\BCE$ and $\mathsf{Dice}$), the ranking-aware AUC loss requires each salient pixel paired with all non-salient pixels, suffering from unaffordable $\mathcal{O}(S^2)$ per image time complexity where $S$ represents the number of pixels within an image. To ensure the efficiency of SIOpt towards AUC-style measures, a Pixel-level Bipartite Acceleration (PBAcc) strategy is developed, which can reduce the complexity from $\mathcal{O}(S^2)$ to $\mathcal{O}(S)$ per image significantly.

Going a step further, we proceed to investigate the generalization performance of the SOD algorithm to show the effectiveness of our proposed paradigm. To the best of our knowledge, such a problem \textbf{remains barely explored} in the SOD community. As a result, we find that for composite losses (defined in \cref{Revisting}), the size-invariant loss function leads to a sharper bound than its size-sensitive counterparts.

Finally, extensive experiments over a range of benchmark datasets speak to the efficacy of our proposed SIOpt method.


This work extends from our ICML Spotlight paper \cite{SISOD}. In this version, we have included systematic improvements in methodology and experiments for imbalanced multi-object SOD tasks. To sum up, the new contributions include:

\noindent(1) \textbf{More Comprehensive Size-invariant Framework.} The original paper primarily focuses on $\MAE$ and $\F$-measure to highlight the size-sensitive limitations of current SOD metrics. This version expands the discussion by systematically analyzing another widely used metric—$\AUC$, which reflects the model's rank ability in distinguishing salient versus non-salient pixels. Building on this, we propose a size-invariant AUC metric ($\SI\text{-}\mathsf{AUC}$) and further develop an efficient optimization strategy named PBAcc, to mitigate the heavy computational optimization overhead without compromising performance (see \cref{20250404Sec3.3} and \cref{20250404Sec4.2}). Together, these contributions establish a more complete and effective framework for size-invariant SOD.

\noindent(2) \textbf{A Series of New Experiments.} We conduct a broad range of new empirical studies to evaluate the effectiveness of the proposed size-invariant protocol. These include benchmarking against newly introduced SOD competitors, extending SIOpt to more complex scenarios (e.g., RGB-D and RGB-T SOD tasks), validating its scalability across different backbones and foundation models (e.g., SAM-based models), and performing finer-grained analyses to understand better its practical benefits (see \cref{Experiments}).

\noindent(3) \textbf{Miscellaneous Contents.} We also improve some existing contents to make our work more complete, including the abstract, introduction, review of prior arts (\cref{related_work}), methodology and theory (\cref{Size-invariant Metrics} and \cref{SI-SOD}), and experiments (\cref{Experiments}).

\section{Prior Arts} \label{related_work}
Over the past few decades, the remarkable success of Salient Object Detection (SOD) can be attributed to two key areas of effort. One crucial avenue for achieving promising performance is exploring diverse deep neural network (DNN) architectures to extract discriminative features effectively \cite{DBLP:journals/tcsv/WangTLZL24,DBLP:journals/pami/ChengGBTLW22,DBLP:journals/tnn/CongHLZHK24,DBLP:journals/pami/WuWWLLXCHL25,sun2025conditional}. Another essential factor is the development of appropriate and consistent metrics (loss functions) to evaluate and optimize the model \cite{DBLP:journals/tip/LiSXMT21,DBLP:conf/aaai/XuLL021,DBLP:journals/tnn/ZhangLT23,DBLP:conf/iclr/XiaL0GY0S22,DBLP:conf/pkdd/HanTC16}. In the following, we briefly elaborate on related studies to this work.
\subsection{Deep Models of SOD}
With the rise of deep learning \cite{honovo,reniclshield}, significant advancements in Salient Object Detection (SOD) have been primarily driven by approaches based on encoder-decoder architectures and hierarchical feature aggregation mechanisms (such as convolutional neural networks (CNNs) and U-Net~\cite{U-Net}) \cite{DBLP:conf/iclr/XiaL0S0L23,DBLP:conf/igarss/WangKCWZGL21,DBLP:journals/eswa/WangZZGYCLYG25,DBLP:journals/tim/HeSXCH25,DBLP:journals/pr/ZhuQE25}. To name a few, UCF \cite{UCF} introduces a reformulated dropout mechanism after specific convolutional layers to learn deep uncertain salient features. DCL \cite{DCL} proposes a multi-stream framework that includes a pixel-level fully convolutional stream and a segment-wise spatial pooling stream to improve SOD performance. PiCANet \cite{PiCANet} employs a pixel-wise contextual attention network that selectively captures informative context locations for each pixel, integrating both global and local networks within a U-Net architecture to leverage multi-level features. RDCPN \cite{RDCPN} introduces a novel multi-level ROIAlign-based decoder that adaptively aggregates multi-level features, resulting in better mask predictions. LDF \cite{LDF} designs a two-branch decoder to predict saliency maps by utilizing both the body complement and detailed information of objects. Similar architectures have been explored in recent works, such as EDN \cite{EDN}, ICON \cite{ICON}, Bi-Directional \cite{Bi-Directional}, and CANet \cite{CANet}, addressing the challenges posed by objects of varying sizes and complex relationships. Additionally, to fully extract and fuse multi-scale features, \cite{piao2019depth} designs a recurrent attention network for feature refinement, while \cite{ji2022dmra} adopts a cascaded hierarchical feature fusion strategy to facilitate better information interaction between features. ADMNet \cite{ADMNet} introduces a multi-scale perception encoder module and a dual attention decoder module, providing promising performance with a lightweight computational burden. Recently, the power of Vision Transformers (ViTs) and large foundation models has been leveraged in the SOD community \cite{VST2, TriTransNet, SwinNet}. For example, \cite{VST, VST2} develop a novel unified Visual Saliency Transformer (VST), which takes image patches as inputs and leverages the transformer architecture to propagate global contexts among patches. \cite{SAMAd} highlights the limitations of the current Segment Anything Model (SAM) \cite{SAM} in certain segmentation tasks, such as Salient Object Detection (SOD), and introduces the SAM-Adapter, which integrates domain-specific information or visual prompts into SAM to enhance its performance \cite{SAM1,SAM3}. Last but not least, enriching supervision signals—such as object boundaries, depth, and temperature information—has become a crucial element for improving SOD performance \cite{DBLP:journals/tmm/WuHLX24, ji2023multispectral}. For instance, PoolNet \cite{PoolNet} and MENet \cite{MENet} leverage joint supervision of the salient object and its boundary in the final SOD predictions. \cite{ji2020accurate,zhang2023c,li2023dvsod,li2023delving} leverage depth map to improve RGB-D SOD performance. Besides, \cite{zhang2019memory,zhang2020lfnet} utilize light field data as an auxiliary input for SOD tasks, while \cite{zhang2021dynamic, ji2023multispectral, li2023dvsod} effectively incorporate inter-frame information for video SOD tasks. Unlike previous works that focus on enhancing network architecture, this paper aims to improve SOD performance from an optimization perspective, proposing \textbf{a general protocol} applicable to current SOD models to address the overlooked size-sensitive challenges. 

\subsection{Evaluation and Optimization in SOD \label{20250325sec2.2}}
Beyond architectural advancements, the selection of an appropriate evaluation metric is pivotal in driving progress in SOD. Unlike conventional classification tasks that primarily assess image-level accuracy, SOD requires a pixel-wise evaluation, where each predicted value represents the probability of a pixel being salient \cite{DBLP:journals/ijon/JiangYHLN25, DBLP:journals/tip/HaoYLXYY25}. Over the years, SOD metrics have evolved from early pixel-wise measures to more advanced region-level and structure-aware metrics, each designed to capture distinct aspects of prediction quality and facilitate targeted model optimization \cite{SOD,IDSurvey_2022, DBLP:journals/cgf/ZhangHCWHL24}. A fundamental yet widely adopted metric is \textit{Mean Absolute Error} ($\MAE$), which computes the average absolute deviation between predicted and ground-truth saliency maps, offering a straightforward assessment of overall prediction accuracy \cite{MAE}. Another widely used metric is the $\F$\textit{-measure} \cite{F_measure}, including its variants \textit{$\F_{\beta}$-measure} and \textit{$\F_{\beta}^w$-measure} \cite{F-beta}, which assess the different balance between precision and recall to enhance the attention to structural consistency in saliency predictions. The PASCAL score, introduced in the context of the PASCAL Visual Object Classes challenge \cite{everingham2015pascal}, evaluates how well the detected salient regions match the true salient regions regarding precision, recall, and $\F$-measure. Additionally, \textit{Area Under the ROC Curve} ($\AUC$) offers a comprehensive assessment of the model's ability to distinguish salient pixels from background pixels across various thresholds, making it a valuable metric for evaluating pixel-level classification performance \cite{DBLP:journals/tip/BorjiCJL15,DBLP:journals/tip/LiZWYWZLW16}. Notably, AUC has also been widely acknowledged in the machine learning community for its insensitivity to underlying data distribution \cite{MAUC,pAUC,SFCML,DBLP:journals/csur/YangY23,DBLP:conf/aistats/YaoLY23,DBLP:conf/nips/Luo0Y00024}, which makes it particularly effective for addressing class imbalance issues between salient and non-salient pixels. In terms of structural metrics, the \textit{Structural Similarity Index} (SSIM) \cite{SSIM} evaluates the structural similarity between two images by considering luminance, contrast, and structural components. In contrast, \cite{S_measure1, S_measure2} introduces a more comprehensive measure, incorporating both region consistency and edge consistency, to assess the quality of a saliency map. Furthermore, the \textit{Enhanced-alignment measure} ($\E$-measure) \cite{Emeasure} combines local pixel-level values with the image-level mean to capture fine-grained boundary details better, enhancing edge-preserving accuracy in saliency detection.  

Beyond evaluation, designing loss functions that consistently align with the evaluation objectives is also essential for effective model training. Current SOD methods often adopt a hybrid loss paradigm, combining multiple loss functions to achieve balanced performance across various metrics \cite{LDF,EDN,Object-Detection-in-Aerial-Images}. Typically, to prioritize pixel-wise accuracy, the most commonly used pixel-level loss functions are binary cross-entropy ($\BCE$) \cite{ADMNet, DBLP:journals/tip/BaoZZCYZY25} and mean squared error ($\MSE$) \cite{GateNet, RDSN}. At the region level, widely adopted loss functions, such as $\mathsf{Dice}$ Loss \cite{DiceLoss} and $\mathsf{IOU}$ Loss \cite{IOULoss}, are used to improve overall prediction accuracy by emphasizing the holistic representation of the target region. Furthermore, maximizing $\AUC$-ware surrogate has emerged as a promising approach to directly enhance the ranking performance of SOD models \cite{DBLP:conf/aistats/RosenfeldMTG14, DBLP:journals/pami/WuWWLLXCHL25}. Despite significant success, this paper highlights the size-sensitive limitation of current SOD metrics and its corresponding optimizations, particularly when multiple salient objects of varying sizes co-exist, leading to inadequate evaluation of smaller objects. While prior works \cite{Borji2013Quantitative, Bylinskii2019Measurement} provide comprehensive analyses of various evaluation metrics and model ranking consistency, limited attention has been devoted to scenarios involving the coexistence of salient objects with diverse sizes, which is common in real-world applications. Furthermore, we observe that other pixel-level tasks related to SOD, such as semantic segmentation, adopt metrics like $\mathsf{mIOU}$ \cite{mIOU} and AUCSeg \cite{AUCSeg}. However, since all salient objects are typically labeled with a binary value of 1 without additional class labels, these metrics may not effectively capture the fine details of small and intricate salient objects when directly applied to SOD. Due to space limitations, a detailed introduction to the universally agreed-upon evaluation metrics is deferred to the Appendix \cref{protocol_appendix}.




\section{Size-invariant SOD Evaluation}
\label{Size-invariant Metrics}

In this section, we first provide an in-depth analysis to reveal the fundamental limitation of the currently used metrics in the field of SOD, such as Mean Absolute Error ($\MAE$), $\F$-measure, and Area Under the ROC Curve ($\AUC$). Then, we propose a general and effective approach to serve the goal of size-invariant SOD evaluations. 

\subsection{Revisiting Current SOD Evaluation Metrics} \label{Revisting}

\textbf{Notations.} Let \(\mathcal{D} = \{(\boldsymbol{X}^{(i)}, \boldsymbol{Y}^{(i)}) \mid \boldsymbol{X}^{(i)} \in \mathbb{R}^{3 \times H \times W }, \boldsymbol{Y}^{(i)} \in \mathbb{R}^{H \times W}\}_{i=1}^N\) denote the training dataset, where \(\boldsymbol{X}^{(i)}\) is the $i$-th input image and \(\boldsymbol{Y}^{(i)}\) is its corresponding \textbf{pixel-level} ground truth. Here, \(N\) represents the total number of samples, and \(W\) and \(H\) denote the width and height of the image, respectively. In the context of salient object detection (SOD), the objective is to train a well-performing model \(f_{\theta}: \mathbb{R}^{3 \times H \times W} \to (0, 1)^{H \times W}\) (with \(\theta\) representing the model parameters) that takes an image \(\boldsymbol{X}^{(i)}\) as input and classifies the saliency degree at each pixel \cite{SISOD, AUCSeg,DBLP:journals/tip/BaoZZCYZY25,DBLP:journals/tip/HaoYLXYY25}. In other words, SOD is essentially a pixel-level classification problem, where the output \(f_{\theta} (\boldsymbol{X}^{(i)})_{h,w}\), for \(h \in [H]\) and \(w \in [W]\), represents the probability belonging to a salient pixel. A summary of the key notations involved in this paper is presented in \cref{notation}.

In the following, for the convenience of our analysis, we first unify the current SOD metrics using standard functions, which can be broadly categorized into two groups: \textit{separable} and \textit{composite} functions, defined as follows:


\begin{definetitle}(\textbf{Separable Function}).{\label{def3.1}} Given a predictor $f$, a function $v$ applied to $f$ is separable if the following equation formally holds:
    \begin{equation} \label{eq:separable function}
        v(f(\boldsymbol{X}), \boldsymbol{Y})=\sum_{k=1}^K \lambda({\boldsymbol{X}_k})\cdot v(f(\boldsymbol{X}_k), \boldsymbol{Y}_k),
    \end{equation}
    with 
    \begin{equation}
    \begin{aligned}
    \bigcup_{k=1}^K \boldsymbol{X}_k = \boldsymbol{X}, & \; \bigcap_{k=1}^K \boldsymbol{X}_k = \varnothing, \\
    \bigcup_{k=1}^K \boldsymbol{Y}_k = \boldsymbol{Y}, & \; \bigcap_{k=1}^K \boldsymbol{Y}_k = \varnothing,
    \end{aligned}
    \end{equation}
    where $\boldsymbol{X}$ is an input image and $\boldsymbol{Y}$ is the pixel-level ground truth; $(\boldsymbol{X}_1, \boldsymbol{X}_2, \cdots, \boldsymbol{X}_K)$ and $(\boldsymbol{Y}_1, \boldsymbol{Y}_2, \cdots, \boldsymbol{Y}_K)$ are $K$ \textbf{non-intersect} parts of $\boldsymbol{X}$ and $\boldsymbol{Y}$, respectively; and $\lambda({\boldsymbol{X}_k})$ is an ${\boldsymbol{X}_k}$-related weight for the term $v(f(\boldsymbol{X}_k), \boldsymbol{Y}_k)$. 
\end{definetitle}

\noindent \textbf{Remark.} Note that, with a slight abuse of notation, we abbreviate \( f_{\theta} \) and \( \boldsymbol{X}^{(i)} \) ($\boldsymbol{Y}^{(i)}$) as \( f \) and \( \boldsymbol{X} \) ($\boldsymbol{Y}$), respectively, when the context is clear. Def.\ref{def3.1} indicates that the evaluation of the model $f$ toward a sample $(\boldsymbol{X}, \boldsymbol{Y})$ could be expressed as a series of independent parts and then merged in a weighting fashion. According to the definition above, it is easy to verify that current point-wise evaluation metrics in the SOD community are \textit{separable}, such as Mean Absolute Error ($\MAE$)~\cite{MAE} and Mean Square Error ($\MSE$). 


    \begin{definetitle}(\textbf{Composite Function.}){\label{def3.2}} In contrast to the separable function in Def.\ref{def3.1}, a \textit{composite function} $V$ applied to $f$ could be expressed as a series of compositions of separable functions \cref{eq:separable function}, denoted by
 \begin{align*}
    V(f(\boldsymbol{X}), \boldsymbol{Y})=(v_1\circ v_2 \circ \dots \circ v_T)\left(f(\boldsymbol{X}),\boldsymbol{Y}\right),
 \end{align*}
 where $T$ is the number of compositions.
\end{definetitle}
\begin{filecontents}{file}

\end{filecontents}

\noindent\textbf{Remark.} According to the definition of Def.\ref{def3.2}, complicated evaluation metrics such as $\F$-score \cite{F_measure}, $\IOU$ \cite{RCNN}, $\AUC$ \cite{AUC_eval, pAUC} and S-measure \cite{S_measure1, S_measure2} are composite. 

\noindent\textbf{Why Size-invariant Property Matters?} Before our presentations, we aim to re-emphasize the importance of size-invariant evaluations. As discussed in Sec. \ref{pami:intro}, \uline{size-invariance is crucial for the SOD task} for several key reasons. Firstly, real-world salient objects appear in various sizes, as illustrated in \cref{fig:msod_num_area}, and a size-invariant evaluation and optimization scheme ensures that the SOD model \textbf{maintains consistent performance across different scales}. Without such a property, the model may prefer larger objects, negatively degrading detection accuracy for smaller objects—often the most significant and critical ones, as illustrated in \cref{fig:eg} (see \textcolor[rgb]{0.72, 0.42, 0.94}{\textbf{Example 1}} and \textcolor[rgb]{0.72, 0.42, 0.94}{\textbf{Example 2}}). Second, size-invariance enhances the generalization ability of models, making them more adaptable and robust to diverse real-world scenarios. Unfortunately, as we will demonstrate in the next section, most widely adopted SOD metrics are size-sensitive.

\subsubsection{Current Separable Metrics are NOT Size-Invariant} \label{sec3.1.3}
Taking $\MAE$ as an example, it is one of the most typical SOD measures, which evaluates the average absolute pixel-level error between the predicted saliency map and the ground truth. According to Def.\ref{def3.1}, the following result holds: 
\begin{proposition}(\textbf{$\MAE$ is NOT size-invariant}). \label{pami:propMAE}
\begin{equation}
\label{eq:MAE_sensitive}
    \begin{aligned}
        \MAE (f) & = \frac{1}{S} \sum_{h=1}^H \sum_{w=1}^W  |f(\boldsymbol{X})_{h,w}-Y_{h,w}| \\
        &= \sum_{k=1}^{|\mathcal{C}({\boldsymbol{X}})|}  \frac{\Vert f(\boldsymbol{X}_k)-\boldsymbol{Y}_k \Vert_{1,1}}{S} \\
        & = \sum_{i=1}^{|\mathcal{C}({\boldsymbol{X}})|} \frac{S_k}{S} \cdot \frac{\Vert f(\boldsymbol{X}_k)-\boldsymbol{Y}_k \Vert_{1,1} }{S_k} \\
        & = \sum_{k=1}^{|\mathcal{C}({\boldsymbol{X}})|} \frac{S_k}{S} \cdot \MAE(f_k) \\
        & = \sum_{k=1}^{|\mathcal{C}({\boldsymbol{X}})|} {\color{blue}{\boldsymbol{\mathbb{P}_{X_k}}}} \cdot \MAE(f_k).
    \end{aligned}
\end{equation}
Note that, we merely consider the $\MAE$ value for one image here to simplify the discussion, i.e., $\boldsymbol{X}:= \boldsymbol{X}^{(i)}$ for short and $\MAE(f):= \MAE(f(\boldsymbol{X}^{(i)}), \boldsymbol{Y}^{(i)}))$; $\mathcal{C}({\boldsymbol{X}})$ is a pre-defined split function to obtain non-intersect part of $\boldsymbol{X}$ and corresponding $\{\boldsymbol{Y}_k\}_{k=1}^{|\mathcal{C}(\boldsymbol{X})|}$; $S=H \times W$ is the number of pixels in an image and $S_k= H_k \times W_k$ is the size of the $k$-th part of $\boldsymbol{X}$, where $H_k, W_k$ correspond to the height and width of $\boldsymbol{X}_k$. 
\end{proposition}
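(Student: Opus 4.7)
The plan is to derive the claimed identity by a direct bookkeeping argument on the pixel-wise sum that defines $\MAE$, and then to read off the size-proportional weight to conclude that $\MAE$ fails the size-invariance property. Since everything in the statement is a purely arithmetical rearrangement of a finite sum, there is no analytic machinery to deploy; the work lies in choosing notation that makes the decomposition transparent and in identifying the resulting coefficient as a size ratio.

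First, I would fix the partition produced by the splitting operator $\mathcal{C}(\boldsymbol{X})$: let $K = |\mathcal{C}(\boldsymbol{X})|$ and write the index set $[H]\times[W]$ as a disjoint union $\bigsqcup_{k=1}^{K}\Omega_k$, where $\Omega_k$ consists of the pixel coordinates that belong to the $k$-th part $\boldsymbol{X}_k$. By the non-intersecting property in Def.\ref{def3.1}, every pixel $(h,w)$ belongs to exactly one $\Omega_k$, so the outer sum in the definition
\begin{equation*}
\MAE(f) \;=\; \frac{1}{S}\sum_{h=1}^{H}\sum_{w=1}^{W} \bigl|f(\boldsymbol{X})_{h,w}-Y_{h,w}\bigr|
\end{equation*}
can be regrouped as $\frac{1}{S}\sum_{k=1}^{K}\sum_{(h,w)\in\Omega_k}|f(\boldsymbol{X})_{h,w}-Y_{h,w}|$. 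Recognizing the inner sum as $\Vert f(\boldsymbol{X}_k)-\boldsymbol{Y}_k\Vert_{1,1}$ gives the second line of \cref{eq:MAE_sensitive}.

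Next, I would multiply and divide each summand by $S_k$, the cardinality of $\Omega_k$ (i.e.\ the number of pixels in $\boldsymbol{X}_k$). This factorization yields
\begin{equation*}
\MAE(f) \;=\; \sum_{k=1}^{K} \frac{S_k}{S}\cdot\frac{\Vert f(\boldsymbol{X}_k)-\boldsymbol{Y}_k\Vert_{1,1}}{S_k},
\end{equation*}
and the second factor is precisely $\MAE(f_k)$, the local $\MAE$ restricted to the $k$-th part. Interpreting $S_k/S$ as the probability $\mathbb{P}_{\boldsymbol{X}_k}$ that a pixel drawn uniformly at random from $\boldsymbol{X}$ lands in $\boldsymbol{X}_k$ completes the chain of equalities in \cref{eq:MAE_sensitive} and matches the separable form of Def.\ref{def3.1} with weight $\lambda(\boldsymbol{X}_k)=S_k/S$.

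Finally, to justify the headline claim that $\MAE$ is \emph{not} size-invariant, I would remark that the weight $\lambda(\boldsymbol{X}_k)=S_k/S$ is strictly increasing in $S_k$, so parts with larger pixel support contribute proportionally more to the aggregate error; in particular, a small salient region with large local error can be overwhelmed by a large region with small local error. No step is technically hard: the only subtlety is making the partition $\mathcal{C}(\boldsymbol{X})$ and the associated restriction $\boldsymbol{X}_k,\boldsymbol{Y}_k,f_k$ precise enough that the rearrangement above is unambiguous, which is a notational rather than mathematical obstacle.
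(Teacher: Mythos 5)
Your proof is correct and takes essentially the same route as the paper: the proposition's own display performs exactly this regrouping of the pixel sum over the disjoint partition, the multiply-and-divide by $S_k$, the recognition of the local $\MAE(f_k)$, and the identification of the weight $S_k/S$ as $\mathbb{P}_{\boldsymbol{X}_k}$. The only cosmetic difference is that you define $S_k$ as $|\Omega_k|$ (the pixel count of the part) rather than $H_k\times W_k$, which is actually the cleaner convention since the background frame need not be rectangular.
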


Compared to Def.\ref{def3.1}, we have \(\lambda(\boldsymbol{X}_k) := \color{blue}{\boldsymbol{\mathbb{P}_{X_k}} = S_k/S}\) in the case of \(\MAE\). In this context, \(\color{blue}{\boldsymbol{\mathbb{P}_{X_k}}}\) is a \textbf{size-sensitive} term when there are multiple salient objects of varying sizes in the image. This means that \textbf{larger objects contribute more} during evaluation, while \uline{smaller but more significant targets may be overshadowed}. Consequently, using the traditional \(\MAE\) metric introduces an accumulated inductive bias across all images, resulting in models prioritizing the larger salient objects. Similar effects are observed for other point-wise metrics in the SOD community, such as $\MSE$.


\subsubsection{Current Composite Metrics are NOT Size-Invariant} For simplicity, we abbreviate $v(f(\boldsymbol{X}), \boldsymbol{Y})$ and $V(f(\boldsymbol{X}), \boldsymbol{Y})$ as $v(f)$ and $V(f)$ for a clear presentation. Similar to Sec.\ref{sec3.1.3}, we could formally rewrite the composite metric $V(f)$ as follows:
\begin{equation}
\label{eq:composite}
    V(f)=\frac{\sum_{k=1}^{|\mathcal{C}({\boldsymbol{X}})|} {\color{blue}{\mathbb{P}'_{\boldsymbol{X}_k}}}(a_1v_1(f_k)+\cdots+a_{T}v_T(f_k))}{\sum_{k=1}^{|\mathcal{C}({\boldsymbol{X}})|} {\color{blue}{\mathbb{P}'_{\boldsymbol{X_k}}}}(b_1v_1(f_k)+\cdots+b_Tv_T(f_k))},
\end{equation}
where again $\boldsymbol{X}:= \boldsymbol{X}^{(i)}$ and $v_t(f_k):=v_t(f(\boldsymbol{X}_k),\boldsymbol{Y}_k), t\in [T]$ is a certain separable metric value over the $k$-th part $\boldsymbol{X}_k$; $a_t$ and $b_t$ represent different coefficients for separable functions, and here ${\color{blue}{\mathbb{P}'_{\boldsymbol{X_k}}}}$ is also a \textbf{size-sensitive} weight for each separable part of $\boldsymbol{X}_k$. Besides, it is noteworthy that we allow the number of $T$ included in the numerator and denominator to be different in general. In what follows, we will elaborate on two typical instantiations for our arguments, including \textit{F-measure} and $\AUC$. 

\noindent\textbf{\textit{F-measure}}. $\F$-measure \cite{F_measure}, a.k.a. $\F$-score, is defined as the combination of precision and recall: 
\begin{equation}\label{pami:eq5}
    \F(f) = \frac{2 \times \mathsf{Precision}(f) \times \mathsf{Recall}(f)}{\mathsf{Precision}(f) + \mathsf{Recall}(f)},
\end{equation}
where 
\begin{equation} \label{pami:eq6}
    \mathsf{Precision} = \frac{\TP(f)}{\TP(f) + \FP(f)}, \; \mathsf{Recall} = \frac{\TP(f)}{\TP(f)+\FN(f)},
\end{equation}
where $\TP,\TN,\FP,\FN$ are \textbf{T}rue \textbf{P}ositive, \textbf{T}rue \textbf{N}egative, \textbf{F}alse \textbf{P}ositive and \textbf{F}alse \textbf{N}egative. Since the outputs of $f$ are usually continuous probability, existing SOD literature often adopts a set of thresholds to binarize the result as $\{0, 1\}^{H \times W}$ and then calculate the average performance \cite{EDN, ADMNet, PoolNet+}. However, by carefully revising the calculation of F-measure, we have the following result:

\begin{proposition}(\textbf{$\F$-measure is NOT size-invariant}).\label{pami:prop3.1}
     By substituting Eq.(\ref{pami:eq6}) to Eq.(\ref{pami:eq5}), we have the following conclusion:
\begin{equation}
\label{eq:f-sensitive}
    \begin{aligned}
    & \F(f) = \frac{2 \times \TP(f)}{2 \times \TP(f) + \FP(f) + \FN(f)} 
    \\ & =  \frac{\sum_{k=1}^{|\mathcal{C}(\boldsymbol{X})|} 2 \times \TP(f_k)}{\sum_{k=1}^{|\mathcal{C}(\boldsymbol{X})|} [ 2 \times \TP(f_k) + \FP(f_k)+ \FN(f_k)]} \\
    & = \frac{2 \sum_{k=1}^{|\mathcal{C}(\boldsymbol{X})|} \frac{S^+_k}{S^+}\cdot \frac{\TP(f_k)}{S^+_k}  }{\sum_{k=1}^{|\mathcal{C}(\boldsymbol{X})|} \left[ \frac{S^+_k}{S^+} \cdot (\frac{2\TP(f_k)}{S^+_k} + \frac{\FP(f_k)}{S^+_k}+ \frac{\FN(f_k)}{S^+_k})\right]} \\
    & = \frac{\sum_{k=1}^{|\mathcal{C}(\boldsymbol{X}|} {\color{blue}{\mathbb{P}_{\boldsymbol{X}^+_k}}}\cdot 2  \TPR(f_k)}{\sum_{k=1}^{|\mathcal{C}(\boldsymbol{X}|} {\color{blue}{\mathbb{P}_{\boldsymbol{X}^+_k}}} \cdot \left(2\TPR(f_k) + \FNR(f_k)\right) + \FP(f_k)/S^+},
    \end{aligned}
\end{equation}
where $\TP(f_k)$, $\FP(f_k)$, $\FN(f_k)$ represent the number of \textbf{T}rue \textbf{P}ositives, \textbf{F}alse \textbf{P}ositives and \textbf{F}alse \textbf{N}egatives at $\boldsymbol{X}_k$'s predictions, and $\TPR(f_k), \FNR(f_k)$ represent the corresponding \textbf{T}rue \textbf{P}ositive \textbf{R}ate and \textbf{F}alse \textbf{N}egative \textbf{R}ate; $S^+$ ($S^+_k$) is the number of all salient pixels in the part $\boldsymbol{X}$ ($\boldsymbol{X}_k$).
\end{proposition}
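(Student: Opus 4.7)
The plan is to verify the identity in Proposition \ref{pami:prop3.1} by a direct algebraic decomposition, then read off size-sensitivity from the structure of the resulting expression.

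First, I would simplify the raw $\F$-measure. Substituting the definitions of $\mathsf{Precision}$ and $\mathsf{Recall}$ from \cref{pami:eq6} into \cref{pami:eq5} and clearing the common denominator yields the well-known identity
\begin{equation*}
\F(f) = \frac{2\,\TP(f)}{2\,\TP(f) + \FP(f) + \FN(f)}.
\end{equation*}
This is standard bookkeeping and should not pose any difficulty.

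Next, I would exploit the key observation that $\TP$, $\FP$, and $\FN$ are all pixel-wise counts, hence additive over any disjoint partition of the image domain. Given the split $\{\boldsymbol{X}_k\}_{k=1}^{|\mathcal{C}(\boldsymbol{X})|}$ from $\mathcal{C}$, each confusion-matrix count splits cleanly, e.g. $\TP(f) = \sum_k \TP(f_k)$, and likewise for $\FP$ and $\FN$. Plugging these into the simplified $\F$ expression gives the second line of \cref{eq:f-sensitive} immediately.

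The main step — and the one that requires the most care — is the introduction of the size-sensitive weight $\mathbb{P}_{\boldsymbol{X}^+_k} = S^+_k / S^+$. Here I would multiply and divide each term indexed by $k$ in the numerator and denominator by $S^+_k$, turning $\TP(f_k)/S^+_k$ into $\TPR(f_k)$ and $\FN(f_k)/S^+_k$ into $\FNR(f_k)$, while the outer factor $S^+_k/S^+$ after normalizing by the global $S^+$ becomes precisely $\mathbb{P}_{\boldsymbol{X}^+_k}$. The subtle point is that $\FP(f_k)$ refers to \emph{non}-salient pixels in $\boldsymbol{X}_k$, so the natural denominator $S^+_k$ does not correspond to its support; rather than force it into the same form as the $\TP$ and $\FN$ terms, I would simply keep $\sum_k \FP(f_k)/S^+ = \FP(f)/S^+$ as a residual term, which matches the dangling $\FP(f)/S^+$ in the last line of \cref{eq:f-sensitive}. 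This asymmetric treatment is the delicate piece of the derivation and the reason the $\F$-measure decomposition is not as clean as the one for $\MAE$ in Proposition \ref{pami:propMAE}.

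Finally, I would conclude size-sensitivity by inspection: the weights $\mathbb{P}_{\boldsymbol{X}^+_k} = S^+_k/S^+$ depend monotonically on the salient-region size of each part, so a component $k$ with larger $S^+_k$ contributes disproportionately more to $\F(f)$ than a smaller one, exactly matching the definition of a size-sensitive weight $\lambda(\boldsymbol{X}_k)$ from Def.\ref{def3.1}. Together with the residual $\FP(f)/S^+$ term, this precisely casts $\F$ in the composite form of \cref{eq:composite}, completing the proof.
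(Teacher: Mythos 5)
Your proposal is correct and follows exactly the same algebraic route as the paper: simplify $\F$ to $2\TP/(2\TP+\FP+\FN)$, use additivity of the confusion-matrix counts over the disjoint partition $\mathcal{C}(\boldsymbol{X})$, multiply and divide each term by $S_k^+$ to surface the weight $\mathbb{P}_{\boldsymbol{X}_k^+}=S_k^+/S^+$ alongside $\TPR(f_k)$ and $\FNR(f_k)$, and leave $\sum_k\FP(f_k)/S^+$ as the residual term since $S_k^+$ is the wrong normalizer for false positives. You also correctly identify the residual $\FP$ term and the asymmetric treatment it forces as the delicate point distinguishing this decomposition from the cleaner $\MAE$ case.
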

 Intuitively, we realize that the last term $\FP(f_k)/S^+$ is independent of the concrete splits of $k$ for a well-trained model. However, we still encounter a notorious term $\color{blue}{\mathbb{P}'_{X_k} := \mathbb{P}_{X^+_k}=S^+_k/S^+}$ concerning other terms in Eq.(\ref{eq:f-sensitive}), suggesting that \uline{$\F$-measure is also \textbf{sensitive} to the size of salient objects} like $\MAE$. 


\noindent\textbf{\textit{Area Under the ROC Curve ($\AUC$)}}. As discussed in previous literature \cite{DBLP:journals/csur/YangY23, DBLP:conf/aaai/YangKVY23, MAUC, SFCML}, $\AUC$ is expressed as the entire Area under the ROC curve by plotting the $\TPR$ against the $\FPR$ of a given predictor with all possible thresholds:
\begin{equation}
    \mathsf{AUC}(f) =\int_0^1 \mathsf{TPR}_f(\mathsf{FPR^{-1}}_f(\tau))d\tau,
\end{equation} 
where $f$ is the SOD model, $\tau$ represents the decision threshold. 

Mathematically, $\AUC$ indicates the likelihood that a positive sample \textbf{scores higher} than a negative one \cite{DBLP:conf/nips/CortesM03,DBLP:journals/pami/0001L00Z24}, which could be approximated by the following empirical version in the SOD task:
\begin{equation}\label{pami:eq9}
    \mathsf{\hat{AUC}}(f) = \frac{1}{N}\sum_{i=1}^N \sum_{p=1}^{S^{(i), +}} \sum_{q=1}^{S^{(i), -}} \frac{\mathbb{I}\{f^{(i), +}_{ p}>f^{(i), -}_{q}\}}{S^{(i), +} S^{(i), -}},
\end{equation}
where $f^{(i), +}_{ p}:= f(\boldsymbol{X}^{(i)})_{p, +}$ represents the prediction of the $p$-th salient pixel in image $\boldsymbol{X}^{(i)}$, i.e., $$f(\boldsymbol{X}^{(i)})_{p, +} \in \{f(\boldsymbol{X}^{(i)})_{h, w}|Y^{(i)}_{h, w}=1,h \in [H], w \in [W]\};$$
$S^{(i), +}$ is the cardinality of the positive set; similarly, $f^{(i), -}_{q}:= f(\boldsymbol{X}^{(i)})_{q, -}$ corresponds to the prediction of the $q$-th non-salient pixel in image $\boldsymbol{X}^{(i)}$, i.e., $$f(\boldsymbol{X}^{(i)})_{q, -} \in \{f(\boldsymbol{X}^{(i)})_{h, w}|Y^{(i)}_{h, w}=0,h \in [H], w \in [W]\};$$ $S^{(i), -}$ is the cardinality of the negative set; $\mathbb{I}\{x\}=1$ is the indication function returning $1$ when the condition $x$ holds, otherwise 0. 

It is interesting to note that Eq.(\ref{pami:eq9}) is calculated independently for each image. Therefore, we could still consider one image to analyze the limitation of $\AUC$, i.e., let $\boldsymbol{X}:= \boldsymbol{X}^{(i)}$ in Eq.(\ref{pami:eq9}) for short. Then, we have
\begin{equation}\label{pami:eq10}
    \mathsf{\tilde{AUC}}(f, \boldsymbol{X}^+, \boldsymbol{X}^-) = \sum_{p=1}^{S^{+}} \sum_{q=1}^{S^{-}} \frac{\mathbb{I}\{f^{+}_{ p}>f^{-}_{q}\}}{S^{+} S^{-}},
\end{equation}
where we omit the upper corner marker $(i)$ and employ $\boldsymbol{X}^+$ ($\boldsymbol{X}^-$) representing the positive (negative) part in the image $\boldsymbol{X}^{(i)}$ for the sake of convenience.

Subsequently, similar to Eq.(\ref{eq:composite}), we have the following conclusion:
\begin{proposition}(\textbf{$\AUC$ is NOT size-invariant}).\label{pami:prop3.2}
\begin{equation}\label{pami:eq11}
\begin{aligned}
    \mathsf{\tilde{AUC}}(f, \boldsymbol{X}^+, \boldsymbol{X}^-) & = \sum_{p=1}^{S^{+}} \sum_{q=1}^{S^{-}} \frac{\mathbb{I}\{f^{+}_{ p}>f^{-}_{q}\}}{S^{+} S^{-}} \\
    &= \sum_{k=1}^{|\mathcal{C}(\boldsymbol{X})|} \frac{S_k^+}{S^+}\sum_{p=1}^{S_k^+} \sum_{q=1}^{S^-} \frac{\mathbb{I}\{f^{+}_{ k,p}>f^{-}_{q}\}}{S_k^+ S^-} \\
    &= \sum_{k=1}^{|\mathcal{C}(\boldsymbol{X})|} {\color{blue}{\mathbb{P}_{\boldsymbol{X}^+_k}}}\sum_{p=1}^{S_k^+} \sum_{q=1}^{S^-} \frac{\mathbb{I}\{f^{+}_{ k,p}>f^{-}_{q}\}}{S_k^+ S^-} \\
    &= \sum_{k=1}^{|\mathcal{C}(\boldsymbol{X})|} {\color{blue}{\mathbb{P}_{\boldsymbol{X}^+_k}}} \cdot \mathsf{\tilde{AUC}}(f, \boldsymbol{X}_k^+, \boldsymbol{X}^-),
\end{aligned}
\end{equation}
where $f^{+}_{k, p}:= f(\boldsymbol{X}_k)_{p, +}$ denotes the prediction for the $p$-th salient pixel in the separable image part $\boldsymbol{X}_k$, i.e., $$f(\boldsymbol{X}_k)_{p, +} \in \{f(\boldsymbol{X}_k)_{h, w}|Y_{h, w}=1, h \in [H_k], w \in [W_k]\};$$
$S_k^{+}$ represents the number of salient pixels involved in $\boldsymbol{X}_k$, and $\boldsymbol{X}_k^+$ denotes the salient region within $\boldsymbol{X}_k$.
\end{proposition}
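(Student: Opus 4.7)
The plan is to mimic the argument already used for $\MAE$ (Prop.~\ref{pami:propMAE}) and $\F$-measure (Prop.~\ref{pami:prop3.1}): start from the single-image empirical AUC in Eq.~(\ref{pami:eq10}) and redistribute its outer sum across the partition $\{\boldsymbol{X}_k\}_{k=1}^{|\mathcal{C}(\boldsymbol{X})|}$. Because the split function enforces $\bigcup_k \boldsymbol{X}_k = \boldsymbol{X}$ and $\bigcap_k \boldsymbol{X}_k = \varnothing$, the induced salient regions $\{\boldsymbol{X}_k^+\}$ form a disjoint cover of $\boldsymbol{X}^+$, so $S^+ = \sum_k S_k^+$ and each positive pixel indexed by $p \in \{1,\dots,S^+\}$ lies in exactly one part. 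This is the structural fact I will exploit; the negatives $\boldsymbol{X}^-$ will be left untouched, because the decomposition in the statement pairs each local positive set $\boldsymbol{X}_k^+$ against the \emph{full} negative set $\boldsymbol{X}^-$.

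First I would re-index the outer sum as $\sum_{p=1}^{S^+} = \sum_{k=1}^{|\mathcal{C}(\boldsymbol{X})|}\sum_{p=1}^{S_k^+}$, relabelling the salient-pixel prediction $f^+_p$ as $f^+_{k,p}$ to indicate its host part; this is the step in which the disjointness of the $\boldsymbol{X}_k^+$ is essential for unambiguous indexing. Next I would rewrite the denominator $S^+ S^-$ as $(S^+/S_k^+)\cdot (S_k^+ S^-)$ so that the factor $S_k^+/S^+$ moves outside the inner sums. Finally, recognising the remaining double sum $\sum_{p=1}^{S_k^+}\sum_{q=1}^{S^-} \mathbb{I}\{f^+_{k,p} > f^-_q\}/(S_k^+ S^-)$ as exactly $\mathsf{\tilde{AUC}}(f, \boldsymbol{X}_k^+, \boldsymbol{X}^-)$ in the sense of Eq.~(\ref{pami:eq10}), I would identify $S_k^+/S^+$ with $\mathbb{P}_{\boldsymbol{X}^+_k}$ to recover the claimed identity.

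There is no genuinely difficult step here; the whole proof is a careful re-indexing. The main obstacle is notational rather than mathematical: I must be careful that $f^+_{k,p}$ denotes the prediction for the $p$-th positive pixel \emph{within the $k$-th part}, and that the negatives in the inner AUC are still the \emph{global} $\boldsymbol{X}^-$, not any local $\boldsymbol{X}_k^-$. Once this asymmetric treatment of positives and negatives is made explicit, the proof reduces to an algebraic rearrangement that parallels the $\MAE$ and $\F$-measure derivations and exposes the size-sensitive weight $\mathbb{P}_{\boldsymbol{X}^+_k} = S_k^+/S^+$, certifying that $\AUC$ is not size-invariant.
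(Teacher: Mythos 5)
Your proposal is correct and follows essentially the same route as the paper: the proposition's displayed equation chain \emph{is} the proof, and it does exactly what you describe — re-index the outer sum over positives by parts using disjointness of the $\boldsymbol{X}_k^+$, relabel $f^+_p$ as $f^+_{k,p}$, factor $\tfrac{1}{S^+S^-}=\tfrac{S_k^+}{S^+}\cdot\tfrac{1}{S_k^+S^-}$ to expose the weight $\mathbb{P}_{\boldsymbol{X}^+_k}$, and recognise the inner double sum as $\mathsf{\tilde{AUC}}(f,\boldsymbol{X}_k^+,\boldsymbol{X}^-)$. You also correctly emphasise the asymmetry (local positives paired against the global negative set) and the fact that each positive pixel belongs to a unique part, both of which the paper relies on.
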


 Specifically, the third equation in Prop.\ref{pami:prop3.2} holds because $\AUC$ evaluates the ranking performance of all salient pixels versus non-salient pixels in pairs, which can be decomposed into a sum of AUC values for each salient pixel compared to all negative pixels. Based on Eq.(\ref{pami:eq11}), we observe that $\AUC$ remains \textbf{size-sensitive} to variations in object sizes, as it includes a same weight term ${\color{blue}{\mathbb{P}_{\boldsymbol{X}^+_k}} = S_k^+/S^+}$ as $\F$-measure. This introduces a similar bias as the previously discussed metrics. Furthermore, by comparing \({\color{blue}{\mathbb{P}_{\boldsymbol{X}_k}}}\) with \({\color{blue}{\mathbb{P}_{\boldsymbol{X}^+_k}}}\), our above derivations highlight the effectiveness of composite metrics over separable ones to some extent, since ${\color{blue}{\mathbb{P}_{\boldsymbol{X}^+_k}}}$ places much emphasis on the salient parts of the image.

\subsection{A General Principle for Size-Invariant Evaluations} \label{principles of SI_eval}

According to our above analysis, we argue that the critical factor stems from the size-sensitive term $\lambda(\boldsymbol{X}_k)$ (i.e., $\color{blue}{\mathbb{P}_{X_i}}$ or $\color{blue}{\mathbb{P}'_{X_i}}$) introduced in current metrics. To address this, a principal way to achieve size-invariant evaluation includes two key steps: \textbf{(S1)} balancing the effect of the weighting term $\lambda(\boldsymbol{X}_k)$ across various objects and \textbf{(S2)} developing an appropriate function $\mathcal{C}(\boldsymbol{X})$ to divide the image $\boldsymbol{X}$ into $|\mathcal{C}(\boldsymbol{X})|$ parts. 

In terms of \textbf{(S1)}, we propose a simple yet effective size-invariant protocol to get rid of the size-sensitive weight, i.e., independently evaluate the performance of each part by fixing {\color{orange} $\lambda(\boldsymbol{X}_k) \equiv 1$}. To this end, we arrive at the following size-invariant functions:
\begin{equation}
\label{eq:separable-insensitive}
    v_{\SI}(f)=\frac{1}{|\mathcal{C}(\boldsymbol{X})|}\sum_{k=1}^{|\mathcal{C}(\boldsymbol{X})|} {\color{orange}{1}} \cdot v(f_k),
\end{equation}
\begin{equation}
\label{eq:SI-composite}
    V_{\SI}(f)=\frac{1}{|\mathcal{C}(\boldsymbol{X})|}\sum_{k=1}^{|\mathcal{C}(\boldsymbol{X})|} {\color{orange}{1}} \cdot \frac{(a_1v_1(f_1)+\cdots+a_Tv_T(f_T))}{(b_1v_1(f_1)+\cdots+b_Tv_T(f_T))}, 
\end{equation}
where $\frac{1}{|\mathcal{C}(\boldsymbol{X})|}$ is introduced to ensure its value belonging to $[0,1]$. 

In terms of \textbf{(S2)}, it is important to consider that the primary goal of SOD tasks is to accurately and comprehensively detect salient objects. Therefore, borrowing the idea of other popular computer vision tasks, such as object detection and segmentation \cite{FasterRCNN, YOLO,zhang2020causal}, we propose separating targets of varying sizes within a single image into a series of foreground frames based on their minimum bounding boxes. Additionally, pixels not included in any bounding box are treated as part of the background. 

Formally, assume that there are $M$ salient objects in an image $\boldsymbol{X}$ and let $C_k=\{(a_j,b_j)\}_{j=1}^{M_k}, k\in [M]$ be the coordinate set for the object $k$, where $M_k$ is the number of pixels of the object $k$. Then, the minimum rectangle bounding box for object $k$ could be determined clockwise by the following vertex coordinates: 
\begin{equation} \label{eq:bounding_box}
\begin{aligned}
    \boldsymbol{X}_k^{fore}= & \{(a_k^{min}, b_k^{max}), (a_k^{max}, b_k^{max}), \\ 
    & (a_k^{max}, b_k^{min}), (a_k^{min},b_k^{min})
    \},
\end{aligned}
\end{equation}
where $a_k^{min}, a_k^{max}, b_k^{min}, b_k^{max}$ are the minimum and maximum coordinates in $C_k$, respectively.

\begin{figure}[!t]
\centering
\subfigure[Single-object scenario]{
\begin{minipage}{0.465\linewidth}
\includegraphics[width=\linewidth]{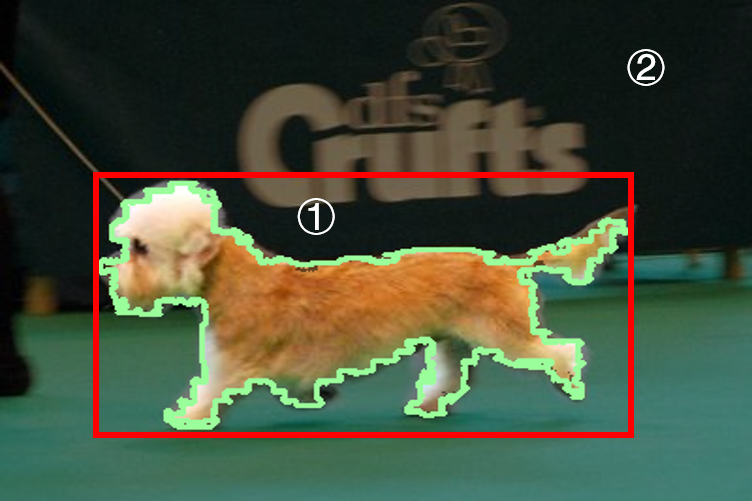}  
\label{fig:object_frame1}
\end{minipage}
}
\subfigure[Multi-object scenario]{
\begin{minipage}{0.47\linewidth}
\includegraphics[width=\linewidth]{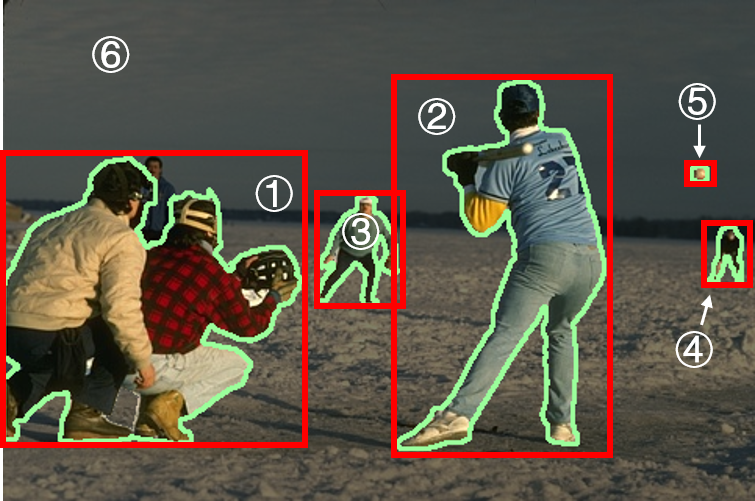}  
\label{fig:object_frame2}
\end{minipage}
}
\caption{Examples of partitions. In \cref{fig:object_frame1}, there is a foreground frame \ding{192} and a background frame \ding{193}. In \cref{fig:object_frame2}, there are five foreground frames from \ding{192} to \ding{196}, and a background frame \ding{197}.} 
\label{fig:eg_frame} 
\end{figure}

\begin{figure}[t]
\centering
\includegraphics[width=0.95\linewidth]{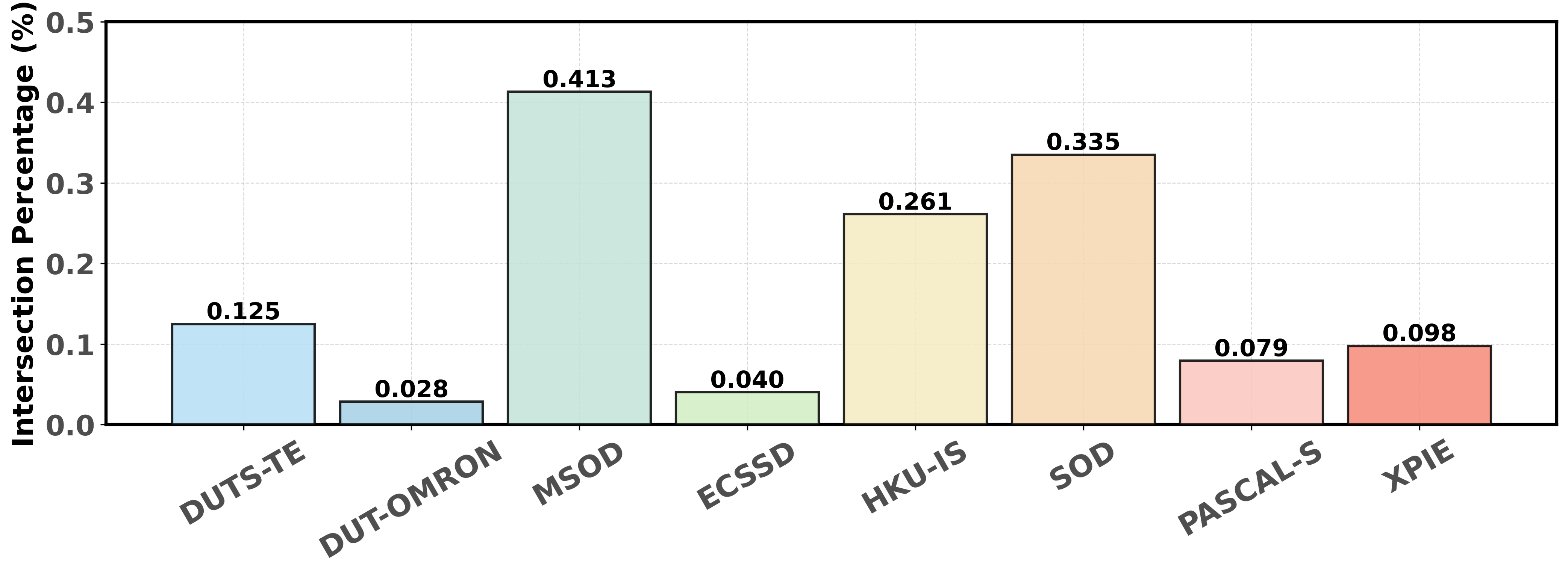}  
\caption{Statistics of the average intersection percentage across real-world SOD datasets, where the intersection percentage for each image is defined as the ratio of overlapping pixels to the total image size ($384 \times 384$). Detailed descriptions of these datasets can be found in \cref{dataset_appendix}.} 
\label{fig:intersect} 
\end{figure}

Correspondingly, the background frame is defined as follows:
\begin{equation}
    \boldsymbol{X}^{back}_{M+1} = \boldsymbol{X} \setminus \boldsymbol{F},
\end{equation}
where 
\begin{equation}
    \boldsymbol{F} = \bigcup_{k=1}^M \boldsymbol{X}_k^{fore}
\end{equation}
is the union of all minimum bounding boxes for salient objects. In this context, $\boldsymbol{Y}_k$ can be obtained by extracting pixels from the same position as $\boldsymbol{X}_k$, and we thus have $|\mathcal{C}(\boldsymbol{X})| = M + 1$.

However, as a fundamental task within the object detection domain, SOD primarily focuses on identifying the presence and location of salient objects, rather than delineating the precise boundaries of each individual instance. Consequently, most practical SOD datasets lack instance-level labels and typically provide only binary annotations (i.e., foreground or background, labeled as 0 or 1), making it infeasible to explicitly supervise individual object instances during training. To address this limitation, we treat \textbf{each connected region formed by salient foreground pixels} in the saliency map as an independent object proxy, denoted as $C_k$. Specifically, for a given input image $\boldsymbol{X}$, we perform connected component analysis over its ground-truth saliency map, among which each contiguous foreground blob is then enclosed within a bounding box and considered as a proxy object. Note that, all proxy sets $\mathcal{C}(\boldsymbol{X})$s are \ul{carried out offline as a preprocessing step prior to training}, thereby introducing no additional computational overhead during model optimization. A toy example of partitions is presented in \cref{fig:eg_frame}, where two squatting people are considered as a whole $C_k$ to obtain a single bounding box $\textcircled{1}$ in \cref{fig:eg_frame}-(b). Moreover, as observed in regions \textcircled{2} and \textcircled{3} of \cref{fig:eg_frame}-(b), this connected-component strategy may occasionally suffer from overlaps between different \( C_k \), due to the complexity of real-world SOD scenarios. Fortunately, as evidenced in \cref{fig:intersect}, such overlaps are extremely rare in practice, with the average intersection ratio per image consistently remaining below $1\%$. Further implementation details regarding the connected component extraction can be found in \cref{details_appendix}, and alternative instantiations of \( \mathcal{C}(\boldsymbol{X}) \) are discussed in \cref{20250408sec6.3}.

\noindent\textbf{Clarifications.} The bounding boxes introduced in this work are similar to those commonly employed in object detection tasks~\cite{Few-Shot-Object-Detection, Object-Detection-in-Aerial-Images}. However, the underlying motivation differs significantly. In object detection, ground-truth bounding boxes are used as supervision signals to guide the regression of predicted boxes. In contrast, our approach utilizes bounding boxes solely to partition images into regions of varying sizes. This is an intermediate step ensuring all objects, regardless of size, are considered equally during evaluations. Moreover, our proposed framework is flexible and can accommodate diverse segmentation strategies, as demonstrated in \cref{20250408sec6.3} of the experiments.

\subsection{Instantiations of Size-invariant Evaluations \label{20250404Sec3.3}}

In this section, we will adopt widely used metrics, i.e., $\MAE$, $\F$-score and $\AUC$, to instantiate our size-invariant principles. Note that our proposed strategy could also be applied to other metrics as mentioned in \cref{Revisting}.

\subsubsection{Size-Invariant Separable Metrics\label{SI-MAE}} 
\noindent\textit{\textbf{Size-Invariant $\MAE$ ($\SMAE$). }}Following the principle of Sec.\ref{principles of SI_eval}, the formulation of $\SMAE$ is defined as follows:
\begin{equation}
\label{naive Size-invariant}
    \SMAE(f)=\frac{1}{M + 1} \sum_{k=1}^{M + 1} {\color{orange}{1}} \cdot \MAE(f_k),
\end{equation}
where again we use $\MAE(f_k) := \MAE(f(\boldsymbol{X}_k), \boldsymbol{Y}_k)$ for short and $|\mathcal{C}(\boldsymbol{X})| = M + 1$ in this context. 

Moreover, we recognize that the size of the background, $\boldsymbol{X}_{M+1}^{back}$, may be significantly larger than that of the foreground, $\boldsymbol{X}_{k}^{fore}, k \in [M]$ in some cases, leading to an unnoticeable bias in favor of the background during evaluation. To alleviate this issue, we introduce a reconciliation term $\alpha$ to balance the model attention adaptively, and then the expression for $\SMAE$ becomes:
\begin{equation}
\label{wMAE_def}
\begin{aligned}
    \SMAE(f)&= \\
    \frac{1}{M+\alpha} & \left[\sum_{k=1}^{M} \MAE(f_k^{fore}) + \alpha \MAE(f_{M+1}^{back})\right],
\end{aligned}
\end{equation}
where the parameter $\alpha=S_{M+1}^{back}/\sum_{k=1}^{M}S_k^{fore}$ is the size ratio of the background and the sum of all foreground frames. By doing so, the predictor could pay equal consideration to salient objects of various sizes and impose an appropriate penalty for misclassifications in the background. We demonstrate its effectiveness in reducing the model's $\FP$ in \cref{Ablation Studies}.  

Taking a step further, we can examine the advantages of our proposed $\SMAE$ against $\MAE$ by the following simplified case with proof in \cref{prop1_proof}.

\begin{proposition}(\textbf{Size-Invariant Property of $\SMAE$.})\label{prop1} Without loss of generality, given two well-trained predictors, \( f_A \) and \( f_B \), with different parameters, \( \SMAE \) is always \textbf{more effective} than \( \MAE \) during evaluation, even when object sizes are imbalanced. 

{{\textbf{Case 1: Single Salient Object (\( M = 1 \)).}}} When there is only one salient object in the image \( \boldsymbol{X} \), no size imbalance exists among objects. In this case, \( \SMAE \) is equivalent to \( \MAE \).

{{\textbf{Case 2: Multiple Salient Objects (\( M \geq 2 \)).}}} Suppose the image \( \boldsymbol{X} \) contains multiple salient objects, represented by the ground-truth salient pixel sets \( \{C_1, C_2, \dots, C_M\} \) with \( S_1^{{fore}} \leq S_2^{fore} \leq \dots \leq S_M^{{fore}} \). Assume the two detection models, \( f_A \) and \( f_B \), identify the same total number of salient pixels, given by \( \sum_{i=m+1}^{M} |C_i| \) for some \( m \in \{1, 2, \dots, M-1\} \). Additionally, suppose \( f_A \) perfectly detects only the larger objects \( \{C_{m+1}, \dots, C_M\} \), whereas $f_B$ detects all $\{C_1,\dots C_M\}$ but only partially. In general, \underline{\( f_B \) is considered superior to \( f_A \)} since the latter entirely fails to detect the smaller objects $\{C_1,\cdots,C_m\}$. Under this setting, we observe that \( \SMAE(f_A) > \SMAE(f_B) \) but \( \MAE(f_A) = \MAE(f_B) \).

\end{proposition}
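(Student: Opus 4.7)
The plan is to verify both cases by direct computation and then reduce the strict inequality in Case 2 to a short rearrangement-style argument that exploits the ordering of bounding-box sizes.

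Case 1 is essentially a sanity check. With $M = 1$ there is exactly one foreground bounding box and one background region, so $|\mathcal{C}(\boldsymbol{X})| = 2$ and $\alpha = S^{back}_{2}/S^{fore}_{1}$. Substituting directly into \cref{wMAE_def} gives
\begin{equation*}
\SMAE(f) = \frac{S^{fore}_{1}\,\MAE(f_1^{fore}) + S^{back}_{2}\,\MAE(f_{2}^{back})}{S_1^{fore} + S_2^{back}},
\end{equation*}
which is exactly the size-weighted decomposition of $\MAE$ recorded in \cref{eq:MAE_sensitive}, so $\SMAE(f) = \MAE(f)$.

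For Case 2, I would first set up the bookkeeping. Consistent with the \emph{well-trained} hypothesis, assume both $f_A$ and $f_B$ produce no false positives on the background nor on the non-salient pixels lying inside any foreground bounding box, so that all error comes from false negatives. Since both models correctly identify the same number $\sum_{i=m+1}^M |C_i|$ out of the same pool of $\sum_{i=1}^M |C_i|$ salient pixels, their $\FN$ counts coincide and thus $\MAE(f_A) = \MAE(f_B)$, showing that $\MAE$ is blind to the behavioral difference. Letting $p_k > 0$ denote the pixels of $C_k$ that $f_B$ recovers, with $\sum_{k=1}^M p_k = \sum_{k=m+1}^M |C_k|$, the per-box MAEs become
\begin{equation*}
\MAE(f_A|_{\boldsymbol{X}_k^{fore}}) = \mathbb{I}[k \leq m]\cdot\frac{|C_k|}{S_k^{fore}}, \qquad \MAE(f_B|_{\boldsymbol{X}_k^{fore}}) = \frac{|C_k|-p_k}{S_k^{fore}},
\end{equation*}
and both background contributions vanish.

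Plugging these per-box values back into \cref{wMAE_def} and subtracting, the claim $\SMAE(f_A) > \SMAE(f_B)$ reduces to the inequality
\begin{equation*}
\sum_{k=1}^m \frac{p_k}{S_k^{fore}} > \sum_{k=m+1}^M \frac{|C_k|-p_k}{S_k^{fore}}.
\end{equation*}
The key observation is that the equal-detection hypothesis rewrites as $\sum_{k=1}^m p_k = \sum_{k=m+1}^M (|C_k|-p_k)$, so both sides distribute the same total mass against the reciprocals $1/S_k^{fore}$. Because $S_1^{fore} \leq \dots \leq S_M^{fore}$, the reciprocals on the left dominate those on the right termwise, delivering the strict inequality as long as at least one $p_k$ with $k \leq m$ is nonzero — precisely what the assumption that "$f_B$ detects all $\{C_1,\dots,C_M\}$" guarantees.

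The main obstacle is not the algebra itself but formalizing the \emph{well-trained} hypothesis cleanly enough to force a zero contribution from the background and from the non-salient pixels inside each foreground bounding box; otherwise residual pixel-level disagreements survive and break the clean equality $\MAE(f_A) = \MAE(f_B)$. The most transparent remedy is to further require that $f_A$ and $f_B$ agree on those pixels, confining every discrepancy to the $m$ smallest objects — exactly the regime the proposition is designed to highlight.
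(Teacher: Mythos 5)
Your Case~1 reduction matches the paper's verbatim. In Case~2 you and the paper share the same setup: assume zero background error, track how much of each $C_k$ the partial model $f_B$ recovers (your $p_k$, the paper's $\rho_k|C_k|$), exploit the equal-detection constraint, and reduce to a weighted-reciprocal inequality. You diverge in the closing step, and your route is actually tighter. The paper handles $M=2$ by direct cancellation, but for $M\ge 3$ it upper-bounds $\sum_{i>m}|C_i|/S_i^{fore}$ by $\big(\sum_{i>m}|C_i|\big)/S_{m+1}^{fore}$ and lower-bounds $\sum_i\rho_i|C_i|/S_i^{fore}$ by $\big(\sum_i\rho_i|C_i|\big)/S_M^{fore}$, arriving at the upper bound $\big(\sum_i\rho_i|C_i|\big)\big(1/S_{m+1}^{fore}-1/S_M^{fore}\big)$ on $\SMAE(f_B)-\SMAE(f_A)$; under $S_{m+1}^{fore}\le S_M^{fore}$ this quantity is \emph{non-negative}, so the concluding ``$<0$'' does not actually follow from the chain the paper writes. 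Your rearrangement argument closes the inequality correctly and uniformly in $M$: the equal masses $\sum_{k\le m}p_k=\sum_{k>m}(|C_k|-p_k)$ are spread on the left over reciprocals all at least $1/S_m^{fore}$ and on the right over reciprocals all at most $1/S_{m+1}^{fore}$, and $S_m^{fore}\le S_{m+1}^{fore}$ finishes it. Two small caveats. First, strictness: if all $S_k^{fore}$ coincide the two sides agree and you get equality, so ``at least one $p_k>0$ with $k\le m$'' is not sufficient on its own---you need a strict size gap such as $S_m^{fore}<S_{m+1}^{fore}$, exactly the hypothesis the paper inserts explicitly in its $M=2$ case. Second, your worry about formalizing ``well-trained'' is fair but is precisely the assumption the paper also makes silently: every per-box $\MAE$ it writes tacitly sets false positives to zero, which is what lets $\MAE(f_A)=\MAE(f_B)$ collapse to an equal-false-negative count.
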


\noindent\textbf{Remark.} \cref{fig:eg} provides a toy illustration of \textbf{Case 2}. Note that for MAE-based metrics (including \( \MAE \) and \( \SMAE \)), lower values indicate better performance. However, the above proposition highlights that the traditional \( \MAE \) measure is highly sensitive to object size, allowing the performance on larger objects to dominate the evaluation process. In contrast, \( \SMAE \) effectively mitigates this issue, providing a more fair assessment.

\subsubsection{Size-Invariant Composite Metrics} \label{SI-F}
Here we instantiate our size-invariant principle with common composite metrics, including $\F$-measure and $\AUC$.

\noindent\textit{\textbf{Size-Invariant $\F$-measure ($\SF$). }} Following the idea of \cref{principles of SI_eval}, we define the $\SF$ as follows:

\begin{equation}
    \label{wF_def}
    \SF(f)=\frac{1}{M}\sum_{k=1}^{M}\F(f_k^{fore}),
\end{equation}
where $\F(f_k^{fore})$ denotes the $F$-score of the prediction $f(\boldsymbol{X}^{fore}_k)$ with its corresponding ground-truth $\boldsymbol{Y}^{fore}_k$. Similarly, we can also make an informal analysis of the size sensitivity of $\SF$ following the same idea as in \cref{prop1}. For details, please refer to \cref{20250421SecB.2}.





\noindent\textit{\textbf{Size-Invariant $\AUC$ ($\SAUC$).}} In terms of another common composite metric $\AUC$, we similarly define $\SAUC$ as follows:
\begin{equation}
    \label{wAUC_def}
    \SAUC(f)=\frac{1}{M}\sum_{k=1}^{M}\AUC(f_k^{fore}),
\end{equation}
where we have
\[
\AUC(f_k^{fore}) := \mathsf{\tilde{AUC}}(f, \boldsymbol{X}_k^{fore, +}, \boldsymbol{X}^-),
\]
and $\boldsymbol{X}_k^{fore, +}$ represents the salient region in the $k$-th part $\boldsymbol{X}_k$, $\boldsymbol{X}^-$ denotes all non-salient region in the whole image $\boldsymbol{X}$. To this end, we can see that each salient region in the image is paired with all other negative pixels to calculate $\AUC$ independently, thus alleviating the imbalance effect of varying sizes.

\noindent\textbf{Brief Discussion of Size Sensitivity in $\E_m$ and $\Sm_m$.}  
Given that $\E_m$ and $\Sm_m$ are neither pixel-wise nor region-wise independent, the framework introduced in \cref{Revisting} may not be directly applicable. Nevertheless, as detailed in \cref{20250420SecA.2}, a closer examination of the computational formulations reveals indicative evidence of inherent size sensitivity therein. Fortunately, compared to other metrics such as $\MAE$ and $\AUC$, the size-sensitivity issues in $\E_m$ and $\Sm_m$ are relatively less pronounced, as both incorporate globally multiple cues to yield a more holistic assessment. Moreover, we emphasize that even when practitioners aim to achieve high performance on $\E_m$ and $\Sm_m$, \textbf{they typically adopt common loss functions (e.g., $\BCE$, $\AUC$, and $\mathsf{Dice}$) introduced in \cref{20250404Sec4.2} to supervise model learning}. In other words, no well-established guidance is currently available for directly optimizing a model toward improved $\E_m$ and $\Sm_m$ scores. Consequently, models may still exhibit a bias toward larger objects during optimization, thereby reinforcing the necessity of our proposed $\mathsf{SIOpt}$. Accordingly, as demonstrated in \cref{Experiments}, $\mathsf{SIOpt}$ consistently improves performance on both $\E_m$ and $\Sm_m$ in most cases. This stems from the model's enhanced ability to adequately learn each object without suppression from others, leading to a more balanced and comprehensive representation of salient regions.

\section{Size-Invariant SOD Optimization} \label{SI-SOD}
In previous sections, we have outlined how to achieve size-invariant evaluation for SOD tasks. Now, in this section, we elaborate on how to optimize these size-invariant metrics to promote practical SOD performance directly.
\subsection{A Generic Principle for Size-Invariant Optimization \label{20250408SIOpt}}
Guided by the analyses in Sec.\ref{Revisting}, it becomes evident that \textbf{existing optimization strategies for the SOD model also exhibit size variance} when multiple objects are present in an image. Similarly, the primary reason is that the pixel-level classification loss function is applied across the entire image, allowing larger objects to dominate the learning process.

Therefore, the core idea of the solution is to ensure that \uline{the model independently considers the pixel-level classification error of each object} and learns to minimize these errors equally. Inspired by the principles outlined in Sec.\ref{principles of SI_eval}, we propose the following optimization paradigm to achieve size-invariant SOD in practice:
\begin{equation}
\label{eq:loss}
\begin{aligned}
\mathcal{L}_{\SI}(f, \boldsymbol{X}, \boldsymbol{Y})= \frac{1}{M+\alpha_{SI}} \left[\sum_{k=1}^{M} L(f^{fore}_k)  + \alpha_{SI} L(f^{back}_{M+1})\right],
\end{aligned}
\end{equation}
where $L(f^{fore}_k)$ and $L(f^{back}_{M+1})$ represent the loss value of the specific $k$-th foreground and background parts, respectively; $L(\cdot)$ could be any popular loss used in the SOD community and $\alpha_{SI}$ is a self-induced parameter to determine the contribution of the remaining background. For simplicity, we focus here on a single image pair \((\boldsymbol{X}, \boldsymbol{Y})\) as an example, while the value of \(\mathcal{L}_{\SI}(f, \boldsymbol{X}, \boldsymbol{Y})\) is computed over the entire dataset \(\mathcal{D}\) during training.

 Based on Eq.(\ref{eq:loss}), in what follows, we will present how to instantiate size-invariant optimizations (SIOpt) with different SOD goals. 

\subsection{Instantiations of Size-invariant Optimizations \label{20250404Sec4.2}}
\subsubsection{SIOpt with Classification-aware Losses} \label{20250411:Sec.4.2.1}
In conventional SOD tasks, the most commonly used loss function is \(\BCE\) \cite{EDN, ICON,DBLP:journals/pami/TangSQLWYJ17}, which formulates SOD as a \textbf{pixel-level binary classification} problem aimed at maximizing the posterior probability. In this context, we introduce the following size-invariant version:
\begin{equation}\label{319eq:22}
\begin{aligned}
\mathcal{L}_{\SI\BCE}(f)= \mathop{\hat{\mathbb{E}}}_{(\boldsymbol{X},\boldsymbol{Y}) \sim \mathcal{D}}&\bigg[\sum_{k=1}^{M} \frac{\ell_{\BCE}(f(\boldsymbol{X}^{fore}_k), \boldsymbol{Y}^{fore}_k)}{M+\alpha_{SI}} \\ +&  \frac{\alpha_{SI}\ell_{\BCE}(f(\boldsymbol{X}^{back}_{M+1}), \boldsymbol{Y}^{back}_{M+1})}{M+\alpha_{SI}}\bigg],
\end{aligned}
\end{equation}
where we have
\begin{equation}\nonumber
\begin{aligned}
    & L(f^{fore}_k) := \ell_{\BCE}(f(\boldsymbol{X}^{fore}_k), \boldsymbol{Y}^{fore}_k) =  \\
    & \frac{-\sum_{h=1}^{H_k^{fore}} \sum_{w=1}^{W_k^{fore}} Y_{h,w} \log \hat{P}_{h,w} + (1 - Y_{h,w}) \log (1 - \hat{P}_{h,w})}{S_k^{fore}},
\end{aligned}    
\end{equation}
$\hat{P}_{h,w} := f(\boldsymbol{X})_{h, w}$ represents the predicted value for the pixel located at the $h$-th row and $w$-th column, and $Y_{h,w}$ denotes the corresponding ground truth. $S_k^{fore} = H_k^{fore} \times W_k^{fore}$ represents the total number of pixels in the $k$-th foreground region, where $H_k^{fore}$ and $W_k^{fore}$ are its height and width, respectively. The weight factor $\alpha_{SI}=\frac{S^{back}_{M+1}}{\sum_{k=1}^{M}S_k^{fore}}$ follows a similar formulation as in Eq.(\ref{wMAE_def}). Additionally, $L(f^{back}_{M+1}):=\ell_{\BCE}(f(\boldsymbol{X}^{back}_{M+1}), \boldsymbol{Y}^{back}_{M+1})$ is defined analogously to $\ell_{\BCE}(f(\boldsymbol{X}^{fore}_k), \boldsymbol{Y}^{fore}_k)$. 

Similar to Eq.(\ref{319eq:22}), another pixel-level loss, $\MSE$ \cite{GateNet, LDF, PoolNet} that emphasizes the smoothness of overall prediction, can also be directly reformulated into a size-invariant form, and thus is omitted here. To show the advantage of our proposed SIOpt, we have the following simple proposition:

\begin{proposition}(\textbf{Mechanism of SIOpt}).\label{re-attention_prop} Without loss of generality, we merely consider one sample \((\boldsymbol{X}, \boldsymbol{Y})\) here. Given a separable loss function \( L(f) \), such as $L(f):= \ell_{\BCE}(f(\boldsymbol{X}), \boldsymbol{Y})$ and its corresponding size-invariant loss \( \mathcal{L}_{\SI}(f, \boldsymbol{X}, \boldsymbol{Y}) \) as defined in Eq.(\ref{eq:loss}), the following properties hold:  

\begin{enumerate} 
    \item[(\color{blue}{\textbf{P1}})] The weight assigned by the size-invariant loss satisfies \( \lambda_{\mathcal{L}_{\SI}}(\boldsymbol{X}_k^{fore}) > \lambda_{L}(\boldsymbol{X}_k^{fore}) \) if \( S_k^{fore} < \frac{S}{M + \alpha_{SI}} \), for all \( k \in [M] \).  
    \item[(\color{blue}{\textbf{P2}})] The weight assigned by the size-invariant loss follows an inverse relationship with region size, i.e., \( \lambda_{\mathcal{L}_{\SI}}(\boldsymbol{X}_{k_1}^{fore}) > \lambda_{\mathcal{L}_{\SI}}(\boldsymbol{X}_{k_2}^{fore}) \) if \( S_{k_1}^{fore} < S_{k_2}^{fore} \), for all \( k_1, k_2 \in [M] \), with \( k_1 \neq k_2 \).  
\end{enumerate}  

For simplicity, \( \lambda_{\mathcal{L}_{\SI}}(\boldsymbol{X}_k^{fore}) \) here represents the \textbf{weight} assigned to each pixel in \( \boldsymbol{X}_k^{fore} \) by the size-invariant loss \( \mathcal{L}_{\SI}(f, \boldsymbol{X}, \boldsymbol{Y}) \), whereas \( \lambda_{L}(\boldsymbol{X}_k^{fore}) \) denotes the \textbf{weight} assigned by the original size-variant loss \( L(f) \). Note that pixels within the same region \( \boldsymbol{X}_k^{fore} \) are treated equally in both \( \mathcal{L}_{\SI}(f, \boldsymbol{X}, \boldsymbol{Y}) \) and \( L(f) \).
\end{proposition}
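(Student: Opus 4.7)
The plan is to prove both claims by reading off the coefficient that each loss assigns to an individual pixel in its canonical separable expansion, and then comparing the two expressions directly. First I would make precise what $\lambda$ means. For any pixel-level separable loss such as $\ell_{\BCE}$, both $L(f)$ and $\mathcal{L}_{\SI}(f,\boldsymbol{X},\boldsymbol{Y})$ can be written as a linear combination of per-pixel terms $\ell(f(\boldsymbol{X})_{h,w},Y_{h,w})$, and $\lambda_{L}(\boldsymbol{X}_k^{fore})$ (resp.\ $\lambda_{\mathcal{L}_{\SI}}(\boldsymbol{X}_k^{fore})$) is the coefficient appearing in front of any such term for $(h,w)\in \boldsymbol{X}_k^{fore}$. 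Since the proposition itself already stipulates that pixels within the same region receive equal weight, this coefficient is well defined.

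Next I would compute the two coefficients explicitly. For the standard loss $L(f)=\tfrac{1}{S}\sum_{h,w}\ell(f(\boldsymbol{X})_{h,w},Y_{h,w})$, every pixel contributes uniformly with $\lambda_{L}(\boldsymbol{X}_k^{fore})=\tfrac{1}{S}$, independently of $k$. For the size-invariant variant, substituting $L(f_k^{fore})=\tfrac{1}{S_k^{fore}}\sum_{(h,w)\in\boldsymbol{X}_k^{fore}}\ell(f(\boldsymbol{X})_{h,w},Y_{h,w})$ into Eq.~(\ref{eq:loss}) immediately gives
$$\lambda_{\mathcal{L}_{\SI}}(\boldsymbol{X}_k^{fore})=\frac{1}{(M+\alpha_{SI})\,S_k^{fore}}.$$

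With these two closed-form expressions in hand, both statements collapse to elementary inequalities. For (\textbf{P1}), the desired condition $\lambda_{\mathcal{L}_{\SI}}(\boldsymbol{X}_k^{fore})>\lambda_{L}(\boldsymbol{X}_k^{fore})$ becomes $\tfrac{1}{(M+\alpha_{SI})S_k^{fore}}>\tfrac{1}{S}$, which rearranges directly to $S_k^{fore}<\tfrac{S}{M+\alpha_{SI}}$, matching the stated threshold exactly. For (\textbf{P2}), since $M+\alpha_{SI}$ is a positive constant shared across all $k$, the map $S_k^{fore}\mapsto \lambda_{\mathcal{L}_{\SI}}(\boldsymbol{X}_k^{fore})$ is strictly decreasing, so $S_{k_1}^{fore}<S_{k_2}^{fore}$ at once yields $\lambda_{\mathcal{L}_{\SI}}(\boldsymbol{X}_{k_1}^{fore})>\lambda_{\mathcal{L}_{\SI}}(\boldsymbol{X}_{k_2}^{fore})$. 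The main obstacle, if any, is not the arithmetic but unambiguously pinning down the definition of $\lambda$ from the separable form; once that convention is fixed, the rest of the argument is a short algebraic manipulation that does not invoke any structural property of $\ell$ beyond pixel-wise additivity, so it applies verbatim to $\BCE$, $\MSE$, and any other separable classification-aware loss.
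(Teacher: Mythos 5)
Your proposal is correct and follows essentially the same route as the paper's proof: both compute the per-pixel coefficients $\lambda_{L}(\boldsymbol{X}_k^{fore})=1/S$ and $\lambda_{\mathcal{L}_{\SI}}(\boldsymbol{X}_k^{fore})=\tfrac{1}{(M+\alpha_{SI})S_k^{fore}}$ directly from the separable expansions and then read off (\textbf{P1}) and (\textbf{P2}) as elementary inequalities. Your closing remark that the argument uses only pixel-wise additivity, and thus transfers verbatim to $\MSE$ and similar losses, is a slight generalization but matches the paper's own observation that other separable losses behave identically.
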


\noindent\textbf{Remark.} \label{re-attention_remark} The proof of this proposition is straightforward and is provided in Appendix.\ref{Proof_of_4.2.1}. Specifically, (\textcolor{blue}{\textbf{P1}}) suggests that if the object is small below the average size (i.e., \( S_{k}^{fore} < S/(M + \alpha_{SI}) \le S/M \) with \( \alpha_{SI} \ge 0 \)) the model optimized by SIOpt will place greater emphasis on those objects to balance the learning process. In contrast, the pixel's weight assigned by the original size-variant loss \( L(f) \) remains a constant, i.e., \( \lambda_{L}(\boldsymbol{X}^{fore}_k) = 1/S \). Furthermore, (\textcolor{blue}{\textbf{P2}}) demonstrates that SIOpt increases the weight of pixels in smaller salient objects, ultimately mitigating size sensitivity.

\subsubsection{SIOpt with Region-aware Losses}
Unlike the pixel-wise losses discussed earlier (e.g., \(\BCE\) and \(\MSE\)), region-aware (RegA) losses operate at the object level, promoting structural coherence, regional consistency, and a clearer distinction between salient objects and background regions. As investigated in Sec.\ref{20250325sec2.2}, representative examples include Dice Loss~\cite{DiceLoss}, and IoU Loss~\cite{IOULoss}, defined as follows:
\begin{equation}
    \ell_{\mathsf{Dice}}(f(\boldsymbol{X}), \boldsymbol{Y})=1-\frac{2\cdot \sum_{h=1}^{H} \sum_{w=1}^{W} \hat{P}_{h,w} \cdot Y_{h,w}}{\sum_{h=1}^{H} \sum_{w=1}^{W} \hat{P}_{h,w}+ Y_{h,w}},
\end{equation}
and 
\begin{equation}
\begin{aligned}
\ell_{\mathsf{IOU}}(f(\boldsymbol{X}),& \boldsymbol{Y})= \\
1-&\frac{\sum_{h=1}^{H} \sum_{w=1}^{W} (\hat{P}_{h,w} \cdot Y_{h,w})}{\sum_{h=1}^{H} \sum_{w=1}^{W} \hat{P}_{h,w}+Y_{h,w} - \hat{P}_{h,w} \cdot Y_{h,w}}. 
\end{aligned}
\end{equation}

Accordingly, we propose the following size-invariant framework to address the size-variant challenge of RegA:
\begin{equation}\label{20250323eq25}
    \mathcal{L}_{\SI\mathsf{Dice}}(f)= \mathop{\hat{\mathbb{E}}}_{(\boldsymbol{X},\boldsymbol{Y}) \sim \mathcal{D}}\bigg[\frac{1}{M}\sum_{k=1}^{M}\ell_{\mathsf{Dice}}(f(\boldsymbol{X}^{fore}_k), \boldsymbol{Y}^{fore}_k)\bigg],
\end{equation}
and 
\begin{equation}\label{20250323eq25}
    \mathcal{L}_{\SI\mathsf{IOU}}(f)= \mathop{\hat{\mathbb{E}}}_{(\boldsymbol{X},\boldsymbol{Y}) \sim \mathcal{D}}\bigg[\frac{1}{M}\sum_{k=1}^{M}\ell_{\mathsf{IOU}}(f(\boldsymbol{X}^{fore}_k), \boldsymbol{Y}^{fore}_k)\bigg],
\end{equation}
where we set $\alpha_{SI} \equiv0$ in Eq.(\ref{eq:loss}) since $Y_{h,w}$ is always be $0$ within $(\boldsymbol{X}_{M + 1}^{back}, \boldsymbol{Y}_{M + 1}^{back})$. At first glance, this size-invariant RegA framework may appear to neglect the remaining background $(\boldsymbol{X}_{M+1}^{back}, \boldsymbol{Y}_{M+1}^{back})$ performance. However, it is crucial to emphasize that RegA primarily targets the performance of salient pixels, rather than focusing on global background accuracy. By enforcing size invariance, this optimization strategy enhances object boundary refinement and preserves the global structure of salient objects, ultimately improving model performance concerning region-level metrics (such as $\F$-measure). Furthermore, as detailed in subsequent sections (Sec.\ref{20250324Sec4.3}), RegA losses are often combined with other loss (such as $\BCE$) functions during training to facilitate region-level alignment between predictions and ground truth, where the background $(\boldsymbol{X}_{M+1}^{back}, \boldsymbol{Y}_{M+1}^{back})$ is explicitly used for BCE optimization. 

\subsubsection{SIOpt with Ranking-aware Losses}\label{20250419Sec4.2.3}
Another promising direction is to incorporate ranking-aware (RanA) loss functions \cite{DBLP:journals/tmm/WuHLX24}, such as AUC-based losses \cite{DBLP:journals/tnn/GultekinSRP20,DBLP:journals/pami/0001L00Z24}, which offer \textbf{several key advantages} over traditional pixel-wise losses (e.g., $\BCE$) in SOD tasks. First, AUC optimization prioritizes the relative ranking between salient objects (i.e., positive pixels) and background regions (i.e., negative ones) rather than optimizing absolute pixel values. This encourages the model to establish a well-structured global ranking, \uline{improving object boundary delineation}. Furthermore, given that non-salient pixels typically outnumber salient ones by a large margin, conventional methods often struggle to detect smaller or less frequent objects. In contrast, AUC optimization is \uline{insensitive to data distribution} \cite{DBLP:journals/csur/YangY23,DBLP:conf/nips/CortesM03} and has demonstrated remarkable success in instance-level long-tail classification. Thus, applying AUC-based ranking optimization to pixel-level SOD tasks can therefore significantly enhance the model’s robustness in handling complex scenes with severe class imbalance, improving both detection performance and generalization \cite{AUC_rank, AUC_eval}.

Following existing AUC-based approaches \cite{DBLP:journals/csur/YangY23}, we optimize the following surrogate loss to maximize the pixel-wise AUC value in Eq.~(\ref{pami:eq10}):  
\begin{equation}
 \ell_{\AUC}(f, \boldsymbol{X}^+, \boldsymbol{X}^-) = \sum_{p=1}^{S^{+}} \sum_{q=1}^{S^{-}} \frac{\ell_{sq}(f^{+}_{p}-f^{-}_{q})}{S^{+} S^{-}},    
\end{equation}  
where $f^{+}_{p}$ and $f^{-}_{q}$ stay the same as Eq.(\ref{pami:eq9}) and 
$\ell_{sq}(z) = (1-z)^2$ is the square surrogate loss designed to address the non-differentiability of the original indicator function $\mathbb{I}(\cdot)$. We refer interested readers to \cite{DBLP:journals/csur/YangY23, DBLP:journals/tnn/GultekinSRP20} for a more comprehensive discussion on AUC optimization and its theoretical foundations.  

Subsequently, to tackle the size-variant issue of traditional AUC optimizations, we propose the following size-invariant approach to pursue $\SAUC$:
 \begin{equation}\label{20250323eq28}
    \mathcal{L}_{\SI\AUC}(f)= \mathop{\hat{\mathbb{E}}}_{(\boldsymbol{X},\boldsymbol{Y}) \sim \mathcal{D}}\bigg[\underbrace{\frac{1}{M}\sum_{k=1}^{M} \ell_{\SI\AUC}(f, \boldsymbol{X}_k^{fore, +}, \boldsymbol{X}^-)}_{\textcolor{orange}{(\textbf{OP}_0)}}\bigg],
\end{equation}
where 
\begin{equation}
     \ell_{\SI\AUC}(f, \boldsymbol{X}_k^{fore, +}, \boldsymbol{X}^-) = \sum_{p=1}^{S^{fore,+}_k} \sum_{q=1}^{S^{-}} \frac{\ell_{sq}(f^{fore,+}_{k,p}-f^{-}_{q})}{S^{fore,+}_k S^{-}}. 
\end{equation}
Notably, since each \(\boldsymbol{X}_{k}^{fore}\) is paired with all negative pixels in the image to compute \(\SAUC\), only \(M\) foreground parts are required in this context compared to Eq.(\ref{319eq:22}).

 However, directly optimizing Eq.(\ref{20250323eq28}) for SIOpt poses a significant challenge. Specifically, each positive pixel must be paired with all negative pixels to compute \( \ell_{\SI\AUC}(f, \boldsymbol{X}_k^{fore, +}, \boldsymbol{X}^-) \), resulting in an almost \(\mathcal{O}(S^2)\) complexity for each image \( \boldsymbol{X} \). For instance, in the widely used DUTS dataset, where \( \boldsymbol{X} \in \mathbb{R}^{3\times 384 \times 384 } \) and \( \boldsymbol{Y} \in \mathbb{R}^{384 \times 384} \), Eq.(\ref{20250323eq28}) would entail an overwhelming computational complexity of approximately \(\mathcal{O}(10^{10})\) per image—making it impractical for stochastic optimization. 

\noindent\textbf{Pixel-level Bipartite Acceleration (PBAcc).} To address the computational challenge, we further propose the PBAcc strategy, which significantly reduces the complexity from \(\mathcal{O}(S^2)\) to \(\mathcal{O}(S)\) per image. The core idea of PBAcc is to modify the pairing process between positive and negative pixels through a more efficient bipartite formulation, effectively decoupling exhaustive pairwise comparisons while preserving the size-invariant ranking-aware properties of \(\ell_{\SI\AUC}(f, \boldsymbol{X}_k^{fore, +}, \boldsymbol{X}^-)\).   

Specifically, let $\mathcal{G}^{(\boldsymbol{X})}=(\mathcal{N}^{(\boldsymbol{X})},\mathcal{E}^{(\boldsymbol{X})},\mathcal{A}^{(\boldsymbol{X})})$ be a bipartite graph constructed from image ${\boldsymbol{X}}$, where the vertex set $\mathcal{N}^{(\boldsymbol{X})}=\{(x_p,x_q)|x_p, x_q \in \boldsymbol{X}\}$ is the set of all pixel-pixel pairs in image $\boldsymbol{X}$, $\mathcal{E}^{(\boldsymbol{X})}=\{(p,q)|y_p \neq y_q, y_p, y_q \in \boldsymbol{Y}\}$ is the edge set and $\mathcal{A}^{(\boldsymbol{X})} \in \mathbb{R}^{S \times S}$ denotes the the adjacent matrix. For any pixel pair $(p, q) \in \mathcal{E}^{(\boldsymbol{X})}$, if there exists an index $k\in[M]$ such that either $x_p \in \boldsymbol{X}_k^{fore,+}$ or $x_q \in \boldsymbol{X}_k^{fore,+}$, we define
\[ 
 \mathcal{A}^{(\boldsymbol{X})}_{p,q}=
\frac{1}{S_k^{fore, +}S^-},
\]
otherwise $\mathcal{A}^{(\boldsymbol{X})}_{p,q}= 0$. Intuitively, $\mathcal{G}^{(\boldsymbol{X})}$ encodes the relationship that if a pixel pair $(x_p, x_q)$ belongs to different classes (i.e., one is salient while the other is not), their edge is assigned a weight of $\frac{1}{S_k^{fore, +}S^-}$. Notably, each positive pixel is exclusively attributed to a unique $\boldsymbol{X}_k^{fore,+}$.  

More generally, let $\bar{\boldsymbol{y}} \in \mathbb{R}^S$ be the flattened vector of $\boldsymbol{Y}$, then the adjacent matrix $\mathcal{A}^{(\boldsymbol{X})}$ could be expressed as
\begin{equation}
    \mathcal{A}^{(\boldsymbol{X})} = \frac{1}{S^-} \left[\tilde{\boldsymbol{y}}(\boldsymbol{1} - \bar{\boldsymbol{y}})^\top + (\boldsymbol{1} - \bar{\boldsymbol{y}})\tilde{\boldsymbol{y}}^\top \right].
\end{equation}
where $\tilde{\boldsymbol{y}} := \bar{\boldsymbol{y}} \odot \boldsymbol{c} \in \mathbb{R}^S$ denotes an element-wise weighted version of $\bar{\boldsymbol{y}}$, and $\odot$ represents the Hadamard (element-wise) product. The coefficient vector $\boldsymbol{c} = [c_1, c_2, \dots, c_S] \in \mathbb{R}^{S}$ is defined such that $c_p=1/S^{fore,+}_k$ if $x_p \in \boldsymbol{X}_k^{fore,+}$, otherwise $c_p=0$. 

Correspondingly, let $\mathcal{P}^{(\boldsymbol{X})}$ be the Laplacian matrix of $\mathcal{G}^{(\boldsymbol{X})}$, the following equation holds:
\begin{equation}
\begin{aligned}
\mathcal{P}^{(\boldsymbol{X})} &= \text{diag}(\mathcal{A}^{(\boldsymbol{X})}\boldsymbol{1}) - \mathcal{A}^{(\boldsymbol{X})} \\
&= \text{diag}(\tilde{\boldsymbol{y}} + c^+(\boldsymbol{1} - \bar{\boldsymbol{y}})) - \mathcal{A}^{(\boldsymbol{X})},
\end{aligned}
\end{equation}
where $\boldsymbol{1} \in \mathbb{R}^S$ is an all-ones vector; $\text{diag}(\mathcal{A}^{(\boldsymbol{X})}\boldsymbol{1}) \in \mathbb{R}^{S \times S}$ represents the degree matrix of $\mathcal{G}^{(\boldsymbol{X})}$, where $\text{diag}(\cdot)_{i,i}$ corresponds to the degree of vertex $i$ in the graph; the scalar $c^+ := \tilde{\boldsymbol{y}}^\top\boldsymbol{1}$ denotes the sum of nonzero entries in $\tilde{\boldsymbol{y}}$.  

Through further derivation, it is easy to verify the following critical equivalence:
\begin{equation}\label{20250419eq32}
    \textcolor{orange}{(\textbf{OP}_0)} = \underbrace{(\bar{\boldsymbol{f}}^{(\boldsymbol{X})} - \bar{\boldsymbol{y}})^\top\mathcal{P}^{(\boldsymbol{X})}(\bar{\boldsymbol{f}}^{(\boldsymbol{X})} - \bar{\boldsymbol{y}})}_{\textcolor{orange}{(\textbf{OP}_1)}},
\end{equation}
where $\bar{\boldsymbol{f}}^{(\boldsymbol{X})} \in \mathbb{R}^{S}$ denotes the flattened vector of $f(\boldsymbol{X}) \in \mathbb{R}^{H \times W}$ with $S=H \times W$. Consequently, we present the following proposition, which guarantees the computational efficiency of $\textcolor{orange}{(\textbf{OP}_1)}$ over $\textcolor{orange}{(\textbf{OP}_0)}$:

\begin{proposition}(\textbf{Efficiency of $\textcolor{orange}{(\textbf{OP}_1)}$}). \label{pppp222} Given $\boldsymbol{Q}:=(\bar{\boldsymbol{f}}^{(\boldsymbol{X})} - \bar{\boldsymbol{y}}) \in \mathbb{R}^{S}$ and $\mathcal{P}^{(\boldsymbol{X})}$, the calculation of $\textcolor{orange}{(\textbf{OP}_1)}$ can be finished in nearly $\mathcal{O}(S)$ complexity due to
\begin{equation}
\begin{aligned}\nonumber
    \boldsymbol{Q}^\top\mathcal{P}^{(\boldsymbol{X})}\boldsymbol{Q}=&\boldsymbol{Q}^\top\left(\underbrace{\text{diag}(\tilde{\boldsymbol{y}} + c^+(1 - \bar{\boldsymbol{y}}))}_{\textcolor{blue}{\textcircled{1}}} -\mathcal{A}^{(\boldsymbol{X})}\right)\boldsymbol{Q} \\
\end{aligned}
\end{equation}
and
\begin{equation}
\begin{aligned}\label{20250827eq33}
    \boldsymbol{Q}^\top&\mathcal{A}^{(\boldsymbol{X})} \boldsymbol{Q} =
    \boldsymbol{Q}^\top \left(\frac{\tilde{\boldsymbol{y}}(1 - \bar{\boldsymbol{y}})^\top + (1 - \bar{\boldsymbol{y}})\tilde{\boldsymbol{y}}^\top}{S^-}\right)\boldsymbol{Q} \\
    =& \boldsymbol{Q}^\top \underbrace{\left(\frac{\tilde{\boldsymbol{y}}(1 - \bar{\boldsymbol{y}})^\top}{S^-}\right)}_{\textcolor{blue}{\textcircled{2}}} \boldsymbol{Q} + \boldsymbol{Q}^\top \underbrace{\left(\frac{(1 - \bar{\boldsymbol{y}})\tilde{\boldsymbol{y}}^\top}{S^-}\right)}_{\textcolor{blue}{\textcircled{3}}} \boldsymbol{Q}.
\end{aligned}
\end{equation}
\end{proposition}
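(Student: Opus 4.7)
The plan is to exploit the algebraic structure already exposed in the statement: $\mathcal{P}^{(\boldsymbol{X})}$ is the sum of the diagonal block $\textcolor{blue}{\textcircled{1}}$ and (up to a sign) the rank-two adjacency $\mathcal{A}^{(\boldsymbol{X})}$, which itself splits into the two rank-one outer products $\textcolor{blue}{\textcircled{2}}$ and $\textcolor{blue}{\textcircled{3}}$. Since a quadratic form through a diagonal matrix and a quadratic form through a rank-one matrix can each be evaluated in $\mathcal{O}(S)$ time, linearity of quadratic forms in their middle argument will deliver the overall $\mathcal{O}(S)$ bound.

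First I would verify that all auxiliary vectors can be assembled in linear time. The flattened label $\bar{\boldsymbol{y}}$, the coefficient vector $\boldsymbol{c}$ (each entry of which equals $1/S_k^{fore,+}$ whenever the pixel lies in a foreground region $\boldsymbol{X}_k^{fore,+}$ and $0$ otherwise), and hence $\tilde{\boldsymbol{y}} = \bar{\boldsymbol{y}} \odot \boldsymbol{c}$ can be constructed in one pass over the $S$ pixels once the connected-component partition $\mathcal{C}(\boldsymbol{X})$ has been precomputed offline (as stipulated in \cref{principles of SI_eval}). The scalar $c^+ = \tilde{\boldsymbol{y}}^\top \boldsymbol{1}$ and the negative count $S^-$ then cost one additional pass each, so all preprocessing is $\mathcal{O}(S)$.

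Second I would handle the diagonal contribution $\textcolor{blue}{\textcircled{1}}$. For any $\boldsymbol{v} \in \mathbb{R}^S$,
\begin{equation*}
\boldsymbol{Q}^\top \operatorname{diag}(\boldsymbol{v}) \boldsymbol{Q} \;=\; \sum_{i=1}^{S} v_i\, Q_i^2,
\end{equation*}
which is a weighted sum of squares and is computed in $\mathcal{O}(S)$ operations once $\boldsymbol{v} := \tilde{\boldsymbol{y}} + c^+(\boldsymbol{1} - \bar{\boldsymbol{y}})$ is assembled. For the two rank-one pieces of $\mathcal{A}^{(\boldsymbol{X})}$ in \cref{20250827eq33}, I would invoke the associative identity
\begin{equation*}
\boldsymbol{Q}^\top (\boldsymbol{u}\boldsymbol{v}^\top) \boldsymbol{Q} \;=\; (\boldsymbol{u}^\top \boldsymbol{Q})\,(\boldsymbol{v}^\top \boldsymbol{Q}),
\end{equation*}
with $(\boldsymbol{u},\boldsymbol{v}) = (\tilde{\boldsymbol{y}}/S^-,\, \boldsymbol{1}-\bar{\boldsymbol{y}})$ for $\textcolor{blue}{\textcircled{2}}$ and the symmetric choice for $\textcolor{blue}{\textcircled{3}}$. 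Each factor on the right is a vector–vector inner product of length $S$, yielding $\mathcal{O}(S)$ cost per rank-one block and avoiding any explicit formation of the $S \times S$ matrix.

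Summing the three contributions (one diagonal form and two rank-one forms) gives the claimed $\mathcal{O}(S)$ complexity for $\textcolor{orange}{(\textbf{OP}_1)}$, in stark contrast to the $\mathcal{O}(S^2)$ cost of directly enumerating positive–negative pixel pairs in $\textcolor{orange}{(\textbf{OP}_0)}$. The main obstacle I anticipate is purely bookkeeping rather than mathematical: one must carefully justify that the indexing of $\boldsymbol{c}$ across different $\boldsymbol{X}_k^{fore,+}$ is well-defined (each positive pixel belongs to a unique foreground part, as noted immediately after the definition of $\mathcal{A}^{(\boldsymbol{X})}$), so that the rank-one factorization faithfully reproduces the weights $1/(S_k^{fore,+} S^-)$ inside the pairwise sum. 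With this in place, the equivalence \cref{20250419eq32} lifts to the complexity claim with no further work.
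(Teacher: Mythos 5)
Your proposal is correct and follows essentially the same route as the paper: split $\mathcal{P}^{(\boldsymbol{X})}$ into the diagonal piece $\textcolor{blue}{\textcircled{1}}$ and the two rank-one pieces $\textcolor{blue}{\textcircled{2}}$, $\textcolor{blue}{\textcircled{3}}$, then observe that quadratic forms through these can be evaluated with vector operations alone. You make explicit the two identities $\boldsymbol{Q}^\top\operatorname{diag}(\boldsymbol{v})\boldsymbol{Q}=\sum_i v_i Q_i^2$ and $\boldsymbol{Q}^\top(\boldsymbol{u}\boldsymbol{v}^\top)\boldsymbol{Q}=(\boldsymbol{u}^\top\boldsymbol{Q})(\boldsymbol{v}^\top\boldsymbol{Q})$, which the paper's remark invokes only implicitly, so your write-up is a slightly sharper version of the same argument.
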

   
\noindent\textbf{Remark.} Based on Eq.~(\ref{20250827eq33}), we observe that both $\textcolor{blue}{\textcircled{1}}$, $\textcolor{blue}{\textcircled{2}}$ and $\textcolor{blue}{\textcircled{3}}$ are flattened vectors of length $S$, and that the key component of $\textcolor{orange}{(\textbf{OP}_1)}$, denoted as $\boldsymbol{Q} := (\bar{\boldsymbol{f}}^{(\boldsymbol{X})} - \bar{\boldsymbol{y}})$, also has $S$ elements. Consequently, each term in the expression $\boldsymbol{Q}^\top\mathcal{P}^{(\boldsymbol{X})}\boldsymbol{Q}$ can be computed with a time complexity of approximately $\mathcal{O}(S)$ per image. This efficient computation indicates that the proposed PBAcc method significantly reduces the computational cost of the naive approach $\textcolor{orange}{(\textbf{OP}_0)}$ from $\mathcal{O}(S^2)$ to $\mathcal{O}(S)$, thereby making \textbf{SIOpt-RanA computationally feasible} for large-scale datasets without sacrificing performance. Furthermore, this formulation also implies that PBAcc achieves an almost $\mathcal{O}(S)$ space complexity per image, in contrast to the $\mathcal{O}(S^2)$ space requirement of $\textcolor{orange}{(\textbf{OP}_0)}$.

\noindent\textbf{Clarifications.} We acknowledge that previous studies \cite{AUCSeg, DBLP:conf/aistats/RosenfeldMTG14} have explored the application of AUC optimization in pixel-level tasks, such as semantic segmentation and structured prediction. However, our approach differs in two fundamental aspects. First, existing pixel-level AUC methods typically rely on cross-image comparisons between positive and negative pixels from different images, whereas ours does not require such contrasts due to the class-agnostic binary nature of SOD. More importantly, prior approaches primarily focus on addressing the imbalance between salient and non-salient pixels, while our optimization of Eq.~(\ref{20250323eq28}) explicitly accounts for two levels of imbalance: (i) between salient and non-salient pixels and (ii) among salient pixels of varying object sizes. To the best of our knowledge, this is the first work to tackle the size-variance challenge in pixel-level AUC optimization.

\vspace{-0.2cm}
\subsection{Final Size-invariant Optimization Framework \label{20250324Sec4.3}}
Given that different loss functions typically correspond to distinct evaluation metrics, existing methods often adopt a hybrid loss paradigm—integrating multiple losses to optimize practical SOD models simultaneously. Inspired by this, we follow the hybrid SIOpt objective:
\begin{equation}
    \mathcal{L}_{\mathsf{SIOpt}}(f) = \sum_{b=1}^B \gamma_b\cdot\mathcal{L}_{b}(f),
\end{equation}
where $B$ denotes the number of hybrid losses, $\mathcal{L}_{b}(f)$ represents each ($\mathsf{SIOpt}$) loss function, and $\gamma_b$ is the corresponding trade-off coefficient for balanced optimization.

The choice of loss functions plays a crucial role in determining downstream performance. In this paper, we explore two promising variants of \(\mathsf{SIOpt}\). The first variant, \(\mathsf{SIOpt1}\), directly modifies the original loss functions used for training SOD models into their size-invariant counterparts. For instance, given that \cite{EDN} utilizes BCE and Dice losses to train the EDN backbone, \(\mathsf{SIOpt1}\) instead leverages \(\mathcal{L}_{\SI\BCE}(f)\) and \(\mathcal{L}_{\SI\mathsf{Dice}}(f)\) to ensure size invariance, with a similar adaptation for other state-of-the-art SOD frameworks \cite{GateNet, PoolNet+}. The second variant, \(\mathsf{SIOpt2}\), incorporates our proposed \(\mathcal{L}_{\SI\AUC}(f)\) alongside other commonly used loss functions, such as classification- and region-aware losses, to further enhance performance. Detailed $\mathsf{SIOpt}$ implementations tailored to different backbones are provided in Appendix.\ref{details_appendix}.

\vspace{-0.2cm}

\section{Generalization Bound}




In this section, we proceed to derive a generalization bound that characterizes the performance of \(\mathsf{SIOpt}\) on unseen data. However, we note that the theoretical analysis in the context of SOD is non-trivial due to several challenges inherent challenges. First, SOD is typically a structured prediction problem~\cite{Carlo_2020_nips, li_2021_nips}, where the dependencies among output substructures complicate the direct application of Rademacher complexity-based theoretical tools. In particular, classical results for bounding empirical Rademacher complexity~\cite{Probability1991} typically assume real-valued prediction functions. To address this limitation, we leverage the vector contraction inequality~\cite{Maurer_vector_2016}, which enables the extension of these results to vector-valued settings. By incorporating the Lipschitz properties of structured outputs~\cite{vector_Contraction}, we derive a \textbf{sharper} generalization bound. Second, the diversity of loss functions employed in SOD models presents an additional challenge, impeding the derivation of generalization guarantees within a unified analytical framework. To tackle this, we conduct a categorical analysis based on separable and composite losses, and establish their corresponding Lipschitz continuity properties~\cite{p_Lipschitz}. This allows us to obtain a \textbf{unified} generalization result applicable across different loss paradigms.

We summarize our theoretical findings below, while the detailed proofs are provided in \cref{generalization_bound_proof}.

\begin{table*}[!ht]
    \centering
    \caption{Performance comparison on the DUTS-TE dataset. The best results are highlighted in bold, and the second-best are marked with underline. Here, $\uparrow$ indicates that higher values denote better performance, while $\downarrow$ indicates that lower values are preferable.}
    \scalebox{0.8}{
      \begin{tabular}{c|c|cccccccccc}
      \toprule
      \multicolumn{1}{c}{Dataset} & \multicolumn{1}{|c|}{Methods} & $\MAE \downarrow$ & $\SMAE \downarrow$ & $\AUC \uparrow $ & $\SAUC \uparrow $ & $\F_m^{\beta} \uparrow$ & $\SF_m^{\beta} \uparrow $ & $\F_{max}^{\beta} \uparrow$ & $\SF_{max}^{\beta} \uparrow$ & $\E_m \uparrow$ & $\Sm_m \uparrow$ \\
      \midrule
      \multirow{21}[12]{*}{DUTS-TE} & PoolNet \cite{PoolNet+} & \cellcolor[rgb]{ .973,  .973,  .988}0.0656  & \cellcolor[rgb]{ .973,  .969,  .988}0.0609  & \cellcolor[rgb]{ .894,  .89,  .961}0.9607  & \cellcolor[rgb]{ .89,  .886,  .957}0.9550  & \cellcolor[rgb]{ .992,  .992,  1}0.7200  & \cellcolor[rgb]{ .992,  .992,  1}0.7569  & \cellcolor[rgb]{ .973,  .973,  .992}0.8245  & \cellcolor[rgb]{ .941,  .937,  .976}0.8715  & 0.8103  & \cellcolor[rgb]{ .988,  .984,  .996}0.9112  \\
            & + $\mathsf{SIOpt1}$ & \cellcolor[rgb]{ .957,  .957,  .984}\underline{0.0621}  & \cellcolor[rgb]{ .949,  .945,  .98}\underline{0.0562}  & \cellcolor[rgb]{ .871,  .863,  .949}\textbf{0.9706} & \cellcolor[rgb]{ .867,  .859,  .949}\textbf{0.9647} & \cellcolor[rgb]{ .961,  .957,  .984}\underline{0.7479}  & \cellcolor[rgb]{ .922,  .918,  .973}\textbf{0.8172} & \cellcolor[rgb]{ .941,  .941,  .98}\underline{0.8438}  & \cellcolor[rgb]{ .886,  .878,  .957}\textbf{0.9029} & \cellcolor[rgb]{ .949,  .945,  .98}\underline{0.8478}  & \cellcolor[rgb]{ .953,  .949,  .984}\underline{0.9188}  \\
            & + $\mathsf{SIOpt2}$ & \cellcolor[rgb]{ .941,  .937,  .976}\textbf{0.0576} & \cellcolor[rgb]{ .941,  .937,  .976}\textbf{0.0543} & \cellcolor[rgb]{ .878,  .875,  .953}\underline{0.9666}  & \cellcolor[rgb]{ .878,  .871,  .953}\underline{0.9609}  & \cellcolor[rgb]{ .953,  .949,  .984}\textbf{0.7543} & \cellcolor[rgb]{ .953,  .953,  .984}\underline{0.7895}  & \cellcolor[rgb]{ .929,  .925,  .973}\textbf{0.8534} & \cellcolor[rgb]{ .906,  .898,  .965}\underline{0.8919}  & \cellcolor[rgb]{ .945,  .941,  .98}\textbf{0.8508} & \cellcolor[rgb]{ .918,  .91,  .969}\textbf{0.9269} \\
  \cmidrule{2-12}          & ADMNet \cite{ADMNet} & 0.0721  & 0.0666  & 0.9188  & 0.9098  & 0.7127  & 0.7491  & 0.8068  & 0.8355  & \cellcolor[rgb]{ .996,  .996,  1}0.8139  & 0.9079  \\
            & + $\mathsf{SIOpt1}$ & \cellcolor[rgb]{ .988,  .988,  .996}\underline{0.0700}  & \cellcolor[rgb]{ .98,  .98,  .992}\textbf{0.0627} & \cellcolor[rgb]{ .957,  .957,  .984}\underline{0.9360}  & \cellcolor[rgb]{ .953,  .953,  .984}\underline{0.9293}  & \cellcolor[rgb]{ .988,  .988,  .996}\textbf{0.7240} & \cellcolor[rgb]{ .961,  .961,  .984}\textbf{0.7837} & \cellcolor[rgb]{ .992,  .988,  .996}\underline{0.8140}  & \cellcolor[rgb]{ .961,  .961,  .988}\underline{0.8584}  & \cellcolor[rgb]{ .969,  .969,  .988}\textbf{0.8335} & \cellcolor[rgb]{ .953,  .949,  .984}\textbf{0.9190} \\
            & + $\mathsf{SIOpt2}$ & \cellcolor[rgb]{ .988,  .984,  .992}\textbf{0.0692} & \cellcolor[rgb]{ .98,  .98,  .992}\underline{0.0631}  & \cellcolor[rgb]{ .922,  .918,  .973}\textbf{0.9496} & \cellcolor[rgb]{ .922,  .914,  .969}\textbf{0.9433} & \cellcolor[rgb]{ .996,  .996,  1}\underline{0.7183}  & \underline{0.7515}  & \cellcolor[rgb]{ .973,  .973,  .992}\textbf{0.8249} & \cellcolor[rgb]{ .945,  .945,  .98}\textbf{0.8676} & \cellcolor[rgb]{ .992,  .992,  1}\underline{0.8170}  & \cellcolor[rgb]{ .953,  .949,  .984}\underline{0.9188}  \\
  \cmidrule{2-12}          & LDF \cite{LDF} & \cellcolor[rgb]{ .878,  .871,  .953}\underline{0.0419}  & \cellcolor[rgb]{ .89,  .886,  .957}0.0440  & \cellcolor[rgb]{ .965,  .961,  .988}0.9337  & \cellcolor[rgb]{ .957,  .953,  .984}0.9282  & \cellcolor[rgb]{ .871,  .863,  .953}\textbf{0.8203} & \cellcolor[rgb]{ .918,  .914,  .969}0.8201  & \cellcolor[rgb]{ .894,  .89,  .961}0.8735  & \cellcolor[rgb]{ .925,  .922,  .973}0.8802  & \cellcolor[rgb]{ .898,  .894,  .961}\underline{0.8821}  & \cellcolor[rgb]{ .902,  .898,  .965}0.9296  \\
            & + $\mathsf{SIOpt1}$ & \cellcolor[rgb]{ .886,  .878,  .953}0.0440  & \cellcolor[rgb]{ .882,  .875,  .953}\underline{0.0422}  & \cellcolor[rgb]{ .875,  .867,  .953}\underline{0.9690}  & \cellcolor[rgb]{ .867,  .859,  .949}\textbf{0.9645} & \cellcolor[rgb]{ .886,  .882,  .957}0.8076  & \cellcolor[rgb]{ .898,  .89,  .961}\underline{0.8388}  & \cellcolor[rgb]{ .894,  .89,  .961}\underline{0.8736}  & \cellcolor[rgb]{ .871,  .863,  .949}\underline{0.9117}  & \cellcolor[rgb]{ .89,  .882,  .957}\textbf{0.8895} & \cellcolor[rgb]{ .867,  .859,  .949}\textbf{0.9374} \\
            & + $\mathsf{SIOpt2}$ & \cellcolor[rgb]{ .871,  .863,  .949}\textbf{0.0405} & \cellcolor[rgb]{ .871,  .863,  .949}\textbf{0.0397} & \cellcolor[rgb]{ .871,  .867,  .953}\textbf{0.9696} & \cellcolor[rgb]{ .871,  .863,  .953}\underline{0.9628}  & \cellcolor[rgb]{ .871,  .867,  .953}\underline{0.8199}  & \cellcolor[rgb]{ .882,  .875,  .957}\textbf{0.8511} & \cellcolor[rgb]{ .875,  .867,  .953}\textbf{0.8865} & \cellcolor[rgb]{ .867,  .859,  .949}\textbf{0.9140} & \cellcolor[rgb]{ .898,  .894,  .961}0.8820  & \cellcolor[rgb]{ .882,  .875,  .957}\underline{0.9344}  \\
  \cmidrule{2-12}          & ICON \cite{ICON} & \cellcolor[rgb]{ .894,  .89,  .957}0.0461  & \cellcolor[rgb]{ .898,  .894,  .961}0.0454  & \cellcolor[rgb]{ .929,  .925,  .973}0.9469  & \cellcolor[rgb]{ .929,  .925,  .973}0.9398  & \cellcolor[rgb]{ .878,  .875,  .953}\underline{0.8131}  & \cellcolor[rgb]{ .91,  .906,  .965}0.8270  & \cellcolor[rgb]{ .91,  .906,  .965}\underline{0.8648}  & \cellcolor[rgb]{ .922,  .918,  .973}0.8815  & \cellcolor[rgb]{ .894,  .886,  .961}0.8858  & \cellcolor[rgb]{ .886,  .882,  .957}\underline{0.9330}  \\
            & + $\mathsf{SIOpt1}$ & \cellcolor[rgb]{ .89,  .886,  .957}\underline{0.0454}  & \cellcolor[rgb]{ .89,  .882,  .957}\underline{0.0435}  & \cellcolor[rgb]{ .886,  .878,  .957}\textbf{0.9640} & \cellcolor[rgb]{ .882,  .875,  .953}\textbf{0.9593} & \cellcolor[rgb]{ .894,  .886,  .961}0.8031  & \cellcolor[rgb]{ .894,  .89,  .961}\underline{0.8395}  & \cellcolor[rgb]{ .914,  .906,  .969}0.8629  & \cellcolor[rgb]{ .898,  .894,  .961}\textbf{0.8958} & \cellcolor[rgb]{ .886,  .878,  .957}\underline{0.8921}  & \cellcolor[rgb]{ .871,  .863,  .949}\textbf{0.9371} \\
            & + $\mathsf{SIOpt2}$ & \cellcolor[rgb]{ .886,  .882,  .957}\textbf{0.0445} & \cellcolor[rgb]{ .886,  .882,  .957}\textbf{0.0432} & \cellcolor[rgb]{ .906,  .902,  .965}\underline{0.9557}  & \cellcolor[rgb]{ .91,  .902,  .965}\underline{0.9478}  & \cellcolor[rgb]{ .878,  .871,  .953}\textbf{0.8158} & \cellcolor[rgb]{ .886,  .882,  .957}\textbf{0.8455} & \cellcolor[rgb]{ .898,  .894,  .961}\textbf{0.8713} & \cellcolor[rgb]{ .91,  .906,  .965}\underline{0.8886}  & \cellcolor[rgb]{ .875,  .867,  .953}\textbf{0.9007} & \cellcolor[rgb]{ .914,  .91,  .969}0.9275  \\
  \cmidrule{2-12}          & GateNet \cite{GateNet} & \cellcolor[rgb]{ .863,  .855,  .945}\underline{0.0383}  & \cellcolor[rgb]{ .863,  .855,  .945}0.0380  & \cellcolor[rgb]{ .89,  .882,  .957}0.9629  & \cellcolor[rgb]{ .886,  .882,  .957}0.9565  & \cellcolor[rgb]{ .859,  .851,  .945}\textbf{0.8292} & \cellcolor[rgb]{ .882,  .875,  .953}0.8519  & \cellcolor[rgb]{ .878,  .875,  .953}\underline{0.8835}  & \cellcolor[rgb]{ .882,  .878,  .957}0.9041  & \cellcolor[rgb]{ .867,  .859,  .949}\underline{0.9053}  & \cellcolor[rgb]{ .859,  .851,  .945}\textbf{0.9390} \\
            & + $\mathsf{SIOpt1}$ & \cellcolor[rgb]{ .871,  .863,  .949}0.0399  & \cellcolor[rgb]{ .859,  .855,  .945}\underline{0.0375}  & \cellcolor[rgb]{ .882,  .875,  .953}\underline{0.9663}  & \cellcolor[rgb]{ .878,  .875,  .953}\underline{0.9594}  & \cellcolor[rgb]{ .875,  .867,  .953}0.8185  & \cellcolor[rgb]{ .859,  .851,  .945}\textbf{0.8687} & \cellcolor[rgb]{ .894,  .89,  .961}0.8743  & \cellcolor[rgb]{ .871,  .863,  .949}\underline{0.9116}  & \cellcolor[rgb]{ .867,  .863,  .949}0.9038  & \cellcolor[rgb]{ .863,  .855,  .949}\underline{0.9387}  \\
            & + $\mathsf{SIOpt2}$ & \cellcolor[rgb]{ .859,  .851,  .945}\textbf{0.0368} & \cellcolor[rgb]{ .859,  .851,  .945}\textbf{0.0367} & \cellcolor[rgb]{ .859,  .851,  .945}\textbf{0.9740} & \cellcolor[rgb]{ .859,  .851,  .945}\textbf{0.9673} & \cellcolor[rgb]{ .863,  .855,  .949}\underline{0.8282}  & \cellcolor[rgb]{ .875,  .867,  .953}\underline{0.8571}  & \cellcolor[rgb]{ .859,  .851,  .945}\textbf{0.8956} & \cellcolor[rgb]{ .859,  .851,  .945}\textbf{0.9174} & \cellcolor[rgb]{ .863,  .855,  .949}\textbf{0.9087} & \cellcolor[rgb]{ .863,  .855,  .949}\underline{0.9387}  \\
  \cmidrule{2-12}          & EDN \cite{EDN} & \cellcolor[rgb]{ .867,  .859,  .945}\underline{0.0389}  & \cellcolor[rgb]{ .867,  .859,  .945}0.0388  & \cellcolor[rgb]{ .898,  .89,  .961}0.9600  & \cellcolor[rgb]{ .902,  .894,  .961}0.9513  & \cellcolor[rgb]{ .863,  .855,  .949}\textbf{0.8288} & \cellcolor[rgb]{ .875,  .867,  .953}\underline{0.8565}  & \cellcolor[rgb]{ .894,  .886,  .961}0.8752  & \cellcolor[rgb]{ .886,  .882,  .957}\underline{0.9017}  & \cellcolor[rgb]{ .871,  .863,  .949}0.9033  & \cellcolor[rgb]{ .863,  .855,  .949}\underline{0.9385}  \\
            & + $\mathsf{SIOpt1}$ & \cellcolor[rgb]{ .867,  .859,  .945}0.0392  & \cellcolor[rgb]{ .863,  .855,  .945}\underline{0.0381}  & \cellcolor[rgb]{ .882,  .875,  .957}\textbf{0.9658} & \cellcolor[rgb]{ .878,  .875,  .953}\textbf{0.9596} & \cellcolor[rgb]{ .863,  .859,  .949}\underline{0.8260}  & \cellcolor[rgb]{ .863,  .855,  .949}\textbf{0.8672} & \cellcolor[rgb]{ .89,  .886,  .961}\underline{0.8765}  & \cellcolor[rgb]{ .871,  .863,  .949}\textbf{0.9119} & \cellcolor[rgb]{ .863,  .855,  .949}\underline{0.9072}  & \cellcolor[rgb]{ .863,  .855,  .949}\textbf{0.9388} \\
            & + $\mathsf{SIOpt2}$ & \cellcolor[rgb]{ .863,  .855,  .945}\textbf{0.0386} & \cellcolor[rgb]{ .863,  .855,  .945}\textbf{0.0380} & \cellcolor[rgb]{ .89,  .886,  .957}\underline{0.9622}  & \cellcolor[rgb]{ .894,  .886,  .961}\underline{0.9545}  & \cellcolor[rgb]{ .867,  .859,  .949}0.8237  & \cellcolor[rgb]{ .886,  .878,  .957}0.8485  & \cellcolor[rgb]{ .886,  .882,  .957}\textbf{0.8784} & \cellcolor[rgb]{ .894,  .886,  .961}0.8991  & \cellcolor[rgb]{ .859,  .851,  .945}\textbf{0.9091} & \cellcolor[rgb]{ .89,  .882,  .957}0.9325  \\
      \cmidrule{2-12} &  VST \cite{VST2} & \cellcolor[rgb]{ .863,  .855,  .945}0.0367  & \cellcolor[rgb]{ .863,  .855,  .945}0.0362  & \cellcolor[rgb]{ .882,  .878,  .957}\underline{0.9650}  & \cellcolor[rgb]{ .886,  .878,  .957}0.9575  & \cellcolor[rgb]{ .863,  .855,  .949}0.8326  & \cellcolor[rgb]{ .878,  .871,  .953}\underline{0.8548}  & \cellcolor[rgb]{ .886,  .882,  .957}0.8785  & \cellcolor[rgb]{ .89,  .886,  .961}0.8993  & \cellcolor[rgb]{ .863,  .859,  .949}0.9154  & \cellcolor[rgb]{ .871,  .863,  .949}0.9368  \\
      & + $\mathsf{SIOpt1}$ & \cellcolor[rgb]{ .859,  .851,  .945}\textbf{0.0353} & \cellcolor[rgb]{ .859,  .851,  .945}\textbf{0.0347} & \cellcolor[rgb]{ .882,  .878,  .957}0.9649  & \cellcolor[rgb]{ .886,  .878,  .957}\underline{0.9577}  & \cellcolor[rgb]{ .859,  .851,  .945}\textbf{0.8354} & \cellcolor[rgb]{ .875,  .871,  .953}\textbf{0.8556} & \cellcolor[rgb]{ .886,  .882,  .957}\underline{0.8788}  & \cellcolor[rgb]{ .89,  .886,  .961}\underline{0.8995}  & \cellcolor[rgb]{ .859,  .851,  .945}\textbf{0.9183} & \cellcolor[rgb]{ .867,  .859,  .949}\underline{0.9377}  \\
      & + $\mathsf{SIOpt2}$ & \cellcolor[rgb]{ .859,  .851,  .945}\underline{0.0359}  & \cellcolor[rgb]{ .863,  .855,  .945}\underline{0.0356}  & \cellcolor[rgb]{ .882,  .875,  .957}\textbf{0.9659} & \cellcolor[rgb]{ .882,  .875,  .957}\textbf{0.9586} & \cellcolor[rgb]{ .863,  .855,  .949}\underline{0.8335}  & \cellcolor[rgb]{ .878,  .875,  .953}0.8527  & \cellcolor[rgb]{ .886,  .878,  .957}\textbf{0.8800} & \cellcolor[rgb]{ .89,  .886,  .957}\textbf{0.9000} & \cellcolor[rgb]{ .863,  .855,  .949}\underline{0.9161}  & \cellcolor[rgb]{ .867,  .859,  .949}\textbf{0.9379} \\
      \bottomrule
      \end{tabular}%
      }
    \label{tab:exp_result}%
  \end{table*}%

\begin{theorem}[\textbf{Generalization Bound for SI-SOD}]\label{generalization_bound}
Assume $\mathcal{F} \subseteq \{ f:\mathcal{X} \to \mathbb{R}^S \}$, where $S=H \times W$ is the number of pixels in an image,  $g^{(i)}$ is the risk over $i$-th sample, and is $K$-Lipschitz with respect to the $l_{\infty}$ norm, (i.e. $\Vert g(x)-g(\tilde{x})\Vert_\infty \le K\cdot \Vert x-\tilde{x} \Vert_{\infty}$). When there are $N$ $i.i.d.$ samples, there exists a constant $C>0$ for any $\epsilon > 0$, the following generalization bound holds with probability at least $1-\delta$:
\begin{equation}
\begin{aligned}
    &~ \sup_{f \in \mathcal{F}}(\mathbb{E}[g(f)]-\hat{\mathbb{E}}[g(f)]) \\
    \le&~  \frac{CK\sqrt{S}}{N} \cdot \max_i \mathfrak{R}_N(\mathcal{F}|_i)\cdot\log^{\frac{3}{2}+\epsilon}\left(\frac{N}{\max_i \mathfrak{R}_N(\mathcal{F}|_i)}\right) \\
    &+3\sqrt{\frac{\log \frac{2}{\delta}}{2N}},
\end{aligned}
\end{equation}
where $g(f)$ could be any loss functions introduced in \cref{20250404Sec4.2}, $\mathbb{E}[g(f)]$ and $\hat{\mathbb{E}}[g(f)]:=\frac{1}{N}\sum_{i=1}^{N}g^{(i)}(f)$ represent the expected risk and empirical risk. $\mathfrak{R}_N(\mathcal{F}|_i)=\max_{x_{1:N}} \mathfrak{R}(\mathcal{F};x_{1:N})$ denotes the worst-case Rademacher complexity, and $\mathfrak{R}_N(\mathcal{F} | i)$ denotes its restriction to output coordinate $i$. 
\end{theorem}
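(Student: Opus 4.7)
The plan is to prove Thm.~5.1 via the standard three-stage template for uniform convergence: concentration, symmetrization, and contraction. First, I would apply McDiarmid's bounded-difference inequality to the random variable $\Phi(\mathcal{D}) := \sup_{f \in \mathcal{F}}(\mathbb{E}[g(f)]-\hat{\mathbb{E}}[g(f)])$. Since $g^{(i)}$ is $K$-Lipschitz in $l_\infty$ and its output is bounded on the relevant domain (saliency maps lie in $(0,1)^{H\times W}$), replacing a single sample changes $\Phi$ by at most $O(1/N)$, which yields the residual $3\sqrt{\log(2/\delta)/(2N)}$ tail term in the statement. This reduces the problem to bounding $\mathbb{E}[\Phi(\mathcal{D})]$.

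Next, a standard symmetrization argument replaces $\mathbb{E}[\Phi]$ by twice the Rademacher complexity of the loss class $g \circ \mathcal{F}$, namely $\frac{2}{N}\mathbb{E}\sup_{f\in\mathcal{F}}\sum_{i=1}^{N}\sigma_i g^{(i)}(f(\boldsymbol{X}^{(i)}))$, where $\sigma_i$ are i.i.d. Rademacher. The central step is then to strip off $g$ from each summand. Because $g$ is vector-valued-input and only $l_\infty$-Lipschitz, the classical Talagrand contraction principle does not apply directly, so I would invoke the Maurer vector contraction inequality in its $l_\infty$ form: this is precisely where the factor $\sqrt{S}$ and the worst-coordinate complexity $\max_i \mathfrak{R}_N(\mathcal{F}|_i)$ enter, at the cost of a polylogarithmic overhead of order $\log^{3/2+\epsilon}(N/\max_i \mathfrak{R}_N(\mathcal{F}|_i))$ arising from the generic chaining step that compensates the worst-coordinate maximum. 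Combining contraction with the preceding symmetrization gives the leading term $\frac{CK\sqrt{S}}{N}\cdot\max_i \mathfrak{R}_N(\mathcal{F}|_i)\cdot\log^{3/2+\epsilon}(N/\max_i \mathfrak{R}_N(\mathcal{F}|_i))$.

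To make the theorem genuinely ``unified,'' I would then verify the Lipschitz hypothesis on $g$ separately for separable and composite losses introduced in Sec.~4.2. For separable losses such as $\mathcal{L}_{\SI\BCE}$, $\mathcal{L}_{\SI\mathsf{Dice}}$ and $\mathcal{L}_{\SI\mathsf{IOU}}$, the Lipschitz constant follows directly from pixel-wise bounds on the derivative of the underlying potential (clipping $\hat{P}$ away from $\{0,1\}$ in the BCE case as is standard). For the composite $\mathcal{L}_{\SI\AUC}$, I would use the equivalent quadratic form $\boldsymbol{Q}^\top\mathcal{P}^{(\boldsymbol{X})}\boldsymbol{Q}$ derived in Prop.~4.3: the spectral norm of $\mathcal{P}^{(\boldsymbol{X})}$ is uniformly bounded (the weights $\lambda(\boldsymbol{X}_k)=1$ keep all entries of order $1/(S_k^{+}S^{-})$), so the Lipschitz constant in $l_\infty$ is controlled by $S\cdot\|\mathcal{P}^{(\boldsymbol{X})}\|_{\infty}$; the size-invariant reweighting ensures this constant is strictly smaller than in the size-sensitive counterpart, which is precisely the ``sharper bound'' claim advertised after the theorem.

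The main obstacle I anticipate is the contraction step, since applying an $l_\infty$-Lipschitz envelope to a vector-valued hypothesis class does not admit a dimension-free reduction; the extra $\sqrt{S}\cdot\log^{3/2+\epsilon}$ factor is unavoidable and has to be obtained by a careful chaining of the worst-coordinate process rather than a naive union bound, which would cost a full $\log S$ in exponent and ruin the sharpness. Control of the Lipschitz constant for the AUC-style composite loss is the second delicate point, as it requires translating the bipartite-graph representation into a uniform spectral bound on $\mathcal{P}^{(\boldsymbol{X})}$ that is independent of the particular partition $\mathcal{C}(\boldsymbol{X})$.
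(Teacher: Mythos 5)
Your high-level route — McDiarmid concentration, symmetrization to the Rademacher complexity of the loss class, then $\ell_\infty$ vector contraction to peel off $g$ — matches the paper's proof exactly; the paper packages the first two steps as a standard Rademacher-complexity bound and cites the $\ell_\infty$ vector contraction inequality directly (with the $\sqrt{S}$, worst-coordinate $\max_i \mathfrak{R}_N(\mathcal{F}|_i)$, and $\log^{3/2+\epsilon}$ factors) rather than re-deriving it via chaining as you suggest. Simply citing that result is what you should do too: re-proving it is unnecessary and risks errors.

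Where the proposal slips is in the Lipschitz-constant verification, which is the non-trivial part of the theorem and what makes it ``unified.'' You list $\mathcal{L}_{\SI\mathsf{Dice}}$ and $\mathcal{L}_{\SI\mathsf{IOU}}$ under \emph{separable} losses and claim the Lipschitz constant ``follows directly from pixel-wise bounds on the derivative of the underlying potential.'' That miscategorization is fatal to the argument: Dice and IOU are \emph{composite} in the paper's taxonomy (Def.~3.2 — indeed $\ell_{\mathsf{Dice}} = 1 - \F$), so the pixel-wise derivative of the integrand is not a constant but depends on all other pixels through the shared normalizer $\sum \hat{P}_{h,w} + Y_{h,w}$. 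The paper handles this by invoking the $p$-Lipschitz property of confusion-matrix functionals and then bounding $\|\nabla g^{(i)}\|_1$ termwise, arriving at $K = 4/\rho$ with $\rho = \min_{k,i} S_k^{+,i}/S_k^i$; it is precisely the passage from $\rho' = S_k^{+,i}/S^i$ (full image) to $\rho$ (per-frame) that justifies the ``sharper bound'' claim. Your spectral-norm argument for $\mathcal{L}_{\SI\AUC}$ via $\boldsymbol{Q}^\top\mathcal{P}^{(\boldsymbol{X})}\boldsymbol{Q}$ is a genuinely different route the paper does not take (it only works out Dice explicitly), and is left unverified: you assert boundedness of $\|\mathcal{P}^{(\boldsymbol{X})}\|$ but do not actually compute a Lipschitz constant in the $\ell_\infty$ sense required by the contraction lemma, nor show that the resulting constant is smaller than the size-sensitive one. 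To close the gap, adopt the paper's split — BCE/MSE under the $\mu$-Lipschitz separable case, Dice (and similarly IOU) under the composite case with an explicit $\|\nabla g^{(i)}\|_1$ bound and the $\ell_1$–$\ell_\infty$ duality step — rather than attempting a pixel-wise derivative bound for the region-aware losses.
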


\noindent\textbf{Remark.}
We derive a generalization bound of \(\mathcal{O}(\frac{\sqrt{S} \log N}{N})\) for SOD tasks, which implies reliable generalization as the training size \(N\) increases. To the best of our knowledge, this problem remains largely unexplored within the SOD community. Concretely, if the loss function \(g(f)\) is separable and instantiated by a \(\mu\)-Lipschitz continuous function \(\ell(\cdot)\) (cf. \cref{20250411:Sec.4.2.1}), then the Lipschitz constant \(K = \mu\). In the case where \(g(f)\) is a composite function and the Dice loss \cite{DiceLoss} is used, we obtain \(K = \frac{4}{\rho}\), where \(\rho = \min \frac{S_k^{+,i}}{S_k^i}\) denotes the minimum foreground-to-box ratio across all frames \(k\) and samples \(i\). By contrast, conventional size-sensitive losses compute the ratio \(\rho' = \frac{S^{+,i}_k}{S^i}\), based on the proportion of foreground pixels in the entire image. Since \(\mathsf{SIOpt}\) localizes learning to per-object bounding boxes, thereby reducing the denominator from the entire image (\(S^i\)) to localized regions (\(S_k^i\)), it ensures \(\rho > \rho'\), which results in a \textbf{smaller} Lipschitz constant \(K\) and thus leads to a \textbf{tighter} generalization bound.

\section{Experiments} \label{Experiments}
\textbf{Please see \cref{Experiments_appendix} for a longer version.}

\begin{figure*}[!th]
    \centering
    \includegraphics[width=0.95\linewidth]{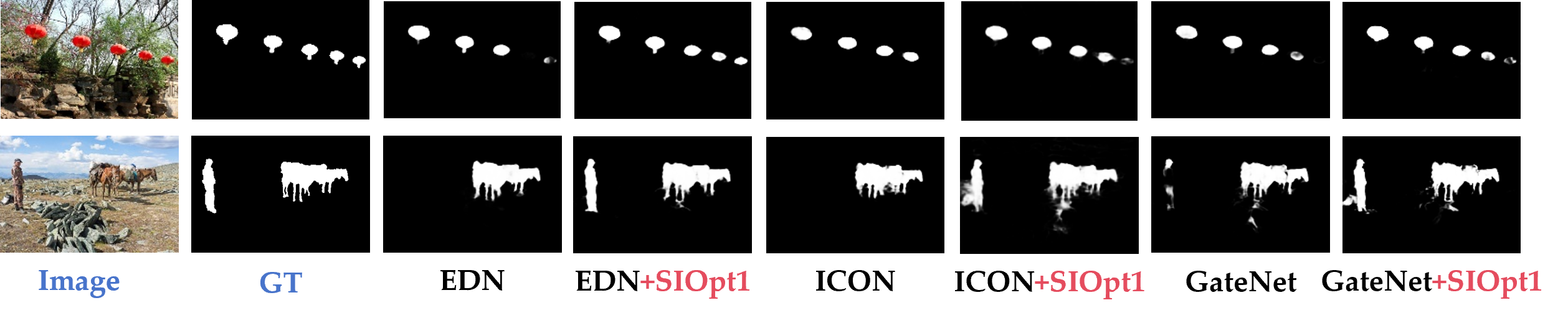}
    \vspace{-0.3cm}
    \caption{Qualitative visualizations on different backbones before and after using our proposed $\mathsf{SIOpt1}$.}
    \label{fig:vis}
    \vspace{-0.3cm}
\end{figure*}
\vspace{-2mm}
\subsection{Overall Performance}\label{20250419Sec6.2}
Partial results on the DUTS-TE and DUT-OMRON datasets are presented in \cref{tab:exp_result}, with additional results provided in \cref{app:overall}. Several key observations can be drawn from these results. First, our proposed $\mathsf{SIOpt}$ approach consistently improves SOD performance across various backbones, with either $\mathsf{SIOpt1}$ or $\mathsf{SIOpt2}$ achieving the best results in most cases. More importantly, as shown in \cref{fig:leida-all}, the performance gains are particularly significant in challenging scenarios involving multiple salient objects, such as the MSOD and HKU-IS datasets. For instance, using ADMNet as the backbone, the relative improvements of $\mathsf{SIOpt2}$ over the baseline reach up to $6.7\%$, $7.5\%$, $7.1\%$, $9.0\%$, $4.0\%$, $6.3\%$, $4.6\%$, $13.7\%$, $2.9\%$ and $2.1\%$ on $\MAE$, $\SMAE$, $\AUC$, $\SI\text{-}\AUC$, $\F_m^\beta$, $\SF_m^\beta$, $\F_{max}^\beta$, $\SF_{max}^\beta$, $\E_m$, and $\Sm_m$, respectively. These results clearly validate the effectiveness and scalable capability of our size-invariant framework. Meanwhile, we observe that, compared to CNN-based models, the improvements over the transformer-based VST baseline are relatively modest and sometimes comparable. One possible reason lies in that the inputs of VST are a sequence of equal-sized patches, naturally reducing the dominance of large objects over smaller ones more or less. However, as further discussed in \cref{20250408sec6.3}, using fixed patch-wise partitions may incidentally fragment the small targets, thereby limiting overall performance gains. Additionally, regarding the comparison between $\mathsf{SIOpt1}$ and $\mathsf{SIOpt2}$, each exhibits its own advantages, making it difficult to declare one categorically superior to the other. The selection of the appropriate $\mathsf{SIOpt}$ objective should therefore be guided by the evaluation criteria and practical requirements of the downstream task \cite{Borji_2019_survey, IDSurvey_2022}.

To further demonstrate the advantages of our size-invariant framework, we present qualitative comparisons in \cref{fig:vis}, \cref{fig:vis_appendix}, and \cref{fig:vis_SIOpt} across different backbone models. While the original models often struggle with identifying smaller salient objects in complex scenes, our method significantly enhances multi-object detection. For example, in the first image of \cref{fig:vis}, EDN only detects the largest sailboat on the right, missing two smaller ones on the left, whereas our method successfully identifies all three. In the fourth image, EDN yields fewer false positives, while ICON detects one additional object at the cost of increased false positives. In the third, fifth, and sixth examples, our framework consistently improves the detection of salient objects on the right. More qualitative comparisons can be found in \cref{Qualitative_appendix}.

\begin{figure}[!t]
    \centering
    \includegraphics[width=1.0\linewidth]{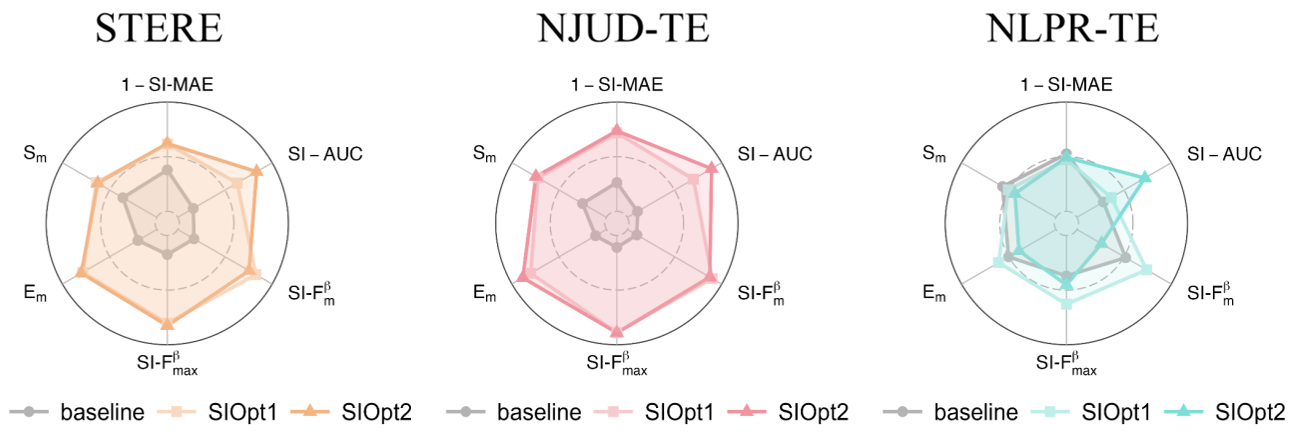} 
    \caption{Radar plots of CRNet-based methods on RGB-D benchmarks.}  
    \vspace{-0.3cm}  
    \label{fig:RGBD}    
\end{figure}
\begin{figure}[!t]
\centering
\subfigure[DUT-OMRON]{   
\includegraphics[width=0.305\linewidth]{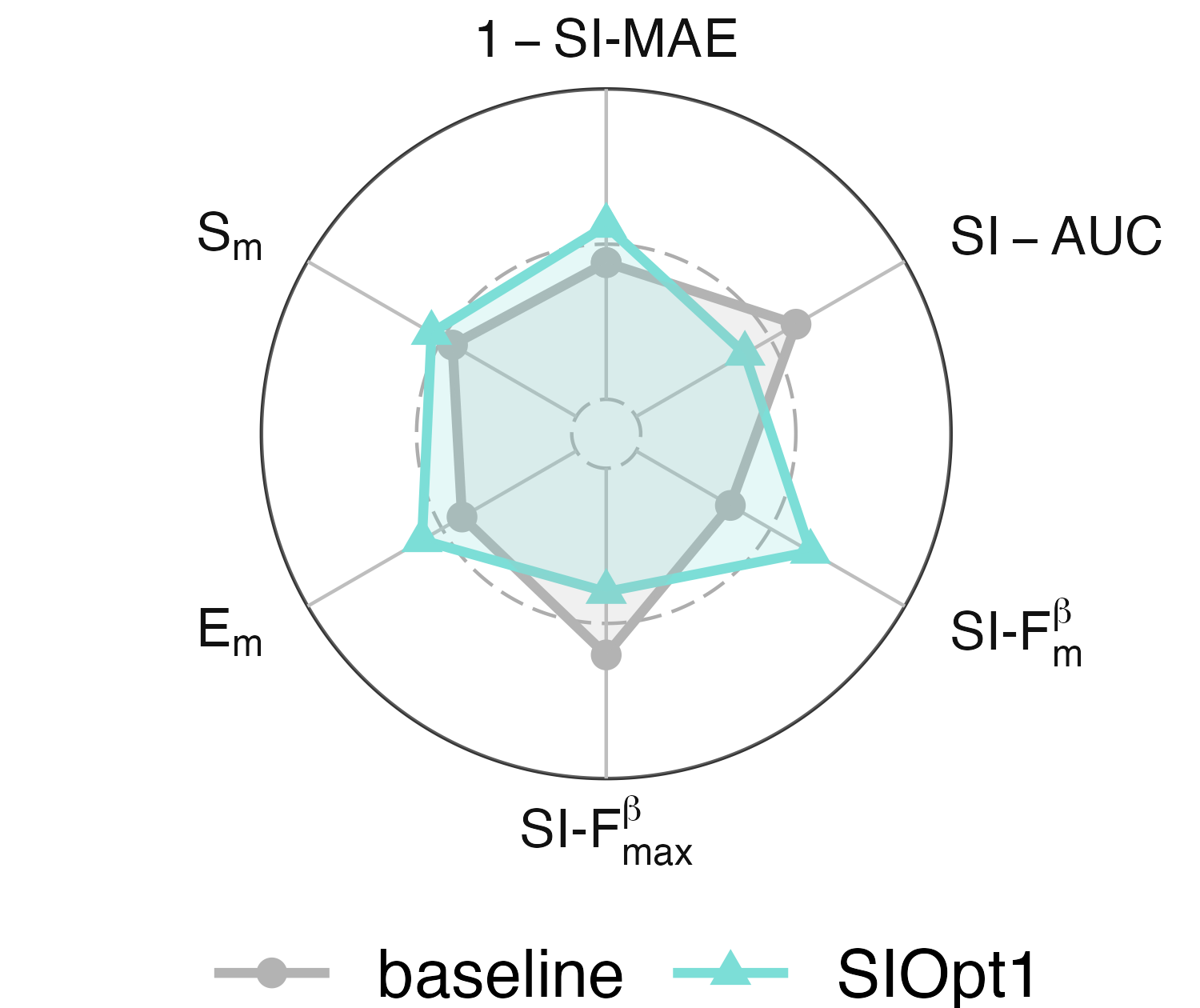}  
}
\subfigure[ECSSD]{
\includegraphics[width=0.305\linewidth]{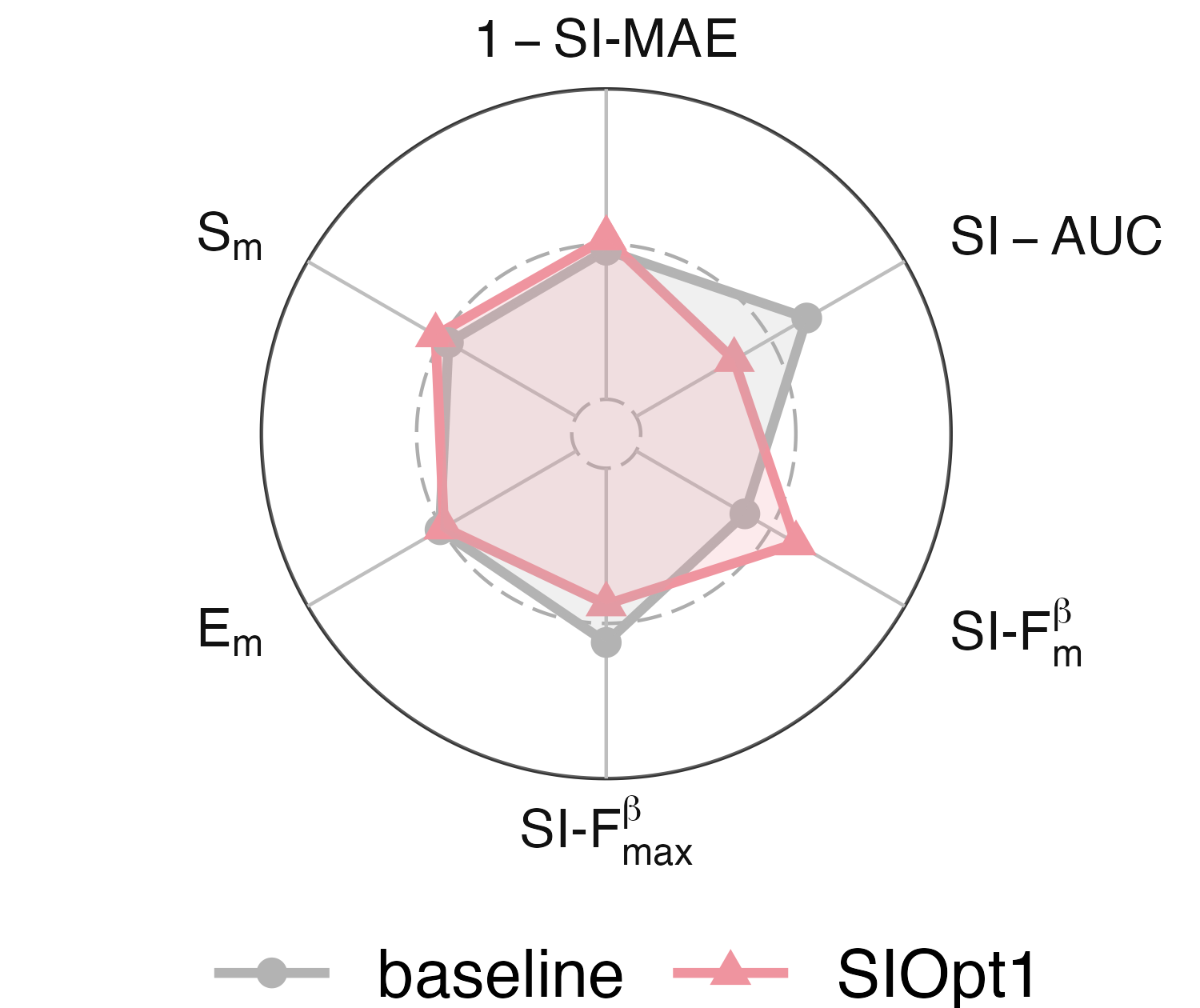} 
}
\subfigure[HKU-IS]{
\includegraphics[width=0.305\linewidth]{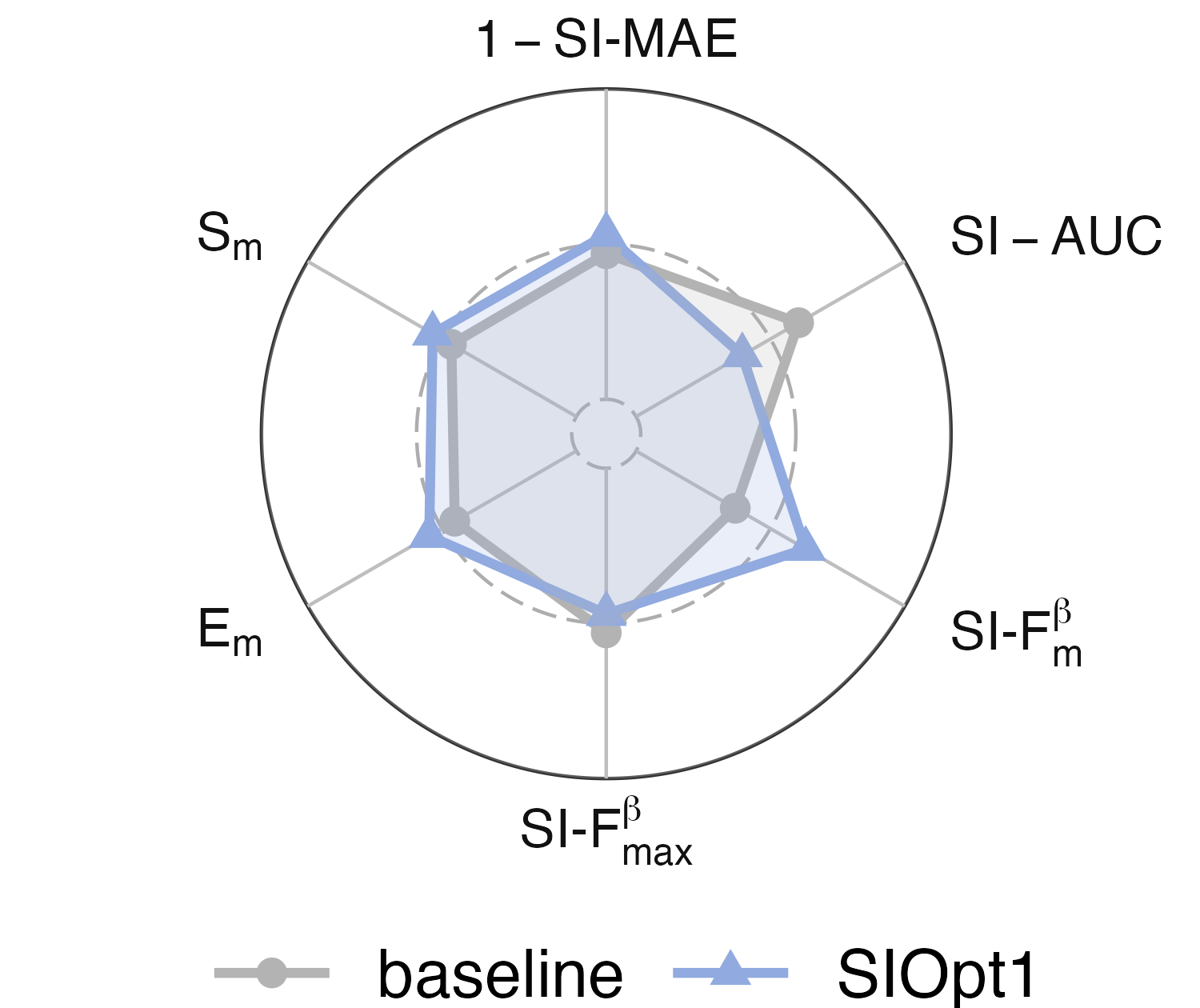} 
}
\caption{Radar plots of TS-SAM-based backbones. }    
\vspace{-0.3mm}
\label{fig:SAM}    
\end{figure}

\vspace{-2mm}
\subsection{Further Scalability Verifications for SIOpt}
\vspace{-1.1mm}
It is important to highlight that both \(\mathsf{SIEva}\) and \(\mathsf{SIOpt}\) are solely conditioned on the ground-truth masks \(\boldsymbol{Y}\), \textbf{making them readily adaptable to various SOD variants, such as RGB-D and RGB-T tasks. }

\noindent\textbf{Extension to Other Tasks.} To validate its generality, we extend our analysis to two widely studied SOD settings—RGB-D and RGB-T \cite{zhang2023c, li2023dvsod}—where depth cues and thermal infrared images are respectively utilized to enhance detection performance. In each setting, we apply \(\mathsf{SIOpt}\) to two competitive baselines \cite{CRNet,hu2024cross,TNet,DCNet} and evaluate their performance on commonly used benchmark datasets. More details can be found in \cref{app:RGBD_details} and \cref{RGBT_detail} of the Appendix. 

\noindent\textbf{Extension to Foundation Models.} Following the idea of Two-Stream SAM (TS-SAM) \cite{SAM3}, we also attempt to fine-tune a size-invariant SAM-adapter using $\mathsf{SIOpt}$ for RGB SOD tasks. More details can be found in \cref{SAM_detail} of the Appendix. 

\noindent\textbf{Performance Comparisons.} Empirical results on various SOD tasks and TS-SAM algorithms \cite{SAM3} are illustrated in \cref{fig:RGBD}, \cref{fig:SAM}, \cref{tab:RGBD}, \cref{tab:RGBT}, and \cref{tab:SAM}, respectively. For consistency with other metrics, we report $1 - \SMAE$ in those figures to align all values under the "higher is better" convention. Due to space constraints, further analysis and discussion are provided in \cref{app:more_RGB-SOD} and \cref{app:out_TS-SAM}. As shown, our proposed approach consistently delivers competitive and, in many cases, superior performance across both scenarios. 


\begin{figure}[!t]
    \centering
    \includegraphics[width=0.85\linewidth]{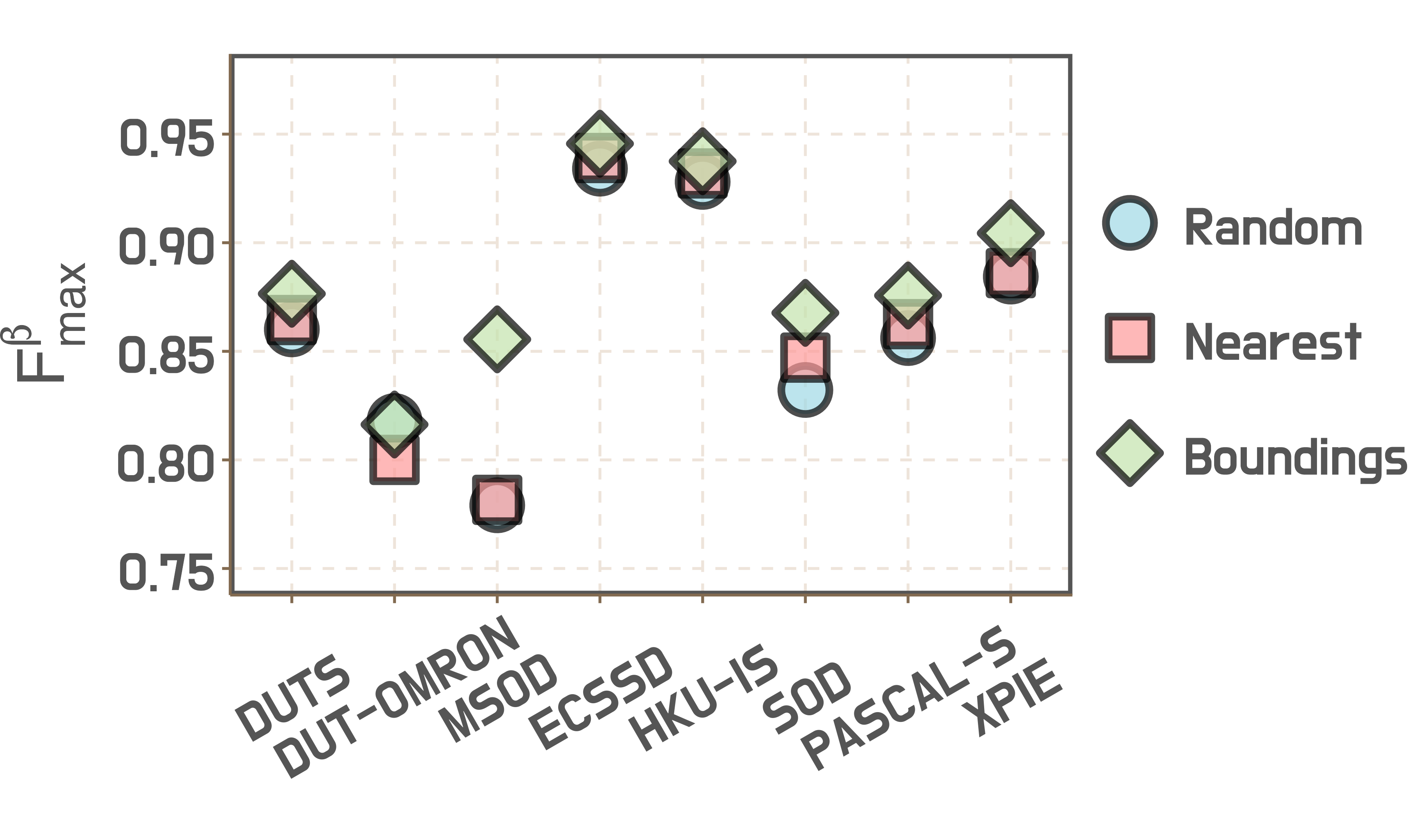}  
   \vspace{-0.35cm}
    \caption{Ablation studies of different approximation $\mathcal{C}(\boldsymbol{X})$ on $8$ benchmarks.}
    \label{fig:diff_CK}
    \vspace{-0.25cm}
\end{figure}

\vspace{-2mm}

\subsection{Different approximation functions of $\mathcal{C}(\boldsymbol{X})$ \label{20250408sec6.3}}
In this section, we investigate the impact of different implementations of $\mathcal{C}(\boldsymbol{X})$ on the final SOD performance. Specifically, we compare the default strategy \textbf{Boundings} (introduced in \cref{principles of SI_eval}) with two alternative splitting schemes, \textbf{Random} and \textbf{Nearest}. The quantitative results are presented in \cref{fig:diff_CK}, with additional introductions available in \cref{20250411:E.4} of the Appendix. Empirically, we observe that $\mathsf{SIOpt1}$ combined with the default \textbf{Boundings} method consistently demonstrates superior performance. A key reason is that bounding box-based splitting encourages the model to focus more on object boundaries, which in turn enhances detection quality. In contrast, the \textbf{Random} scheme may arbitrarily fragment coherent object regions, disrupting the model’s ability to capture holistic object structures and resulting in degraded performance. The \textbf{Nearest} method also introduces ambiguity in the spatial grouping of non-salient pixels, potentially impacting feature consistency. It is also important to note that the $\mathsf{SIEva}$ scores obtained under different $\mathcal{C}(\boldsymbol{X})$ definitions are not strictly comparable, due to variations in partition granularity and object coverage. Despite this, our experiments demonstrate a generally consistent correlation between $\mathsf{SIEva}$ and and those metrics that are less sensitive to object size, such as $\E_m$ and $\Sm_m$, i.e., higher $\mathsf{SIEva}$ scores tend to reflect stronger real-world performance across varying object sizes. Addressing the comparability of evaluation results under different $\mathcal{C}(\boldsymbol{X})$ configurations remains an open and important direction for future research.

\begin{figure}[!t]
    \centering
    \subfigure[MSOD]{   
    \begin{minipage}{0.43\linewidth}
    \includegraphics[width=\linewidth]{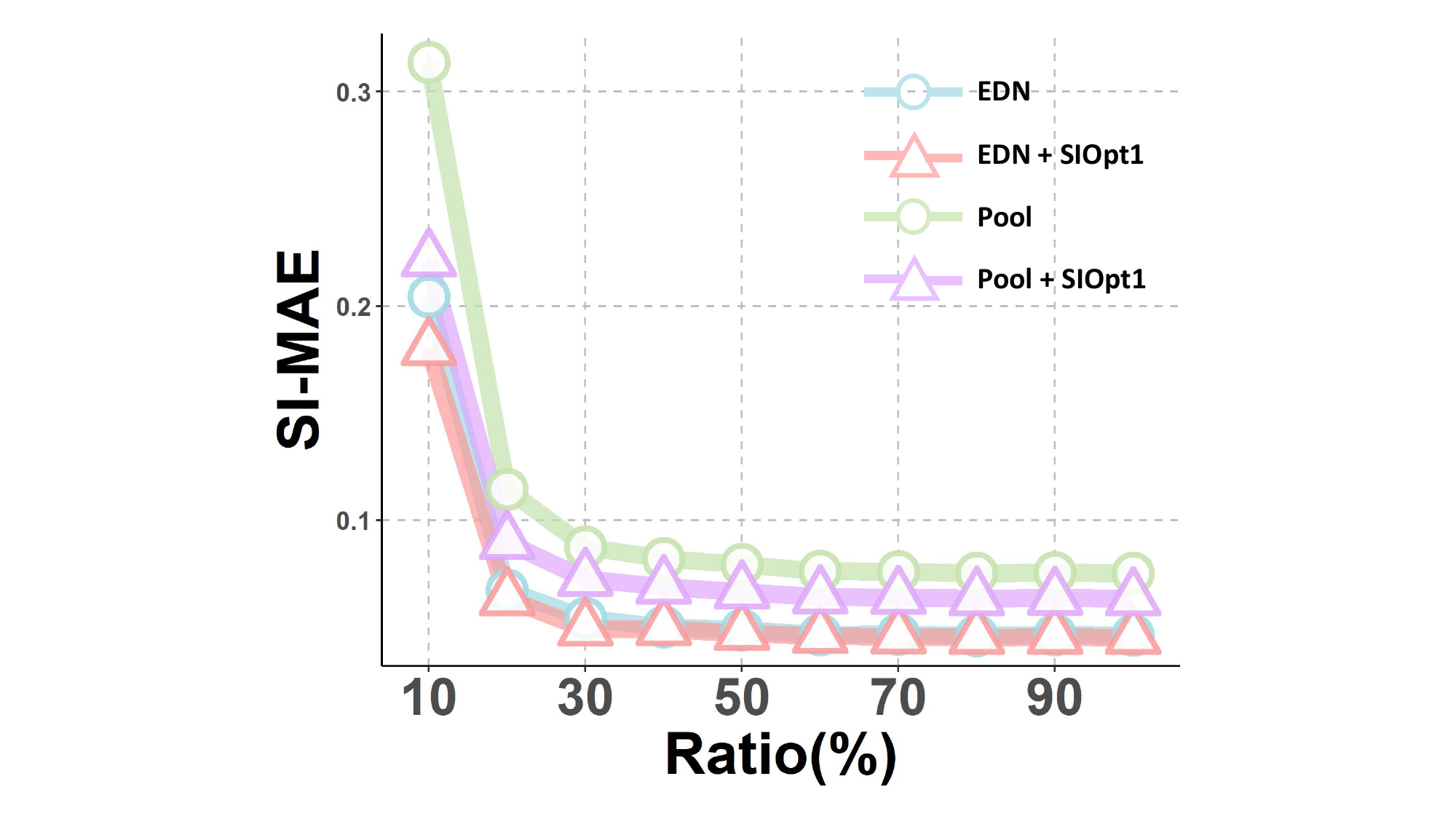}  
    \label{fig:EDN_Pool_msod_ratio_line}
    \end{minipage}
    }
    \subfigure[DUTS-TE]{   
    \begin{minipage}{0.43\linewidth}
    \includegraphics[width=\linewidth]{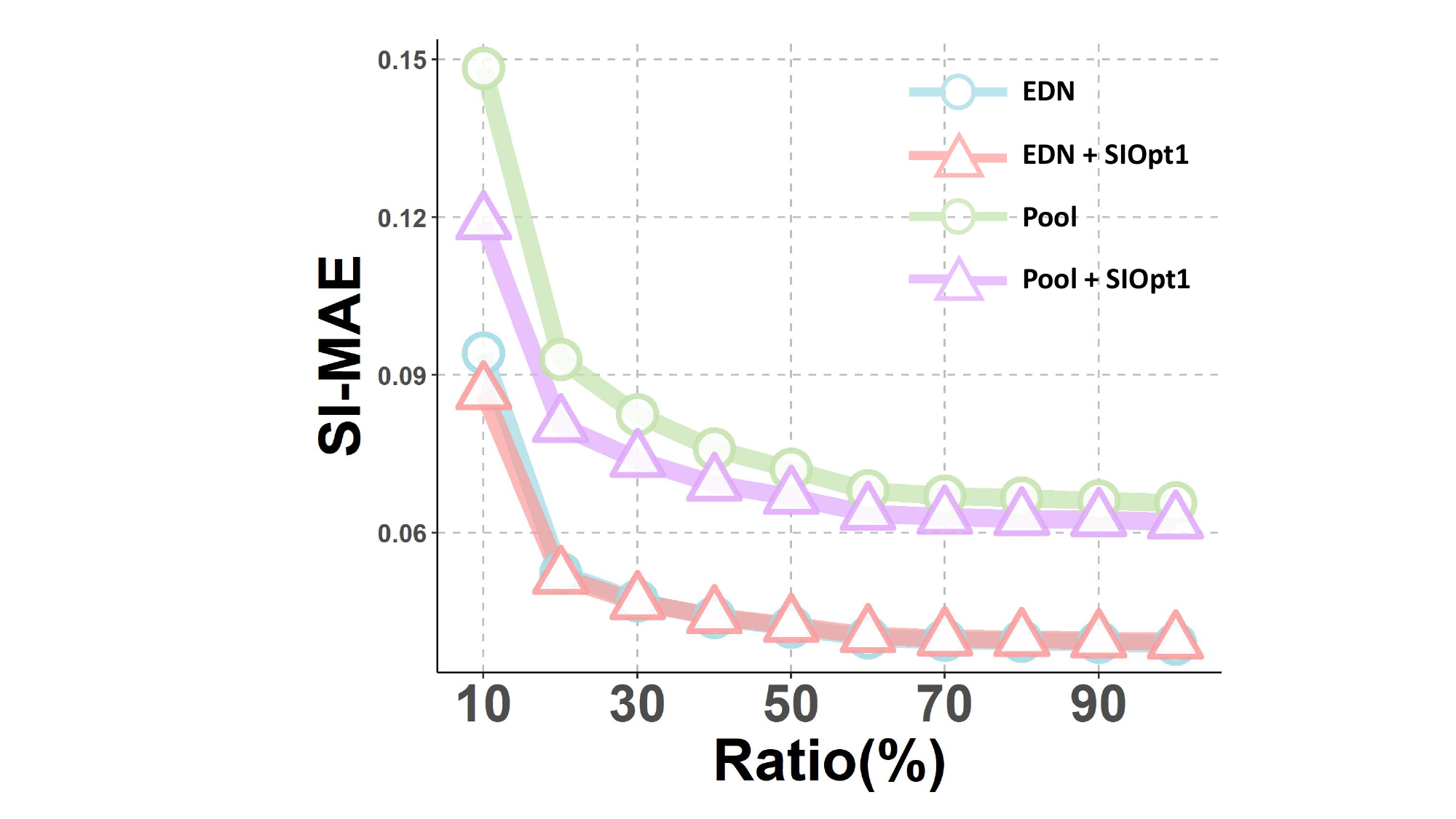}  
    \label{fig:EDN_Pool_DUTS_ratio_line}
    \end{minipage}
    }
    \vspace{-0.3mm}
    \caption{Fine-grained performance comparisons of varying object sizes.}
    \label{fig:fine-analysis-EDN-Pool-ratio} 
    \vspace{-1.3mm}
    \end{figure}

\vspace{-5mm} 

\subsection{Fine-grained Performance Analysis}

\begin{figure}[!t]
    \centering
    \subfigure[MSOD]{   
    \begin{minipage}{0.43\linewidth}
    \includegraphics[width=\linewidth]{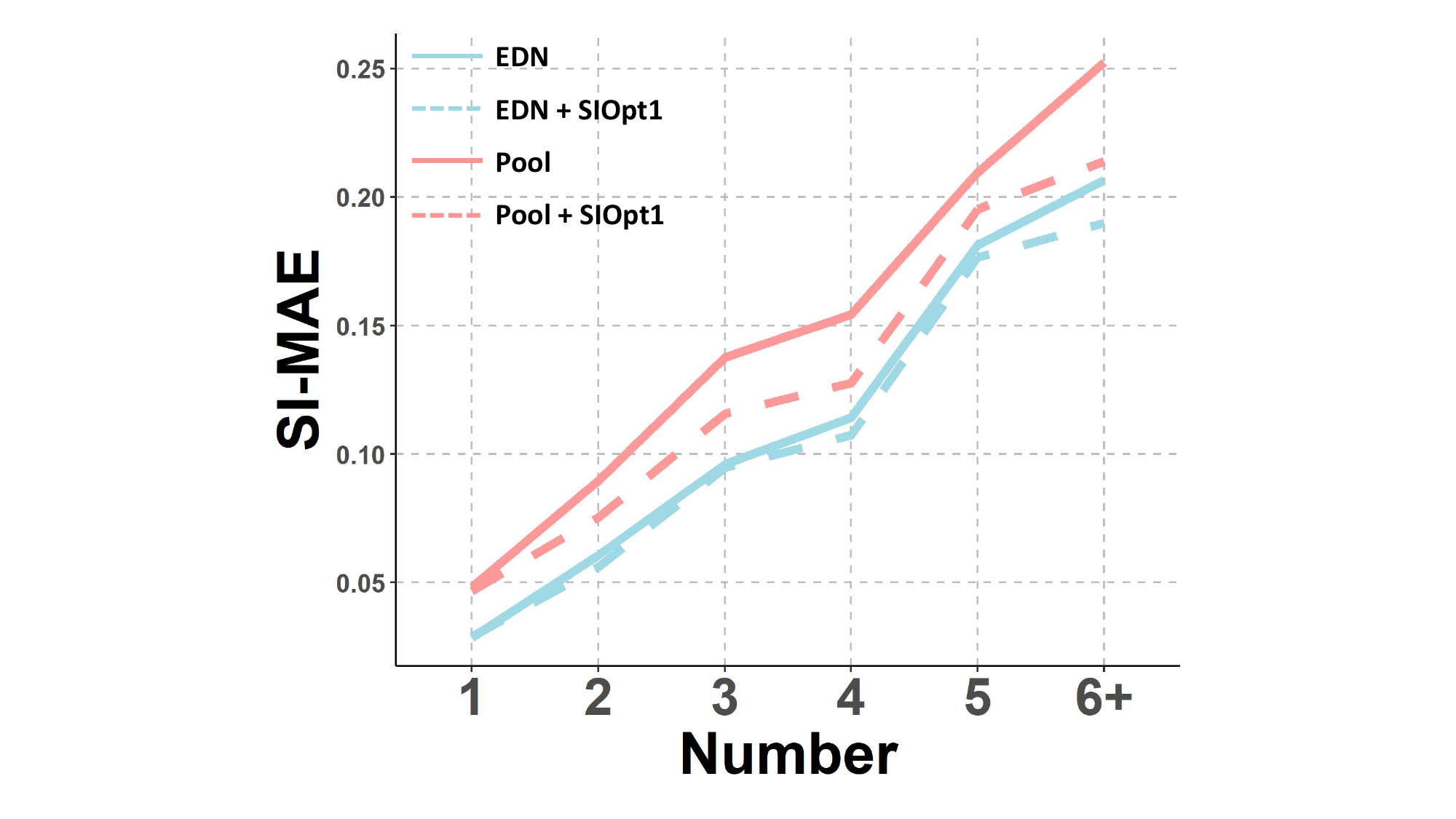}  
    \label{fig:EDN_Pool_msod_num_line}
    \end{minipage}
    }
    \subfigure[DUTS]{   
    \begin{minipage}{0.43\linewidth}
   \includegraphics[width=\linewidth]{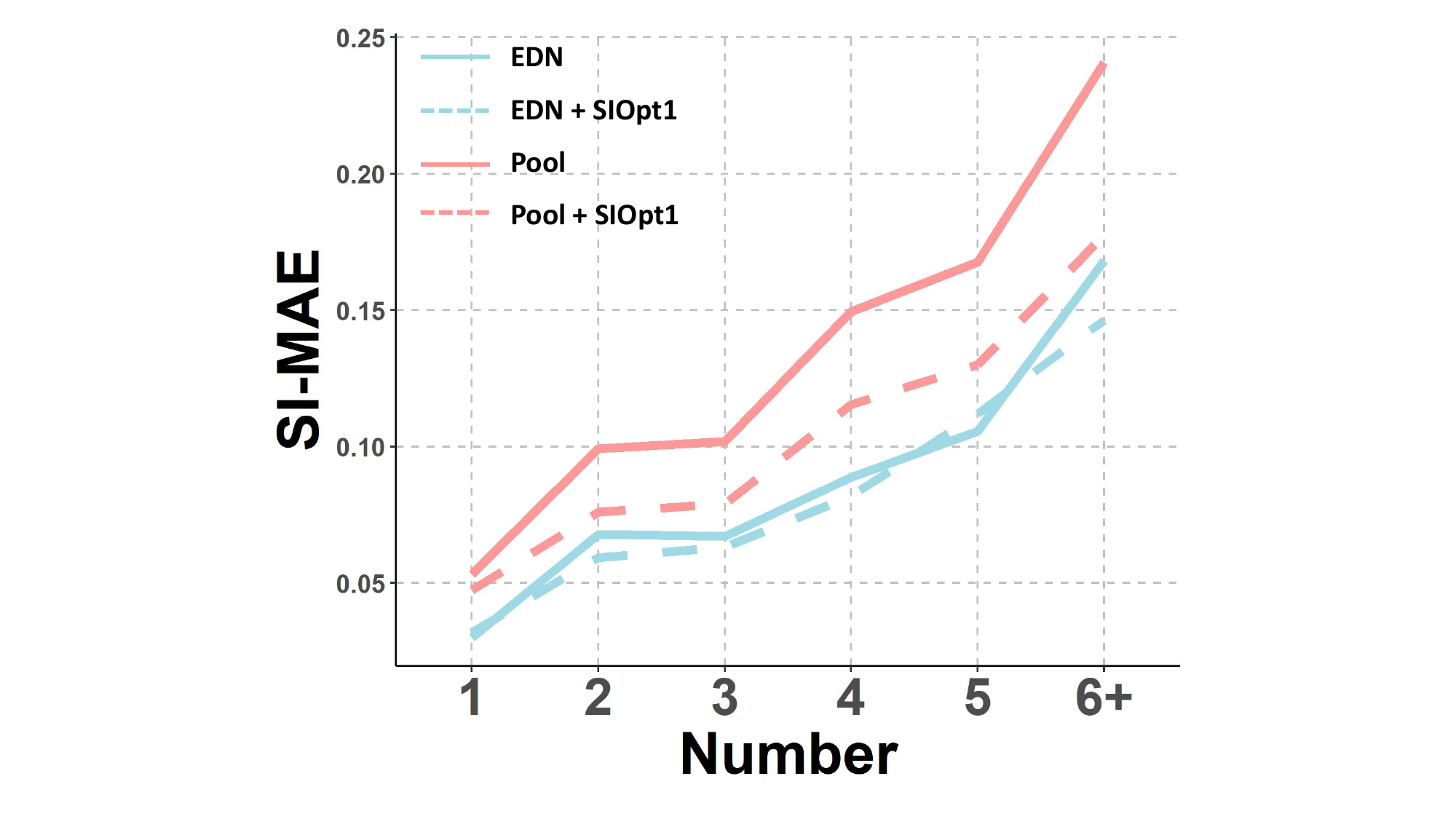}
    \label{fig:EDN_Pool_DUTS_num_line}
    \end{minipage}
    }
    \vspace{-0.3cm}
    \caption{Fine-grained performance under different object numbers.}
    \label{fig:fine-analysis-EDN-Pool-num}
    \vspace{-0.5cm}
\end{figure}

\subsubsection{Performance Comparisons across Varying Sizes}
\cref{fig:fine-analysis-EDN-Pool-ratio} presents the size-relevant performance of salient objects between $\mathsf{SIOpt}$ and traditional SOD approaches. More detailed configurations and performance comparisons are provided in \cref{size-fine-grained_appendix}. Accordingly, the proposed $\mathsf{SIOpt}$ framework consistently outperforms the original methods across all object-size groups. More importantly, the performance improvements are most pronounced for the small-object regime, particularly those occupying less than \(10\%\) of the image area. For instance, on the MSOD dataset, $\mathsf{SIOpt1}$ achieves a performance gain of approximately 0.024 in $\SMAE$ over the baseline EDN model in the \([0\%, 10\%]\) size group. This provides strong empirical evidence that $\mathsf{SIOpt}$ effectively mitigates the limitations of existing methods when dealing with small-scale salient objects. In addition, the performance gap narrows as object size increases, which aligns with expectations, since larger objects are inherently less affected by the size-invariance issue.

\vspace{-2mm}
\subsubsection{Performance Comparisons across Object Quantities}
\cref{fig:fine-analysis-EDN-Pool-num} presents the performance of $\mathsf{SIOpt}$ and traditional SOD methods across varying numbers of salient objects. Detailed configurations and performance comparisons provided in \cref{number-fine-grained_appendix}. We first observe that the performance of our method and the baselines is largely comparable when an image contains only a single salient object. This observation is consistent with our discussions in \cref{Revisting}, where single-object scenarios tend to be less ambiguous and current algorithms could perform very well. However, as the number of salient objects increases, $\mathsf{SIOpt}$ framework could yield better performance than baselines in most cases, particularly on challenging datasets such as MSOD and HKU-IS. (e.g., MSOD and HKU-IS datasets). These trends further underscore the effectiveness of our framework in handling complex scenes with multiple salient objects.

\vspace{-1mm}
\subsubsection{Ablation Studies of $\alpha_{SI}$} \label{Ablation Studies}
Partial results of the ablation studies on the $\alpha_{SI}$ are presented in \cref{fig:ablation_SIMAE}. More detailed discussions are deferred to \cref{ablation_appendix}. We can see that the default of $\alpha_{SI}$ consistently delivers balanced and often superior performance in most cases. Although $\alpha_{SI} = 0$ sometimes yields strong results on $\F$-based metrics, it significantly underperforms on the remaining metrics. The reason lies in that ignoring the background entirely (i.e., $\alpha_{SI}=0$) causes the model to overfit to the foreground regions, achieving high precision locally but poor performance globally. Additionally, $\alpha_{SI} = 1$ shows better performance compared to $\alpha_{SI} = 0$, but still falls short of our adaptive strategy. This consistently supports the effectiveness of choosing a suitable $\alpha_{SI}$.

\begin{figure}[!t]
\centering
\subfigure[MSOD]{   
\begin{minipage}{0.44\linewidth}
\includegraphics[width=\linewidth]{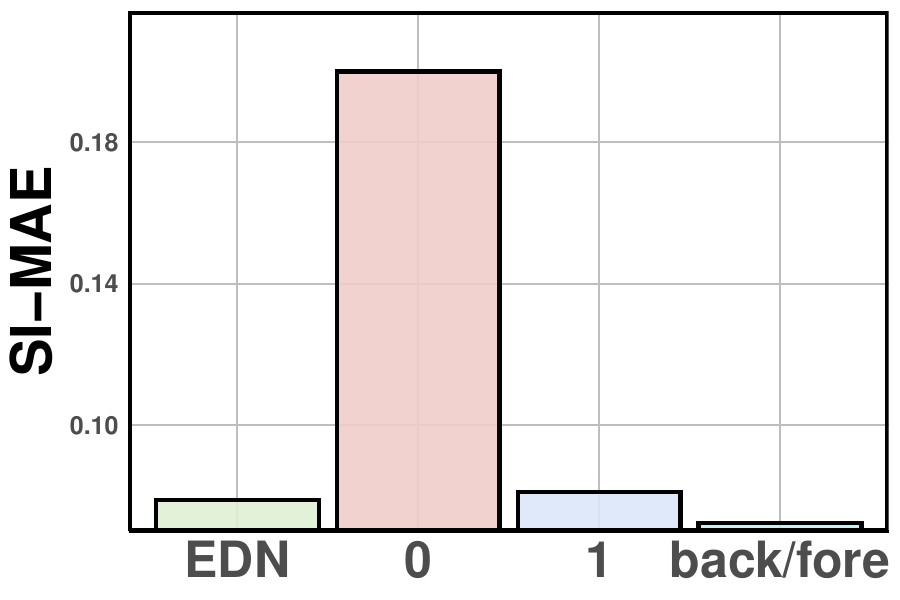}  
\label{fig:EDN_msod_ablation}
\end{minipage}
}
\subfigure[DUTS]{   
\begin{minipage}{0.44\linewidth}
\includegraphics[width=\linewidth]{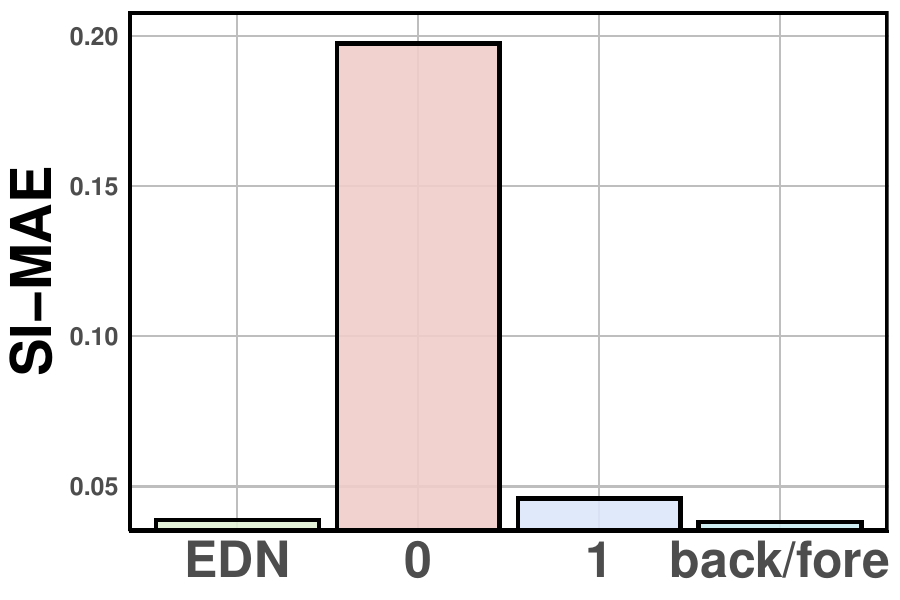}  
\label{fig:EDN_DUTS_ablation}
\end{minipage}
}
\vspace{-0.3cm}
\caption{Ablation Studies of $\alpha_{SI}$ on the DUTS-TE and MSOD datasets.}
\label{fig:ablation_SIMAE}
\vspace{-0.3cm}
\end{figure}

\begin{table}[!t]
    \centering
    \caption{Averaged training cost per epoch (mean $\pm$ std seconds).}
    \vspace{-0.2cm}
    \scalebox{0.8}{
    \begin{tabular}{c|c|c|c}
    \toprule
        Backbone & Original & $\mathsf{SIOpt1}$ & $\mathsf{SIOpt2}$ \\ 
    \midrule
    PoolNet & 523.5 $\pm$ 1.1s & 690.2 $\pm$ 1.5s & 625.4 $\pm$ 9.2s \\ 
    ADMNet &  153.6 $\pm$ 4.8s & 158.8 $\pm$ 3.6s  & 159.9 $\pm$ 4.9s\\ 
    LDF & 109.2 $\pm$ 0.6s & 244.5 $\pm$ 1.4s & 263.4 $\pm$  1.9s \\
    ICON & 162.0 $\pm$ 0.5s & 340.1 $\pm$ 0.1s & 353.1 $\pm$ 4.5s\\ 
    GateNet & 561.2 $\pm$ 0.8s & 1270.2 $\pm$ 35.3s & 505.5 $\pm$ 2.6s  \\ 
    EDN & 340.0 $\pm$ 3.3s & 543.8 $\pm$ 0.7s & 350.3 $\pm$ 3.9s  \\ 
    VST & 260.0 $\pm$ 1.3s  & 381.3 $\pm$ 5.1s & 265.7 $\pm$ 1.2s  \\ 
    \bottomrule
    \end{tabular}
    \label{tab:eff}
    }
    \vspace{-0.35cm}
\end{table}

\vspace{-2mm}
\subsubsection{Time Cost Comparison} \label{main:efficacy}

\cref{tab:eff} summarizes the training overhead comparisons among all evaluated methods. In theory, the time complexity of $\mathsf{SIOpt1}$ is $M$ times that of standard approaches such as $\BCE$ and $\mathsf{Dice}$, which aligns well with the empirical observations reported in \cref{tab:eff} ($M \approx 1.21$). Regarding $\mathsf{SIOpt2}$, although the computation of $\mathcal{L}_{\SI\AUC}$ remains at $\tilde{\mathcal{O}}(S)$ complexity, it is typically paired with auxiliary loss functions to stabilize training and boost performance (e.g., in ICON), which is a common practice in AUC-oriented optimization \cite{MAUC,DBLP:conf/iccv/Yuan0SY21}. This combination introduces a slight increase in computational overhead. Overall, the results demonstrate that the proposed $\mathsf{SIOpt}$ framework achieves promising performance with a reasonable training cost. For completeness, we also include a discussion of preprocessing time and ablation studies of PBAcc in Appendix~\ref{time_cost}.

\begin{table}[!t]
    \centering
    \caption{GPU memory usage of $\mathsf{SIOpt2}$.}
    \vspace{-0.2cm}
    \scalebox{0.8}{
      \begin{tabular}{c|cccc}
      \toprule
      Method & PoolNet & ICON  & GateNet & EDN \\
      \midrule
      $\mathsf{SIOpt2}$ w/ PBAcc & \cellcolor[rgb]{ .796,  .804,  .894}2164 MB & \cellcolor[rgb]{ .796,  .804,  .894}16956 MB & \cellcolor[rgb]{ .796,  .804,  .894}23458 MB & \cellcolor[rgb]{ .796,  .804,  .894}18644 MB \\
      $\mathsf{SIOpt2}$ w/o PBAcc & 32656 MB & /     & /     & / \\
      \bottomrule
      \end{tabular}%
    }
    \label{tab:SIOpt2-comparability}%
    \vspace{-0.2cm}
\end{table}%

\vspace{-2mm}
\subsubsection{Ablation Studies of $\mathsf{SIOpt2}$ with and without PBAcc} 
\cref{main:efficacy} has demonstrated that $\mathsf{SIOpt2}$ incurs a moderate training overhead with PBAcc. \cref{tab:SIOpt2-comparability} reports a controlled experiment result comparing the GPU memory usage of $\mathsf{SIOpt2}$ with PBAcc (w/) and without PBAcc (w/o) across different backbones. Note that, the \textbf{symbol "/"} indicates that the required memory \textbf{exceeds our available limit} (82 GB), even when using a batch size of 1. We can  see that $\mathsf{SIOpt2}$ \textbf{w/ PBAcc} is substantially more memory-efficient than its naive counterpart (\textbf{w/o PBAcc}). These findings consistently highlight the efficiency and practicality of our proposed acceleration scheme. Please refer to \cref{PBAcc_appendix} for detailed discussions.

\vspace{-1mm}

\subsubsection{Empirical Comparisons with Pixel-level AUC Methods}
\cref{fig:AUCSeg} presents a subset of qualitative results comparing our proposed 
\(\mathsf{SIOpt2}\) framework with a state-of-the-art AUC-oriented method, AUCSeg \cite{AUCSeg}. Due to space constraints, detailed discussions are provided in \cref{app:E.10}. Overall, the empirical results consistently highlight the superiority of \(\mathsf{SIOpt2}\), particularly with respect to size-invariant metrics.

\begin{figure}[!t]
    \centering
    \includegraphics[width=1.0\linewidth]{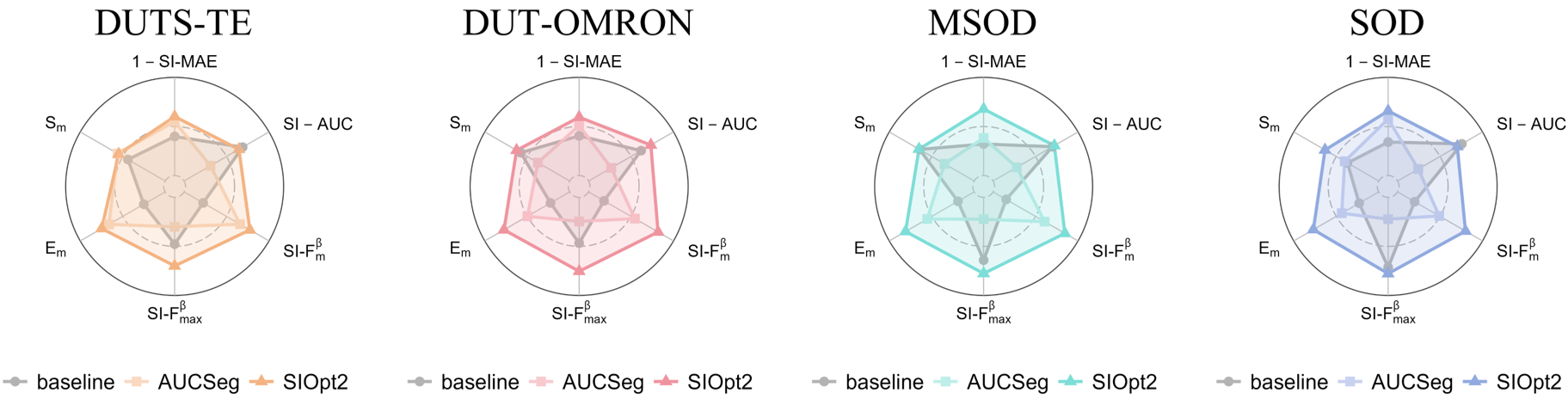} 
    \caption{Performance comparisons taking GateNet as the baseline.}    
    \label{fig:AUCSeg}    
    \vspace{-3mm}
\end{figure}

\vspace{-2mm}
\section{Conclusion}
In this paper, we address the challenge of size-invariance SOD tasks. Through precise derivations, we demonstrate that existing SOD evaluation metrics are inherently size-sensitive—tending to favor larger objects while overlooking smaller ones, especially when multiple salient objects of varying sizes appear within a single image. To tackle this issue, we propose a generic framework for size-invariant evaluation ($\mathsf{SIEva}$) and optimization ($\mathsf{SIOpt}$). Specifically, we introduce three metrics, including $\SMAE$, $\SF$, and $\SI\text{-}\AUC$, that evaluate each salient object individually before aggregating the results. In parallel, we design an end-to-end optimization framework that directly targets these metrics by adaptively balancing object-wise contributions to ensure equitable treatment across different sizes. Theoretically, we provide a generalization analysis of current SOD formulations to support the validity and effectiveness of $\mathsf{SIOpt}$. Extensive experiments on multiple benchmarks confirm the superior performance of our approach.

\vspace{-2mm}
\section*{Acknowledgements}
This work was supported in part by National Natural Science Foundation of China: 62525212, 62236008, 62025604, 62441232, 62441619, U21B2038, U23B2051, 62502496, 62206264 and 92370102, in part by Youth Innovation Promotion Association CAS, in part by the Strategic Priority Research Program of the Chinese Academy of Sciences, Grant No. XDB0680201, in part by the Postdoctoral Fellowship Program of CPSF under Grant GZB20240729, and in part by General Program of the China Postdoctoral Science Foundation under Grant No. 2025M771492.

\vspace{-2mm}
\bibliographystyle{IEEEtran}
\bibliography{main}

\clearpage
\begin{IEEEbiography}[{\includegraphics[width=1in,height=1.25in,clip,keepaspectratio]{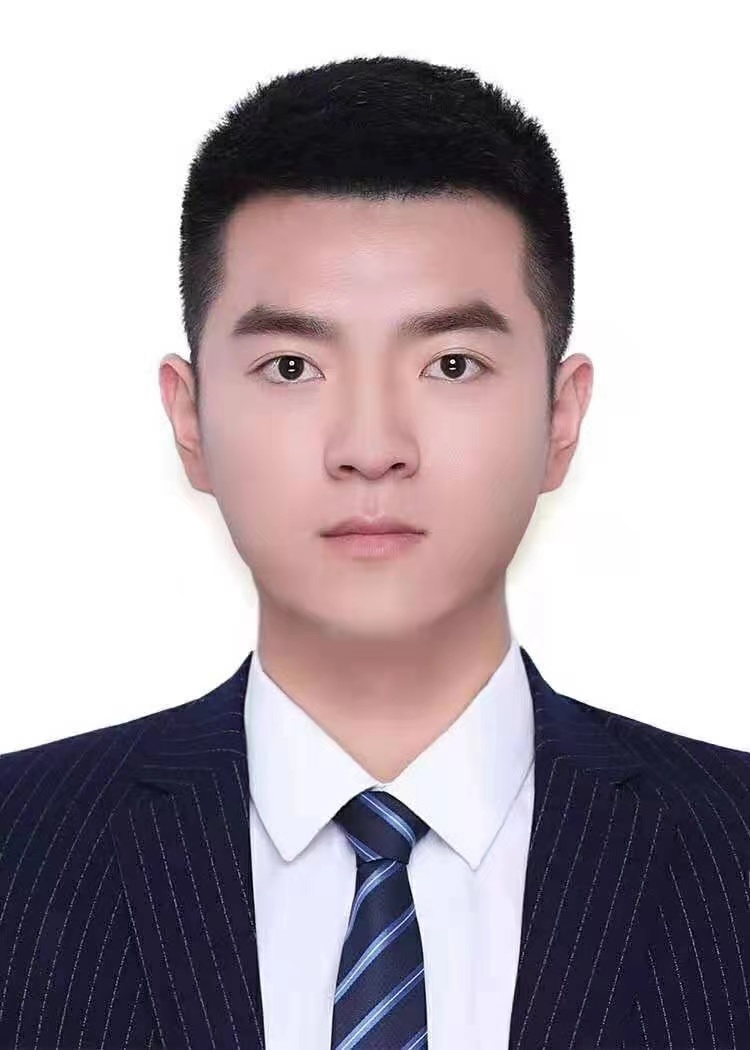}}]{\textbf{Shilong Bao}} 
	received the B.S. degree from the College of Computer Science and Technology at Qingdao University in 2019 and the Ph.D. degree from the Institute of Information Engineering, Chinese Academy of Sciences (IIE, CAS) in 2024. He is currently a postdoctoral research fellow with the University of Chinese Academy of Sciences (UCAS). His research interests are machine learning and data mining. He has authored or co-authored several academic papers in top-tier international conferences and journals including T-PAMI/ICML/NeurIPS. He served as a reviewer for several top-tier journals and conferences such as ICML, NeurIPS, ICLR, CVPR, ICCV, AAAI, IEEE Transactions on Circuits and Systems for Video Technology, and IEEE Transactions on Multimedia. 
\end{IEEEbiography}

\begin{IEEEbiography}
	[{\includegraphics[width=1in,height=1.25in,clip,keepaspectratio]{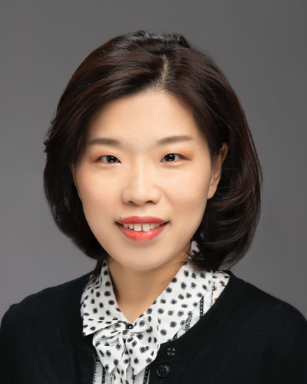}}]{Qianqian Xu} received the B.S. degree in computer science from China University of Mining and Technology in 2007 and the Ph.D. degree in computer science from University of Chinese Academy of Sciences in 2013. She is currently a Professor at the Institute of Computing Technology, Chinese Academy of Sciences, Beijing, China. Her research interests include statistical machine learning, with applications in multimedia and computer vision. She has authored or coauthored 100+ academic papers in prestigious international journals and conferences (including T-PAMI, IJCV, T-IP, NeurIPS, ICML, CVPR, AAAI, etc). Moreover, she serves as an associate editor of IEEE Transactions on Circuits and Systems for Video Technology, IEEE Transactions on Multimedia, and ACM Transactions on Multimedia Computing, Communications, and Applications.
\end{IEEEbiography}

\begin{IEEEbiography}[{\includegraphics[width=1in,height=1.25in,clip,keepaspectratio]{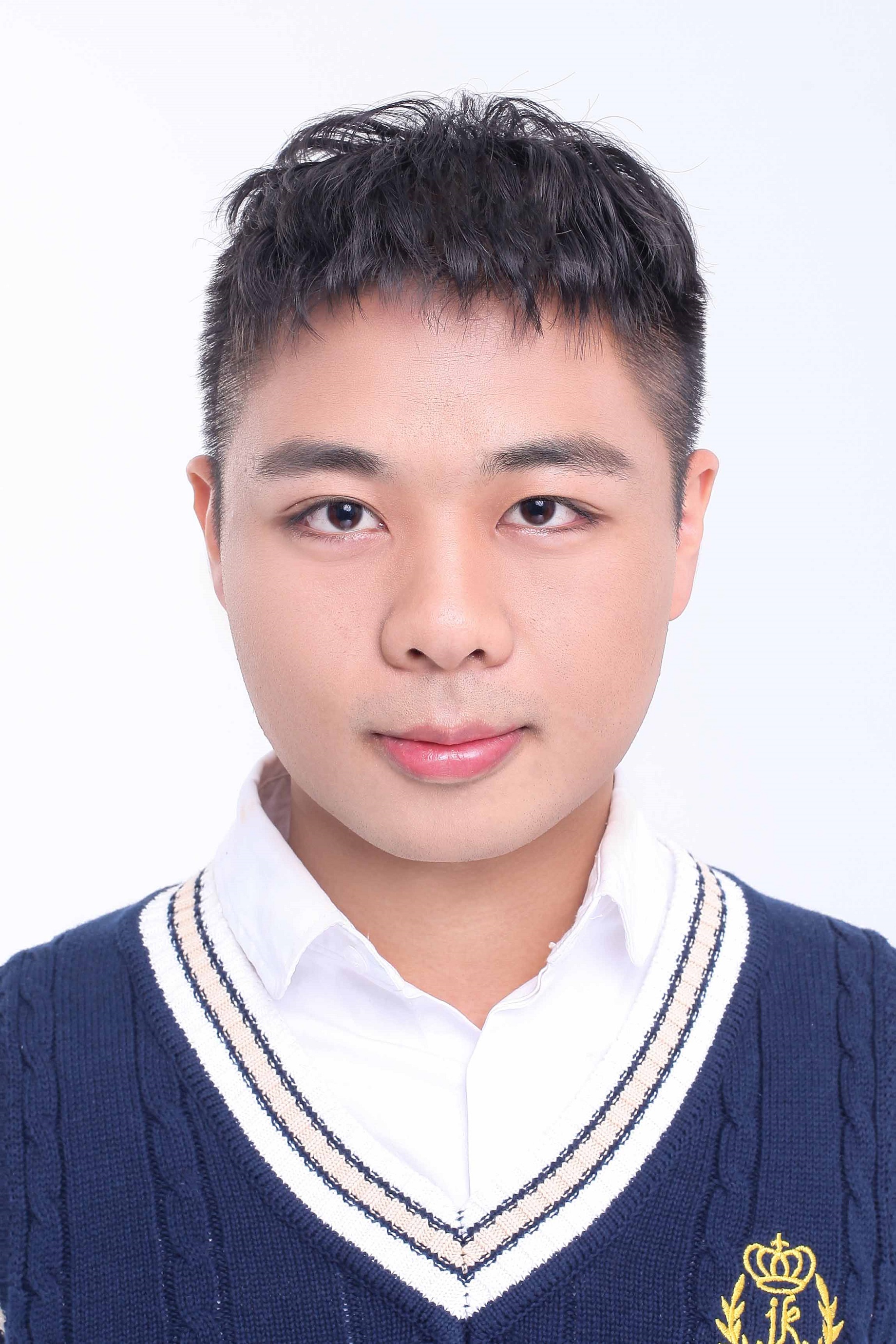}}]{Feiran Li} received the B.S. degree from the China University of Mining and Technology (CUMT), in 2023. He is currently working toward the Ph.D. degree with the Institute of Information Engineering, Chinese Academy of Sciences (IIE, CAS). His research interests are machine learning and computer vision. He has authored or co-authored several academic papers in top-tier international conferences, including ICML. He served as a reviewer for several top-tier conferences and journals such as ICLR and IEEE Transactions on Multimedia.
\end{IEEEbiography}

\begin{IEEEbiography}[{\includegraphics[width=1in,height=1.25in,clip,keepaspectratio]{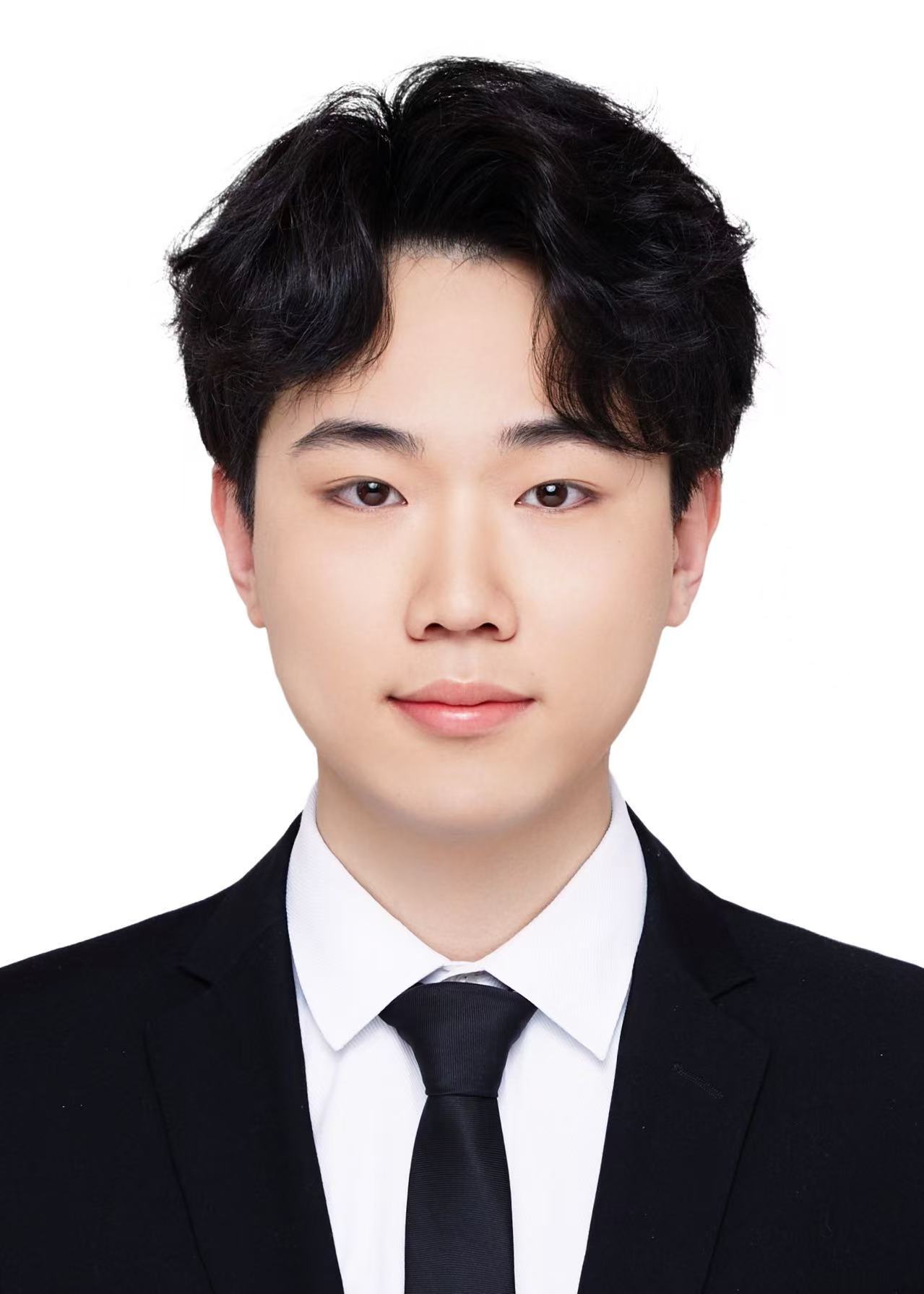}}]{Boyu Han} received the B.S. degree from the University of Electronic Science and Technology of China (UESTC), in 2023. He is currently working toward the Ph.D. degree with the Institute of Computing Technology, Chinese Academy of Sciences (ICT, CAS). His research interests are machine learning and computer vision. He has authored or coauthored several academic papers in top-tier international conferences including NeurIPS/ICML. He served as a reviewer for several top-tier journals and conferences such as ICML, NeurlPS, ICLR and IEEE Transactions on Multimedia.
\end{IEEEbiography}

\begin{IEEEbiography}[{\includegraphics[width=1in,height=1.25in,clip,keepaspectratio]{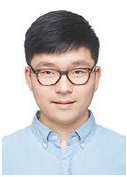}}]{Zhiyong Yang} received his M.Sc. degree in computer science and technology from the University of Science and Technology Beijing (USTB) in 2017, and Ph.D. degree from the University of Chinese Academy of Sciences (UCAS) in 2021. He is currently an Associate Professor at the University of Chinese Academy of Sciences. His research interests include trustworthy machine learning, long-tail learning, and optimization frameworks for complex metrics. He is one of the key developers of the X-curve learning framework (https://xcurveopt.github.io/), designed to address decision biases between model trainers and users. His work has been recognized with various awards, including Top 100 Baidu AI Chinese Rising Stars Around the World, Top-20 Nomination for the Baidu Fellowship,  Asian Trustworthy Machine Learning (ATML) Fellowship, and the China Computer Federation (CCF) Doctoral Dissertation Award. He has authored or co-authored over 60 papers in top-tier international conferences and journals, including more than 30 papers in T-PAMI, ICML, and NeurIPS. He has also served as an Area Chair (AC) for NeurIPS 2024/ICLR 2025, a Senior Program Committee (SPC) member for IJCAI 2021, and as a reviewer for several prestigious journals and conferences, such as T-PAMI, IJCV, TMLR, ICML, NeurIPS, and ICLR.
\end{IEEEbiography}

\begin{IEEEbiography}
	[{\includegraphics[width=1in,height=1.25in,clip,keepaspectratio]{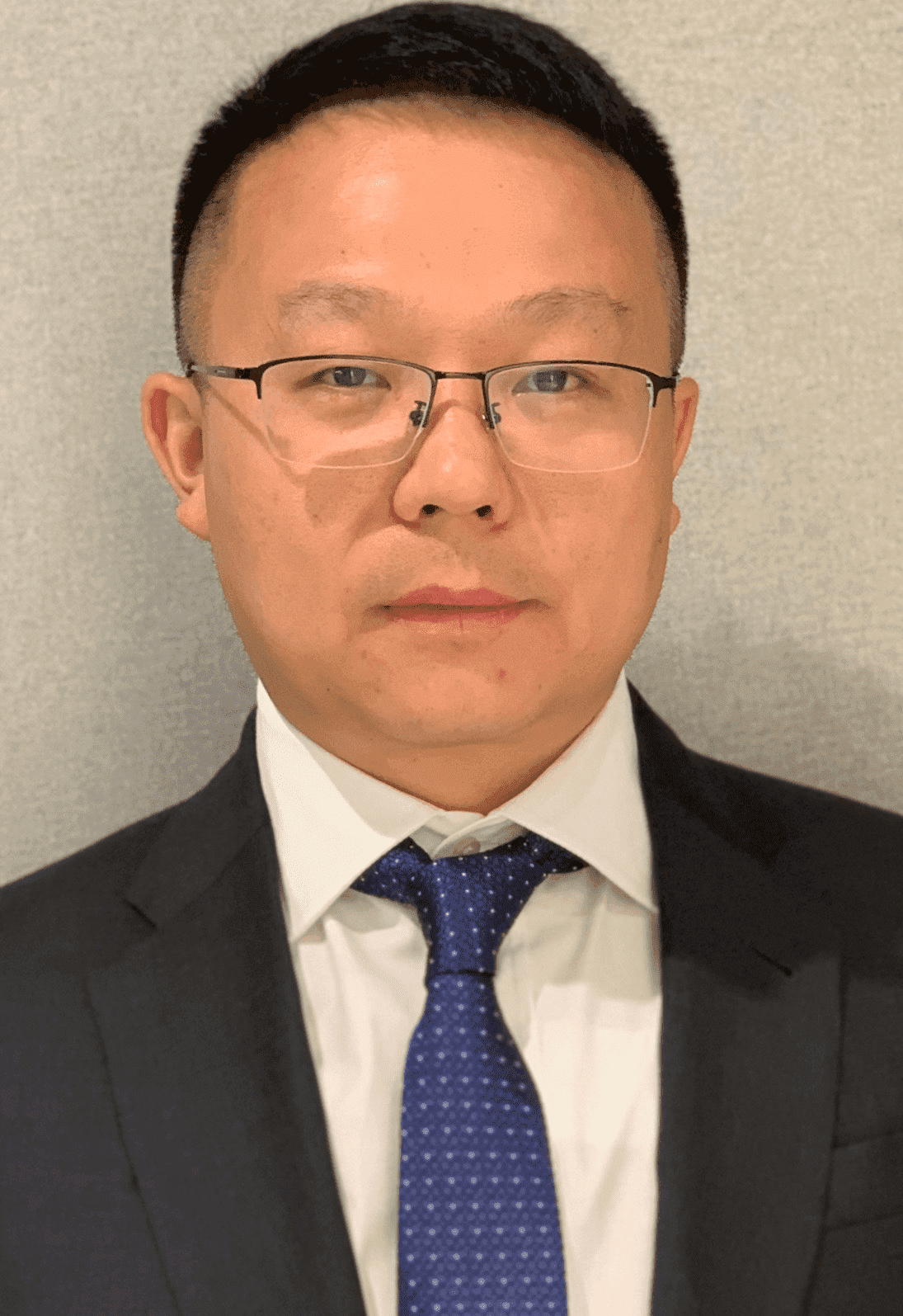}}]{Xiaochun Cao} is a Professor of the School of Cyber Science and Technology, Shenzhen Campus of Sun Yat-sen University. He received the B.E. and M.E. degrees both in computer science from Beihang University (BUAA), China, and the Ph.D. degree in computer science from the University of Central Florida, USA, with his dissertation nominated for the university-level Outstanding Dissertation Award. After graduation, he spent about three years at ObjectVideo Inc. as a Research Scientist. From 2008 to 2012, he was a professor at Tianjin University. Before joining SYSU, he was a professor at the Institute of Information Engineering, Chinese Academy of Sciences. He has authored and co-authored over 200 journal and conference papers. In 2004 and 2010, he was the recipient of the Piero Zamperoni Best Student Paper Award at the International Conference on Pattern Recognition. He is on the editorial boards of IEEE Transactions on Image Processing and IEEE Transactions on Multimedia and was on the editorial board of IEEE Transactions on Circuits and Systems for Video Technology.
\end{IEEEbiography}

\begin{IEEEbiography}
	[{\includegraphics[width=1in,height=1.25in,clip,keepaspectratio]{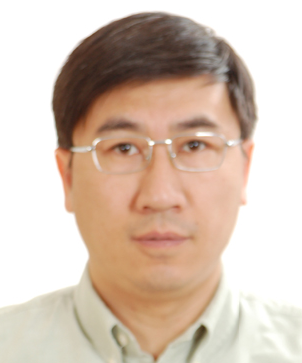}}] {Qingming Huang} is a chair professor in the University of Chinese Academy of Sciences and an adjunct research professor in the Institute of Computing Technology, Chinese Academy of Sciences. He graduated with a Bachelor degree in Computer Science in 1988 and Ph.D. degree in Computer Engineering in 1994, both from Harbin Institute of Technology, China. His research areas include multimedia computing, image processing, computer vision and pattern recognition. He has authored or coauthored more than 400 academic papers in prestigious international journals and top-level international conferences. He was the associate editor of IEEE Trans. on CSVT and Acta Automatica Sinica, and the reviewer of various international journals including IEEE Trans. on PAMI, IEEE Trans. on Image Processing, IEEE Trans. on Multimedia, etc. He is a Fellow of IEEE and has served as general chair, program chair, area chair and TPC member for various conferences, including ACM Multimedia, CVPR, ICCV, ICML, AAAI, ICMR, PCM, BigMM, PSIVT, etc.
\end{IEEEbiography}

\clearpage
\newpage
\onecolumn
\appendices
\definecolor{app_blue}{RGB}{0,20,115}
\textcolor{white}{dasdsa}

\section*{\textcolor{app_blue}{\Large{Contents}}}

\titlecontents{section}[0em]{\color{app_blue}\bfseries}{\thecontentslabel. }{}{\hfill\contentspage}

\titlecontents{subsection}[1.5em]{\color{app_blue}}{\thecontentslabel. }{}{\titlerule*[0.75em]{.}\contentspage} 

\startcontents[sections]

\printcontents[sections]{1}{1}{\setcounter{tocdepth}{3}}
\newpage

\begin{table}
	\centering
	\setlength{\abovecaptionskip}{5pt}    
	\setlength{\belowcaptionskip}{15pt}    
	\setlength{\tabcolsep}{8pt}
	\caption{A summary of key notations and corresponding descriptions in this paper.}
    \label{notation}
	\scalebox{0.95}{
	\begin{tabular}{ll}
		\toprule
		Notations & Descriptions \\
		\midrule
        $v(\cdot)$, $V(\cdot)$ & a general reference symbol to denote the separable and composite function \\
        $T$ & the number of compositions \\
		$N$ & the number of samples \\
		$S$ & the number of pixels in an image \\
        $H, W$ & height and width of an image \\
        $f_\theta$ & the model with learnable parameters $\theta$, usually ommiting $\theta$ for conveniences \\
        $\boldsymbol{X}^{(i)}$ & the $i$-th input image \\
        $\boldsymbol{Y}^{(i)}$ & the $i$-th ground-truth saliency map \\
        $\boldsymbol{X}_k, k \in [K]$ & the $k$-th splitted region in $\boldsymbol{X}$ \\
        $\boldsymbol{Y}_k, k \in [K]$ & the $k$-th corresponding ground-truth region in $\boldsymbol{Y}$ \\
        $\mathcal{C}({\boldsymbol{X}})$ & a pre-defined split function to obtain non-intersect part of $\boldsymbol{X}$ \\
        $C_k$ & the coordinates of the $k$-th region in $\boldsymbol{X}$ \\
        $\lambda(\boldsymbol{X_k})$ & the weight of the $k$-th region involved in measurements \\
        $H_k, W_k$ & height and width of the $k$-th region \\
        $S_k$ & the number of pixels in the $k$-th region \\
        $S^+ (S_k^+)$ & the number of all salient pixels in the part $\boldsymbol{X} (\boldsymbol{X}_k)$ \\
        $S^-$ & the number of all non-salient pixels in the part $\boldsymbol{X}$ \\
        $\tau$ & a threshold to binarize the prediction \\ 
        $f_p^{(i), +}$, $f_q^{(i), -}$ & the prediction scores of $p$/$q$-th positive (salient) and negative (non-salient) pixels in the $i$-th image \\
        $f_p^{+}$, $f_q^{-}$ & short for $f_p^{(i), +}$ and $f_q^{(i), -}$, respectively \\
        $f_{k,p}^+$ & the prediction score of $p$-th positive pixel in the $k$-th region \\
        $\boldsymbol{X}_k^{fore} (\boldsymbol{X}^{back}_{M+1})$ & the $k$-th foreground ($M+1$-th background) region in $\boldsymbol{X}$ generated by $\mathcal{C}(\boldsymbol{X})$ \\
        $\boldsymbol{Y}_k^{fore}$ & the corresponding ground-truth region of $\boldsymbol{X}_k^{fore}$ \\
        $\boldsymbol{X}_k^{fore,+}$ & the all salient pixels in the $k$-th foreground part of $\boldsymbol{X}$ \\
        $M$ & the number of salient foregrounds in an image, varying in the  image \\
        $\alpha$ & the weight to balance foregrounds and backgrounds in size-invariant learning \\
        $S_k^{fore} (S_k^{fore,+})$ & the number of all pixels (salient pixels) in the $k$-th foreground region \\
        $\mathcal{G}^{(\boldsymbol{X})}=(\mathcal{N}^{(\boldsymbol{X})},\mathcal{E}^{(\boldsymbol{X})},\mathcal{A}^{(\boldsymbol{X})})$ & the graph constructed on top of the image $\boldsymbol{X}$\\
        $\mathcal{N}^{(\boldsymbol{X})},\mathcal{E}^{(\boldsymbol{X})},\mathcal{A}^{(\boldsymbol{X})}$ & the vertex set, edge set, and adjacency matrix of $\mathcal{G}^{(\boldsymbol{X})}$ \\
        $\bar{\boldsymbol{y}}$ & the flattened version of $\boldsymbol{Y}$  \\
        $\boldsymbol{c}$ & a pre-defined vector \\
        $\tilde{\boldsymbol{y}} := \bar{\boldsymbol{y}} \odot \boldsymbol{c}$ & the element-wise weighted version of $\bar{\boldsymbol{y}}$ \\
        $c^+ := \tilde{\boldsymbol{y}}^\top\boldsymbol{1}$ & the sum of nonzero entries in $\tilde{\boldsymbol{y}}$ \\
        $\mathcal{P}^{(\boldsymbol{X})}$ & the Laplacian matrix of $\mathcal{G}^{(\boldsymbol{X})}$ \\
        $\bar{\boldsymbol{f}}^{(\boldsymbol{X})} \in \mathbb{R}^{S}$ & the flattened vector of $f(\boldsymbol{X})$\\
        $B$ & the number of used hybrid losses \\
        $\mathcal{L}_b (\gamma_b)$ & the $b$-th loss (its weight) \\
        $g^{(i)}$ & the risk for the $i$-th sample \\
        $\mathfrak{R}_N(\mathcal{F}|_i)=\max_{x_{1:N}} \mathfrak{R}(\mathcal{F};x_{1:N})$ & the worst-case Rademacher complexity\\
        $\hat{\boldsymbol{Y}}$ & the binarized prediction obtained from $f(\boldsymbol{X})$ using a threshold \\
        $\hat{P}_{h,w} := f(\boldsymbol{X})_{h, w}$ & the predicted value for the pixel located at the $h$-th row and $w$-th column in the model output \\
        \bottomrule
	\end{tabular}
    }
\end{table}

\section{Detailed Evaluations and Optimizations Protocols of SOD} \label{protocol_appendix}

\subsection{Introductions of widely used SOD metrics}\label{20250420SecA.1}
Unlike conventional classification tasks, where accuracy is typically computed at the image level, Salient Object Detection (SOD) is evaluated at the pixel level. While other pixel-wise tasks such as semantic segmentation often employ the mean Intersection-over-Union ($\mathsf{mIOU}$)—which averages the $\IOU$ across multiple semantic classes—SOD differs in that all salient regions are uniformly annotated as 1, without class-specific labels. As a result, there is no universally representative metric for SOD. Instead, the following evaluation metrics are commonly adopted:

\textbf{$\MAE$~\cite{MAE}.}  
The Mean Absolute Error (MAE) computes the average pixel-wise absolute difference between the predicted and ground-truth saliency maps. Prior to evaluation, the prediction is normalized to the range $[0,1]$. It is defined as:
\begin{equation}
    \MAE(f) = \frac{1}{H \times W} \sum_{h=1}^H \sum_{w=1}^W  \left|f(\boldsymbol{X})_{h,w} - Y_{h,w}\right|,
\end{equation}
where $f(\boldsymbol{X}) \in [0,1]^{H \times W}$ and $\boldsymbol{Y} \in \{0,1\}^{H \times W}$ denote the normalized prediction and ground-truth saliency maps, respectively.

\vspace{0.5em}
\textbf{$\F$-measure~\cite{F_measure}.}  
The $\F$-measure is designed to address class imbalance by jointly considering precision and recall. It is defined as:
\begin{equation}
    \F(f) = \frac{2 \cdot \mathsf{Precision}(f) \cdot \mathsf{Recall}(f)}{\mathsf{Precision}(f) + \mathsf{Recall}(f)},
\end{equation}
where
\begin{equation}
    \mathsf{Precision}(f) = \frac{\sum \hat{\boldsymbol{Y}} \cdot \boldsymbol{Y}}{\sum \hat{\boldsymbol{Y}} + \varepsilon}, \quad
    \mathsf{Recall}(f) = \frac{\sum \hat{\boldsymbol{Y}} \cdot \boldsymbol{Y}}{\sum \boldsymbol{Y} + \varepsilon}.
\end{equation}
Here, $\hat{\boldsymbol{Y}}$ denotes the binarized prediction obtained from $f(\boldsymbol{X})$ using a threshold, and $\varepsilon$ is a small constant to avoid division by zero. The notation $\hat{\boldsymbol{Y}} \cdot \boldsymbol{Y}$ represents pixel-wise multiplication between the binarized prediction and the ground truth. In practice, as $f(\boldsymbol{X})$ is typically a continuous-valued map, a set of thresholds $\tau_1, \ldots, \tau_n$ is used to binarize the predictions. All metrics are then computed over these thresholds following the standard implementations\footnote{\url{https://github.com/backseason/PoolNet}}.

Following widely adopted settings~\cite{EDN, PiCANet_2018, Amulet_2017, PoolNet}, we employ the $\F_\beta$-measure~\cite{F-beta}, which emphasizes precision by assigning a larger weight to it:
\begin{equation}
    \F_{\beta}(f) = \frac{(1 + \beta^2) \cdot \mathsf{Precision}(f) \cdot \mathsf{Recall}(f)}{\beta^2 \cdot \mathsf{Precision}(f) + \mathsf{Recall}(f)},
\end{equation}
with $\beta^2 = 0.3$. Accordingly, we report both the mean $\F$-measure ($\F_m^{\beta}$), which averages $\F_{\beta}(f)$ over all thresholds, and the maximum $\F$-measure ($\F_{max}^{\beta}$), which selects the best score among all thresholds.

\textbf{$\AUC$~\cite{AUC_eval}.}  
As salient object detection (SOD) can be viewed as a binary classification task (foreground vs. background), the Area Under the Curve (AUC) is naturally suitable for evaluating performance. It considers both the True Positive Rate (TPR) and False Positive Rate (FPR), and is known for its robustness to class imbalance~\cite{pAUC}.

At a given threshold $\tau$, we define a binarized saliency map as:
\begin{equation}
    \hat{Y}_\tau(h,w) =
    \begin{cases}
        1 & \text{if } f(\boldsymbol{X})_{h,w} \geq \tau \\
        0 & \text{otherwise}
    \end{cases}
\end{equation}

The corresponding true positive rate (TPR) and false positive rate (FPR) are computed as:
\begin{equation}
\begin{split}
    \TPR_{f}(\tau) = \frac{\sum_{h,w} \hat{Y}_\tau(h,w) \cdot Y_{h,w}}{\sum_{h,w} Y_{h,w}}, \quad
    \FPR_{f}(\tau) = \frac{\sum_{h,w} \hat{Y}_\tau(h,w) \cdot (1 - Y_{h,w})}{\sum_{h,w} (1 - Y_{h,w})}
\end{split}
\end{equation}

Let
\[
\TPR(f) = [\TPR_f(\tau_1), \ldots, \TPR_f(\tau_n)], \quad \FPR(f) = [\FPR_f(\tau_1), \ldots, \FPR_f(\tau_n)]
\]
and denote by $\pi$ the permutation that sorts $\FPR(f)$ in ascending order. Then the AUC is approximated by:
\begin{equation}
    \AUC(f) \approx \sum_{i=1}^{n-1} \frac{\TPR_f(\pi(i)) + \TPR_f(\pi(i+1))}{2} \cdot \left( \FPR_f(\pi(i+1)) - \FPR_f(\pi(i)) \right)
\end{equation}

While the formulation above relies on binarized predictions, one can also compute AUC in a ranking-based, non-binary fashion by comparing the relative orderings of predicted saliency scores for positive (salient) and negative (non-salient) pixels, as suggested in~\cite{AUC_rank, DBLP:journals/csur/YangY23}.

\textbf{$\E_m$ (E-measure)~\cite{Emeasure}.}  
E-measure simultaneously captures both pixel-level (local) and image-level (global) similarity, and is specifically designed for evaluating binary maps. It has gained widespread adoption in recent SOD literature. Formally, $\E_m$ is defined as:
\begin{equation}\label{20250420eq43}
    \E_m(f) = \frac{1}{H \times W} \sum_{h=1}^H \sum_{w=1}^W \phi_{FM}(\boldsymbol{X}, \boldsymbol{Y})_{(h,w)},
\end{equation}
where $H$ and $W$ denote the height and width of image $\boldsymbol{X}$, respectively, and $\phi_{FM}$ is formulated as:
\begin{equation}
    \phi_{FM} = J(\xi_{FM}),
\end{equation}
where $f(\cdot)$ is a convex mapping function. Following the original implementation~\cite{Emeasure}, we set $f(x) = \frac{1}{2}(1 + x)^2$. The core similarity measure $\xi_{FM}$ is defined as:
\begin{equation}
    \xi_{FM}(\boldsymbol{X}, \boldsymbol{Y}) = \frac{2 \cdot (\phi_{\boldsymbol{X}} \circ \phi_{\boldsymbol{Y}})}{\phi_{\boldsymbol{X}} \circ \phi_{\boldsymbol{X}} + \phi_{\boldsymbol{Y}} \circ \phi_{\boldsymbol{Y}}},
\end{equation}
where $\circ$ denotes element-wise (Hadamard) product, and $\phi_I = I - \mu_I \cdot \mathbb{A}$, with $\mu_I$ being the global mean of $I$ and $\mathbb{A}$ an all-one matrix with the same shape as $I$.

\textbf{$\Sm_m$ (S-measure)~\cite{S_measure1,S_measure2}.}  
S-measure is designed to assess the structural similarity between a predicted saliency map and the ground truth by integrating object-aware and region-aware perspectives. It is defined as:
\begin{equation}\label{20250420eq46}
    \Sm_m(f) = \alpha \cdot S_o(\boldsymbol{X}, \boldsymbol{Y}) + (1 - \alpha) \cdot S_r(\boldsymbol{X}, \boldsymbol{Y}),
\end{equation}
where $\alpha \in [0,1]$ is a weighting factor (typically set to $0.5$), $S_o$ denotes object-aware similarity, and $S_r$ denotes region-aware similarity.

The object-aware term $S_o$ evaluates foreground and background predictions independently:
\begin{equation}
    S_o(\boldsymbol{X}, \boldsymbol{Y}) = w \cdot \mathrm{Obj}(\boldsymbol{X}_F) + (1 - w) \cdot \mathrm{Obj}(1 - \boldsymbol{X}_B),
\end{equation}
where $\boldsymbol{X}_F$ and $\boldsymbol{X}_B$ denote the predicted values within the foreground and background regions of the ground truth, respectively, and the object-level similarity function $\mathrm{Obj}(\cdot)$ is defined as:
\begin{equation}
    \mathrm{Obj}(\boldsymbol{X}) = \frac{2 \mu_{\boldsymbol{X}}}{\mu_{\boldsymbol{X}}^2 + 1 + \sigma_{\boldsymbol{X}}},
\end{equation}
with $\mu_{\boldsymbol{X}}$ and $\sigma_{\boldsymbol{X}}$ being the mean and standard deviation of $\boldsymbol{X}$.

The region-aware term $S_r$ divides the saliency map into four quadrants, $R_1, R_2, R_3, R_4$, using the centroid of the ground truth as the reference point. Within each region $R_k$, a structural similarity index (SSIM) is computed:
\begin{equation}
    \text{SSIM}(\boldsymbol{X}_k, \boldsymbol{Y}_k) = \frac{2 \mu_{\boldsymbol{X}} \mu_{\boldsymbol{Y}} + C_1}{\mu_{\boldsymbol{X}}^2 + \mu_{\boldsymbol{Y}}^2 + C_1} \cdot \frac{2 \sigma_{\boldsymbol{X}\boldsymbol{Y}} + C_2}{\sigma_{\boldsymbol{X}}^2 + \sigma_{\boldsymbol{Y}}^2 + C_2},
\end{equation}
where $C_1$ and $C_2$ are small constants to avoid division by zero, and $\mu$, $\sigma$, and $\sigma_{\boldsymbol{X}\boldsymbol{Y}}$ are local statistics. The final region-aware similarity $S_r$ is aggregated as:
\begin{equation}
    S_r(\boldsymbol{X}, \boldsymbol{Y}) = \sum_{k=1}^4 w_k \cdot \text{SSIM}(\boldsymbol{X}_k, \boldsymbol{Y}_k),
\end{equation}
where $w_k = \frac{|R_k|}{H \times W}$ represents the relative size of each region.


To explicitly pursue the aforementioned objectives, most SOD approaches incorporate diverse supervision signals, including \textbf{pixel-level} loss to ensure local fidelity, \textbf{ranking-aware} loss to enforce a well-structured global ranking, and \textbf{region-level} loss to promote structural consistency. However, as discussed in the main paper~\cref{Revisting}, existing evaluation metrics often fail to accurately reflect performance across objects of varying sizes.

\subsection{Brief Discussion of the size-sensitive problems for $\E_m$ and $\Sm_m$}\label{20250420SecA.2}

Building upon the analyses in \cref{Revisting} and \cref{20250420SecA.1}, it remains initially unclear whether $\E_m$ and $\Sm_m$ are substantially affected by size sensitivity, as they are neither pixel-wise nor region-wise independent. Consequently, the analytical framework adopted in \cref{Revisting} cannot be directly extended to these metrics. Nevertheless, a closer inspection of their computational formulations reveals indicative evidence of inherent size sensitivity therein: 

\noindent\textbf{Size Sensitivity of $\E_m$.} As defined in Eq.(\ref{20250420eq43}), i.e., 

\[
    \E_m(f) = \frac{1}{H \times W} \sum_{h=1}^H \sum_{w=1}^W \phi_{FM}(\boldsymbol{X}, \boldsymbol{Y})_{(h,w)},
\] 
the $\E_m$ metric inherently treats all pixels with equal importance, regardless of their semantic significance. Consequently, large salient objects, due to occupying more spatial area, exert a dominant influence on the final score. In complex scenes involving multiple objects of varying sizes, such uniform aggregation \textit{tends to underrepresent smaller objects}, whose misclassification has only a marginal impact on the overall evaluation. Although $\phi_{FM}$ employs mean-centering to partially alleviate distributional imbalance, the subsequent global averaging step remains fundamentally biased toward object size. As a result, $\E_m$ is less invariant to the precise localization of small objects.

\noindent\textbf{Size Sensitivity of $\Sm_m$.} As introduced in Eq.(\ref{20250420eq46}),
\begin{equation}
    \Sm_m(f) = \alpha \cdot S_o(\boldsymbol{X}, \boldsymbol{Y}) + (1 - \alpha) \cdot S_r(\boldsymbol{X}, \boldsymbol{Y}),
\end{equation}
the S-measure $\Sm_m$, defined as a weighted combination of object-aware and region-aware structural similarities, aims to jointly assess local detail and global structure. However, its object-aware component $S_o$ employs a weighting factor $w = \frac{\sum Y}{W \times H}$, reflecting the proportion of foreground pixels in the ground truth. This design \textit{implicitly favors larger salient regions}, thereby introducing a size bias that diminishes the impact of smaller objects. Meanwhile, the region-aware component $S_r$ divides the prediction map into four quadrants centered at the centroid of the ground truth. In cases where small objects are confined to peripheral or sparsely populated regions, their contribution may be \textit{diluted by dominant background content} within the same quadrant. Consequently, both components of $\Sm_m$ exhibit an inherent preference for large, centrally located salient objects, which may result in inaccurate assessments in scenes with multiple or small-scale targets.  

It is worth noting, however, that the size-sensitivity issues in $\E_m$ and $\Sm_m$ are relatively less severe than those in pixel-wise or region-wise independent metrics (e.g., $\MAE$ and $\AUC$), as both $\E_m$ and $\Sm_m$ integrate multiple structural cues (such as global-level image signals) to provide a more holistic evaluation. That is precisely why we still leverage both $\E_m$ and $\Sm_m$ as standard metrics in our experiments. Last but not least, we emphasize that, although someone may focus on the performance of $\E_m$ and $\Sm_m$ for model evaluation, \textbf{existing approaches still select to use the hybrid losses introduced in \cref{20250404Sec4.2} to guide model learning}. This ensures that our proposed $\mathsf{SIOpt}$ remains effective during optimization.

\section{Proof for Propositions of Size-Invariant Metrics}
\subsection{Proof of Proposition \ref{prop1}}\label{prop1_proof}

\textbf{Restate of Proposition \ref{prop1}.} (\textit{Size-Invariant Property of $\SMAE$}) Without loss of generality, given two well-trained predictors $f_{A}$ and $f_{B}$ with different parameters, $\SMAE$ is always \textbf{more effective} than $\MAE$ even if the size of these objects is imbalanced. 

\noindent{\color{blue}{\textbf{Case 1: Single Salient Object (\( M = 1 \)).}}} When there is only one salient object in the image \( \boldsymbol{X} \), no size imbalance exists among objects. In this case, \( \SMAE \) is equivalent to \( \MAE \). 

\noindent{\color{blue}{\textbf{Case 2: Multiple Salient Objects (\( M \geq 2 \)).}}} Suppose the image \( \boldsymbol{X} \) contains multiple salient objects, represented by the ground-truth salient pixel sets \( \{C_1, C_2, \dots, C_M\} \) with \( S_1^{fore} \leq S_2^{fore} \leq \dots \leq S_M^{fore} \). Assume the two detection models, \( f_A \) and \( f_B \), identify the same total number of salient pixels, given by \( \sum_{i=m+1}^{M} |C_i| \) for some \( m \in \{1, 2, \dots, M-1\} \). Additionally, suppose \( f_A \) perfectly detects only the larger objects \( \{C_{m+1}, \dots, C_M\} \), whereas $f_B$ detects all $\{C_1,\dots C_M\}$ but only partially. In general, \underline{\( f_B \) is considered superior to \( f_A \)} since the latter entirely fails to detect the smaller objects $\{C_1,\cdots,C_m\}$. Under this setting, we observe that \( \SMAE(f_A) > \SMAE(f_B) \) but \( \MAE(f_A) = \MAE(f_B) \).

\begin{proof}
    For {\color{blue}{\textbf{Case 1 ($M=1$):}}} As there is only one salient object, we can easily divide the image into $\boldsymbol{X}_1^{fore}$ and $\boldsymbol{X}_2^{back}$, and the weight $\alpha=S_2^{back}/S_1^{fore}=(S-S_1^{fore})/S_1^{fore}$. Therefore, we have the following:
    \begin{equation}
    \begin{aligned}
        \MAE(f)&=\frac{\Vert f(\boldsymbol{X}_1^{fore})-\boldsymbol{Y}_1^{fore}\Vert_{1,1} + \Vert f(\boldsymbol{X}_2^{back})-\boldsymbol{Y}_2^{back}\Vert_{1,1}}{S}, \\
        \SMAE(f)&=\frac{1}{1+\alpha} \cdot  \left[ \frac{\Vert f(\boldsymbol{X}_1^{fore})-\boldsymbol{Y}_1^{fore}\Vert_{1,1}}{S_1^{fore}} + \alpha \cdot \frac{\Vert f(\boldsymbol{X}_2^{back})-\boldsymbol{Y}_2^{back}\Vert_{1,1}}{S_2^{back}}  \right] \\
        &=\frac{1}{1+\frac{S-S_1^{fore}}{S_1^{fore}}} \cdot \left[  \frac{\Vert f(\boldsymbol{X}_1^{fore})-\boldsymbol{Y}_1^{fore}\Vert_{1,1}}{S_1^{fore}} + \frac{S_2^{back}}{S_1^{fore}} \cdot \frac{\Vert f(\boldsymbol{X}_2^{back})-\boldsymbol{Y}_2^{back}\Vert_{1,1}}{S_2^{back}}  \right] \\
        &=\frac{S_1^{fore}}{S} \cdot \left[  \frac{\Vert f(\boldsymbol{X}_1^{fore})-\boldsymbol{Y}_1^{fore}\Vert_{1,1} + \Vert f(\boldsymbol{X}_2^{back})-\boldsymbol{Y}_2^{back}\Vert_{1,1}}{S_1^{fore}} \right] \\
        &=\frac{\Vert f(\boldsymbol{X}_1^{fore})-\boldsymbol{Y}_1^{fore}\Vert_{1,1} + \Vert f(\boldsymbol{X}_2^{back})-\boldsymbol{Y}_2^{back}\Vert_{1,1}}{S} \\
        &=\MAE(f).
    \end{aligned}
    \end{equation}

For {\color{blue}{\textbf{Case 2 ($M\ge2$):}}} It is easy to examine that we always have $\MAE(f_A)=\MAE(f_B)$ in this case since both $f_A$ and $f_B$ predict the same number of salient pixels according to the proposition. 

Subsequently, to derive the relationship of $\SMAE$, we first consider the {\color{orange}\textbf{$M=2$ case}}. Suppose $\rho\in [0,1]$ of $C_2$ is correctly detected by $f_B$. Additionally, because there are two salient objects, we have $\alpha=S_3^{back} / (S_1^{fore}+S_2^{fore})$, and $\SMAE$ is calculated as follows:
\begin{equation}
    \begin{aligned}
        \SMAE(f_A) &= \frac{1}{2+\alpha} \cdot \frac{|C_1|}{S_1^{fore}}, \\
        \SMAE(f_B) &= \frac{1}{2+\alpha} \cdot \left[ \frac{|C_2|(1-\rho)}{S_2^{fore}} + \frac{|C_1|-|C_2|(1-\rho)}{S_1^{fore}}  \right],
    \end{aligned}
\end{equation}
then when $S_1^{fore}<S_2^{fore}$, we have:
\begin{equation}
\begin{aligned}
    \SMAE(f_B)-\SMAE(f_A) & = \frac{1}{2+\alpha} \cdot \left[ \frac{|C_2|(1-\rho)}{S_2^{fore}} + \frac{|C_1|-|C_2|(1-\rho)}{S_1^{fore}} - \frac{|C_1|}{S_1^{fore}} \right] \\
    &= \frac{1}{2+\alpha} \cdot \left[  \frac{|C_2|(1-\rho)}{S_2^{fore}} - \frac{|C_2|(1-\rho)}{S_1^{fore}} \right] \\
    &= \frac{|C_2|(1-\rho)}{2+\alpha}\cdot \left( \frac{1}{S_2^{fore}} - \frac{1}{S_1^{fore}} \right) \\
    & <0.
\end{aligned}
\end{equation}


    In terms of {\color{orange}\textbf{$M\ge3$ case}}, we first denote the correct ratio of $f_B$ for each part $\{C_1,C_2,\cdots,C_M\}$ as $[\rho_1,\rho_2,\cdots,\rho_M],\rho_i\in [0,1]$, respectively. Since both $f_A$ and $f_B$ predict the same number of salient pixels, the following equation holds:
    \begin{equation}
        \sum_{i=1}^M \rho_i|C_i|=\sum_{i=m+1}^M|C_i|.
    \end{equation}
Considering that there are $M$ salient objects, we have $\alpha=\frac{S_{M+1}^{back}}{\sum_{i=1}^MS_i^{fore}}$, and thus $\mathsf{SI\text{-}MAE}$ is calculated as follows:
\begin{equation}
    \begin{aligned}
         \mathsf{SI\text{-}MAE}(f_A)&=\frac{1}{M+\alpha}\sum_{i=1}^m\frac{|C_i|}{S_i^{fore}}, \\
         \mathsf{SI\text{-}MAE}(f_B)&=\frac{1}{M+\alpha}\sum_{i=1}^M \frac{(1-\rho_i)|C_i|}{S_i^{fore}}.
    \end{aligned}
\end{equation}

Taking a step further, we have:
\begin{equation}
\begin{aligned}
    \mathsf{SI\text{-}MAE}(f_B)-\mathsf{SI\text{-}MAE}(f_A)&=\frac{1}{M+\alpha}\left[\sum_{i=1}^M\frac{(1-\rho_i)|C_i|}{S_i^{fore}}-\sum_{i=1}^m\frac{|C_i|}{S_i^{fore}} \right] \\
    &=\frac{1}{M+\alpha} \left[\sum_{i=m+1}^M \frac{|C_i|}{S_i^{fore}}-\sum^{M}_{i=1}\frac{\rho_i|C_i|}{S_i^{fore}} \right] \\
    & \le \frac{1}{M+\alpha} \left[\frac{\sum_{i=m+1}^M |C_i|}{S_{m+1}^{fore}}-\sum^{M}_{i=1}\frac{\rho_i|C_i|}{S_i^{fore}}\right] \\
    &=\frac{1}{(M+\alpha)} \left[ \frac{\sum_{i=1}^M\rho_i|C_i|}{S_{m+1}^{fore}}- \sum^{M}_{i=1}\frac{\rho_i|C_i|}{S_i^{fore}}\right] \\
    &\le \frac{1}{(M+\alpha)} \left[\frac{\sum_{i=1}^M\rho_i|C_i|}{S_{m+1}^{fore}}-\frac{\sum^{M}_{i=1}\rho_i|C_i|}{S_M^{fore}}\right] \\
    &<0,
\end{aligned}
\end{equation}
 where the inequalities hold because we assume $S_1^{fore}\le S_2^{fore} \le \cdots \le S_M^{fore}$. 
 
 This completes the proof.
\end{proof}

\subsection{Size-Invariant Property of $\SF$}\label{20250421SecB.2}

    
    

\begin{proposition}(\textbf{Size-Invariant Property of $\SF$.}) \label{prop3} Without loss of generality, given two well-trained predictors, \( f_A \) and \( f_B \), with different parameters, \( \SF \) is \textbf{more effective} than \( \F \) during evaluation in the following case: 

Assume the image \( \boldsymbol{X} \) contains multiple salient objects ($M \ge 2$) with imbalanced sizes, represented by the ground-truth salient pixel sets \( \{C_1, C_2, \dots, C_M\} \) with \( S_1^{fore} \leq S_2^{fore} \leq \dots \leq S_M^{fore} \). Assume the two detection models, \( f_A \) and \( f_B \), identify the same total number of salient pixels, given by \( \sum_{i=m+1}^{M} |C_i| \) for some \( m \in \{1, 2, \dots, M-1\} \). Additionally, suppose \( f_A \) perfectly detects only the larger objects \( \{C_{m+1}, \dots, C_M\} \), whereas $f_B$ detects all $\{C_1,\dots C_M\}$ but only partially. In general, \underline{\( f_B \) is considered superior to \( f_A \)} since the latter entirely fails to detect the smaller objects $\{C_1,\cdots,C_m\}$. Under this setting, we observe that \( \SF(f_A) < \SF(f_B) \) but \( \F(f_A) = \F(f_B) \).
\end{proposition}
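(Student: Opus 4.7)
The plan is to mirror the structure of the proof of \cref{prop1} given in \cref{prop1_proof}, but adapt the arithmetic to the composite form of the $\F$-measure. I would start by fixing the natural parameterization: let $\rho_k \in (0, 1]$ denote the fraction of the ground-truth salient pixels of object $C_k$ correctly detected by $f_B$, so that $f_A$ corresponds to the extreme profile $(\rho_1, \ldots, \rho_M) = (0, \ldots, 0, 1, \ldots, 1)$ (zeros on $k \le m$, ones on $k > m$). Assuming no false positives for either model (consistent with the convention adopted in \cref{prop1_proof}), the hypothesis that $f_A$ and $f_B$ detect the same total number of salient pixels translates to the linear constraint
\begin{equation}
\sum_{k=1}^{M} \rho_k\, |C_k| \,=\, \sum_{k=m+1}^{M} |C_k|.
\end{equation}

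The equality $\F(f_A) = \F(f_B)$ then follows immediately from \cref{pami:eq5}: both predictors yield $\TP = \sum_{k > m} |C_k|$, $\FP = 0$, and $\FN = \sum_{k \le m} |C_k|$, so the global ratio coincides. For $\SF$, each per-frame score reduces to $\F(f_{A,k}) = \mathbb{I}[k > m]$, giving $\SF(f_A) = (M-m)/M$, while $\F(f_{B,k}) = g(\rho_k)$ with $g(\rho) := 2\rho/(1+\rho)$. Establishing the proposition thus reduces to showing
\begin{equation}
\sum_{k=1}^{M} g(\rho_k) \,>\, M - m.
\end{equation}

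The heart of the argument is a two-step inequality. First, since $g$ is strictly concave on $[0,1]$ with $g(0)=0$ and $g(1)=1$, it lies above its own chord, so $g(\rho) \ge \rho$ with strict inequality for $\rho \in (0,1)$. Second, I would rewrite the constraint as $\sum_{k \le m} \rho_k |C_k| = \sum_{k > m} (1-\rho_k) |C_k| =: R$, and use the sorting $|C_k| \le |C_m| \le |C_{m+1}| \le |C_{k'}|$ for $k \le m$ and $k' > m$ to derive $\sum_{k \le m} \rho_k \ge R/|C_m|$ and $\sum_{k > m}(1-\rho_k) \le R/|C_{m+1}|$, which yields $\sum_k \rho_k \ge (M-m) + R\bigl(1/|C_m| - 1/|C_{m+1}|\bigr) \ge M - m$. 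Combining the two steps with the observation that the feasibility constraint together with $\rho_k > 0$ forces at least one coordinate to lie strictly in $(0,1)$ (otherwise all $\rho_k = 1$ would contradict $\sum_{k \le m} |C_k| > 0$, and $\rho_k = 0$ is ruled out by the partial-detection hypothesis), I obtain $\sum g(\rho_k) > \sum \rho_k \ge M - m$ as required.

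The main obstacle will be making the second step strict in the degenerate case $|C_m| = |C_{m+1}|$, where the size-ordering bound collapses. In that edge case I would lean entirely on the first step: the strict concavity margin $g(\rho_k) - \rho_k > 0$ at the guaranteed interior coordinate is enough to push $\sum g(\rho_k)$ strictly above $\sum \rho_k \ge M-m$. A cleaner backup route, worth mentioning as an alternative, is a direct perturbation-from-$f_A$ argument: any feasible infinitesimal transfer of $\rho$-mass from an index $k' > m$ to an index $k \le m$ changes $\SF$ by a quantity proportional to $g'(\rho_k) - (|C_k|/|C_{k'}|)\, g'(\rho_{k'})$, which is positive because $g'$ is strictly decreasing and $|C_k|/|C_{k'}| \le 1$; this globally certifies $f_A$ as the minimizer of $\SF$ over the feasible set, so every $f_B$ with $\rho_k > 0$ on small objects sits strictly above it.
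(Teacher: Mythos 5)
Your proof is correct and takes a genuinely different route. The paper's proof computes, for $M=2$, the exact difference $\SF(f_B)-\SF(f_A)=\tfrac{1-\rho}{2}\cdot\tfrac{|C_2|(1+3\rho)-|C_1|}{(|C_1|+(1-\rho)|C_2|)(1+\rho)}$ with $\rho$ the recall of $C_2$, concludes positivity from $|C_2|\ge|C_1|$, and leaves $M\ge 3$ to ``a similar induction'' that is not spelled out. You instead parameterize per-object recalls $\rho_k$, reduce the claim to $\sum_k g(\rho_k)>M-m$ with $g(\rho)=2\rho/(1+\rho)$, and split the argument into a concavity step ($g(\rho)\ge\rho$, strict on $(0,1)$) and a rearrangement step (the mass-balance identity $\sum_{k\le m}\rho_k|C_k|=\sum_{k>m}(1-\rho_k)|C_k|$ combined with the size ordering gives $\sum_k\rho_k\ge M-m$). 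Your route buys a complete proof for all $M\ge 2$ in a single pass, isolates the two independent mechanisms behind the strict inequality (concavity of the per-object F-score in recall, and preferential weighting of detection mass toward smaller objects), and makes the strictness condition explicit; the paper's route buys a concrete closed form that exhibits the gap directly in the two-object case. One note, applying equally to both: the proposition orders objects by bounding-box area $S_k^{fore}$, yet both arguments work with the ordering on pixel counts $|C_k|$ (the paper in the factor $|C_2|(1+3\rho)-|C_1|$, you in the sorting step); this conflation is harmless under the implicit assumption that the two orderings agree, but neither proof states it.
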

\begin{proof} \label{prop3_proof} According to the proposition, since $f_A$ and $f_B$ detect the same amount of salient pixels, we have $\F(f_A)=\F(f_B)$. 

For the proof of $\SF$, we first consider the {\color{orange}\textbf{$M=2$ case}} for a clear presentation. Let $\rho\in [0,1]$ be the correct ratio of the object $C_2$ detected by $f_B$. Then, $\SF$ is calculated as follows:
\begin{equation}
    \begin{split}
        \SF(f_A)&=\frac{1}{2}\cdot(1+0)=\frac{1}{2}, \\
        \SF(f_B)&=\frac{1}{2}\cdot\left[\frac{2\rho}{1+\rho}+\frac{2(1-\rho)|C_2|}{|C_1|+(1-\rho)|C_2|}\right],
    \end{split}
\end{equation}
Apparently, we have $\SF(f_B) > \SF(f_A)$ following
\begin{equation}
    \begin{split}
        \SF(f_B)-\SF(f_A) &= \frac{1}{2}\cdot\left[\frac{2\rho}{1+\rho}+\frac{2(1-\rho) |C_2|}{|C_1|+(1-\rho)|C_2|}\right]-\frac{1}{2} \\
        &=\frac{1}{2}\cdot\left[-\frac{1-\rho}{1+\rho}+\frac{2(1-\rho) |C_2|}{|C_1|+(1-\rho) |C_2|}\right] \\
        &=\frac{1-\rho}{2}\cdot\left[\frac{2 |C_2|}{|C_1|+(1-\rho) |C_2|} - \frac{1}{1+\rho}\right] \\
        &= \frac{1-\rho}{2}\cdot \frac{|C_2|(1 + 3\rho) - |C_1|}{\left(|C_1|+(1-\rho) |C_2|\right)(1 + \rho)} \\
        & >0
    \end{split}
\end{equation}
where the last inequality holds because $\rho \in [0, 1]$ and the size of $C_2$ is larger than $C_1$, i.e., $|C_2| \ge |C_1|$.

In the case of {\color{orange}\textbf{$M \ge 3$}}, a similar induction employed in the $M=2$ scenario can be utilized to derive the relationship for $\SF$, as both $f_A$ and $f_B$ produce saliency maps with an identical number of predicted salient pixels, i.e,:
\begin{equation}
    \sum_{i=1}^M \rho_i|C_i|=\sum_{i=m+1}^M|C_i|.
\end{equation}
This completes the proof.

\end{proof}

\section{Proof for Properties of $\mathsf{SIOpt}$}
\subsection{Proof for Prop.\ref{re-attention_prop}} \label{Proof_of_4.2.1}
\textbf{Restate of Prop.\ref{re-attention_prop}}:  
Without loss of generality, we merely consider one sample \((\boldsymbol{X}, \boldsymbol{Y})\) here. Given a separable loss function \( L(f) \), such as $L(f):= \ell_{\BCE}(f(\boldsymbol{X}), \boldsymbol{Y})$ and its corresponding size-invariant loss \( \mathcal{L}_{\SI}(f, \boldsymbol{X}, \boldsymbol{Y}) \) as defined in Eq.(\ref{eq:loss}), the following properties hold:  

\begin{enumerate} 
    \item[(\textcolor{blue}{\textbf{P1}})] The weight assigned by the size-invariant loss satisfies \( \lambda_{\mathcal{L}_{\SI}}(\boldsymbol{X}_k^{fore}) > \lambda_{L}(\boldsymbol{X}_k^{fore}) \) if \( S_k^{fore} < \frac{S}{M + \alpha_{SI}} \), for all \( k \in [M] \).  
    \item[(\textcolor{blue}{\textbf{P2}})] The weight assigned by the size-invariant loss follows an inverse relationship with region size, i.e., \( \lambda_{\mathcal{L}_{\SI}}(\boldsymbol{X}_{k_1}^{fore}) > \lambda_{\mathcal{L}_{\SI}}(\boldsymbol{X}_{k_2}^{fore}) \) if \( S_{k_1}^{fore} < S_{k_2}^{fore} \), for all \( k_1, k_2 \in [M] \), with \( k_1 \neq k_2 \).  
\end{enumerate}  
For simplicity, \( \lambda_{\mathcal{L}_{\SI}}(\boldsymbol{X}_k^{fore}) \) here represents the \textbf{weight} assigned to each pixel in \( \boldsymbol{X}_k^{fore} \) by the size-invariant loss \( \mathcal{L}_{\SI}(f, \boldsymbol{X}, \boldsymbol{Y}) \), whereas \( \lambda_{L}(\boldsymbol{X}_k^{fore}) \) denotes the \textbf{weight} assigned by the original size-variant loss \( L(f) \). Note that pixels within the same region \( \boldsymbol{X}_k^{fore} \) are treated weighted in both \( \mathcal{L}_{\SI}(f, \boldsymbol{X}, \boldsymbol{Y}) \) and \( L(f) \).  

\begin{proof}
 \label{re-attention_proof}
In what follows, we adopt \(\BCE\) as an illustrative example. Other commonly used separable loss functions, such as Mean Absolute Error (\(\MAE\)) and Mean Squared Error (\(\MSE\)), are also applicable and do not alter the final results.

For (\textcolor{blue}{\textbf{P1}}), we first have:
\begin{equation}
L(f):=\ell_{\BCE}(f(\boldsymbol{X}), \boldsymbol{Y}) = -\frac{1}{S} \sum_{h=1}^{H} \sum_{w=1}^{W} \left(Y_{h,w} \log \hat{P}_{h,w} + (1 - Y_{h,w}) \log (1 - \hat{P}_{h,w})\right)
\end{equation}
where 
$\hat{P}_{h,w} := f(\boldsymbol{X})_{h, w}$ represents the predicted value for the pixel located at the $h$-th row and $w$-th column, and $Y_{h,w}$ denotes the corresponding ground truth. $S = H \times W$ represents the total number of pixels, and $H$ and $W$ are its height and width, respectively. 

In terms of the size-invariant loss $\mathcal{L}_{\SI}$, we have:
\begin{equation}\label{319eq47}
\mathcal{L}_{\SI}(f, \boldsymbol{X}, \boldsymbol{Y})= \frac{1}{M+\alpha_{SI}} \left[\sum_{k=1}^{M} L(f^{fore}_k)  + \alpha_{SI} L(f^{back}_{M+1})\right],
\end{equation}
where we have
\begin{equation}\nonumber
\begin{aligned}
    & L(f^{fore}_k) := \ell_{\BCE}(f(\boldsymbol{X}^{fore}_k), \boldsymbol{Y}^{fore}_k) = 
    & \frac{-\sum_{h=1}^{H_k^{fore}} \sum_{w=1}^{W_k^{fore}} Y_{h,w} \log \hat{P}_{h,w} + (1 - Y_{h,w}) \log (1 - \hat{P}_{h,w})}{S_k^{fore}},
\end{aligned}    
\end{equation}
and 
\begin{equation}\nonumber
\begin{aligned}
    & L(f^{back}_{M+1}) := \ell_{\BCE}(f(\boldsymbol{X}^{back}_{M+1}), \boldsymbol{Y}^{back}_{M+1}) = 
    & \frac{-\sum_{h=1}^{H^{back}_{M+1}} \sum_{w=1}^{W^{back}_{M+1}} Y_{h,w} \log \hat{P}_{h,w} + (1 - Y_{h,w}) \log (1 - \hat{P}_{h,w})}{S^{back}_{M+1}},
\end{aligned}    
\end{equation}

Apparently, \textbf{the weight of each pixel} within $\boldsymbol{X}_k^{fore}$ assigned by the original size-variance loss \( L(f) \) \textbf{is a constant}, i.e., $\lambda_{L}(\boldsymbol{X}_k^{fore}) = 1/S$. Taking the derivate from Eq.(\ref{319eq47}), it is easy to obtain that $\lambda_{\mathcal{L}_{\SI}}(\boldsymbol{X}_k^{fore}) = \frac{1}{S_k^{fore}(M + \alpha_{SI})}$. Therefore, we have $\lambda_{\mathcal{L}_{\SI}}(\boldsymbol{X}_k^{fore})>\lambda_{L}(\boldsymbol{X}_k^{fore})$ if when $S_{k}^{fore}<\frac{S}{M + \alpha_{SI}}$. 

For (\textcolor{blue}{\textbf{P2}}), it is obvious that $\lambda_{\mathcal{L}_{\SI}}(\boldsymbol{X}_{k_1}^{fore})$ is in proportion to $1/S_{k}^{fore}$ according to Eq.(\ref{319eq47}). Therefore, if \( S_{k_1}^{fore} < S_{k_2}^{fore} \), for all \( k_1, k_2 \in [M] \), with \( k_1 \neq k_2 \), we have \( \lambda_{\mathcal{L}_{\SI}}(\boldsymbol{X}_{k_1}^{fore}) > \lambda_{\mathcal{L}_{\SI}}(\boldsymbol{X}_{k_2}^{fore}) \) naturally.

This completed the proof. 
\end{proof} 

\subsection{Proof for \cref{generalization_bound}} \label{generalization_bound_proof}

\subsubsection{Proof for Technical Lemmas}
In this subsection, we present key lemmas that are important for the proof of \cref{generalization_bound}.
\begin{lemma}
\label{lemma1}
The empirical Rademacher complexity of function $g$ with respect to the predictor $f$ is defined as:
\begin{equation}
    \hat{\mathfrak{R}}_{\mathcal F}(g)=\mathbb{E}_{\sigma}[\sup _{f \in \mathcal F}\frac{1}{N} \sum_{i=1}^{N} \sigma_ig(f^{(i)})].
\end{equation}
where $\mathcal{F} \subseteq \{f: \mathcal{X} \to \mathbb{R}^S\}$ is a family of predictors, and $N$ refers to the size of the dataset, and $\sigma_i$s are independent uniform random variables taking values in $\{-1, +1\}$. The random variables $\sigma_i$ are called Rademacher variables.
\end{lemma}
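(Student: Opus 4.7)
The stated "lemma" is not actually a claim in need of proof; it is a standard definitional statement introducing the notation $\hat{\mathfrak{R}}_{\mathcal{F}}(g)$ for the empirical Rademacher complexity, together with the associated convention that the $\sigma_i$ are i.i.d.\ uniform $\{-1,+1\}$ variables. Since both sides of the displayed equation are linked by "is defined as," there is no proposition, equality, or bound to establish: the right-hand side is simply adopted as the meaning of the left-hand side. Accordingly, my plan is not to attempt a deductive proof, but to confirm that the statement is well-posed as a definition and to note the checks one should perform.

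Concretely, I would verify three consistency points. First, the expectation is well-defined: because $\sigma_i \in \{-1,+1\}$ almost surely and the supremum inside is measurable under mild regularity assumptions on $\mathcal{F}$ (e.g., countable or separable function classes, as is standard in the Rademacher-complexity literature), $\mathbb{E}_{\sigma}[\cdot]$ makes sense. Second, the dimensionality is consistent with the preceding setup: although $\mathcal{F} \subseteq \{f : \mathcal{X} \to \mathbb{R}^S\}$ is vector-valued, the composition $g(f^{(i)})$ produces a scalar risk for each sample (as defined in \cref{generalization_bound}), so the inner sum $\sum_{i=1}^N \sigma_i g(f^{(i)})$ is scalar and the Rademacher average is a real number. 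Third, the finiteness of $\hat{\mathfrak{R}}_{\mathcal{F}}(g)$ follows whenever $g$ is bounded on $\mathcal{F}$, which holds under the Lipschitz assumptions invoked in \cref{generalization_bound}.

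The only "obstacle," if one can call it that, is purely expositional: ensuring that this definitional lemma aligns with the subsequent technical machinery (vector contraction inequality, worst-case Rademacher complexity $\mathfrak{R}_N(\mathcal{F}|_i)$, and the chaining arguments referenced in the remark after \cref{generalization_bound}). In a formal write-up, I would simply present the definition, remark on its vector-valued extension through coordinate-wise restriction $\mathcal{F}|_i$, and then proceed to use it as a building block in the lemmas that follow. No further argument is required for the statement itself.
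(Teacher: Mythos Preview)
Your assessment is correct and matches the paper's treatment: this ``lemma'' is purely definitional, and the paper likewise provides no proof for it, simply stating the formula for empirical Rademacher complexity as a building block for the subsequent lemmas. Your consistency checks (well-definedness, scalar output of $g$, boundedness) are reasonable due diligence but go beyond what the paper itself offers.
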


\begin{lemma}
\label{lemma2}
Let $\mathbb{E}[g]$ and $\hat{\mathbb{E}}[g]$ represent the expected risk and empirical risk, and $\mathcal{F} \subseteq \{ f:\mathcal{X} \to \mathbb{R}^S \}$. Then with probability at least $1-\delta$ over the draw of an i.i.d. sample $\mathcal{D}$ of size $N$, the generalization bound holds:
\begin{equation}
    \sup_{f \in \mathcal{F}}\big(\mathbb{E}[g(f)]-\hat{\mathbb{E}}[g(f)] \big) \le 2\hat{\mathfrak{R}}_{\mathcal F}(g)+3\sqrt{\frac{\log \frac{2}{\delta}}{2N}}.
\end{equation}
\end{lemma}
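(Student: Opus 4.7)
The plan is to invoke Lemma~\ref{lemma2} first and then perform two reductions on the resulting empirical Rademacher complexity term: (i) a contraction step that strips away the loss $g$ and replaces it by the model class $\mathcal{F}$, and (ii) a chaining step that decouples the $S$-dimensional output into the coordinate-wise complexity $\max_i \mathfrak{R}_N(\mathcal{F}|_i)$ with the announced $\sqrt{S}$ scaling and logarithmic factor. Lemma~\ref{lemma2} immediately produces the additive term $3\sqrt{\log(2/\delta)/(2N)}$ and reduces the proof to controlling $2\,\hat{\mathfrak{R}}_{\mathcal{F}}(g)$, where $g$ ranges over the loss class induced by Sec.~\ref{20250404Sec4.2}. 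All subsequent work is therefore devoted to upper-bounding $\hat{\mathfrak{R}}_{\mathcal{F}}(g)$ by the right-hand side of the claim (up to constants).

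For the contraction step I would invoke the vector-valued contraction inequality of Maurer~\cite{Maurer_vector_2016,vector_Contraction}. Because $g$ is $K$-Lipschitz with respect to the $\ell_\infty$ norm on $\mathbb{R}^S$, contraction pulls $K$ out linearly and yields
\[
\hat{\mathfrak{R}}_{\mathcal{F}}(g) \;\le\; C_1\,K \cdot \mathbb{E}_{\sigma}\Bigl[\sup_{f\in\mathcal{F}} \tfrac{1}{N}\sum_{i=1}^{N}\sum_{s=1}^{S} \sigma_{i,s}\, f(x_i)_s \Bigr] \;=:\; C_1\,K\,\mathcal{R}^{\mathrm{vec}}_N(\mathcal{F}),
\]
where the $\sigma_{i,s}$ are i.i.d.\ Rademacher variables indexed jointly over samples and output coordinates. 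Relying on the $\ell_\infty$ (rather than $\ell_2$) Lipschitz constant is precisely what prevents a spurious extra $\sqrt{S}$ from being inserted at this step, which would otherwise occur with the classical Ledoux--Talagrand contraction.

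To pass from $\mathcal{R}^{\mathrm{vec}}_N(\mathcal{F})$ to $\max_i \mathfrak{R}_N(\mathcal{F}|_i)$ I would apply a Dudley-type entropy integral on the pseudo-metric induced by the Rademacher process on $[N]\times[S]$. The key observation is that a covering of each coordinate class $\mathcal{F}|_i$ at resolution $\varepsilon/\sqrt{S}$ lifts to a joint covering of $\mathcal{F}$ at resolution $\varepsilon$ in the Euclidean norm, so that the joint metric entropy satisfies $\log \mathcal{N}_{\mathrm{joint}}(\varepsilon) \le S \cdot \max_i \log \mathcal{N}_i(\varepsilon/\sqrt{S})$. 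Inverting the scalar Rademacher-to-entropy relation on each $\mathcal{F}|_i$ controls the inner entropy by $\max_i \mathfrak{R}_N(\mathcal{F}|_i)$, and substituting into Dudley's integral followed by the dyadic summation produces the announced $\sqrt{S}$ prefactor together with the $\log^{3/2+\epsilon}(N/\max_i \mathfrak{R}_N(\mathcal{F}|_i))$ correction. Collecting the three steps, absorbing absolute constants into $C$, and instantiating $K$ per the remark following the theorem ($K=\mu$ for separable losses, $K=4/\rho$ for Dice) completes the argument.

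The main obstacle is the logarithmic bookkeeping in the chaining step. Vanilla Dudley yields only a $\log^{1/2}$ factor from the entropy integrand, so the additional $\log^{1+\epsilon}$ slack must be produced by the Talagrand-type inversion of the Rademacher-to-entropy relation on each $\mathcal{F}|_i$; it is this inversion that forces the $\epsilon > 0$ cushion, as an exact $\log^{3/2}$ would require a strictly sharper dyadic tail estimate that is not needed here. A secondary delicate point is that the maximum-over-coordinates complexity must be preserved throughout the chaining rather than degrading into a sum $\sum_{s\in[S]} \mathfrak{R}_N(\mathcal{F}|_s)$; this is achieved by using the worst-coordinate entropy bound at every dyadic scale uniformly in $s$, which permits the $\sqrt{S}$ factor to be pulled out cleanly in front of $\max_i \mathfrak{R}_N(\mathcal{F}|_i)$ rather than being absorbed into an $S$-sized summation.
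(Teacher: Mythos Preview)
You are proving the wrong statement. The lemma in question is the standard Rademacher generalization bound
\[
\sup_{f \in \mathcal{F}}\bigl(\mathbb{E}[g(f)]-\hat{\mathbb{E}}[g(f)]\bigr) \le 2\hat{\mathfrak{R}}_{\mathcal F}(g)+3\sqrt{\frac{\log \frac{2}{\delta}}{2N}},
\]
and its proof is the textbook McDiarmid-plus-symmetrization argument: apply McDiarmid's bounded-differences inequality to $\Phi(\mathcal{D})=\sup_{f}(\mathbb{E}[g(f)]-\hat{\mathbb{E}}[g(f)])$ to concentrate it around its expectation, then bound that expectation by the Rademacher complexity via a ghost-sample symmetrization, and finally apply McDiarmid once more to pass from the expected to the empirical Rademacher complexity (this second concentration step is what produces the constant $3$ rather than $1$). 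The paper simply states Lemma~\ref{lemma2} as a known result without proof.

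Your entire write-up is instead a sketch of Theorem~\ref{generalization_bound}, the main generalization bound for SI-SOD, which \emph{invokes} Lemma~\ref{lemma2} as a black box in its first line. The contraction and chaining steps you describe (the $\ell_\infty$ vector contraction, the coordinate-wise entropy decoupling, the $\sqrt{S}\log^{3/2+\epsilon}$ factor) all belong to the proof of the theorem, not of this lemma. In the paper those steps are handled by quoting Lemma~\ref{lemma3} (the vector contraction result of \cite{vector_Contraction}) directly, so even as a proof of the theorem your chaining discussion is more detailed than what the paper actually does---but none of it is relevant to the statement you were asked to prove.
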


\begin{lemma}
\label{lemma3}
\cite{vector_Contraction} Assume $\mathcal{F} \subseteq \{ f:\mathcal{X} \to \mathbb{R}^S \}$, and $(\sigma_1, \cdots, \sigma_N)$ is a sequence of i.i.d Rademacher random variables. When $\phi_1, \cdots, \phi_N$ are $K$-Lipschitz with respect to the $\ell_{\infty}$ norm, there exists a constant $C>0$ for any $\delta>0$, such that if $|\phi_t(f(x))| \vee \Vert f(x) \Vert_\infty \le \beta$, the following holds:
\begin{equation}
    \mathfrak{R}(\phi \circ \mathcal{F} ; x_{1:N}) \le C \cdot K \sqrt S \cdot \max_i \mathfrak{R}_N(\mathcal{F}|_i)\cdot\log^{\frac{3}{2}+\delta}\left(\frac{\beta N}{\max_i \mathfrak{R}_N(\mathcal{F}|_i)}\right).
\end{equation}
where
\begin{equation}
    \mathfrak{R}(\mathcal{F} ; x_{1:N}) = \mathbb{E}_{\epsilon} \sup_{f\in \mathcal{F}} \sum_{t=1}^N \epsilon_t f(x_t),
\end{equation}
$\circ $ represents the composite function, and $\mathcal{F}$ is a class of functions $f:\mathcal{X} \to \mathbb{R}^S$, and $\epsilon=(\epsilon_1, \cdots, \epsilon_N)$ is a sequence of $i.i.d.$ Rademacher random variables.
Here we set $\beta=1$ because it is obvious that $\Vert f(x) \Vert_\infty \le 1$ and $\phi_t(x) \le 1$. Therefore, 
\begin{equation}
    \mathfrak{R}(\phi \circ \mathcal{F} ; x_{1:N}) \le C \cdot K \sqrt S \cdot \max_i \mathfrak{R}_N(\mathcal{F}|_i)\cdot\log^{\frac{3}{2}+\delta}\left(\frac{N}{\max_i \mathfrak{R}_N(\mathcal{F}|_i)}\right).
\end{equation}
\end{lemma}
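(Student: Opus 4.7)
The plan is to establish this vector-valued contraction via a generic chaining argument on the Rademacher process $\sigma \mapsto \sup_{f \in \mathcal{F}} \sum_{t=1}^N \sigma_t \phi_t(f(x_t))$, combined with a coordinate-wise reduction enabled by the $\ell_\infty$-Lipschitz assumption on the $\phi_t$'s. The workflow I envisage is: (i) express this Rademacher average as an entropy integral (Dudley's integral, or a slightly refined generic chaining bound) over an appropriate empirical pseudo-metric on $\phi \circ \mathcal{F}$, using the boundedness $|\phi_t(f(x))| \vee \|f(x)\|_\infty \le \beta$ to fix the terminal diameter; (ii) transfer covering numbers from $\phi \circ \mathcal{F}$ back to $\mathcal{F}$ using the Lipschitz hypothesis; (iii) decompose the max-norm entropy on $\mathcal{F}$ coordinatewise to surface the per-coordinate Rademacher complexities; and (iv) integrate.

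For step (ii), the $\ell_\infty$-Lipschitz condition gives $|\phi_t(f(x_t)) - \phi_t(g(x_t))| \le K \|f(x_t) - g(x_t)\|_\infty$ pointwise, so any $(\varepsilon/K)$-cover of $\mathcal{F}$ in the empirical sup-over-coordinates metric $d(f,g) := \max_t \|f(x_t) - g(x_t)\|_\infty$ immediately induces an $\varepsilon$-cover of $\phi \circ \mathcal{F}$. For step (iii), the max-norm covering number of $\mathcal{F}$ is controlled by a union bound across the $S$ output coordinates, $\log \mathcal{N}(\varepsilon, \mathcal{F}, d) \le \log S + \max_i \log \mathcal{N}(\varepsilon, \mathcal{F}|_i, d_i)$, where each per-coordinate entropy is then bounded via a Rademacher-to-entropy conversion (e.g., Mendelson-style or Sudakov-style) in terms of $\max_i \mathfrak{R}_N(\mathcal{F}|_i)$. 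Substituting into the Dudley integral produces a bound of the advertised form: the factor $\sqrt{S}$ emerges from the coordinate union being paid once per scale along the chain, and the exponent $3/2 + \delta$ emerges by combining Dudley's standard $\log^{1/2}$ with an additional $\log^{1+\delta}$ slack absorbed from dyadic peeling over scales in $[\max_i \mathfrak{R}_N(\mathcal{F}|_i),\, \beta N]$, which is needed to control the sub-Gaussian tails uniformly in scale.

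The principal obstacle is preserving the sharp dependence on $\max_i \mathfrak{R}_N(\mathcal{F}|_i)$ rather than the cruder $\sum_i \mathfrak{R}_N(\mathcal{F}|_i)$, while simultaneously keeping the polylogarithmic overhead as small as $3/2 + \delta$. A direct application of Maurer's $\ell_2$-type vector contraction would give only the aggregate $\sum_i \mathfrak{R}_N(\mathcal{F}|_i)$ and no logarithmic slack, which is strictly weaker when the per-coordinate complexities are balanced. To reach the stated form, the chain must be split into two regimes: a fine-scale regime where scalar contraction is applied coordinate-by-coordinate (contributing $\max_i \mathfrak{R}_N(\mathcal{F}|_i)$), and a coarse-scale regime where the coordinate-union bound is used (contributing the $\sqrt{S}$ factor). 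Calibrating the crossover precisely at $\max_i \mathfrak{R}_N(\mathcal{F}|_i)$, and controlling the sub-Gaussian tails uniformly across both regimes so that they compose into the $\log^{3/2+\delta}$ factor (with $\delta > 0$ arbitrary), is the delicate step that carries the whole argument.
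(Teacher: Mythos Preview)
The paper does not actually prove this lemma: it is quoted directly from the reference \cite{vector_Contraction} and invoked as a black box in the proof of the generalization bound (Theorem~\ref{generalization_bound}). There is therefore no argument in the paper to compare your sketch against.

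That said, your outline contains a concrete error in step (iii). You claim $\log \mathcal{N}(\varepsilon, \mathcal{F}, d) \le \log S + \max_i \log \mathcal{N}(\varepsilon, \mathcal{F}|_i, d_i)$ via a ``union bound across the $S$ output coordinates.'' This is false: a union bound would apply if the class were a union of $S$ subclasses, but here $\mathcal{F}$ is a class of $\mathbb{R}^S$-valued functions and $d$ is the coordinate-wise max metric, so an $\varepsilon$-cover of $\mathcal{F}$ must cover \emph{every} coordinate simultaneously. The correct relation is the product bound $\log \mathcal{N}(\varepsilon, \mathcal{F}, d) \le \sum_{i=1}^S \log \mathcal{N}(\varepsilon, \mathcal{F}|_i, d_i) \le S \cdot \max_i \log \mathcal{N}(\varepsilon, \mathcal{F}|_i, d_i)$, since $\mathcal{F} \subseteq \prod_i \mathcal{F}|_i$. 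Fortunately this still produces the $\sqrt{S}$ factor once substituted under the square root in Dudley's integral, so the downstream conclusion survives --- but your stated mechanism (``the coordinate union being paid once per scale'') is not what is actually happening. The remaining steps (Lipschitz transfer of coverings, Sudakov to convert per-coordinate entropy back to $\max_i \mathfrak{R}_N(\mathcal{F}|_i)$, and truncating the entropy integral between $\max_i \mathfrak{R}_N(\mathcal{F}|_i)$ and $\beta N$) are the right skeleton for results of this type, though pinning down the exact exponent $3/2 + \delta$ requires the finer chaining analysis carried out in the original source rather than the heuristic two-regime splitting you describe.
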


\begin{lemma}
\label{lemma4}
When $g(\cdot)$ is Lipschitz continuous, the following holds:
\begin{equation}
    \Vert g(x)-g(\tilde{x})\Vert_{\infty} \le \sup \Vert\nabla_x g\Vert_p  \cdot \Vert x-\tilde x\Vert_q,
\end{equation}
where $\frac{1}{p}+\frac{1}{q}=1$.

\begin{proof}
\begin{equation}
\begin{aligned}
    |g(x)-g(\tilde{x})| &=~ \left|\int_0^1 \left\langle\nabla g(\tau x + (1-\tau)\tilde{x}), x-\tilde{x} \right\rangle d\tau\right| \\
    & \le~ \sup_{ x \in \mathcal{X}} \big[ \Vert \nabla g \Vert_p \big] \cdot  \big\Vert x-\tilde{x}\big\Vert_q 
\end{aligned}
\end{equation}
\end{proof}

Specifically, when $p=1$ and $q=\infty$, we have
\begin{equation}
    \Vert g(x)-g(\tilde{x})\Vert_{\infty} \le \sup \Vert\nabla_x g\Vert_1  \cdot \Vert x-\tilde x\Vert_\infty.
\end{equation}
\end{lemma}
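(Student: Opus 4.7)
The plan is to reduce the claim to the scalar case and then express the scalar difference as a line integral of the gradient along the segment connecting $\tilde{x}$ to $x$, so that H\"older's inequality can decouple the gradient norm from the displacement norm. Concretely, I would first parameterize the segment by $\gamma(\tau) = \tau x + (1-\tau)\tilde{x}$ for $\tau \in [0,1]$ and consider the auxiliary scalar function $h(\tau) := g_j(\gamma(\tau))$ for each output coordinate $j$ (if $g$ is scalar-valued, simply drop the subscript). Under Lipschitz continuity, $h$ is absolutely continuous on $[0,1]$, so the fundamental theorem of calculus gives
\[
g_j(x) - g_j(\tilde{x}) = \int_0^1 \bigl\langle \nabla g_j(\gamma(\tau)),\, x - \tilde{x} \bigr\rangle\, d\tau.
\]

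Next I would apply H\"older's inequality pointwise in $\tau$ to each inner product, using conjugate exponents $1/p + 1/q = 1$:
\[
\bigl|\langle \nabla g_j(\gamma(\tau)),\, x - \tilde{x} \rangle\bigr| \;\le\; \|\nabla g_j(\gamma(\tau))\|_p \cdot \|x - \tilde{x}\|_q.
\]
Combining with the triangle inequality for the Lebesgue integral and bounding the integrand by its supremum yields
\[
|g_j(x) - g_j(\tilde{x})| \;\le\; \sup_{z}\, \|\nabla g_j(z)\|_p \,\cdot\, \|x - \tilde{x}\|_q.
\]
Finally, taking the maximum over the output coordinate $j$ on the left and noting that $\max_j \sup_z \|\nabla g_j(z)\|_p \le \sup_z \|\nabla g(z)\|_p$ under the natural coordinate-wise reading of the gradient norm for vector-valued $g$, we recover
\[
\|g(x) - g(\tilde{x})\|_\infty \;\le\; \sup \|\nabla_x g\|_p \cdot \|x - \tilde{x}\|_q,
\]
which is exactly the stated inequality. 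Specializing to $p=1,\, q=\infty$ produces the concrete corollary highlighted at the end of the excerpt, which is the form actually invoked in the proof of \cref{generalization_bound}.

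The main obstacle, if any, is regularity-theoretic rather than algebraic: justifying the use of the fundamental theorem of calculus for a merely Lipschitz function. For a globally Lipschitz $g$, Rademacher's theorem ensures a.e. differentiability, and absolute continuity of $\tau \mapsto g(\gamma(\tau))$ along the segment legitimizes the integral identity; accordingly, the supremum on the right-hand side should be interpreted as an essential supremum with respect to Lebesgue measure on $\mathcal{X}$. I would state this caveat briefly so that the reader sees the argument is rigorous without overloading what is otherwise a short H\"older-based computation.
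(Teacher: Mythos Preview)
Your proposal is correct and follows essentially the same approach as the paper: write the difference as a line integral of the gradient along the segment $\tau \mapsto \tau x + (1-\tau)\tilde{x}$, then apply H\"older's inequality. Your version is more detailed (handling the vector-valued case coordinate-wise and noting the Rademacher-type regularity caveat), but the underlying argument is identical to the paper's two-line computation.
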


\begin{lemma}
\label{lemma5}
Common composite functions are $p$-Lipschitz, as \cite{p_Lipschitz} stated:
\begin{definetitle}[$p$-Lipschitzness]
    $\Phi(u,v,p)$ is said to be $p$-Lipschitz if for any feasible $u,v,p,u',v',p'$:
\begin{equation}
    | \Phi(u,v,p)-\Phi(u',v',p') | \le U_p |u-u'| + V_p |v-v'| + P_p |p-p'|,
\end{equation}
where $\Phi(u(f), v(f), p)$ is defined as:
\begin{equation}
    u(f) = \text{TP}(f)=\mathbb{P}(f(x)=1, y=1), \qquad v(f)=\mathbb{P}(f(x)=1), \qquad \quad p=\mathbb{P}(y=1).
\end{equation}
\end{definetitle}
Any metric being a function of the confusion matrix can be parameterized in this way. According to \cite{p_Lipschitz}, Accuracy, AM, F-score, Jaccard, G-Mean, and AUC are all $p$-Lipschitz.
\end{lemma}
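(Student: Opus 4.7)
\textbf{Proof Proposal for Lemma \ref{lemma5}.} The plan is to treat the six metrics uniformly by first expressing each one explicitly as a function $\Phi(u,v,p)$ using the elementary identities from the confusion matrix, and then bounding its partial derivatives on a feasible domain so that the mean value theorem (or more generally \cref{lemma4}) yields the claimed $p$-Lipschitz inequality. Concretely, I will use the fact that on the simplex of admissible values $\{(u,v,p) : 0 \le u \le \min(v,p),\; u \ge v + p - 1,\; 0 < p < 1\}$ one has $\mathsf{TP} = u$, $\mathsf{FP} = v - u$, $\mathsf{FN} = p - u$, and $\mathsf{TN} = 1 - v - p + u$, so every confusion-matrix-based metric can be rewritten as a rational (or piecewise-algebraic) function of $(u,v,p)$. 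Once such an expression is in hand, bounding $\|\nabla \Phi\|_\infty$ coordinate-wise on the feasible domain immediately produces Lipschitz constants $(U_p, V_p, P_p)$ and the claimed inequality follows from \cref{lemma4} with $q=\infty$.

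For the ``easy'' metrics, I would proceed as follows. Accuracy becomes $\Phi_{\mathsf{Acc}} = 1 - p - v + 2u$, which is affine, so $(U_p,V_p,P_p) = (2,1,1)$. The arithmetic mean of TPR and TNR is $\Phi_{\mathsf{AM}} = \tfrac{1}{2}\bigl(\tfrac{u}{p} + \tfrac{1-v-p+u}{1-p}\bigr)$; its gradient is bounded provided $p \in [\underline{p},\overline{p}] \subset (0,1)$, giving constants that scale like $1/(\underline{p}(1-\overline{p}))$. The F-score admits $\Phi_{\mathsf{F}_\beta} = \tfrac{(1+\beta^2)u}{\beta^2 p + v}$, the Jaccard index admits $\Phi_{\mathsf{Jac}} = \tfrac{u}{v+p-u}$, and the G-mean admits $\Phi_{\mathsf{G}} = \sqrt{\tfrac{u}{p}\cdot\tfrac{1-v-p+u}{1-p}}$; for each I would differentiate, use the quotient rule, and bound the numerator and denominator using the constraints $u \le v$, $u \le p$, $v+p-u \ge \max(v,p)$, and $p$ bounded away from $0$ and $1$. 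These routine calculations yield explicit (though somewhat intricate) closed-form expressions for $U_p, V_p, P_p$ that depend on the prevalence bounds.

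The AUC case requires a separate reparameterization, since AUC is not a first-order function of the single threshold confusion matrix but rather an integral of TPR against FPR over all thresholds. I would exploit the well-known identity $\mathsf{AUC}(f) = 1 - \tfrac{\mathsf{FPR}+\mathsf{FNR}}{2}$ in the hard-threshold case considered by \cite{p_Lipschitz}, or equivalently write $\mathsf{AUC}$ as an affine transform of $\tfrac{u}{p} + \tfrac{1-v-p+u}{1-p}$, reducing it essentially to the AM case. In the pairwise integral form, I would pass to the empirical version and use the decomposition into pair-wise class-conditional probabilities, again yielding bounds controlled by $1/(p(1-p))$.

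The main obstacle will be the denominator singularities: F-score and Jaccard blow up as $v \to 0$ along with $u \to 0$, and AM, G-mean, and AUC all involve factors of $1/p$ or $1/(1-p)$. To obtain finite constants $(U_p, V_p, P_p)$, I must therefore restrict attention to a regime in which $p$ (and, for F-score/Jaccard, the relevant denominator) is bounded away from zero; the subscript ``$p$'' in the Lipschitz constants is precisely meant to flag this dependence on the prevalence. A secondary technical issue is checking that the feasible polytope is convex so that the mean value bound via $\sup \|\nabla \Phi\|_1$ in \cref{lemma4} is legitimate; this follows immediately from the linear form of the constraints on $(u,v,p)$, after which the proof is assembled by concatenating the per-metric gradient bounds.
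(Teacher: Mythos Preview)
Your proposal is sound, but you should know that the paper does not actually prove this lemma: it is stated as a direct quotation from the cited reference \cite{p_Lipschitz}, and the paper simply invokes it (``According to \cite{p_Lipschitz}, Accuracy, AM, F-score, Jaccard, G-Mean, and AUC are all $p$-Lipschitz'') without supplying any argument of its own. So there is no in-paper proof to compare against.

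That said, your plan is essentially the standard route to establishing such a result and is correct in outline. Your confusion-matrix identities $\mathsf{TP}=u$, $\mathsf{FP}=v-u$, $\mathsf{FN}=p-u$, $\mathsf{TN}=1-v-p+u$ are right, and your explicit formulas for Accuracy, $\F_\beta$, Jaccard, AM, and G-mean all check out. Your observation that the constants must depend on prevalence bounds (hence the subscript $p$) is exactly the point of the definition, and your handling of AUC via the single-threshold identity $\mathsf{AUC}=1-\tfrac{1}{2}(\mathsf{FPR}+\mathsf{FNR})$ is the appropriate reduction in this confusion-matrix framework. The only caveat is that invoking \cref{lemma4} in the form $|\Phi-\Phi'|\le\sup\|\nabla\Phi\|_1\cdot\|\cdot\|_\infty$ gives a single Lipschitz constant in the $\ell_\infty$ norm, whereas the $p$-Lipschitz definition asks for three separate coordinate-wise constants $U_p,V_p,P_p$; you get those directly from the coordinate-wise bounds on $|\partial_u\Phi|,|\partial_v\Phi|,|\partial_p\Phi|$ plus the one-dimensional mean value theorem applied along each coordinate (or a telescoping argument), rather than from \cref{lemma4} as stated. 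This is a minor bookkeeping point, not a gap.
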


\subsubsection{Proof for the Generalization Bound}
\textbf{Restate of \cref{generalization_bound}} Assume $\mathcal{F} \subseteq \{ f:\mathcal{X} \to \mathbb{R}^S \}$, where $S=H \times W$ is the number of pixels in an image,  $g^{(i)}$ is the risk over $i$-th sample, and is $K$-Lipschitz with respect to the $l_{\infty}$ norm, (i.e. $\Vert g(x)-g(\tilde{x})\Vert_\infty \le L\cdot \Vert x-\tilde{x} \Vert_{\infty}$). When there are $N$ $i.i.d.$ samples, there exists a constant $C>0$ for any $\epsilon > 0$, the following generalization bound holds with probability at least $1-\delta$:
\begin{equation}
\begin{aligned}
    &~ \sup_{f \in \mathcal{F}}(\mathbb{E}[g(f)]-\hat{\mathbb{E}}[g(f)]) \\
    \le&~  C\cdot \frac{K\sqrt{S}}{N} \cdot \max_i \mathfrak{R}_N(\mathcal{F}|_i)\cdot\log^{\frac{3}{2}+\epsilon}\left(\frac{N}{\max_i \mathfrak{R}_N(\mathcal{F}|_i)}\right) \\
    &+3\sqrt{\frac{\log \frac{2}{\delta}}{2N}},
\end{aligned}
\end{equation}
where again $g(f)$ could be any loss functions introduced in \cref{20250404Sec4.2}, $\mathbb{E}[g(f)]$ and $\hat{\mathbb{E}}[g(f)]$ represent the expected risk and empirical risk, and $\mathfrak{R}_N(\mathcal{F}|_i)=\max_{x_{1:N}\in \mathcal{X}} \mathfrak{R}(\mathcal{F};x_{1:N})$ denotes the worst-case Rademacher complexity. Specifically, 
\begin{enumerate}[label=\textbf{\textit{ Case \arabic*:}},leftmargin=4em]
    \item For separable loss functions $\ell(\cdot)$, if it is $\mu$-Lipschitz, we have $K=\mu$.
    \item For composite loss functions, when $\ell(\cdot)$ is DiceLoss \cite{DiceLoss}, we have $K=\frac{4}{\rho}$, where $\rho=\min \frac{S_{k}^{+,i}}{S_{k}^i}$, which represents the minimum proportion of the salient object in the $k$-th frame within the $i$-th sample.
\end{enumerate}
\begin{proof}
In this subsection, we give the proof combining lemmas above. 

Firstly, the empirical risk over the dataset is:
\begin{equation}
    \hat{\mathbb E}[g(f)]=\frac{1}{N}\sum_{i=1}^N g^{(i)}(f(\boldsymbol{X}),\boldsymbol{Y}),
\end{equation}
where $\boldsymbol{X}, \boldsymbol{Y}$ are the prediction and ground truth.

Combing \cref{lemma1}, \cref{lemma2} and \cref{lemma3}, with probability at least $1-\delta$, we have:
\begin{equation}
\begin{aligned}
    \sup_{f \in \mathcal{F}}(\mathbb{E}[g(f)]-\hat{\mathbb{E}}[g(f)]) & \le 2\hat{\mathfrak{R}}_{\mathcal F}(g)+3\sqrt{\frac{\log \frac{2}{\delta}}{2N}} \\
    & = 2\mathbb{E}_{\sigma}\left[\sup _{f \in \mathcal F}\frac{1}{N} \sum_{i=1}^{N} \sigma_ig^{(i)}(f)\right]+3\sqrt{\frac{\log \frac{2}{\delta}}{2N}} \\
    & \le C\cdot \frac{K\sqrt{S}}{N} \cdot \max_i \mathfrak{R}_N(\mathcal{F}|_i)\cdot\log^{\frac{3}{2}+\epsilon}\left(\frac{N}{\max_i \mathfrak{R}_N(\mathcal{F}|_i)}\right)+3\sqrt{\frac{\log \frac{2}{\delta}}{2N}}.
\end{aligned}
\end{equation}

\noindent For \textbf{\textit{Case 1:}} separable loss functions $\ell(\cdot)$, we have the following equation:
\begin{equation}
    g^{(i)}(f)=\frac{1}{M^{(i)}}\sum_{k \in [M^i]} \frac{1}{S_k^i} \sum_{(h,w) \in S_{k}^i} \ell (f_{h,w}^{(i)}, Y_{h,w}^{(i)}),
\end{equation}
where $M^{(i)}$ is the number of frames in the $i$-th sample, and $S_k^i$ is the size of the $k$-th frame in the $i$-th sample. $\ell(\cdot)$ is a matrix element function, $f_{h,w}^{(i)}:= f(\boldsymbol{X}^{(i)})_{h,w}$ and $Y_{h,w}^{(i)}$ is the prediction and ground truth of the pixel $(h,w)$ in the $k$-th frame.

Assume $\ell(\cdot)$ is $\mu$-Lipschitz, then for Lipschitz continuous of $g^{(i)}$, we have:
\begin{equation}
    \begin{aligned}
g^{(i)}(f)-g^{(i)}(\tilde{f})
&=\frac{1}{M^{(i)}}\sum_{k \in [M^i]} \frac{1}{S_k^i} \sum_{(h,w) \in S_k^i}  \left(\ell(f_{h,w}^{(i)}, Y_{h,w}^{(i)}) - \ell(\tilde{f}_{h,k}^{(w)}, Y_{h,w}^{(i)})\right) \\
& \le \frac{1}{M^{(i)}}\sum_{k \in [M^i]} \frac{1}{S_k^i} \sum_{(h,w) \in S_k^i}  \max_{(h,w)} \left|\left(\ell (f_{h,w}^{(i)}, Y_{h,w}^{(i)}) - \ell (\tilde{f}_{h,w}^{(i)}, Y_{h,w}^{(i)})\right)\right| \\
& \le \frac{1}{M^{(i)}}\sum_{k \in [M^i]} \frac{1}{S_k^i} \sum_{(h,w) \in S_k^i} \mu \max_{(h, w)}\left|f_{h,w}^{(i)}-\tilde{f}_{h,w}^{(i)}\right| \\
& = \mu \Vert f^{(i)}-\tilde{f}^{(i)}\Vert_{\infty},
\end{aligned}
\end{equation}
Apparently, we can always bound $\ell (f_{h,w}^{(i)}, Y_{h,w}^{(i)}) - \ell (\tilde{f}_{h,w}^{(i)}, Y_{h,w}^{(i)})$ with the maximum element $\max_{(h,w)} |(\ell(f_{h,w}^{(i)}, Y_{h,w}^{(i)}) - \ell(\tilde{f}_{h,w}^{(i)}, Y_{h,w}^{(i)}))|$ because there are finite pixels in an image. Therefore, for separable loss $g^{(i)}$, we let $K=\mu$, and complete the proof.

For \textbf{\textit{Case 2}} composite loss. Taking DiceLoss \cite{DiceLoss} as an example, we have the following equation:
\begin{equation}
    g^{(i)}(f)=\frac{1}{M^i} \sum_{k=1}^{M^i} \left[1-\frac{2 \sum_{(h, w) \in S_k^i} \boldsymbol{Y}^{(i)}_{h,w}\cdot f^{(i)}_{h,w}}{\sum_{(h, w) \in S_k^i} \boldsymbol{Y}^{(i)}_{h,w}  + \sum_{(h, w) \in S_k^i}f^{(i)}_{h,w}}\right].
\end{equation}
Considering that formally $\text{DiceLoss}=1-\F$, combining \cref{lemma4} and \cref{lemma5}, we turn to solve $\sup \Vert\nabla_x g\Vert_1  \cdot \Vert$ instead of directly pursuing the Lipschitz constant with respect to $\ell_\infty$ norm. Therefore, we can find the Lipschitz continuous of $g^{(i)}$:
\begin{equation}
    \begin{aligned}
\left\Vert\frac{\partial g^{(i)}}{\partial f^{(i)}_{h,w}}\right\Vert_1 
&= 2\cdot\left|\frac{Y_{h,w}^{(i)}\cdot\left(\sum_{(h,w) \in S_k^i} Y_{h,w}^{(i)}  + \sum_{(h,w) \in S_k^i}f^{(i)}_{j,k}\right)-\sum_{(h,w) \in S_k^i} Y_{h,w}^{(i)}\cdot f^{(i)}_{j,k}}{\left(\sum_{(h,w) \in S_k^i} Y_{h,w}^{(i)}  + \sum_{(h,w) \in S_k^i}f^{(i)}_{j,k}\right)^2}\right| \\
& \le 2\cdot \left(\left|\frac{Y_{h,w}^{(i)}}{\sum_{(h,w) \in S_k^i} Y_{h,w}^{(i)}  + \sum_{(h,w) \in S_k^i}f^{(i)}_{j,k}}\right| + \left|\frac{\sum_{(h,w) \in S_k^i} Y_{h,w}^{(i)}\cdot f^{(i)}_{j,k}}{(\sum_{(h,w) \in S_k^i} Y_{h,w}^{(i)}  + \sum_{(h,w) \in S_k^i}f^{(i)}_{j,k})^2}\right|\right) \\
& \le 2\cdot\left(\frac{1}{\sum_{(h,w) \in S_k^i} Y_{h,w}^{(i)}  + \sum_{(h,w) \in S_k^i}f^{(i)}_{j,k}} + \frac{\sum_{(h,w) \in S_k^i} Y_{h,w}^{(i)}}{\left(\sum_{(h,w) \in S_k^i} Y_{h,w}^{(i)}  + \sum_{(h,w) \in S_k^i}f^{(i)}_{j,k}\right)^2}\right) \\
& \le 2\cdot \left(\frac{1}{\sum_{(h,w) \in S_k^i} Y_{h,w}^{(i)}} + \frac{\sum_{(h,w) \in S_k^i} Y_{h,w}^{(i)}}{\left(\sum_{(h,w) \in S_k^i} Y_{h,w}^{(i)} \right)^2}\right)\\
&= \frac{4}{\sum_{(h,w) \in S_k^i} Y_{h,w}^{(i)}} \\
&=\frac{4}{S_{k}^{+,i}},
\end{aligned}
\end{equation}
where $S_{k}^{+,i}$ stands for the number of salient pixels within the $k$-th part in $i$-th sample. Let $S_{k}^{i}$ be the sizes of the whole $k$-th frame. Therefore, we have,
\begin{equation}
    \begin{aligned}
\Vert\nabla g^{(i)}\Vert_1 &=\frac{1}{M^i}\sum_{k=1}^{M^i}
\sum_{(h,w) \in S_k^{i}}|\nabla g^{(i)}_{h,w}| \\
&= \frac{1}{M^i}\sum_{k=1}^{M^i} \left\Vert\frac{\partial g^{(i)}}{\partial f^{(i)}_{h,w}}\right\Vert_1  \cdot S_k^i \\
& \le \frac{1}{M^i}\sum_{k=1}^{M^i} 4\cdot \frac{S_k^i}{S_{k}^{+,i}} \\
& \le \frac{4}{\rho},
\end{aligned}
\end{equation}   
where $0<\rho \le \frac{S_{k}^{+,i}}{S_{k}^i}$, which depicts the threshold, how much proportion the object occupies in the corresponding frame. Therefore, taking DiceLoss as an example of composite loss, we have $K=\frac{4}{\rho}$. 

In addition, in terms of the original size-variant DiceLoss, we apparently have $M^i=1$ such that $S_k^i=S_1^i=S^i$ and $S_{k}^{+,i}=S_{1}^{+,i}=S^{+,i}$. Therefore, we have $K=\frac{4}{\rho'}, \rho'=\frac{S_k^{+,i}}{S_k^i}$ for the original size-variant DiceLoss as well.

This completes the proof.
\end{proof}

\clearpage
\section{Additional Experiment Settings} \label{Experiments_appendix}
In this section, we provide a longer version of \cref{Experiments}. 

\begin{table*}[!h]
    \centering
    \caption{Statistics on RGB SOD Datasets.}
    \label{tab1:dataset_stat}
    \scalebox{1.0}{
    \begin{tabular}{c|c|c}
    \toprule
    Dataset &  Scale & Characteristics \\
    \midrule
    DUTS~\cite{DUTS}    & 10,553 + 5,019 & Training set (10,553), as well as test set (5,019), is provided. \\
    DUT-OMRON~\cite{DUT-OMRON} & 5,168 & It is characterized by a complex background and diverse contents. \\
    MSOD~\cite{MSOD} & 300 & It consists of the most challenging \textbf{multi-object scenarios} with 1342 objects in total. \\
    ECSSD~\cite{ECSSD} & 1,000 & Semantically meaningful but structurally complex contents are included.  \\
    HKU-IS~\cite{HKU-IS} & 4,447 & Far more \textbf{multiple disconnected objects} are included. \\ 
    SOD~\cite{SOD} & 300 & Many images have \textbf{more than one salient object} that is similar to the background. \\
    PASCAL-S~\cite{ECSSD} & 850 & Images are from the PASCAL VOC 2010 validation set with \textbf{multiple salient objects}. \\
    XPIE~\cite{XPIE} & 10,000 & It covers many complex scenes with \textbf{different} numbers, sizes, and positions of salient objects. \\
    \bottomrule
    \end{tabular}
    }
\end{table*}

\subsection{Datasets} \label{dataset_appendix}
To show the effectiveness of our proposed approaches for traditional RGB SOD tasks, we adopt eight widely used benchmark datasets:
\begin{itemize}
    \item \textbf{DUTS} \cite{DUTS} is a widely used large-scale dataset, consisting of $10,553$ images in the training set (DUTS-TR), and $5,019$ images in the test set (DUTS-TE). All the images are sampled from the ImageNet DET training and test set~\cite{ImageNet}, and some test images are also collected from the SUN data set~\cite{SUN}. It is common practice that leveraging the DUTS-TR subset trains  SOD models and then test its performance on other datasets. 
    \item \textbf{DUT-OMRON} \cite{DUT-OMRON} contains $5,168$ images, where each image includes \textbf{one or more salient objects} embedded within relatively intricate backgrounds and diverse content.
     \item  \textbf{MSOD} \cite{MSOD} is some of the most challenging \textbf{multi-object} scenes among existing SOD datasets. It consists of $300$ test images containing a total of $1,342$ salient objects. The dataset features a wide range of object categories and varying object counts per image, making it well-suited for evaluating models under complex multi-object conditions.
    \item \textbf{ECSSD} \cite{ECSSD}, namely \textbf{E}xtended \textbf{C}omplex \textbf{S}cene \textbf{S}aliency \textbf{D}ataset, contains $1,000$ images featuring complex scenes with rich textures and intricate structures, where each image is collected from the internet, and ground truth masks are independently annotated by five different annotators. A key characteristic of this dataset is the inclusion of semantically meaningful yet structurally challenging images. 
    \item \textbf{HKU-IS} \cite{HKU-IS} has $4,447$ images with a relatively higher proportion of \textbf{multi-object} scenarios. Notably, approximately $50\%$ of the images in this dataset contain multiple disconnected salient objects, making it significantly more challenging.
    \item \textbf{SOD} \cite{SOD} contains $300$ images constructed based on ~\cite{SOD_construct}. Many images have \textbf{more than one salient object}, either similar to the background or located near image boundaries, which increases the difficulty of detection.
    \item \textbf{PASCAL-S} \cite{everingham2015pascal} is a salient object detection dataset consisting of 850 images, selected from the PASCAL VOC 2010 validation set. Each image typically contains \textbf{multiple} salient objects embedded in complex scenes.
    \item \textbf{XPIE} \cite{XPIE} contains $10,000$ images with pixel-wise salient object annotations. The dataset covers a wide range of complex scenes, featuring salient objects with \textbf{varying numbers}, sizes, and spatial distributions.
\end{itemize}

The detailed statistical information is summarized in \cref{tab1:dataset_stat}.

\subsection{Competitors} \label{competitor_appendix}
Given that our proposed SIEva and SIOpt are generic, we apply our proposed methods to $7$ typical backbones to demonstrate their effectiveness. Here we present a more detailed summary of the backbones mentioned in the experiments.
\begin{itemize}
    \item \textbf{PoolNet}~\cite{PoolNet} is a widely adopted SOD baseline, which introduces a set of pooling-based modules to enhance SOD performance. Specifically, it builds upon feature pyramid networks, comprising two key components: the Global Guidance Module (GGM) and the Feature Aggregation Module (FAM). GGM enables the transmission of high-level semantic information to all pyramid layers, while FAM is designed to capture local contextual information at multiple scales and integrate them through adaptive weighting.

    \item \textbf{ADMNet} \cite{ADMNet} is a recently emerging approach, which introduces an attention-guided densely multi-scale network designed for lightweight and efficient SOD. Specifically, it integrates multi-scale features through a multi-scale perception (MP) module, enhancing the network's ability to capture fine-grained details and global context. In addition, a dual attention (DA) module is incorporated to prioritize salient regions, further improving detection performance.

    \item \textbf{LDF}~\cite{LDF} introduces a level decoupling procedure alongside a feature interaction network. The saliency labels are decomposed into body and detail maps, serving as supervisory signals for different network branches. To exploit the complementary nature of these branches sufficiently, a feature interaction mechanism is employed, enabling iterative information exchange and resulting in more accurate saliency predictions.
    
    \item \textbf{ICON}~\cite{ICON} proposes the concepts of micro-integrity and macro-integrity to model both the part-whole relationship within a single salient object and the holistic identification of all salient objects in a scene. The framework comprises three main components: diverse feature aggregation, integrity channel enhancement, and part-whole verification.
    
    \item \textbf{GateNet}~\cite{GateNet} proposes a gated architecture to control the amount of information flowing into the decoder adaptively. Multi-level gate units are designed to balance contributions from different encoder stages and suppress irrelevant background noise. An Atrous Spatial Pyramid Pooling (ASPP) module is integrated to capture multi-scale contextual information.  Additionally, the dual-branch residual structure allows complementary feature learning for enhanced performance.
    
    \item \textbf{EDN}~\cite{EDN} proposes an extreme down-sampling strategy to efficiently extract high-level semantic features. The proposed Extremely Downsampled Block (EDB) enables the network to capture global context aggressively while minimizing computational overhead. Additionally, the Scale-aware Pyramid Convolution (SCPC) in the decoder combines multi-level features for improved final detection accuracy.

    \item \textbf{VST} \cite{VST,VST2} is a transformer-based backbone for SOD that effectively captures global context and long-range dependencies, addressing the limitations of traditional CNNs. It includes a multi-level token fusion mechanism and token upsampling to improve high-resolution detection, along with a multi-task transformer decoder for joint saliency and boundary detection. 
\end{itemize}

Note that, as mentioned above, \textbf{the first $6$ competitors} (including PoolNet, ADMNet, LDF, ICON, GateNet and EDN) \textbf{are CNN-based} SOD backbones, with ADMNet being a specifically tailored lightweight encoder-decoder module, and the others instantiated using ResNet50. The final model, \textbf{VST, is transformer-based}. Introducing our proposed method into \textbf{these diverse models highlights the model-agnostic property of SIOpt.}


\subsection{Evaluation Metrics \label{app:evaluation_metrics}} 
Since a single metric may not comprehensively reflect the performance of SOD models, we evaluate our proposed approach against competitive baselines using \textbf{$10$ different SOD metrics}. First of all, we employ $4$ widely used traditional metrics including $\MAE$, $\AUC$, mean F-measure ($\F_m^\beta$), and maximum F-measure ($\F_{max}^\beta$), along with their size-invariant counterparts from our proposed SIEva framework: $\SMAE$, $\SAUC$, $\SF_m^{\beta}$, and $\SF_{max}^{\beta}$. In addition, we include $2$ commonly adopted structure-aware metrics, namely E-measure ($\E_m$) \cite{Emeasure} and S-measure ($\Sm_m$) \cite{S_measure1, S_measure2}, which jointly account for regional consistency and structural similarity beyond pixel-level error. We do not consider size-invariant variants of $\E_m$ and $\Sm_m$, as these metrics inherently incorporate both local and global perceptual factors, making their sensitivity to object size relatively minor in practical evaluation scenarios. Detailed definitions and computational procedures for all metrics are provided in \cref{protocol_appendix}.

\subsection{Implementation Details for RGB tasks} \label{details_appendix}

All models are implemented using the PyTorch \footnote{\url{https://pytorch.org/}} library and the experiments are conducted on an Ubuntu 20.04.6 LTS server equipped with 256GB RAM, AMD EPYC 7763 64-Core CPU, and an A100 GPU (82GB). Following the widely used standard setups of SOD tasks \cite{EDN, ADMNet, VST2}, we train all algorithms on the DUTS training set (DUTS-TR) and test it on the DUTS test set (DUTS-TE) and the other seven datasets, where all images are resized into $3 \times 384 \times 384$ during both training and testing phases. In terms of every dataset, we adopt the \textit{scikit-image} package to identify the minimum bounding box (i.e., Eq.(\ref{eq:bounding_box})) within each image based on its corresponding ground-truth mask $\boldsymbol{Y}$. All procedures are preprocessed before training and testing, thereby introducing no additional burdens. Furthermore, to pursue a fair performance comparison, all backbones are pre-trained on the ImageNet~\cite{ImageNet} dataset, and we strictly align the training settings of our proposed size-invariant-orient optimizations with the original paper. We summarize the specific settings and optimization details for each backbone in \cref{loss_description}. 
All $\gamma_1$, $\gamma_2$ and $\gamma_3$ are searched within $\{1\times 10^{-3}, 5\times 10^{-3}, 1\times 10^{-2}, 5\times 10^{-2}, 0.1, 1.0\}$ for all backbones.
\subsection{Optimization Details for Different Backbones} \label{loss_description}


As introduced in \cref{20250324Sec4.3}, we explore two hybrid variants of SIOpt, namely $\mathsf{SIOpt1}$ and $\mathsf{SIOpt2}$. Each variant incorporates different aspects of SOD performance to guide the optimization process more effectively. Specifically, $\mathsf{SIOpt1}$ directly modifies the original loss functions into a size-invariant version, while $\mathsf{SIOpt2}$ considers the optimization from the size-invariant AUC-oriented perspective, i.e., \(\mathcal{L}_{\SI\AUC}(f)\).

In the following, we provide detailed descriptions of the implementation and optimization strategies for different backbone architectures.

\begin{itemize}
    \item \textbf{Setups for PoolNet}: The original loss function $\mathcal{L}(f)$ used in \cite{PoolNet} is any one among the binary cross entropy ($\BCE$), mean square error ($\MSE$), and mean absolute error ($\MAE$). Here we consider $\BCE$ as an example. 
    Accordingly, in terms of $\mathsf{SIOpt1}$, we modify it into
    \begin{equation} \label{ori:eq79}
        \mathcal{L}_{\mathsf{SIOpt1}}(f) = \mathcal{L}_{\SI\BCE}(f).
    \end{equation} 
    For $\mathsf{SIOpt2}$, the optimization objective is defined as follows: 
    \begin{equation} \label{SIAUC:GOAL}
    \mathcal{L}_{\mathsf{SIOpt2}}(f) = \gamma_1 \mathcal{L}_{\SI\AUC}(f) + \gamma_2 \mathcal{L}_{\BCE}(f), 
    \end{equation} 
    where $\mathcal{L}_{\BCE}(f)$ serves as a regularization term to ensure balanced SOD performance across various evaluation metrics. The rationale behind this design is that $\mathcal{L}_{\SI\AUC}(f)$ primarily focuses on enhancing the ranking consistency between salient and non-salient pixels, whereas other commonly used metrics—such as those based on $\MAE$ and $\F$-measure—are more sensitive to absolute differences between the predicted saliency map and the ground-truth mask. 
    
    During \ul{optimization}, we follow the training protocol in~\cite{PoolNet}, employing the Adam optimizer with an initial learning rate of $5 \times 10^{-5}$ and a weight decay of $5 \times 10^{-4}$. The batch size is set to $1$, and gradients are accumulated over $10$ iterations before each update step. The training lasts 24 epochs in total.

    \item \textbf{Setups for ADMNet:} The original loss function $\mathcal{L}(f)$ used in \cite{ADMNet} is
    \begin{equation}
        \mathcal{L}(f) = \gamma_1\mathcal{L}_{\BCE}(f) + \gamma_2\mathcal{L}_{\mathsf{IOU}}(f) + \gamma_3\mathcal{L}_{\mathsf{SSIM}}(f),
    \end{equation}
    where $\mathcal{L}_{*}(f)$ represents the empirical version of a loss function, such as
    $$\mathcal{L}_{\BCE}(f):=\mathop{\hat{\mathbb{E}}}_{(\boldsymbol{X},\boldsymbol{Y}) \sim \mathcal{D}}[\ell_{\BCE}(f(\boldsymbol{X}), \boldsymbol{Y})],$$ which denotes the empirical $\BCE$ loss computed over all samples in the dataset $\mathcal{D}$.
    
    Correspondingly, $\mathsf{SIOpt1}$ modifies the above loss into
    \begin{equation}
        \mathcal{L}_{\mathsf{SIOpt1}}(f) = \gamma_1\mathcal{L}_{\SI\BCE}(f) + \gamma_2\mathcal{L}_{\mathsf{SIIOU}}(f) + \gamma_3\mathcal{L}_{\mathsf{SISSIM}}(f),
    \end{equation} 
    where each size-invariant loss term is naturally derived according to Eq.~(\ref{eq:loss}).
    
For $\mathsf{SIOpt2}$, we maintain the same optimization objective as in Eq.~(\ref{SIAUC:GOAL}). During \ul{optimization}, we follow the training protocol outlined in \cite{ADMNet}, using the Adam optimizer with parameters $\beta_1=0.9$ and $\beta_2=0.999$, and a weight decay of $0$. The initial learning rate is set to $1.5 \times 10^{-3}$, with a batch size of $12$. The training is conducted for a total of $260$ epochs.
    
    \item \textbf{Setups for LDF:} The original loss function involved in LDF \cite{LDF} is
    \begin{equation} \label{ori:LDFloss}
         \mathcal{L}(f) = \gamma_1\mathcal{L}_{\BCE}(f) + \gamma_2\mathcal{L}_{\mathsf{IOU}}(f).
    \end{equation}
    In the case of $\mathsf{SIOpt1}$, the above loss is modified into
    \begin{equation} \label{ori:eq84}
        \mathcal{L}_{\mathsf{SIOpt1}}(f) = \gamma_1\mathcal{L}_{\SI\BCE}(f) + \gamma_2\mathcal{L}_{\mathsf{SIIOU}}(f).
    \end{equation} 
    $\mathsf{SIOpt2}$ still maintains the same optimization objective as in Eq.~(\ref{SIAUC:GOAL}).

     Note that, considering that LDF is a two-stage framework, we only integrate our method into the second stage during \ul{optimization}, as most previous works do. Specifically, we use the SGD optimizer with the initial learning rate as $5 \times 10^{-2}$, momentum as $0.9$, and the weight decay as $5 \times 10 ^{-4}$. The batch size is set to 32, and the training in the second stage lasts 40 epochs.

    \item  \textbf{Setups for ICON:} The original loss function is the same as Eq.(\ref{ori:LDFloss}) in LDF and thus $\mathsf{SIOpt1}$ follows Eq.(\ref{ori:eq84}). In the case of $\mathsf{SIOpt2}$, we consider 
    \[
        \mathcal{L}_{\mathsf{SIOpt2}}(f) = \gamma_1\mathcal{L}_{\SI\AUC}(f) + \gamma_2\mathcal{L}_{\mathsf{SIDice}}(f).
    \] for ICON. During \ul{optimization}, the SGD optimizer is adopted with the initial learning rate being $10^{-2}$, the weight decay being $10^{-4}$, and the momentum being $0.9$. The batch size is 36, and the training lasts 100 epochs.
    
    \item \textbf{Setups for GateNet:} The original loss function stays the same as PoolNet \cite{GateNet}, and so $\mathsf{SIOpt1}$ and $\mathsf{SIOpt2}$ do. During \ul{optimization}, we strictly follow ~\cite{GateNet} to utilize the SGD optimizer, with the initial learning rate as $10^{-3}$, momentum as $0.9,$ and the weight decay as $5 \times 10^{-4}$. The batch size is set to 12, and the network is iterated within $10^5$ times.
    
    \item \textbf{Setups for EDN:} The original loss function involved in EDN \cite{EDN} is as follows:
    \begin{equation}
        \mathcal{L}(f) = \mathcal{L}_{\mathsf{BCE}}(f) + \mathcal{L}_{\mathsf{Dice}}(f).
    \end{equation}
    Thus, the optimization goal of $\mathsf{SIOpt1}$ is formulated as follows:
    \begin{equation}\label{20250418:eq100}
        \mathcal{L}_{\mathsf{SIOpt1}}(f) = \gamma_1\mathcal{L}_{\SI\BCE}(f) + \gamma_2\mathcal{L}_{\mathsf{SIDice}}(f),
    \end{equation}
    while the definition of $\mathsf{SIOpt2}$ stays the same as Eq.(\ref{SIAUC:GOAL}). During \ul{optimization}, Adam optimizer with $\beta_1=0.9$, and $\beta_2=0.99$ is adopted, with weight decay $10^{-4}$ and batch size $36$. The initial learning rate is set to $5 \times 10^{-5}$ with a poly learning rate strategy, and the training lasts for 100 epochs in total.
    \item \textbf{Setups for VST:} The original paper adopts $\BCE$ loss as the optimization goal, and $\mathsf{SIOpt1}$ and $\mathsf{SIOpt2}$ follows Eq.(\ref{ori:eq79}) and Eq.(\ref{SIAUC:GOAL}), respectively. During \ul{optimization}, we follow the training strategy in~\cite{VST}, where the Adam optimizer is adopted with a learning rate of $1 \times 10^{-4}$ and a weight decay of $5 \times 10^{-4}$. The batch size is set to $12$, and all of the VST-based model is trained for $200$ epochs.
\end{itemize}

\begin{table*}[!t]
    \centering
    \caption{Statistics on RGB-D SOD Datasets.}
    \label{tab1:RGBD_dataset_stat}
    \scalebox{1.0}{
    \begin{tabular}{c|c|c}
    \toprule
    Dataset &  Scale & Characteristics \\
    \midrule
    NLPR-NJU2K-TR    & 700 + 1,485 & Training set: $700$ samples from NLPR and $1, 485$ samples from NJU2K.  \\
    NJUD-TE \cite{NLPR} & $1,985$ & Test set: $1,985$ stereo SOD image with corresponding depth maps.   \\
    NLPR-TE \cite{NLPR} & $1,000$ & Test set: $1,000$ RGB images and corresponding depth maps. \\
    STERE \cite{STERE} & $1,250$ & Test set: $1,250$ stereoscopic image pairs.  \\
    \bottomrule
    \end{tabular}
    }
\end{table*}

\subsection{Implementation Details for RGB-D SOD tasks \label{app:RGBD_details}} 
\subsubsection{Datasets}
To evaluate the effectiveness of the proposed method, we conduct extensive experiments on several challenging RGB-D SOD benchmarks. Here, all models are trained exclusively on the combined NLPR-NJU2K training set (denoted as NLPR-NJU2K-TR), and evaluated on the remaining test sets.

\begin{itemize} \item \textbf{NLPR-NJU2K-TR:} The training set is composed of $700$ samples from NLPR and $1,485$ samples from NJU2K. 
\item \textbf{NJUD-TE \cite{NLPR}:} It contains $1,985$ stereo image pairs collected from the internet, 3D movies, and photographs captured using a Fuji W3 stereo camera. 
\item \textbf{NLPR-TE \cite{NLPR}:} It comprises $1,000$ RGB images and corresponding depth maps captured by a Microsoft Kinect, covering both indoor and outdoor scenes. 
\item \textbf{STERE \cite{STERE}:} It provides $1,250$ stereoscopic image pairs sourced from Flickr, NVIDIA 3D Vision Live, and the Stereoscopic Image Gallery.

\end{itemize}

The summarization of RGB-D datasets is shown in Tab.\ref{tab1:RGBD_dataset_stat}.

\subsubsection{Baselines}
\begin{itemize}
    \item \textbf{CRNet}\footnote{\url{https://github.com/guanyuzong/CR-Net}} \cite{CRNet} is a cascaded RGB-D SOD framework that leverages well-designed attention mechanisms to effectively extract features from both RGB and depth images. Specifically, CRNet consists of two primary modules: a cross-modal attention fusion module that integrates multi-modality and multi-level features extracted by a dual-stream Swin Transformer encoder, and a progressive decoder that gradually fuses low-level features and suppresses noise to generate accurate saliency predictions. The original optimization objective of CRNet combines the B$\BCE$ and $\IOU$ loss functions, similar to the formulation in Eq.~(\ref{ori:LDFloss}). Accordingly, we adopt Eq.~(\ref{ori:eq84}) and Eq.~(\ref{SIAUC:GOAL}) as our size-invariant objectives for $\mathsf{SIOpt1}$ and $\mathsf{SIOpt2}$, respectively.

For implementation, we follow the training and optimization protocols described in the original CRNet paper \cite{CRNet}. During training and testing, the input RGB images are resized to $3 \times 352 \times 352$. Standard data augmentation techniques such as random flipping, rotation, and border cropping are applied. The initial learning rate is set to $1 \times 10^{-4}$, with a batch size of $5$. We use the Adam optimizer with default hyperparameters and train the model for a total of $200$ epochs.
        
    \item \textbf{CPNet}\footnote{\url{https://github.com/hu-xh/CPNet}} \cite{hu2024cross} proposes a cross-modal fusion and progressive decoding network to promote the RGB-D SOD performance, which mainly includes three key parts: a two-stream Swin Transformer \cite{SwinNet} feature encoder module, a cross-modal attention feature fusion module and a progressive feature decoder module. The original optimization objective of CPNet combines $\BCE$ with $\IOU$ loss functions, similar to Eq.(\ref{ori:LDFloss}). Therefore, we still consider Eq.(\ref{ori:eq84}) and Eq.(\ref{SIAUC:GOAL}) as our size-invariant goal for $\mathsf{SIOpt1}$ and $\mathsf{SIOpt2}$, respectively. 

    The implementation details and optimization strategies follow the original paper \cite{hu2024cross}. During the training and testing phase, the input RGB image and depth image are resized to $384 \times 384$ and the depth image is then copied to $3$ channels to ensure consistency with the RGB image. The pre-trained Swin-B model is introduced to initialize the parameters of the backbone network involved in CPNet. Additionally, we leverage the Adam optimizer with an initial learning rate $3 \times 10^{-5}$ which will be divided by $10$ per $100$ epochs. The batch size is set to $8$ and the training lasts $150$ epochs in total.
\end{itemize} 

\subsection{Implementation Details for RGB-T SOD tasks} \label{RGBT_detail}
\subsubsection{Datasets}
To evaluate the effectiveness of the proposed method, we conduct experiments on three publicly available RGB-T salient object detection (SOD) benchmark datasets: \textbf{VT821} \cite{VT821}, \textbf{VT1000} \cite{VT1000}, and \textbf{VT5000} \cite{VT5000}. The VT821 dataset comprises 821 RGB-T image pairs captured across diverse scenes. As the RGB and thermal images in VT821 are manually registered, missing regions may appear in the thermal modality. The VT1000 dataset includes 1,000 RGB-T image pairs with corresponding pixel-wise saliency annotations. VT5000, the largest RGB-T SOD benchmark to date, consists of 5,000 image pairs that span a wide range of complex and challenging scenarios.

In our experiments, we use 2,500 image pairs from VT5000 for training. The remaining 2,500 pairs from VT5000, along with all samples from VT1000 and VT821, are used for testing.

\subsubsection{Baselines}
\begin{itemize}
    \item \textbf{TNet} \footnote{\url{https://github.com/rmcong/TNet_TMM2022}} \cite{TNet} is a novel network designed for RGB-T SOD tasks, incorporating a global illumination estimation module and cross-modality interaction mechanisms to enhance feature fusion and boost detection performance.
    
    The implementation strictly adheres to the settings and optimization strategies outlined in the original paper. The backbone network is a ResNet pretrained on ImageNet, while all other parameters are initialized using PyTorch's default settings. During both training and testing, all RGB and thermal images are resized to $352 \times 352$. The Adam optimizer is employed with an initial learning rate of $10^{-4}$, which is decayed by a factor of $10$ every $45$ epochs, for a total of $100$ epochs. The batch size is set to $16$. Data augmentation is applied following the protocol described in the original work. For $\mathsf{SIOpt1}$, we adopt the $\mathsf{SIBCE}$ loss during the first 30 epochs, followed by the introduction of the $\mathsf{SIIOU}$ loss, consistent with the original setup (i.e., Eq.(\ref{ori:eq84})). For $\mathsf{SIOpt2}$, training begins with the standard $\mathsf{BCE}$ loss for the first $30$ epochs, after which the $\mathsf{SIAUC}$ loss Eq.(\ref{SIAUC:GOAL}) is incorporated. The weight of the $\mathsf{SIAUC}$ loss is set to $0.1$ to ensure it remains on a comparable scale with the $\mathsf{BCE}$ loss.

    \item \textbf{DCNet} \footnote{\url{https://github.com/lz118/Deep-Correlation-Network}} \cite{DCNet} proposes a deep correlation network (DCNet) for weakly aligned RGB-T salient object detection. It introduces a modality alignment module and a bi-directional decoder to enhance feature fusion, along with a modality correlation ConvLSTM for hierarchical feature decoding. 
    
    The implementation strictly adheres to the training settings and optimization strategies outlined in the original paper. During both training and testing, all RGB and thermal images are resized to $352 \times 352$, and data augmentation is applied following the original protocol. The SGD optimizer is employed with an initial learning rate of $10^{-3}$, a weight decay of $5 \times 10^{-4}$, and a momentum of $0.9$. After $50$ epochs, the learning rate is reduced to $1 \times 10^{-4}$, and training proceeds for a total of $100$ epochs with a batch size of $4$. For $\mathsf{SIOpt1}$, the model is initially trained using the standard $\mathsf{BCE}$ loss for the first $20$ epochs, followed by the application of the $\mathsf{SIBCE}$ loss. Similarly, for $\mathsf{SIOpt2}$, training begins with the standard $\mathsf{BCE}$ loss for the first $20$ epochs, after which the $\mathsf{SIAUC}$ loss Eq.(\ref{SIAUC:GOAL}) is introduced. The weight of the $\mathsf{SIAUC}$ loss is set to $0.1$ to ensure its scale remains comparable to that of the $\mathsf{BCE}$ loss.

\end{itemize}


\subsection{Implementation Details for fine-tuning SAM} \label{SAM_detail}
To validate the effectiveness of our proposed framework on large-scale models, we apply $\mathsf{SIOpt}$ to the TS-SAM framework~\cite{SAM3}, an efficient adapter-based approach designed to enhance the performance of the large-scale vision model SAM on downstream SOD tasks. Specifically, TS-SAM introduces a lightweight Parameter-Efficient Fine-Tuning (PEFT) strategy, termed Convolutional Side Adapter (CSA), which facilitates the integration of discriminative features extracted by SAM into a side network for comprehensive feature fusion. To further refine predictions, a Multi-scale Refinement Module (MRM) and a Feature Fusion Decoder (FFD) are subsequently employed. The interested readers are encouraged to read the original paper \cite{SAM3}.

\noindent \textbf{Setups for TS-SAM:} We follow the implementation details and training strategies of the original TS-SAM. Specifically, TS-SAM leverages the pretrained SAM  Vision Transformer (ViT) as the backbone and includes $14$ layers of CSA and $13$ layers of MRM. Subsequently, the $\BCE$ and $\IOU$ loss are used for TS-SAM training, i.e., 
\begin{equation}\label{20250420eq94}
     \mathcal{L}(f) = \gamma_1\mathcal{L}_{\BCE}(f) + \gamma_2\mathcal{L}_{\mathsf{IOU}}(f).
\end{equation}
Thus, the optimization goal of $\mathsf{SIOpt1}$ is formulated as follows:
    \begin{equation}\label{20250420eq95}
        \mathcal{L}_{\mathsf{SIOpt1}}(f) = \gamma_1\mathcal{L}_{\SI\BCE}(f) + \gamma_2\mathcal{L}_{\mathsf{SIIOU}}(f).
    \end{equation}

During \ul{optimization}, we adopt the AdamW optimizer with a learning rate of $8 \times 10^{-4}$. Similar to the protocol in \cref{details_appendix}, all models are trained on the DUTS training set (DUTS-TR) and evaluated on the DUTS test set (DUTS-TE), as well as DUT-OMRON, MSOD, ECSSD, and HKU-IS. All input images are resized to a resolution of $3 \times 1024 \times 1024$. The batch size is set to $2$, and training is conducted for a total of $80$ epochs.

\clearpage
\section{Additional Experiment Results}

\subsection{Empirical results for RGB-based SOD} \label{app:overall}
We show the results on eight benchmarks (including DUTS, DUT-OMRON, MSOD, ECSSD, HKU-IS, SOD, PASCAL-S and XPIE datasets) in \cref{app:exp_result}, \cref{tab:first3} and \cref{tab:latter3}. Additionally, a more intuitive comparison of baseline, $\mathsf{SIOpt1}$ and $\mathsf{SIOpt2}$ is depicted in \cref{fig:leida-all}, where we plot $1-\SMAE$ to make a clearer visualization with other metrics. The performance analysis could be found in \cref{20250419Sec6.2}.


\begin{table*}[!t]
    \centering
    \caption{Performance comparison on the DUTS-TE and DUT-OMRON datasets. The best results are highlighted in bold, and the second-best are marked with underline. Here, $\uparrow$ indicates that higher values denote better performance, while $\downarrow$ indicates that lower values are preferable.}
    \scalebox{0.85}{
}
\end{table*}%

\begin{figure*}[!t]
    \centering
    \includegraphics[width=\linewidth]{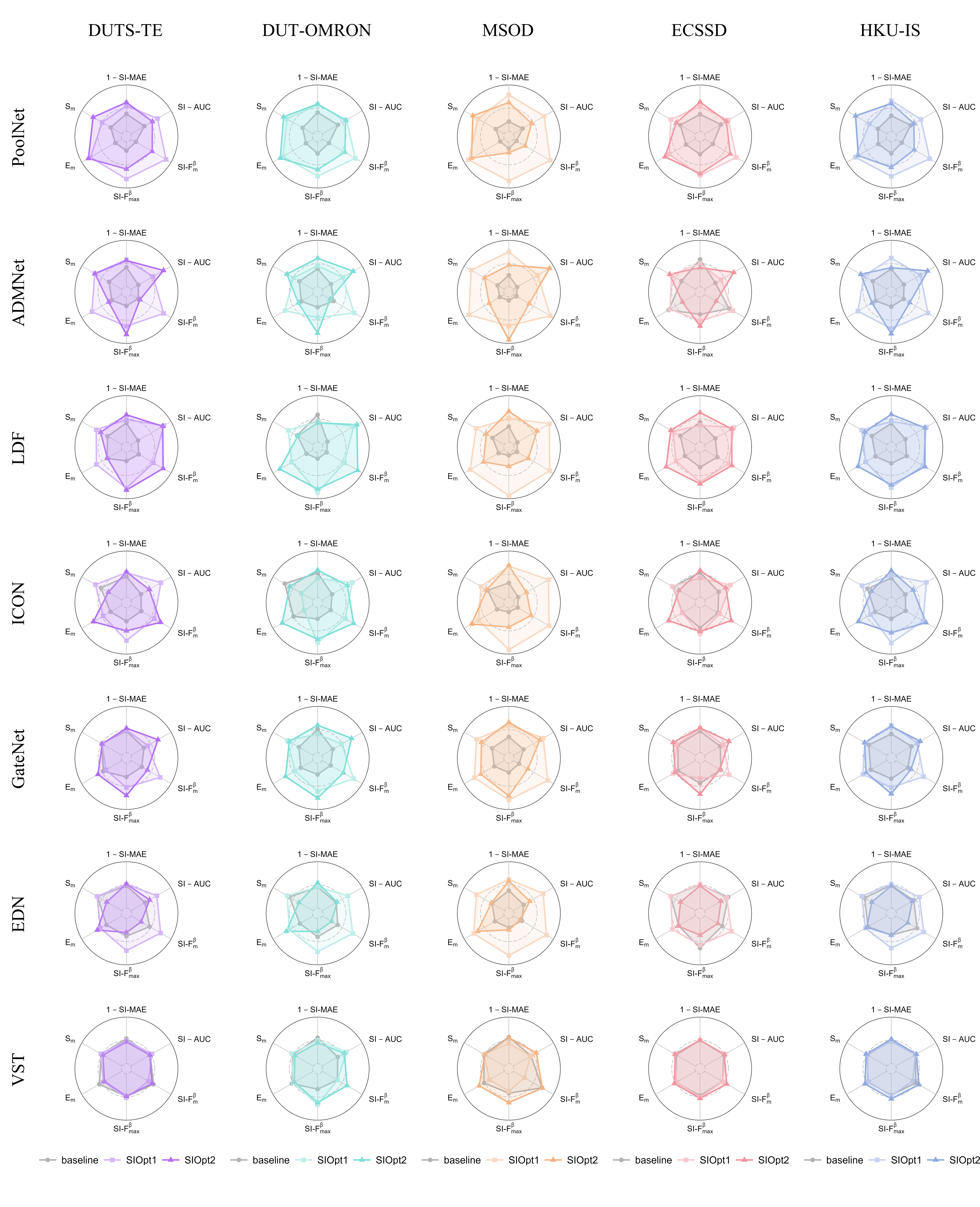}
    \caption{Overall performance comparison on different backbones and datasets. Note that we plot $1-\SMAE$ to make a clearer visualization with other metrics, where values closer to the outer ring indicate better performance.}
    \label{fig:leida-all}
\end{figure*}

\begin{table}[!t]
    \centering
    \caption{Quantitative comparisons on \textcolor{blue}{RGB-D} benchmark datasets. The best results are highlighted in bold, and the second-best are marked with underline. Here, $\uparrow$ indicates that higher values denote better performance, while $\downarrow$ indicates that lower values are preferable. Darker colors represent stronger performance.}
      \begin{tabular}{c|c|cccccccccc}
      \toprule
      \multicolumn{1}{c}{Dataset} & \multicolumn{1}{|c|}{Methods} & $\MAE \downarrow$ & $\SMAE \downarrow$ & $\AUC \uparrow $ & $\SAUC \uparrow $ & $\F_m^{\beta} \uparrow$ & $\SF_m^{\beta} \uparrow $ & $\F_{max}^{\beta} \uparrow$ & $\SF_{max}^{\beta} \uparrow$ & $\E_m \uparrow$ & $\Sm_m \uparrow$ \\
      \midrule
      \multirow{6}[4]{*}{STERE} & CRNet \cite{CRNet} & 0.0663  & 0.0478  & 0.8823  & 0.8804  & 0.8033  & 0.8294  & 0.8321  & 0.8592  & 0.8648  & 0.9053  \\
            & + $\mathsf{SIOpt1}$&  \cellcolor[rgb]{ .988,  .894,  .827}\textbf{0.0571} & \cellcolor[rgb]{ .988,  .894,  .827}\underline{0.0415}  & \cellcolor[rgb]{ .992,  .929,  .882}\underline{0.9103}  & \cellcolor[rgb]{ .992,  .929,  .882}\underline{0.9088}  & \cellcolor[rgb]{ .988,  .894,  .827}\textbf{0.8358} & \cellcolor[rgb]{ .988,  .894,  .827}\textbf{0.8675} & \cellcolor[rgb]{ .992,  .906,  .843}\underline{0.8640}  & \cellcolor[rgb]{ .992,  .898,  .835}\underline{0.8954}  & \cellcolor[rgb]{ .992,  .898,  .835}\underline{0.8939}  & \cellcolor[rgb]{ .988,  .894,  .827}\textbf{0.9131} \\
            & + $\mathsf{SIOpt2}$ & \cellcolor[rgb]{ .988,  .894,  .827}\textbf{0.0571} & \cellcolor[rgb]{ .988,  .894,  .827}\textbf{0.0414} & \cellcolor[rgb]{ .988,  .894,  .827}\textbf{0.9227} & \cellcolor[rgb]{ .988,  .894,  .827}\textbf{0.9214} & \cellcolor[rgb]{ .992,  .918,  .863}\underline{0.8296}  & \cellcolor[rgb]{ .992,  .906,  .847}\underline{0.8633}  & \cellcolor[rgb]{ .988,  .894,  .827}\textbf{0.8671} & \cellcolor[rgb]{ .988,  .894,  .827}\textbf{0.8967} & \cellcolor[rgb]{ .988,  .894,  .827}\textbf{0.8946} & \cellcolor[rgb]{ .992,  .902,  .839}\underline{0.9127}  \\
  \cmidrule{2-12}          & CPNet \cite{hu2024cross} & 0.0311  & 0.0234  & 0.9753  & 0.9741  & \cellcolor[rgb]{ .996,  .961,  .933}\underline{0.9035}  & \cellcolor[rgb]{ .992,  .902,  .835}\underline{0.9368}  & 0.9293  & 0.9541  & \cellcolor[rgb]{ 1,  .992,  .984}\underline{0.9469}  & 0.9360  \\
            & + $\mathsf{SIOpt1}$&  \cellcolor[rgb]{ .988,  .894,  .827}\textbf{0.0303} & \cellcolor[rgb]{ .988,  .894,  .827}\textbf{0.0228} & \cellcolor[rgb]{ .988,  .894,  .827}\textbf{0.9800} & \cellcolor[rgb]{ .988,  .894,  .827}\textbf{0.9790} & \cellcolor[rgb]{ .988,  .894,  .827}\textbf{0.9064} & \cellcolor[rgb]{ .988,  .894,  .827}\textbf{0.9370} & \cellcolor[rgb]{ .988,  .894,  .827}\textbf{0.9346} & \cellcolor[rgb]{ .988,  .894,  .827}\textbf{0.9562} & \cellcolor[rgb]{ .988,  .894,  .827}\textbf{0.9485} & \cellcolor[rgb]{ .988,  .894,  .827}\textbf{0.9384} \\
            & + $\mathsf{SIOpt2}$&  \cellcolor[rgb]{ .992,  .957,  .933}\underline{0.0308}  & \cellcolor[rgb]{ .992,  .961,  .941}\underline{0.0232}  & \cellcolor[rgb]{ .992,  .902,  .839}\underline{0.9797}  & \cellcolor[rgb]{ .992,  .906,  .847}\underline{0.9785}  & 0.9016  & 0.9324  & \cellcolor[rgb]{ .996,  .937,  .898}\underline{0.9325}  & \cellcolor[rgb]{ 1,  .988,  .976}\underline{0.9544}  & 0.9467  & \cellcolor[rgb]{ 1,  .973,  .953}\underline{0.9367}  \\
      \midrule
      \multirow{6}[4]{*}{NJUD-TE} & CRNet \cite{CRNet} & 0.0699  & 0.0550  & 0.8676  & 0.8597  & 0.7900  & 0.7919  & 0.8247  & 0.8293  & 0.8434  & 0.9065  \\
            & + $\mathsf{SIOpt1}$&  \cellcolor[rgb]{ .98,  .867,  .882}\underline{0.0483}  & \cellcolor[rgb]{ .98,  .859,  .875}\underline{0.0377}  & \cellcolor[rgb]{ .988,  .898,  .906}\underline{0.9190}  & \cellcolor[rgb]{ .988,  .894,  .906}\underline{0.9147}  & \cellcolor[rgb]{ .98,  .855,  .871}\textbf{0.8637} & \cellcolor[rgb]{ .98,  .855,  .871}\textbf{0.8711} & \cellcolor[rgb]{ .984,  .867,  .878}\underline{0.8930}  & \cellcolor[rgb]{ .984,  .859,  .875}\underline{0.9009}  & \cellcolor[rgb]{ .984,  .871,  .886}\underline{0.9032}  & \cellcolor[rgb]{ .984,  .863,  .878}\underline{0.9251}  \\
            & + $\mathsf{SIOpt2}$&  \cellcolor[rgb]{ .98,  .855,  .871}\textbf{0.0457} & \cellcolor[rgb]{ .98,  .855,  .871}\textbf{0.0370} & \cellcolor[rgb]{ .98,  .855,  .871}\textbf{0.9381} & \cellcolor[rgb]{ .98,  .855,  .871}\textbf{0.9326} & \cellcolor[rgb]{ .984,  .859,  .875}\underline{0.8630}  & \cellcolor[rgb]{ .984,  .863,  .878}\underline{0.8685}  & \cellcolor[rgb]{ .98,  .855,  .871}\textbf{0.8972} & \cellcolor[rgb]{ .98,  .855,  .871}\textbf{0.9010} & \cellcolor[rgb]{ .98,  .855,  .871}\textbf{0.9104} & \cellcolor[rgb]{ .98,  .855,  .871}\textbf{0.9260} \\
  \cmidrule{2-12}          & CPNet \cite{hu2024cross} & \cellcolor[rgb]{ .992,  .957,  .961}\underline{0.0112}  & 0.0117  & 0.9919  & 0.9892  & \cellcolor[rgb]{ .996,  .953,  .957}\underline{0.9582}  & 0.9526  & 0.9730  & 0.9691  & 0.9773  & 0.9622  \\
            & + $\mathsf{SIOpt1}$&  0.0114  & \cellcolor[rgb]{ .98,  .855,  .871}\textbf{0.0103} & \cellcolor[rgb]{ .984,  .859,  .875}\underline{0.9964}  & \cellcolor[rgb]{ .98,  .855,  .871}\textbf{0.9958} & \cellcolor[rgb]{ .98,  .855,  .871}\textbf{0.9598} & \cellcolor[rgb]{ .98,  .855,  .871}\textbf{0.9581} & \cellcolor[rgb]{ .98,  .855,  .871}\textbf{0.9771} & \cellcolor[rgb]{ .98,  .855,  .871}\textbf{0.9752} & \cellcolor[rgb]{ .992,  .925,  .933}\underline{0.9783}  & \cellcolor[rgb]{ .98,  .855,  .871}\textbf{0.9635} \\
            & + $\mathsf{SIOpt2}$&  \cellcolor[rgb]{ .98,  .855,  .871}\textbf{0.0107} & \cellcolor[rgb]{ .98,  .863,  .878}\underline{0.0104}  & \cellcolor[rgb]{ .98,  .855,  .871}\textbf{0.9965} & \cellcolor[rgb]{ .98,  .855,  .871}\textbf{0.9958} & 0.9574  & \cellcolor[rgb]{ .996,  .969,  .973}\underline{0.9538}  & \cellcolor[rgb]{ .984,  .863,  .878}\underline{0.9769}  & \cellcolor[rgb]{ .984,  .875,  .886}\underline{0.9745}  & \cellcolor[rgb]{ .98,  .855,  .871}\textbf{0.9792} & \cellcolor[rgb]{ .992,  .933,  .941}\underline{0.9628}  \\
      \midrule
      \multirow{6}[4]{*}{NLPR-TE} & CRNet \cite{CRNet} & \cellcolor[rgb]{ .847,  .945,  .929}\textbf{0.0226} & \cellcolor[rgb]{ .847,  .945,  .929}\textbf{0.0204} & 0.9593  & 0.9580  & \cellcolor[rgb]{ .855,  .949,  .933}\underline{0.8844}  & \cellcolor[rgb]{ .922,  .973,  .965}\underline{0.9154}  & \underline{0.9078}  & 0.9381  & \cellcolor[rgb]{ .925,  .973,  .965}\underline{0.9430}  & \cellcolor[rgb]{ .847,  .945,  .929}\textbf{0.9489} \\
            & + $\mathsf{SIOpt1}$&  0.0246  & 0.0215  & \cellcolor[rgb]{ .973,  .992,  .988}\underline{0.9622}  & \cellcolor[rgb]{ .973,  .992,  .988}\underline{0.9610}  & \cellcolor[rgb]{ .847,  .945,  .929}\textbf{0.8850} & \cellcolor[rgb]{ .847,  .945,  .929}\textbf{0.9239} & 0.9076  & \cellcolor[rgb]{ .847,  .945,  .929}\textbf{0.9451} & \cellcolor[rgb]{ .847,  .945,  .929}\textbf{0.9457} & \cellcolor[rgb]{ .922,  .973,  .965}\underline{0.9475}  \\
            & + $\mathsf{SIOpt2}$&  \cellcolor[rgb]{ .945,  .98,  .973}\underline{0.0239}  & \cellcolor[rgb]{ .941,  .976,  .973}\underline{0.0211}  & \cellcolor[rgb]{ .847,  .945,  .929}\textbf{0.9744} & \cellcolor[rgb]{ .847,  .945,  .929}\textbf{0.9737} & 0.8684  & 0.9056  & \cellcolor[rgb]{ .847,  .945,  .929}\textbf{0.9156} & \cellcolor[rgb]{ .953,  .984,  .98}\underline{0.9403}  & 0.9402  & 0.9459  \\
  \cmidrule{2-12}          & CPNet \cite{hu2024cross} & \cellcolor[rgb]{ .847,  .945,  .929}\textbf{0.0169} & \cellcolor[rgb]{ .847,  .945,  .929}\textbf{0.0159} & 0.9797  & 0.9778  & \cellcolor[rgb]{ .847,  .945,  .929}\textbf{0.9144} & \cellcolor[rgb]{ .918,  .973,  .961}\underline{0.9318}  & 0.9399  & 0.9505  & \cellcolor[rgb]{ .847,  .945,  .929}\textbf{0.9617} & \cellcolor[rgb]{ .847,  .945,  .929}\textbf{0.9576} \\
            & + $\mathsf{SIOpt1}$&  0.0174  & \cellcolor[rgb]{ .898,  .961,  .953}\underline{0.0160}  & \cellcolor[rgb]{ .847,  .945,  .929}\textbf{0.9839} & \cellcolor[rgb]{ .847,  .945,  .929}\textbf{0.9830} & \cellcolor[rgb]{ .902,  .965,  .957}\underline{0.9119}  & \cellcolor[rgb]{ .847,  .945,  .929}\textbf{0.9352} & \cellcolor[rgb]{ .847,  .945,  .929}\textbf{0.9412} & \cellcolor[rgb]{ .847,  .945,  .929}\textbf{0.9559} & \cellcolor[rgb]{ .882,  .961,  .945}\underline{0.9615}  & \cellcolor[rgb]{ .847,  .945,  .929}\textbf{0.9576} \\
            & + $\mathsf{SIOpt2}$&  \cellcolor[rgb]{ .969,  .988,  .984}\underline{0.0173}  & 0.0162  & \cellcolor[rgb]{ .867,  .953,  .941}\underline{0.9834}  & \cellcolor[rgb]{ .875,  .957,  .945}\underline{0.9821}  & 0.9071  & 0.9275  & \cellcolor[rgb]{ .898,  .965,  .953}\underline{0.9408}  & \cellcolor[rgb]{ .894,  .965,  .953}\underline{0.9543}  & 0.9608  & 0.9562  \\
      \bottomrule
      \end{tabular}%
    \label{tab:RGBD}%
  \end{table}%

\begin{table}[!t]
  \centering
    \caption{Quantitative comparisons on \textcolor{blue}{RGB-T} benchmark datasets. The best results are highlighted in bold, and the second-best are marked with underline. Here, $\uparrow$ indicates that higher values denote better performance, while $\downarrow$ indicates that lower values are preferable. Darker colors represent stronger performance.}
    \begin{tabular}{c|c|cccccccccc}
    \toprule
    \multicolumn{1}{c}{Dataset} & \multicolumn{1}{|c|}{Methods} & $\MAE \downarrow$ & $\SMAE \downarrow$ & $\AUC \uparrow $ & $\SAUC \uparrow $ & $\F_m^{\beta} \uparrow$ & $\SF_m^{\beta} \uparrow $ & $\F_{max}^{\beta} \uparrow$ & $\SF_{max}^{\beta} \uparrow$ & $\E_m \uparrow$ & $\Sm_m \uparrow$ \\
    \midrule
    \multirow{6}[4]{*}{VT5000} & TNet \cite{TNet} & \cellcolor[rgb]{ .988,  .933,  .937}\underline{0.0452}  & \cellcolor[rgb]{ .98,  .855,  .871}\underline{0.0394}  & \cellcolor[rgb]{ .984,  .859,  .875}\underline{0.9279}  & \cellcolor[rgb]{ .984,  .859,  .875}0.9355  & \cellcolor[rgb]{ .984,  .859,  .875}\underline{0.7942}  & \cellcolor[rgb]{ .984,  .859,  .875}\underline{0.8632}  & \cellcolor[rgb]{ .984,  .859,  .875}0.8389  & \cellcolor[rgb]{ .984,  .859,  .875}\underline{0.9040}  & \cellcolor[rgb]{ .984,  .859,  .875}0.8945  & \cellcolor[rgb]{ .984,  .863,  .878}0.9340  \\
          & + $\mathsf{SIOpt1}$&  \cellcolor[rgb]{ .98,  .855,  .871}\textbf{0.0434} & \cellcolor[rgb]{ .98,  .855,  .871}\textbf{0.0377} & \cellcolor[rgb]{ .984,  .863,  .875}0.9272  & \cellcolor[rgb]{ .984,  .859,  .875}\underline{0.9360}  & \cellcolor[rgb]{ .98,  .855,  .871}\textbf{0.8074} & \cellcolor[rgb]{ .98,  .855,  .871}\textbf{0.8651} & \cellcolor[rgb]{ .98,  .855,  .871}\textbf{0.8491} & \cellcolor[rgb]{ .984,  .859,  .875}0.9029  & \cellcolor[rgb]{ .98,  .855,  .871}\textbf{0.9001} & \cellcolor[rgb]{ .98,  .855,  .871}\textbf{0.9374} \\
          & + $\mathsf{SIOpt2}$&  0.0467  & \cellcolor[rgb]{ .98,  .859,  .871}0.0408  & \cellcolor[rgb]{ .98,  .855,  .871}\textbf{0.9331} & \cellcolor[rgb]{ .98,  .855,  .871}\textbf{0.9410} & \cellcolor[rgb]{ .984,  .863,  .878}0.7858  & \cellcolor[rgb]{ .984,  .859,  .875}0.8571  & \cellcolor[rgb]{ .984,  .859,  .875}\underline{0.8408}  & \cellcolor[rgb]{ .98,  .855,  .871}\textbf{0.9055} & \cellcolor[rgb]{ .984,  .859,  .875}\underline{0.8996}  & \cellcolor[rgb]{ .984,  .863,  .878}\underline{0.9346}  \\
\cmidrule{2-12}          & DCNet \cite{DCNet} & \cellcolor[rgb]{ .996,  .976,  .976}\underline{0.1494}  & \cellcolor[rgb]{ .996,  .973,  .976}\underline{0.1291}  & 0.7280  & 0.7178  & 0.3024  & 0.4435  & 0.3967  & 0.6463  & \cellcolor[rgb]{ 1,  .984,  .984}\textbf{0.6267} & 0.8479  \\
          & + $\mathsf{SIOpt1}$&  \cellcolor[rgb]{ .988,  .929,  .937}\textbf{0.1029} & \cellcolor[rgb]{ .988,  .922,  .929}\textbf{0.0890} & \cellcolor[rgb]{ .988,  .91,  .918}\underline{0.8603}  & \cellcolor[rgb]{ .992,  .918,  .925}\underline{0.8494}  & \cellcolor[rgb]{ .996,  .957,  .961}\textbf{0.4567} & \cellcolor[rgb]{ 1,  .976,  .98}\textbf{0.5183} & \cellcolor[rgb]{ .992,  .929,  .937}\underline{0.6260}  & \cellcolor[rgb]{ .992,  .929,  .937}\underline{0.7752}  & \cellcolor[rgb]{ 1,  .984,  .988}\underline{0.6263}  & \cellcolor[rgb]{ .988,  .91,  .922}\textbf{0.9039} \\
          & + $\mathsf{SIOpt2}$&  0.1715  & 0.1483  & \cellcolor[rgb]{ .988,  .89,  .902}\textbf{0.8840} & \cellcolor[rgb]{ .988,  .898,  .91}\textbf{0.8753} & \cellcolor[rgb]{ 1,  .988,  .988}\underline{0.3510}  & \cellcolor[rgb]{ .999,  .999,  .999} \underline{0.4477}  & \cellcolor[rgb]{ .992,  .918,  .929}\textbf{0.6556} & \cellcolor[rgb]{ .988,  .914,  .925}\textbf{0.8024} & \cellcolor[rgb]{ .999,  .999,  .999}0.5889  & \cellcolor[rgb]{ .992,  .929,  .937}\underline{0.8917}  \\
    \midrule
    \multirow{6}[4]{*}{VT1000} & TNet \cite{TNet} & \cellcolor[rgb]{ .824,  .957,  .949}\textbf{0.0290} & \cellcolor[rgb]{ .824,  .957,  .949}\underline{0.0241} & \cellcolor[rgb]{ .827,  .961,  .953}\underline{0.9634}  & \cellcolor[rgb]{ .827,  .961,  .953}\underline{0.9694}  & \cellcolor[rgb]{ .855,  .965,  .961}\underline{0.8690}  & \cellcolor[rgb]{ .827,  .961,  .953}\underline{0.9156}  & \cellcolor[rgb]{ .827,  .961,  .953}0.9126  & \cellcolor[rgb]{ .894,  .976,  .973}0.8513  & \cellcolor[rgb]{ .827,  .961,  .953}\underline{0.9348}  & \cellcolor[rgb]{ .827,  .961,  .953}0.9485  \\
          & + $\mathsf{SIOpt1}$&  \cellcolor[rgb]{ .824,  .957,  .949}\textbf{0.0290} & \cellcolor[rgb]{ .824,  .957,  .949}\textbf{0.0239} & \cellcolor[rgb]{ .831,  .961,  .953}0.9608  & \cellcolor[rgb]{ .827,  .961,  .953}0.9683  & \cellcolor[rgb]{ .824,  .957,  .949}\textbf{0.9718} & \cellcolor[rgb]{ .824,  .957,  .949}\textbf{0.9168} & \cellcolor[rgb]{ .827,  .961,  .953}\underline{0.9146}  & \cellcolor[rgb]{ .827,  .961,  .953}\underline{0.9501}  & \cellcolor[rgb]{ .827,  .961,  .953}0.9324  & \cellcolor[rgb]{ .824,  .957,  .949}\textbf{0.9502} \\
          & + $\mathsf{SIOpt2}$&  \cellcolor[rgb]{ .827,  .957,  .949}0.0318  & \cellcolor[rgb]{ .827,  .957,  .949}0.0264  & \cellcolor[rgb]{ .824,  .957,  .949}\textbf{0.9666} & \cellcolor[rgb]{ .824,  .957,  .949}\textbf{0.9732} & \cellcolor[rgb]{ .859,  .969,  .961}0.8613  & \cellcolor[rgb]{ .827,  .961,  .953}0.9100  & \cellcolor[rgb]{ .824,  .957,  .949}\textbf{0.9153} & \cellcolor[rgb]{ .824,  .957,  .949}\textbf{0.9524} & \cellcolor[rgb]{ .824,  .957,  .949}\textbf{0.9349} & \cellcolor[rgb]{ .827,  .961,  .953}\underline{0.9494}  \\
\cmidrule{2-12}          & DCNet \cite{DCNet} & \cellcolor[rgb]{ .996,  .996,  .996}\underline{0.1516}  & \cellcolor[rgb]{ .996,  .996,  .996}\underline{0.1234}  & 0.7430  & 0.7403  & 0.3812  & 0.5147  & 0.4810  & 0.6984  & \underline{0.6492}  & 0.8473  \\
          &  + SIOpt1 & \cellcolor[rgb]{ .929,  .98,  .98}\textbf{0.1052} & \cellcolor[rgb]{ .929,  .98,  .976}\textbf{0.0856} & \cellcolor[rgb]{ .867,  .969,  .961}\underline{0.9159}  & \cellcolor[rgb]{ .875,  .969,  .965}\underline{0.9098}  & \cellcolor[rgb]{ .941,  .988,  .984}\textbf{0.5817} & \cellcolor[rgb]{ .945,  .988,  .984}\textbf{0.6439} & \cellcolor[rgb]{ .898,  .976,  .973}\underline{0.7362}  & \cellcolor[rgb]{ .886,  .973,  .969}\underline{0.8658}  & \cellcolor[rgb]{ .973,  .996,  .992}\textbf{0.6931} & \cellcolor[rgb]{ .906,  .98,  .973}\textbf{0.9034} \\
          & + $\mathsf{SIOpt2}$&  0.1521  & 0.1250  & \cellcolor[rgb]{ .851,  .965,  .957}\textbf{0.9329} & \cellcolor[rgb]{ .859,  .969,  .961}\textbf{0.9300} & \cellcolor[rgb]{ .976,  .996,  .996}\underline{0.4687}  & \cellcolor[rgb]{ .992,  1,  1}\underline{0.5355}  & \cellcolor[rgb]{ .878,  .973,  .969}\textbf{0.7803} & \cellcolor[rgb]{ .875,  .973,  .965}\textbf{0.8807} & 0.6472  & \cellcolor[rgb]{ .922,  .98,  .98}\underline{0.8944}  \\
    \midrule
    \multirow{6}[4]{*}{VT821} & TNet \cite{TNet} & \cellcolor[rgb]{ .988,  .922,  .875}0.0915  & \cellcolor[rgb]{ .988,  .925,  .878}0.0803  & \cellcolor[rgb]{ .992,  .902,  .839}0.9283  & \cellcolor[rgb]{ .992,  .902,  .839}0.9254  & \cellcolor[rgb]{ .992,  .906,  .847}\underline{0.7454}  & \cellcolor[rgb]{ .992,  .898,  .831}\underline{0.8481}  & \cellcolor[rgb]{ .992,  .91,  .851}0.7945  & \cellcolor[rgb]{ .992,  .902,  .835}0.8937  & \cellcolor[rgb]{ .992,  .914,  .859}0.8316  & \cellcolor[rgb]{ .992,  .929,  .882}0.8932  \\
          & + $\mathsf{SIOpt1}$&  \cellcolor[rgb]{ .988,  .894,  .827}\textbf{0.0525} & \cellcolor[rgb]{ .988,  .894,  .827}\textbf{0.0461} & \cellcolor[rgb]{ .988,  .894,  .827}\textbf{0.9384} & \cellcolor[rgb]{ .988,  .894,  .827}\textbf{0.9352} & \cellcolor[rgb]{ .988,  .894,  .827}\textbf{0.7973} & \cellcolor[rgb]{ .988,  .894,  .827}\textbf{0.8496} & \cellcolor[rgb]{ .988,  .894,  .827}\textbf{0.8492} & \cellcolor[rgb]{ .992,  .898,  .835}\underline{0.8950}  & \cellcolor[rgb]{ .988,  .894,  .827}\textbf{0.8765} & \cellcolor[rgb]{ .988,  .894,  .827}\textbf{0.9278} \\
          & + $\mathsf{SIOpt2}$&  \cellcolor[rgb]{ .988,  .918,  .867}\underline{0.0855}  & \cellcolor[rgb]{ .988,  .918,  .871}\underline{0.0748}  & \cellcolor[rgb]{ .992,  .898,  .831}\underline{0.9379}  & \cellcolor[rgb]{ .992,  .898,  .831}\underline{0.9350}  & \cellcolor[rgb]{ .992,  .91,  .851}0.7406  & \cellcolor[rgb]{ .992,  .898,  .835}0.8396  & \cellcolor[rgb]{ .992,  .906,  .847}\underline{0.8062}  & \cellcolor[rgb]{ .988,  .894,  .827}\textbf{0.9027} & \cellcolor[rgb]{ .992,  .914,  .859}\underline{0.8337}  & \cellcolor[rgb]{ .992,  .918,  .867}\underline{0.9043}  \\
\cmidrule{2-12}          & DCNet \cite{DCNet} & 0.1881  & 0.1590  & 0.7510  & 0.7491  & 0.3278  & \cellcolor[rgb]{ 1,  .984,  .973}\underline{0.5343}  & 0.4191  & 0.6839  & \cellcolor[rgb]{ 1,  1,  .996}\underline{0.6253}  & 0.8163  \\
          & + $\mathsf{SIOpt1}$&  \cellcolor[rgb]{ .992,  .937,  .902}\textbf{0.1124} & \cellcolor[rgb]{ .992,  .937,  .902}\textbf{0.0950} & \cellcolor[rgb]{ .996,  .937,  .894}\underline{0.8673}  & \cellcolor[rgb]{ .996,  .937,  .894}\underline{0.8652}  & \cellcolor[rgb]{ 1,  .969,  .949}\textbf{0.4701} & \cellcolor[rgb]{ 1,  .98,  .969}\textbf{0.5463} & \cellcolor[rgb]{ .996,  .953,  .918}\underline{0.6252}  & \cellcolor[rgb]{ .996,  .949,  .918}\underline{0.7920}  & \cellcolor[rgb]{ 1,  .988,  .976}\textbf{0.6531} & \cellcolor[rgb]{ .992,  .929,  .882}\textbf{0.8929} \\
          & + $\mathsf{SIOpt2}$&  \cellcolor[rgb]{ .996,  .98,  .969}\underline{0.1660}  & \cellcolor[rgb]{ .996,  .98,  .969}\underline{0.1409}  & \cellcolor[rgb]{ .992,  .925,  .878}\textbf{0.8844} & \cellcolor[rgb]{ .992,  .925,  .875}\textbf{0.8867} & \cellcolor[rgb]{ 1,  .988,  .976}\underline{0.3945}  & 0.4713  & \cellcolor[rgb]{ .996,  .941,  .898}\textbf{0.6735} & \cellcolor[rgb]{ .996,  .937,  .898}\textbf{0.8171} & 0.6174  & \cellcolor[rgb]{ .992,  .929,  .886}\underline{0.8923}  \\
    \bottomrule
    \end{tabular}%
  \label{tab:RGBT}%
\end{table}

  \begin{table}[!t]
    \centering
    \caption{Performance comparisons for TS-SAM-based methods on DUTS, DUT-OMRON, MSOD, ECSSD, and HKU-IS datasets.}
    \scalebox{1.0}{
      \begin{tabular}{c|ccccccccccc}
      \toprule
      \multicolumn{1}{c}{Dataset} & \multicolumn{1}{c}{Methods} & $\MAE \downarrow$ & $\SMAE \downarrow$ & $\AUC \uparrow $ & $\SAUC \uparrow $ & $\F_m^{\beta} \uparrow$ & $\SF_m^{\beta} \uparrow $ & $\F_{max}^{\beta} \uparrow$ & $\SF_{max}^{\beta} \uparrow$ & $\E_m \uparrow$ & $\Sm_m \uparrow$ \\
      \midrule
      \multirow{2}[2]{*}{DUTS-TE} & TS-SAM \cite{SAM3} & \cellcolor[rgb]{ .988,  .914,  .922}0.0335  & \cellcolor[rgb]{ .992,  .941,  .949}0.0368  & \cellcolor[rgb]{ .984,  .886,  .898}\textbf{0.9832} & \cellcolor[rgb]{ .992,  .922,  .929}\textbf{0.9692} & \cellcolor[rgb]{ .988,  .914,  .922}0.8779  & \cellcolor[rgb]{ .98,  .855,  .871}\textbf{0.9371} & \cellcolor[rgb]{ .992,  .929,  .937}0.9237  & \cellcolor[rgb]{ .984,  .867,  .878}\textbf{0.9655} & \cellcolor[rgb]{ .992,  .922,  .929}0.9266  & \cellcolor[rgb]{ .984,  .875,  .886}0.9472  \\
            & +SIOpt1 & \cellcolor[rgb]{ .984,  .906,  .914}\textbf{0.0322} & \cellcolor[rgb]{ .988,  .933,  .941}\textbf{0.0354} & \cellcolor[rgb]{ .988,  .906,  .918}0.9764  & \cellcolor[rgb]{ .992,  .945,  .949}0.9621  & \cellcolor[rgb]{ .988,  .906,  .914}\textbf{0.8861} & \cellcolor[rgb]{ .984,  .859,  .875}0.9359  & \cellcolor[rgb]{ .992,  .918,  .925}\textbf{0.9324} & \cellcolor[rgb]{ .984,  .867,  .882}0.9641  & \cellcolor[rgb]{ .992,  .918,  .925}\textbf{0.9279} & \cellcolor[rgb]{ .98,  .855,  .871}\textbf{0.9490} \\
      \midrule
      \multirow{2}[2]{*}{DUT-OMRON} & TS-SAM \cite{SAM3} & 0.0461  & 0.0452  & \cellcolor[rgb]{ .992,  .945,  .949}\textbf{0.9650} & \cellcolor[rgb]{ .996,  .965,  .969}\textbf{0.9559} & 0.7965  & 0.7967  & \cellcolor[rgb]{ 1,  .996,  .996}\textbf{0.8828} & \cellcolor[rgb]{ 1,  .992,  .996}\textbf{0.8835} & 0.8874  & 0.9335  \\
            & +SIOpt1 & \cellcolor[rgb]{ .996,  .98,  .98}\textbf{0.0434} & \cellcolor[rgb]{ .996,  .98,  .98}\textbf{0.0425} & \cellcolor[rgb]{ 1,  .976,  .98}0.9548  & \cellcolor[rgb]{ 1,  .98,  .98}0.9512  & \cellcolor[rgb]{ 1,  .992,  .992}\textbf{0.8051} & \cellcolor[rgb]{ 1,  .992,  .996}\textbf{0.8052} & 0.8780  & 0.8784  & \cellcolor[rgb]{ 1,  .996,  .996}\textbf{0.8909} & \cellcolor[rgb]{ 1,  .984,  .988}\textbf{0.9352} \\
      \midrule
      \multirow{2}[2]{*}{MSOD} & TS-SAM \cite{SAM3} & \cellcolor[rgb]{ .996,  .973,  .973}0.0420  & \cellcolor[rgb]{ .992,  .965,  .969}0.0405  & \cellcolor[rgb]{ .996,  .953,  .961}\textbf{0.9618} & \cellcolor[rgb]{ .996,  .953,  .957}\textbf{0.9594} & \cellcolor[rgb]{ .996,  .969,  .973}0.8275  & \cellcolor[rgb]{ 1,  .973,  .976}0.8242  & \cellcolor[rgb]{ 1,  .992,  .992}\textbf{0.8854} & \cellcolor[rgb]{ 1,  .992,  .992}\textbf{0.8842} & \cellcolor[rgb]{ 1,  .992,  .992}0.8919  & \cellcolor[rgb]{ .996,  .957,  .961}0.9383  \\
            & +SIOpt1 & \cellcolor[rgb]{ .992,  .961,  .965}\textbf{0.0405} & \cellcolor[rgb]{ .992,  .957,  .961}\textbf{0.0390} & 0.9468  & 0.9441  & \cellcolor[rgb]{ .996,  .961,  .965}\textbf{0.8358} & \cellcolor[rgb]{ .996,  .965,  .969}\textbf{0.8328} & \cellcolor[rgb]{ 1,  .992,  .992}0.8840  & \cellcolor[rgb]{ 1,  .996,  .996}0.8818  & \cellcolor[rgb]{ 1,  .984,  .984}\textbf{0.8966} & \cellcolor[rgb]{ .996,  .949,  .957}\textbf{0.9390} \\
      \midrule
      \multirow{2}[2]{*}{ECSSD} & TS-SAM \cite{SAM3} & \cellcolor[rgb]{ .984,  .886,  .898}0.0292  & \cellcolor[rgb]{ .98,  .878,  .894}0.0276  & \cellcolor[rgb]{ .98,  .855,  .871}\textbf{0.9921} & \cellcolor[rgb]{ .98,  .855,  .871}\textbf{0.9887} & \cellcolor[rgb]{ .984,  .863,  .878}0.9257  & \cellcolor[rgb]{ .984,  .867,  .882}0.9258  & \cellcolor[rgb]{ .98,  .855,  .871}\textbf{0.9705} & \cellcolor[rgb]{ .98,  .855,  .871}\textbf{0.9706} & \cellcolor[rgb]{ .984,  .867,  .882}\textbf{0.9527} & \cellcolor[rgb]{ .984,  .867,  .882}0.9478  \\
            & +SIOpt1 & \cellcolor[rgb]{ .98,  .878,  .89}\textbf{0.0283} & \cellcolor[rgb]{ .98,  .875,  .886}\textbf{0.0268} & \cellcolor[rgb]{ .984,  .871,  .886}0.9874  & \cellcolor[rgb]{ .984,  .882,  .894}0.9813  & \cellcolor[rgb]{ .98,  .855,  .871}\textbf{0.9304} & \cellcolor[rgb]{ .984,  .863,  .878}\textbf{0.9305} & \cellcolor[rgb]{ .984,  .863,  .878}0.9676  & \cellcolor[rgb]{ .984,  .863,  .878}0.9678  & \cellcolor[rgb]{ .984,  .867,  .882}0.9524  & \cellcolor[rgb]{ .984,  .859,  .875}\textbf{0.9488} \\
      \midrule
      \multirow{2}[2]{*}{HKU-IS} & TS-SAM \cite{SAM3} & \cellcolor[rgb]{ .98,  .863,  .875}0.0257  & \cellcolor[rgb]{ .98,  .863,  .875}0.0249  & \cellcolor[rgb]{ .984,  .859,  .875}\textbf{0.9909} & \cellcolor[rgb]{ .984,  .859,  .875}\textbf{0.9878} & \cellcolor[rgb]{ .984,  .875,  .89}0.9129  & \cellcolor[rgb]{ .984,  .882,  .898}0.9112  & \cellcolor[rgb]{ .984,  .875,  .89}\textbf{0.9586} & \cellcolor[rgb]{ .984,  .878,  .89}\textbf{0.9576} & \cellcolor[rgb]{ .984,  .863,  .875}0.9552  & \cellcolor[rgb]{ .984,  .875,  .886}0.9472  \\
            & +SIOpt1 & \cellcolor[rgb]{ .98,  .855,  .871}\textbf{0.0244} & \cellcolor[rgb]{ .98,  .855,  .871}\textbf{0.0236} & \cellcolor[rgb]{ .984,  .875,  .886}0.9867  & \cellcolor[rgb]{ .984,  .878,  .89}0.9825  & \cellcolor[rgb]{ .984,  .867,  .882}\textbf{0.9196} & \cellcolor[rgb]{ .984,  .875,  .89}\textbf{0.9183} & \cellcolor[rgb]{ .984,  .878,  .89}0.9573  & \cellcolor[rgb]{ .984,  .878,  .894}0.9563  & \cellcolor[rgb]{ .98,  .855,  .871}\textbf{0.9573} & \cellcolor[rgb]{ .984,  .859,  .875}\textbf{0.9487} \\
      \bottomrule
      \end{tabular}%
     }
    \label{tab:SAM}%
  \end{table}%

\begin{table}[!t]
  \centering
  \caption{SOD performance comparisons with different approximation functions of $\mathcal{C}(\boldsymbol{X})$ on seven widely used benchmarks. We adopt EDN as the base model, and Boundings is the default version involved in $\mathsf{SIOpt1}$.}
  \scalebox{0.98}{
    \begin{tabular}{c|c|cccccccccc}
    \toprule
    \multicolumn{1}{c}{Dataset} & Split Methods & $\MAE \downarrow$ & $\SMAE \downarrow$ & $\AUC \uparrow $ & $\SAUC \uparrow $ & $\F_m^{\beta} \uparrow$ & $\SF_m^{\beta} \uparrow $ & $\F_{max}^{\beta} \uparrow$ & $\SF_{max}^{\beta} \uparrow$ & $\E_m \uparrow$ & $\Sm_m \uparrow$ \\
    \midrule
    \multirow{3}[2]{*}{DUTS} &  Random & 0.0527  & 0.0497  & 0.9539  & 0.9471  & 0.7837  & 0.8107  & 0.8601  & 0.8899  & 0.8668  & 0.9276  \\
          & Nearest & 0.0394  & 0.0390  & 0.9334  & 0.9320  & \textbf{0.8367} & 0.8621  & 0.8644  & 0.8885  & 0.9054  & 0.9353  \\
          & Boundings & \cellcolor[rgb]{ .867,  .922,  .969}\textbf{0.0392} & \cellcolor[rgb]{ .867,  .922,  .969}\textbf{0.0381} & \cellcolor[rgb]{ .867,  .922,  .969}\textbf{0.9658} & \cellcolor[rgb]{ .867,  .922,  .969}\textbf{0.9596} & \cellcolor[rgb]{ .867,  .922,  .969}0.8260  & \cellcolor[rgb]{ .867,  .922,  .969}\textbf{0.8672} & \cellcolor[rgb]{ .867,  .922,  .969}\textbf{0.8765} & \cellcolor[rgb]{ .867,  .922,  .969}\textbf{0.9119} & \cellcolor[rgb]{ .867,  .922,  .969}\textbf{0.9072} & \cellcolor[rgb]{ .867,  .922,  .969}\textbf{0.9388} \\
    \midrule
    \multirow{3}[2]{*}{DUT-OMRON} &  Random & 0.0602  & 0.0517  & 0.9300  & 0.9282  & 0.7329  & 0.7870  & \textbf{0.8179} & 0.8769  & 0.8383  & 0.9207  \\
          & Nearest & \textbf{0.0536} & \textbf{0.0466} & 0.9080  & 0.9046  & \textbf{0.7648} & 0.8329  & 0.8000  & 0.8692  & \textbf{0.8603} & 0.9212  \\
          & Boundings & \cellcolor[rgb]{ .867,  .922,  .969}0.0557  & \cellcolor[rgb]{ .867,  .922,  .969}0.0483  & \cellcolor[rgb]{ .867,  .922,  .969}\textbf{0.9382} & \cellcolor[rgb]{ .867,  .922,  .969}\textbf{0.9359} & \cellcolor[rgb]{ .867,  .922,  .969}0.7544  & \cellcolor[rgb]{ .867,  .922,  .969}\textbf{0.8381} & \cellcolor[rgb]{ .867,  .922,  .969}0.8163  & \cellcolor[rgb]{ .867,  .922,  .969}\textbf{0.8912} & \cellcolor[rgb]{ .867,  .922,  .969}0.8594  & \cellcolor[rgb]{ .867,  .922,  .969}\textbf{0.9282} \\
    \midrule
    \multirow{3}[2]{*}{MSOD} &  Random & 0.0583  & 0.0956  & 0.9160  & 0.9114  & 0.7034  & 0.6780  & 0.7791  & 0.7941  & 0.8219  & 0.9192  \\
          & Nearest & 0.0479  & 0.0846  & 0.8833  & 0.8715  & 0.7500  & 0.7189  & 0.7816  & 0.7556  & 0.8564  & 0.9250  \\
          & Boundings & \cellcolor[rgb]{ .867,  .922,  .969}\textbf{0.0453} & \cellcolor[rgb]{ .867,  .922,  .969}\textbf{0.0724} & \cellcolor[rgb]{ .867,  .922,  .969}\textbf{0.9401} & \cellcolor[rgb]{ .867,  .922,  .969}\textbf{0.9310} & \cellcolor[rgb]{ .867,  .922,  .969}\textbf{0.8057} & \cellcolor[rgb]{ .867,  .922,  .969}\textbf{0.7990} & \cellcolor[rgb]{ .867,  .922,  .969}\textbf{0.8555} & \cellcolor[rgb]{ .867,  .922,  .969}\textbf{0.8619} & \cellcolor[rgb]{ .867,  .922,  .969}\textbf{0.8936} & \cellcolor[rgb]{ .867,  .922,  .969}\textbf{0.9365} \\
    \midrule
    \multirow{3}[2]{*}{ECSSD} &  Random & 0.0476  & 0.0348  & 0.9704  & 0.9684  & 0.8838  & 0.8966  & 0.9341  & 0.9425  & 0.9142  & 0.9320  \\
          & Nearest & \textbf{0.0355} & \textbf{0.0269} & 0.9586  & 0.9593  & \textbf{0.9150} & \textbf{0.9244} & 0.9390  & 0.9480  & 0.9364  & 0.9394  \\
          & Boundings & \cellcolor[rgb]{ .867,  .922,  .969}0.0358  & \cellcolor[rgb]{ .867,  .922,  .969}\textbf{0.0269} & \cellcolor[rgb]{ .867,  .922,  .969}\textbf{0.9762} & \cellcolor[rgb]{ .867,  .922,  .969}\textbf{0.9743} & \cellcolor[rgb]{ .867,  .922,  .969}0.9084  & \cellcolor[rgb]{ .867,  .922,  .969}0.9216  & \cellcolor[rgb]{ .867,  .922,  .969}\textbf{0.9456} & \cellcolor[rgb]{ .867,  .922,  .969}\textbf{0.9543} & \cellcolor[rgb]{ .867,  .922,  .969}\textbf{0.9375} & \cellcolor[rgb]{ .867,  .922,  .969}\textbf{0.9406} \\
    \midrule
    \multirow{3}[2]{*}{HKU-IS} &  Random & 0.0377  & 0.0381  & 0.9730  & 0.9684  & 0.8756  & 0.8729  & 0.9280  & 0.9311  & 0.9223  & 0.9386  \\
          & Nearest & \textbf{0.0280} & 0.0292  & 0.9549  & 0.9506  & \textbf{0.9088} & \textbf{0.9099} & 0.9319  & 0.9340  & 0.9430  & 0.9449  \\
          & Boundings & \cellcolor[rgb]{ .867,  .922,  .969}0.0287  & \cellcolor[rgb]{ .867,  .922,  .969}\textbf{0.0289} & \cellcolor[rgb]{ .867,  .922,  .969}\textbf{0.9776} & \cellcolor[rgb]{ .867,  .922,  .969}\textbf{0.9753} & \cellcolor[rgb]{ .867,  .922,  .969}0.8986  & \cellcolor[rgb]{ .867,  .922,  .969}0.9072  & \cellcolor[rgb]{ .867,  .922,  .969}\textbf{0.9375} & \cellcolor[rgb]{ .867,  .922,  .969}\textbf{0.9443} & \cellcolor[rgb]{ .867,  .922,  .969}\textbf{0.9442} & \cellcolor[rgb]{ .867,  .922,  .969}\textbf{0.9452} \\
    \midrule
    \multirow{3}[2]{*}{SOD} &  Random & 0.1311  & 0.1165  & 0.8672  & 0.8585  & 0.7260  & 0.6918  & 0.8322  & 0.8164  & 0.7509  & 0.8656  \\
          & Nearest & 0.1016  & 0.0956  & 0.8486  & 0.8362  & 0.7998  & 0.7549  & 0.8471  & 0.8102  & 0.8051  & 0.8889  \\
          & Boundings & \cellcolor[rgb]{ .867,  .922,  .969}\textbf{0.0982} & \cellcolor[rgb]{ .867,  .922,  .969}\textbf{0.0922} & \cellcolor[rgb]{ .867,  .922,  .969}\textbf{0.8892} & \cellcolor[rgb]{ .867,  .922,  .969}\textbf{0.8789} & \cellcolor[rgb]{ .867,  .922,  .969}\textbf{0.8110} & \cellcolor[rgb]{ .867,  .922,  .969}\textbf{0.7728} & \cellcolor[rgb]{ .867,  .922,  .969}\textbf{0.8677} & \cellcolor[rgb]{ .867,  .922,  .969}\textbf{0.8382} & \cellcolor[rgb]{ .867,  .922,  .969}\textbf{0.8207} & \cellcolor[rgb]{ .867,  .922,  .969}\textbf{0.8921} \\
    \midrule
    \multirow{3}[2]{*}{PASCAL-S} &  Random & 0.0839  & 0.0629  & 0.9244  & 0.9266  & 0.7853  & 0.8122  & 0.8561  & 0.8863  & 0.8407  & 0.8968  \\
          & Nearest & 0.0646  & 0.0492  & 0.9232  & 0.9241  & \textbf{0.8330} & \textbf{0.8708} & 0.8625  & 0.8995  & 0.8841  & 0.9072  \\
          & Boundings & \cellcolor[rgb]{ .867,  .922,  .969}\textbf{0.0644} & \cellcolor[rgb]{ .867,  .922,  .969}\textbf{0.0491} & \cellcolor[rgb]{ .867,  .922,  .969}\textbf{0.9456} & \cellcolor[rgb]{ .867,  .922,  .969}\textbf{0.9551} & \cellcolor[rgb]{ .867,  .922,  .969}0.8260  & \cellcolor[rgb]{ .867,  .922,  .969}0.8684  & \cellcolor[rgb]{ .867,  .922,  .969}\textbf{0.8757} & \cellcolor[rgb]{ .867,  .922,  .969}\textbf{0.9114} & \cellcolor[rgb]{ .867,  .922,  .969}\textbf{0.8859} & \cellcolor[rgb]{ .867,  .922,  .969}\textbf{0.9090} \\
    \midrule
    \multirow{3}[2]{*}{XPIE} &  Random & 0.0519  & 0.0422  & 0.9553  & 0.9538  & 0.8281  & 0.8580  & 0.8844  & 0.9161  & 0.8840  & 0.9295  \\
          & Nearest & 0.0439  & 0.0355  & 0.9329  & 0.9305  & \textbf{0.8611} & \textbf{0.8921} & 0.8860  & 0.9169  & \textbf{0.9083} & 0.9341  \\
          & Boundings & \cellcolor[rgb]{ .867,  .922,  .969}\textbf{0.0409} & \cellcolor[rgb]{ .867,  .922,  .969}\textbf{0.0337} & \cellcolor[rgb]{ .867,  .922,  .969}\textbf{0.9598} & \cellcolor[rgb]{ .867,  .922,  .969}\textbf{0.9572} & \cellcolor[rgb]{ .867,  .922,  .969}0.8584  & \cellcolor[rgb]{ .867,  .922,  .969}0.8793  & \cellcolor[rgb]{ .867,  .922,  .969}\textbf{0.9044} & \cellcolor[rgb]{ .867,  .922,  .969}\textbf{0.9256} & \cellcolor[rgb]{ .867,  .922,  .969}0.9043  & \cellcolor[rgb]{ .867,  .922,  .969}\textbf{0.9386} \\
    \bottomrule
    \end{tabular}%
  }
  \label{tab:diff_CK}%
\end{table}%

\begin{figure}[!t]
    \centering
    \includegraphics[width=0.95\linewidth]{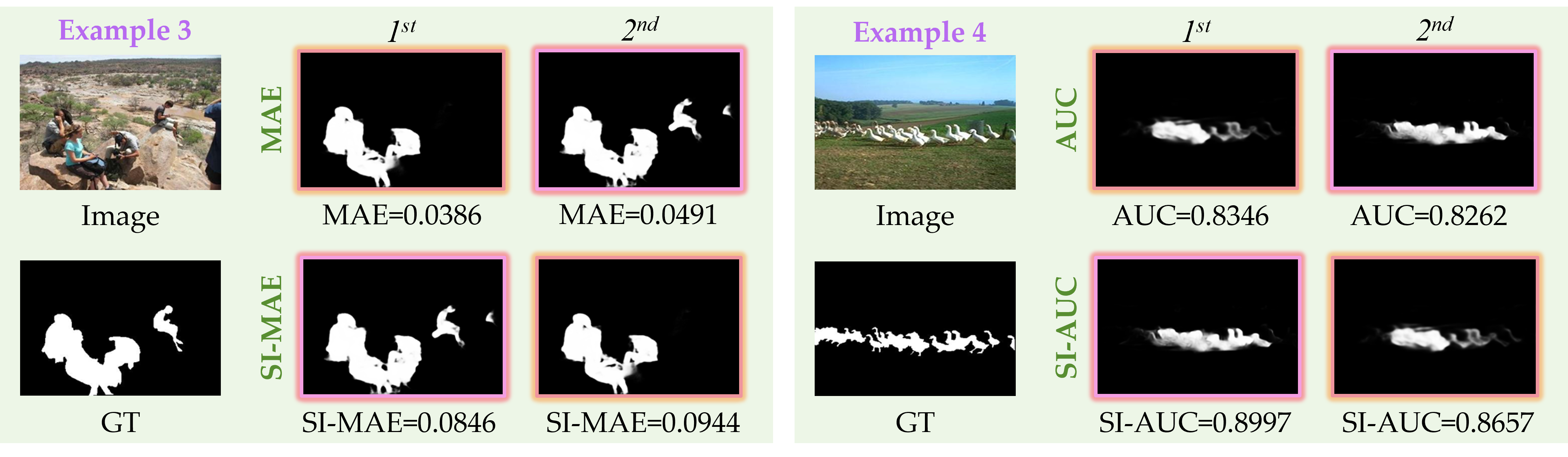}  
    \caption{Additional illustration of the strengths of our proposed size-invariant metrics over conventional size-sensitive evaluations ($\MAE$ v.s $\SMAE$ and $\AUC$ v.s $\SI\text{-}\AUC$). We can see that our proposed $\mathsf{SIEva}$ can offer a more proper assessment of model performance in multiple SOD scenarios. 
    }
    \label{app_fig:eg}
\end{figure}

\begin{figure*}[htbp]
    \centering
    \includegraphics[width=0.95\linewidth]{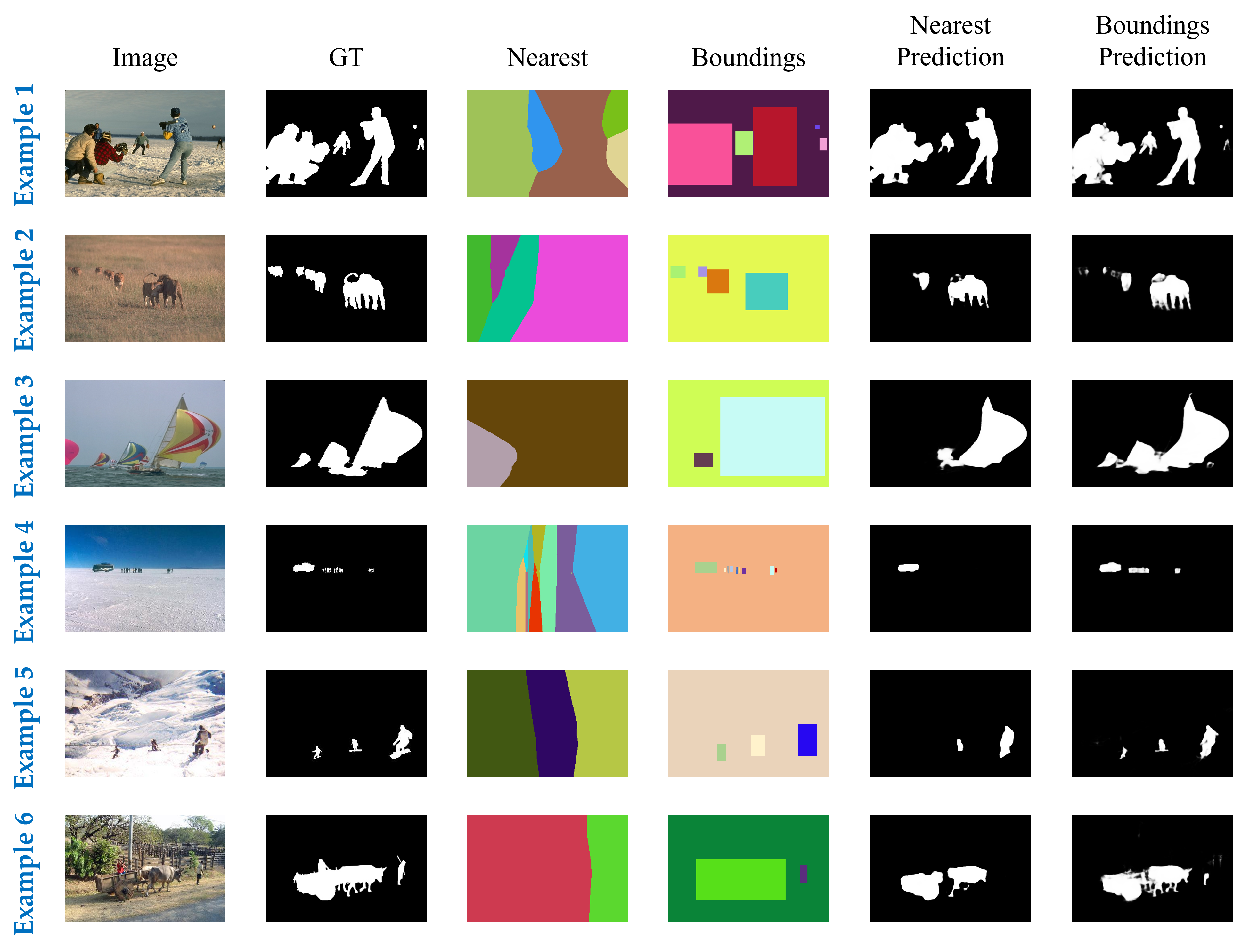}
    \caption{Visual examples of two distinct $\boldsymbol{C}(\boldsymbol{X})$ partition strategies are illustrated, where different colors represent separable regions. Specifically, the \textbf{Nearest} strategy yields $|\boldsymbol{C}(\boldsymbol{X})| = M$, while the \textbf{Boundings} strategy results in $|\boldsymbol{C}(\boldsymbol{X})| = M + 1$, where $M$ denotes the number of connected components in the image $\boldsymbol{X}$. It is also worth noting that the intersection rate introduced by the Boundings strategy (e.g., \textbf{Example 4}) remains minimal—less than $0.5\%$ across all eight benchmarks, as validated in Fig.~\ref{fig:intersect}. Therefore, this slight overlap is unlikely to affect the final evaluation results. Lastly, we do not provide visualizations for \textbf{Random}, as its performance is significantly inferior compared to the other two methods reported in \cref{tab:diff_CK}, rendering its inclusion meaningless.
    }
    \label{fig:vis_diffK}
\end{figure*}

\subsection{More Qualitative Visualizations}  \label{Qualitative_appendix}

To further enhance understanding of our size-invariant framework, we provide additional visual comparisons of different backbone models before and after applying our $\mathsf{SIOpt}$ algorithms. These results are illustrated in \cref{fig:vis}, \cref{fig:vis_appendix}, and \cref{fig:vis_SIOpt}. Consistent with the main text, we observe that the original backbones often fail to detect all salient objects in challenging scenarios, whereas our $\mathsf{SIOpt}$ framework effectively mitigates this issue. Similar improvements are also evident for $\mathsf{SIOpt2}$. For instance, in the ninth image of \cref{fig:vis_SIOpt}, both ADMNet and LDF overlook the two smaller birds at the bottom, while $\mathsf{SIOpt2}$ significantly enhances their detection without compromising the performance on other objects. These results further validate the effectiveness of our approach in promoting balanced learning across objects of varying sizes.

\begin{figure*}[htbp]
    \centering
    \includegraphics[width=\linewidth]{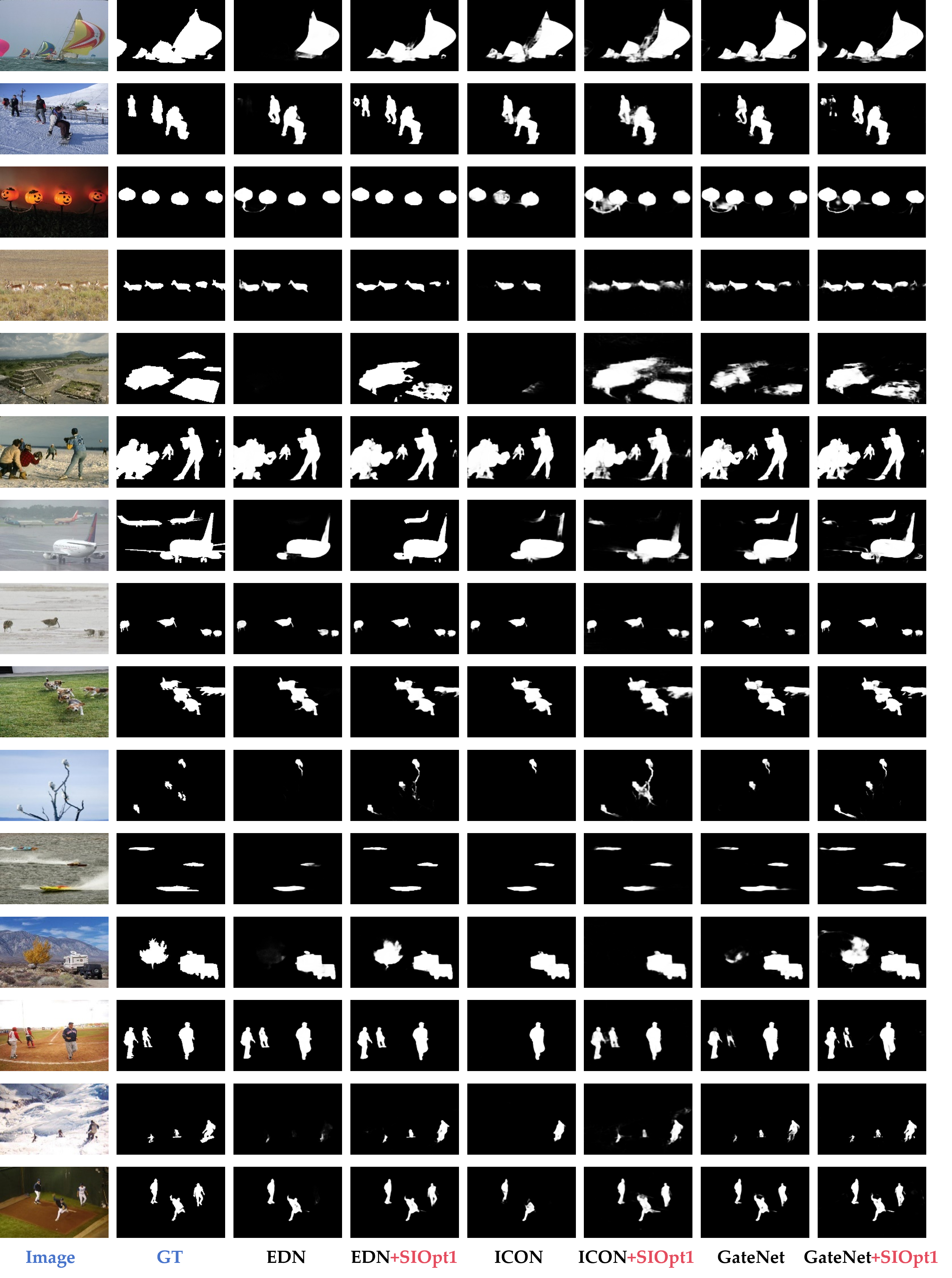}
    \vspace{-0.5cm}
    \caption{Qualitative visualizations on different backbones (including END, ICON and GateNet) before and after using our proposed $\mathsf{SIOpt1}$.}
    \label{fig:vis_appendix}
\end{figure*}

\begin{figure*}[htbp]
    \centering
    \includegraphics[width=\linewidth]{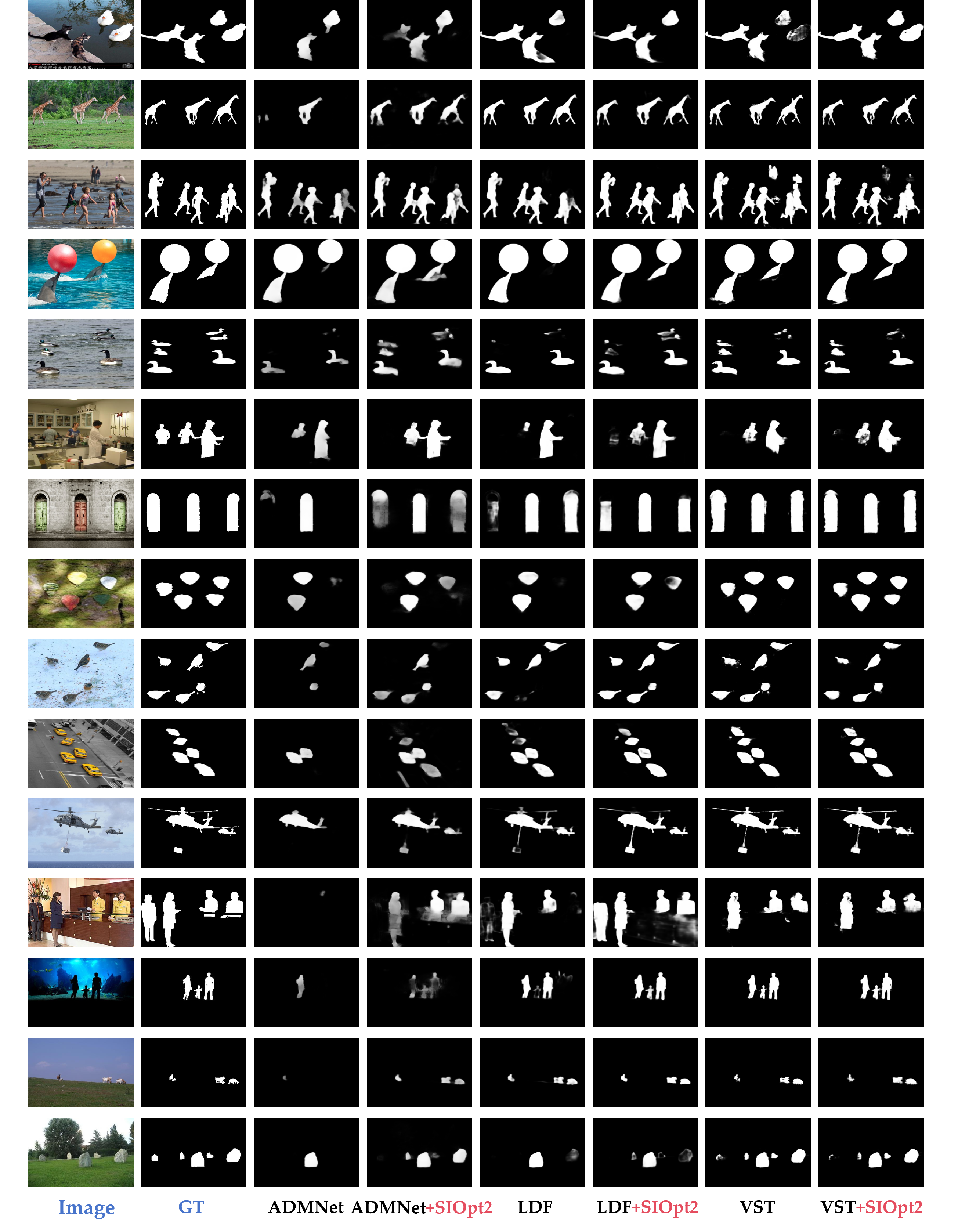}
    \caption{Qualitative visualizations on different backbones (including ADMNet, LDF and VST) before and after using our proposed $\mathsf{SIOpt2}$.}
    \label{fig:vis_SIOpt}
\end{figure*}

\begin{figure}[!t]
    \centering
    \includegraphics[width=0.95\linewidth]{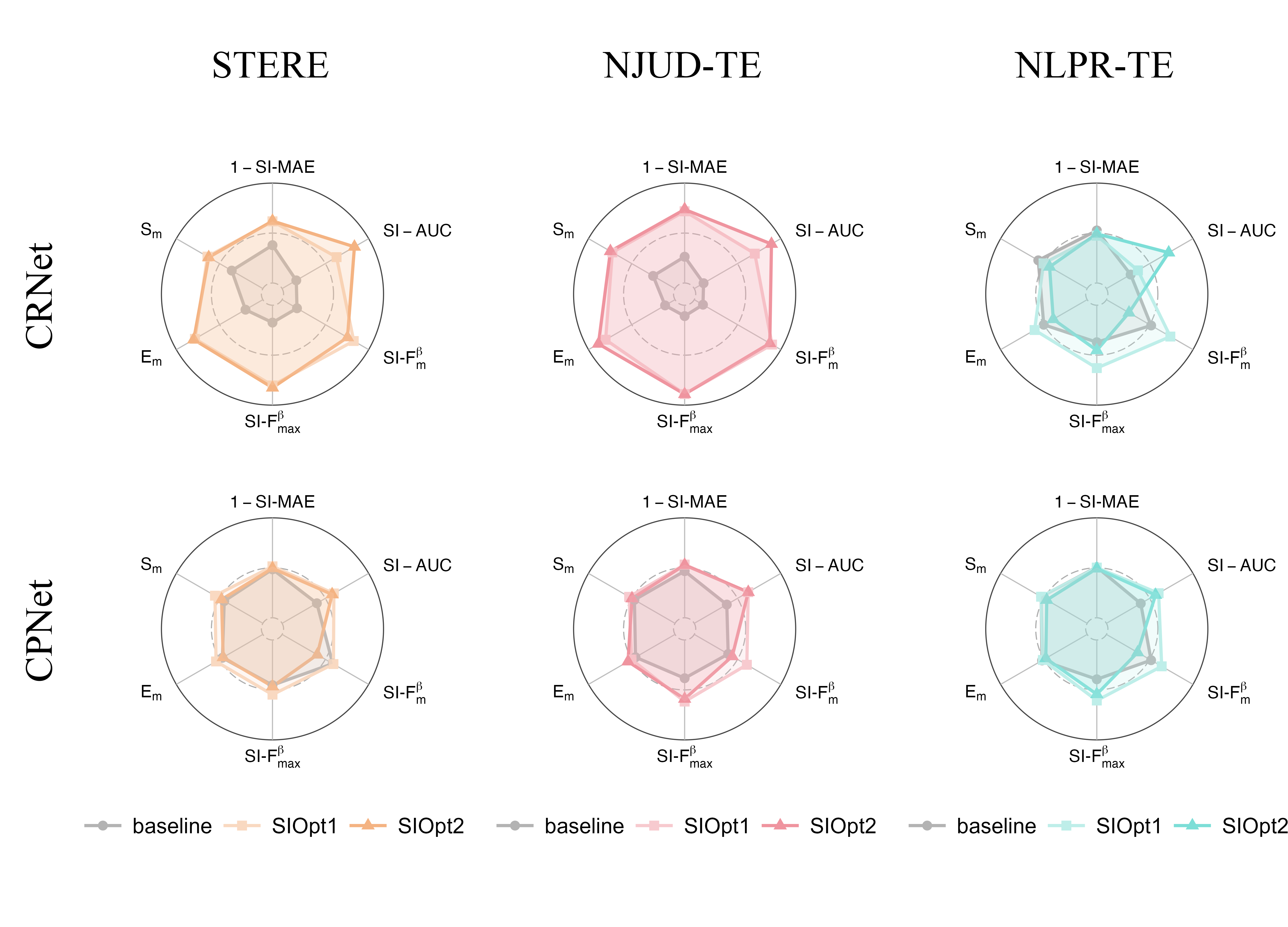}  
    \caption{Radar visualizations offering a comprehensive comparison of RGB-D SOD methods, where we plot $1-\SMAE$ to make a clearer visualization with other metrics. Here, we emphasize size-invariant measurements to better reflect performance across diverse object sizes.}    
    \label{fig:app_RGBD}    
    \end{figure}

\subsection{More Evidences on Various Downstream SOD Tasks}  \label{app:more_RGB-SOD}
\textbf{It is important to highlight that both our proposed evaluation metric (\(\mathsf{SIEva}\)) and optimization strategy (\(\mathsf{SIOpt}\)) are solely conditioned on the ground-truth masks \(\boldsymbol{Y}\), making them readily adaptable to various SOD variants, such as RGB-D and RGB-T tasks. } To demonstrate the scalability of the proposed \(\mathsf{SIOpt}\) framework, this section extends our analysis to the RGB-D and RGB-T SOD tasks \cite{zhang2023c, li2023dvsod}, where depth cues and thermal infrared images are leveraged to enhance detection accuracy, respectively.

\subsubsection{Performance Results on RGB-D SOD tasks} \label{app:out_RGBD}
 In terms of \textbf{RGB-D SOD} tasks, we integrate our \(\mathsf{SIOpt}\) into two competitive baselines—CRNet \cite{CRNet} and CPNet \cite{hu2024cross}—across three widely used datasets: NJUD, NLPR \cite{NLPR}, and STERE \cite{STERE}. The task setup and implementation details are elaborated in \cref{app:RGBD_details}. A comprehensive comparison is presented in \cref{fig:app_RGBD}, with the corresponding numerical results summarized in \cref{tab:RGBD}. Overall, $\mathsf{SIOpt}$ consistently outperforms models trained with traditional size-sensitive losses across a wide range of evaluation metrics. Even in less favorable cases, such as the $\F_m^{\beta}$ and $\E_m$ scores of CPNet on the NLPR dataset, our method remains competitive, underscoring its potential and generalizability.

\subsubsection{Performance Results on RGB-T SOD tasks} \label{app:out_RGBT}
For the task of \textbf{RGB-T SOD}, we integrate our proposed \(\mathsf{SIOpt}\) into the TNet \cite{TNet} and DCNet \cite{DCNet} models, and evaluate their performance on three benchmark datasets: VT821 \cite{VT821}, VT1000 \cite{VT1000}, and VT5000 \cite{VT5000}. The specific experimental settings are detailed in \cref{RGBT_detail}. To facilitate comparison, we present quantitative results in \cref{tab:RGBT}. As shown, the performance of both TNet and DCNet is consistently improved after incorporating our \(\mathsf{SIOpt}\) approach. Additionally, we observe that DCNet-based approaches tend to exhibit inferior performance compared to TNet in our experiments. A possible reason is that DCNet was specifically designed for scenarios involving misalignment between RGB and thermal modalities. Consequently, its architecture may have been overfitted to the particular characteristics of the datasets used during its development, which differ in alignment properties and distribution from the VT datasets used in our evaluation. Nevertheless, $\mathsf{SIOpt}$ is still able to significantly enhance DCNet's performance. Overall, these consistent improvements across different datasets and metrics demonstrate the generalizability and effectiveness of the proposed size-invariant optimization framework in enhancing RGB-T SOD performance.

\subsection{Performance Results on large foundation models} \label{app:out_TS-SAM}

The implementation details of TS-SAM are provided in \cref{SAM_detail}. The performance of TS-SAM-based methods across five benchmark datasets—DUTS, DUT-OMRON, MSOD, ECSSD, and HKU-IS—is illustrated in \cref{fig:app_SAM} and quantitatively summarized in \cref{tab:SAM}. As demonstrated, substituting the original loss functions (\cref{20250420eq94}) in TS-SAM with our proposed \(\mathsf{SIOpt}\) (\cref{20250420eq95}) consistently leads to noticeable performance improvements on most datasets, further confirming the effectiveness of our framework. However, we also observe a slight decline in performance on AUC-based metrics (e.g., \(\AUC\) and \(\SAUC\)) when compared to the default TS-SAM. A plausible explanation is that our approach (\(\mathsf{SIOpt1}\)) evaluates different image regions independently, which may compromise holistic image understanding and thereby affect the quality of global pairwise rankings. This limitation may be mitigated by employing hybrid loss formulations, such as combining \(\mathcal{L}_{\SI\AUC}(f)\) with \(\mathcal{L}_{\mathsf{Dice}}(f)\), and designing more effective region partitioning strategies. We leave this direction for future investigation.

\begin{figure*}[!t]
    \centering
    \subfigure[DUTS-TE]{   
    \includegraphics[width=0.295\linewidth]{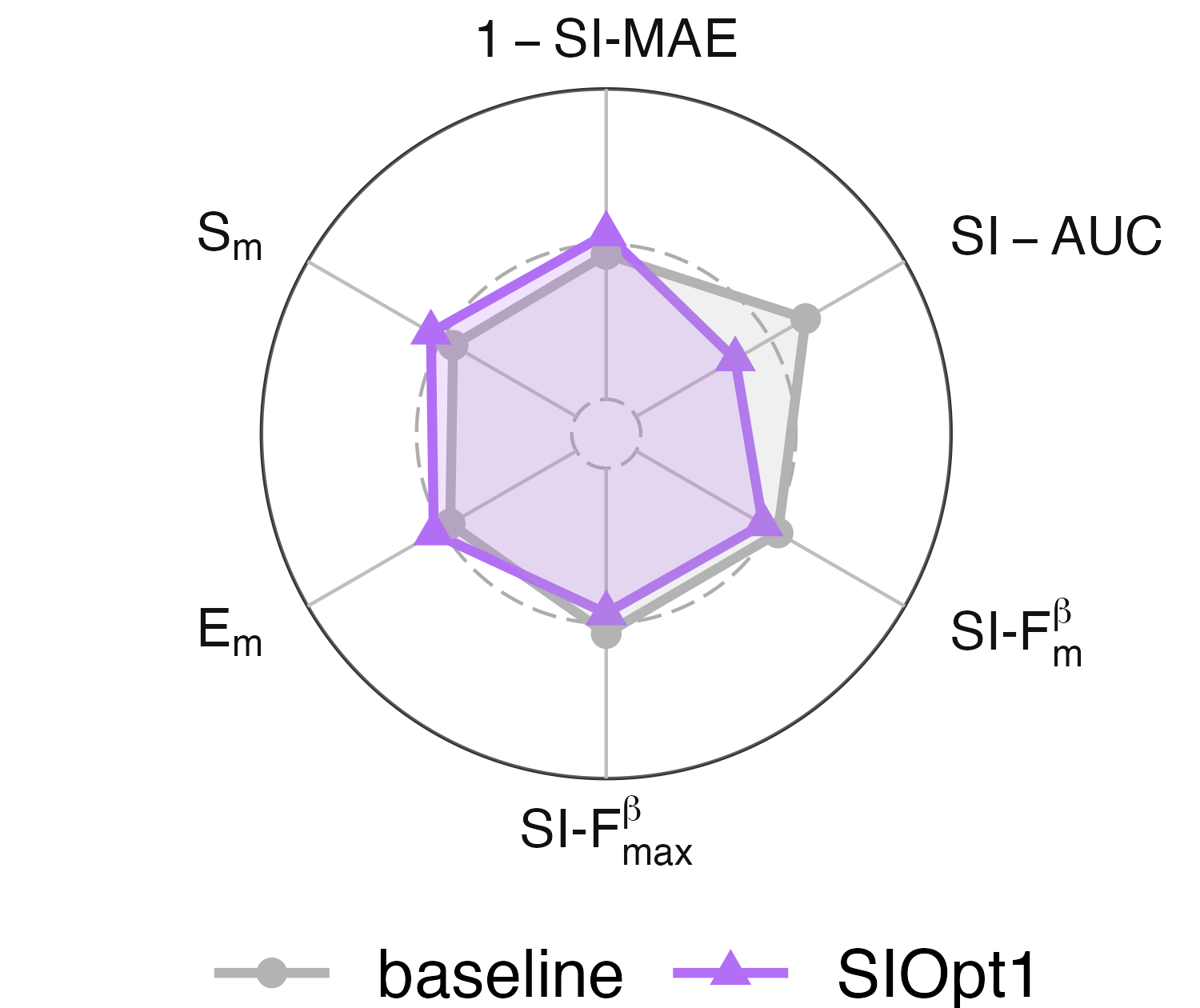}  
    }
    \subfigure[DUT-OMRON]{   
    \includegraphics[width=0.295\linewidth]{fig/exp/SAM_leida/DUT-OMRON_SAM.png}  
    }
    \subfigure[MSOD]{
    \includegraphics[width=0.295\linewidth]{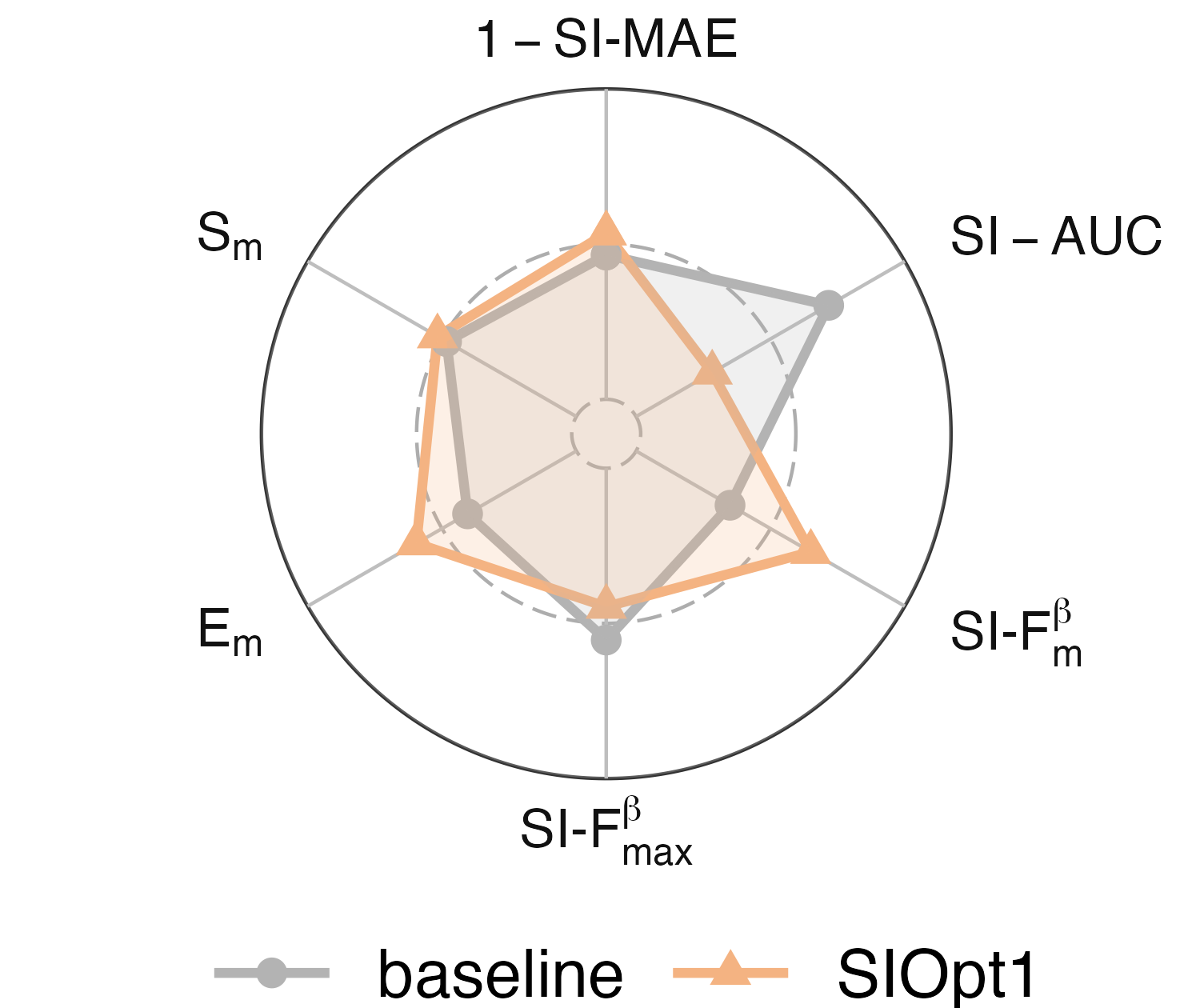}  
    }
    \subfigure[ECSSD]{
    \includegraphics[width=0.295\linewidth]{fig/exp/SAM_leida/ECSSD_SAM.png} 
    }
    \subfigure[HKU-IS]{
    \includegraphics[width=0.295\linewidth]{fig/exp/SAM_leida/HKU-IS_SAM.png} 
    }
    \caption{Radar visualizations offering a comprehensive comparison of TS-SAM-based methods under various metrics, where we plot $1-\SMAE$ to make a clearer visualization with other metrics. Here, we emphasize size-invariant measurements to better reflect performance across diverse object sizes.}    
    \label{fig:app_SAM}    
    \end{figure*}

\begin{figure*}[!t]
    \centering
    \subfigure{   
    \begin{minipage}{0.285\linewidth}
    \includegraphics[width=\linewidth]{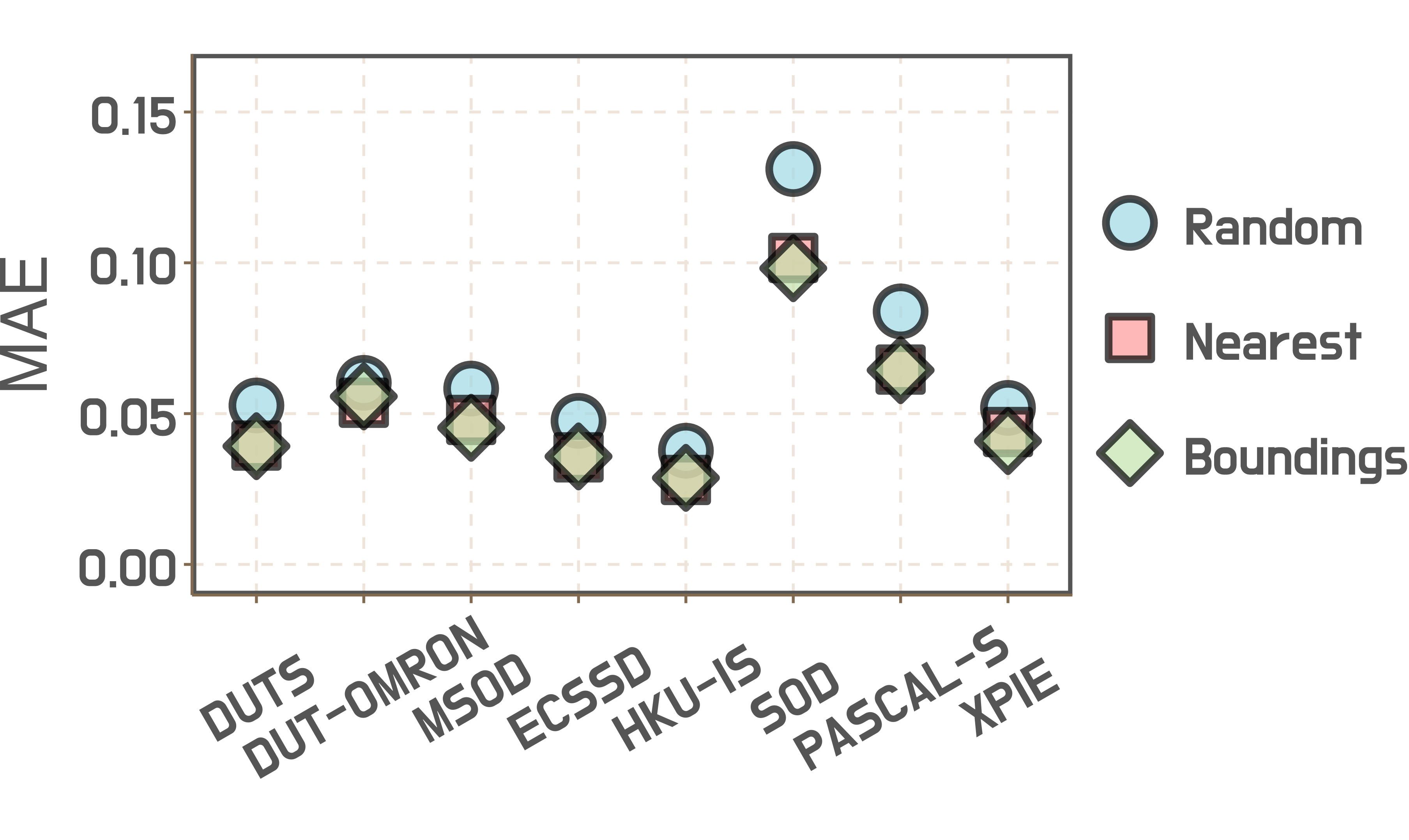}  
    \end{minipage}
    }
    \subfigure{   
    \begin{minipage}{0.285\linewidth}
    \includegraphics[width=\linewidth]{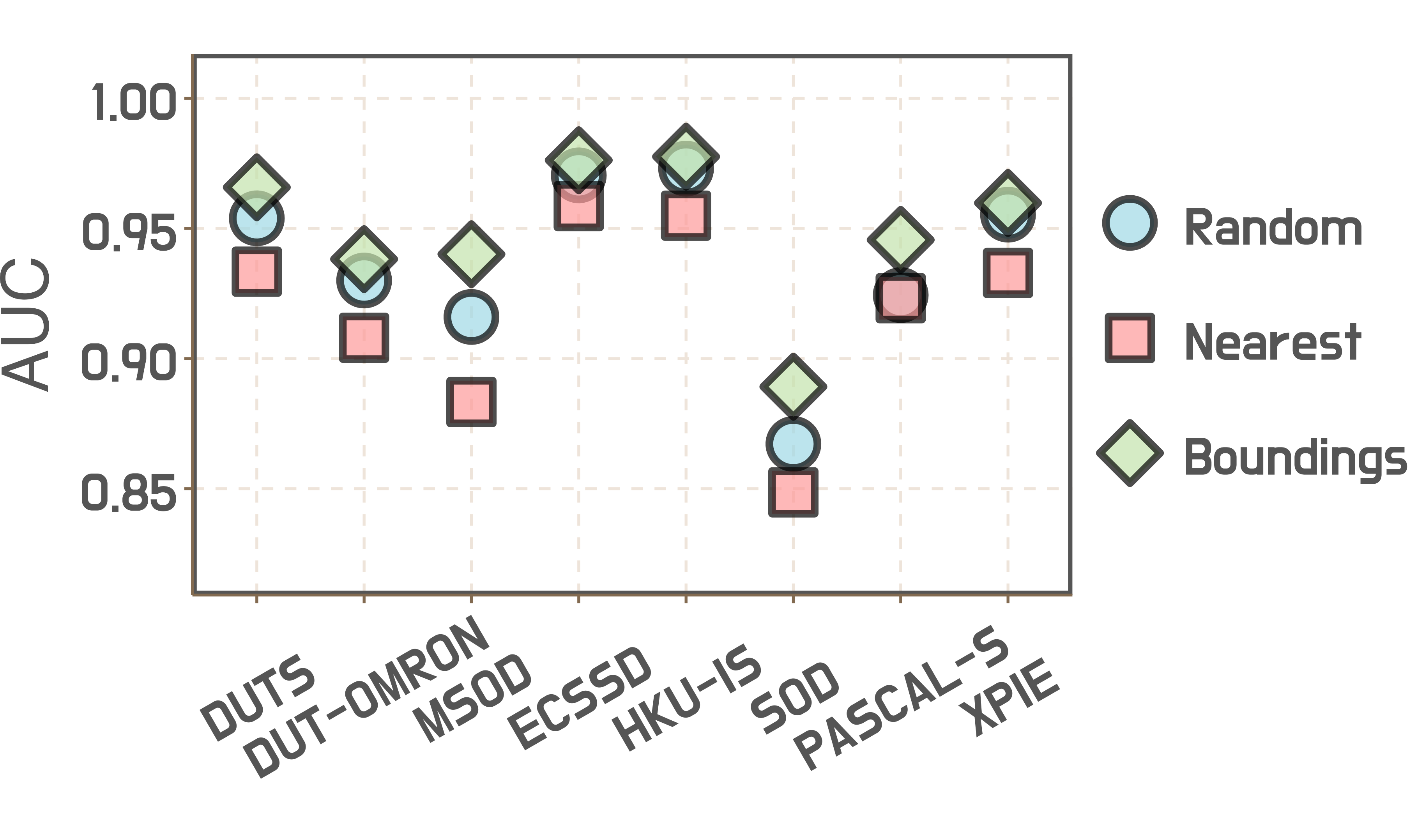}
    \end{minipage}
    }
    \subfigure{   
    \begin{minipage}{0.285\linewidth}
    \includegraphics[width=\linewidth]{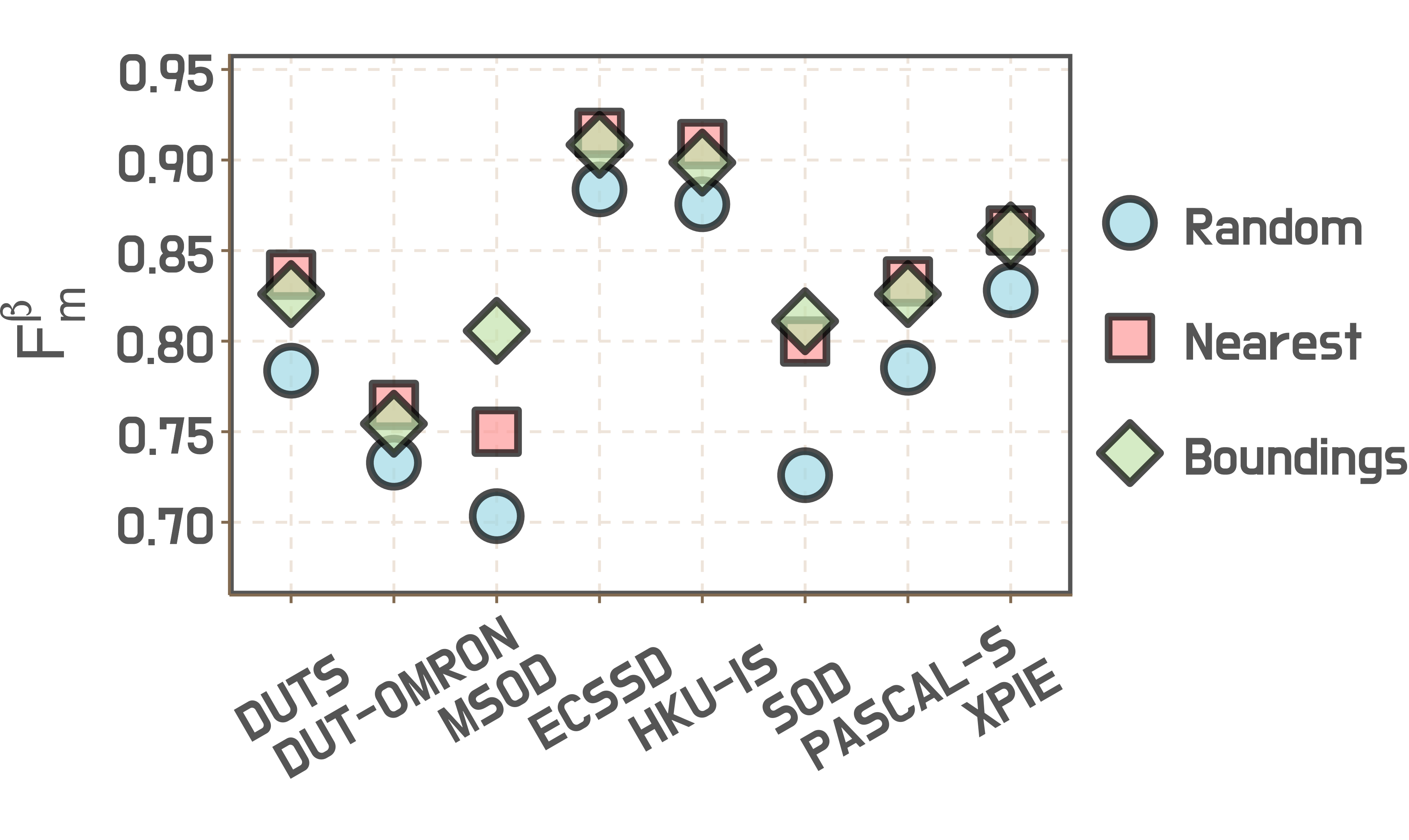} 
    \end{minipage}
    }
    \subfigure{   
    \begin{minipage}{0.285\linewidth}
    \includegraphics[width=\linewidth]{fig/exp/diff_Ck/Fmax.png}
    \end{minipage}
    }
    \subfigure{   
    \begin{minipage}{0.285\linewidth}
    \includegraphics[width=\linewidth]{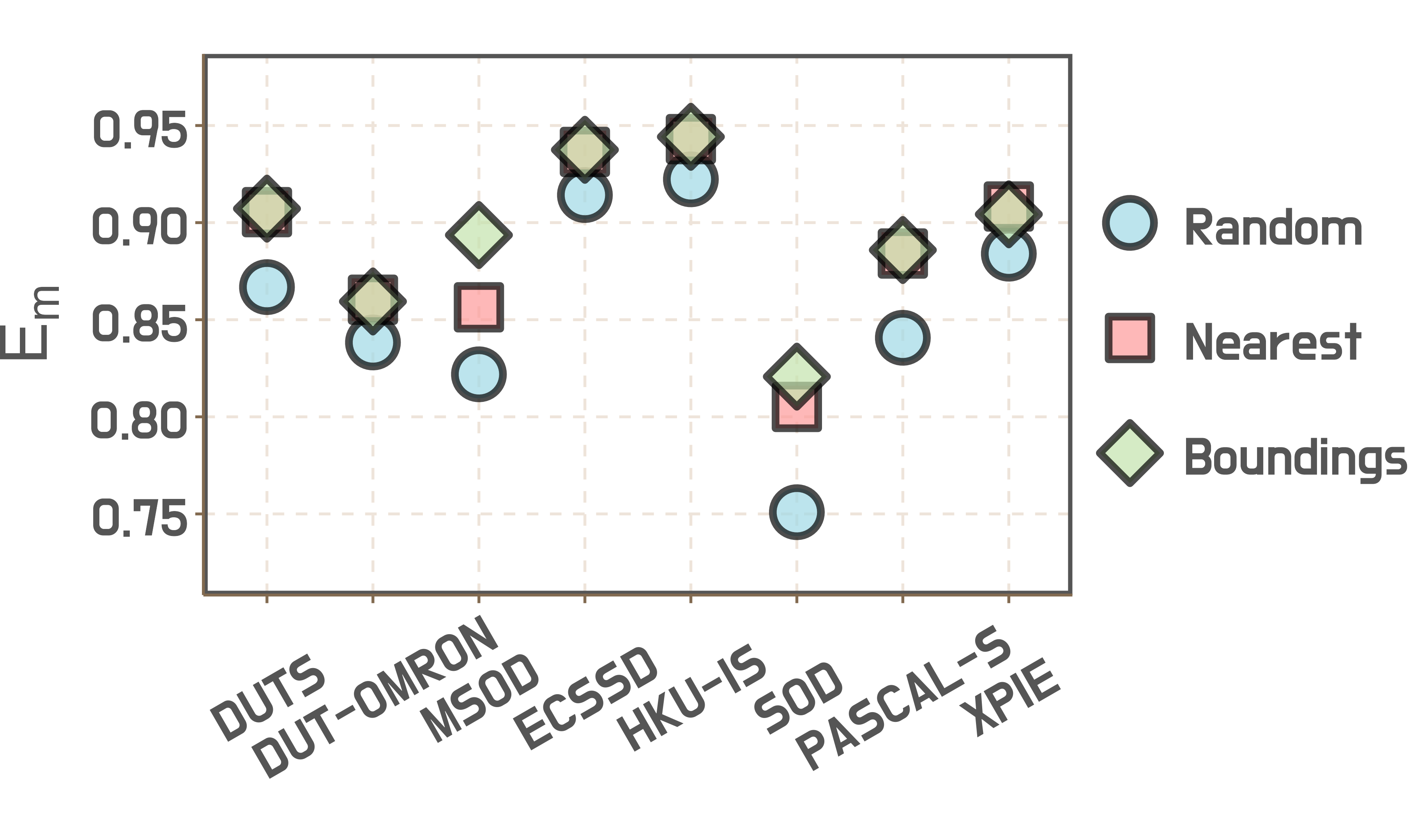} 
    \end{minipage}
    }
    \subfigure{   
    \begin{minipage}{0.285\linewidth}
    \includegraphics[width=\linewidth]{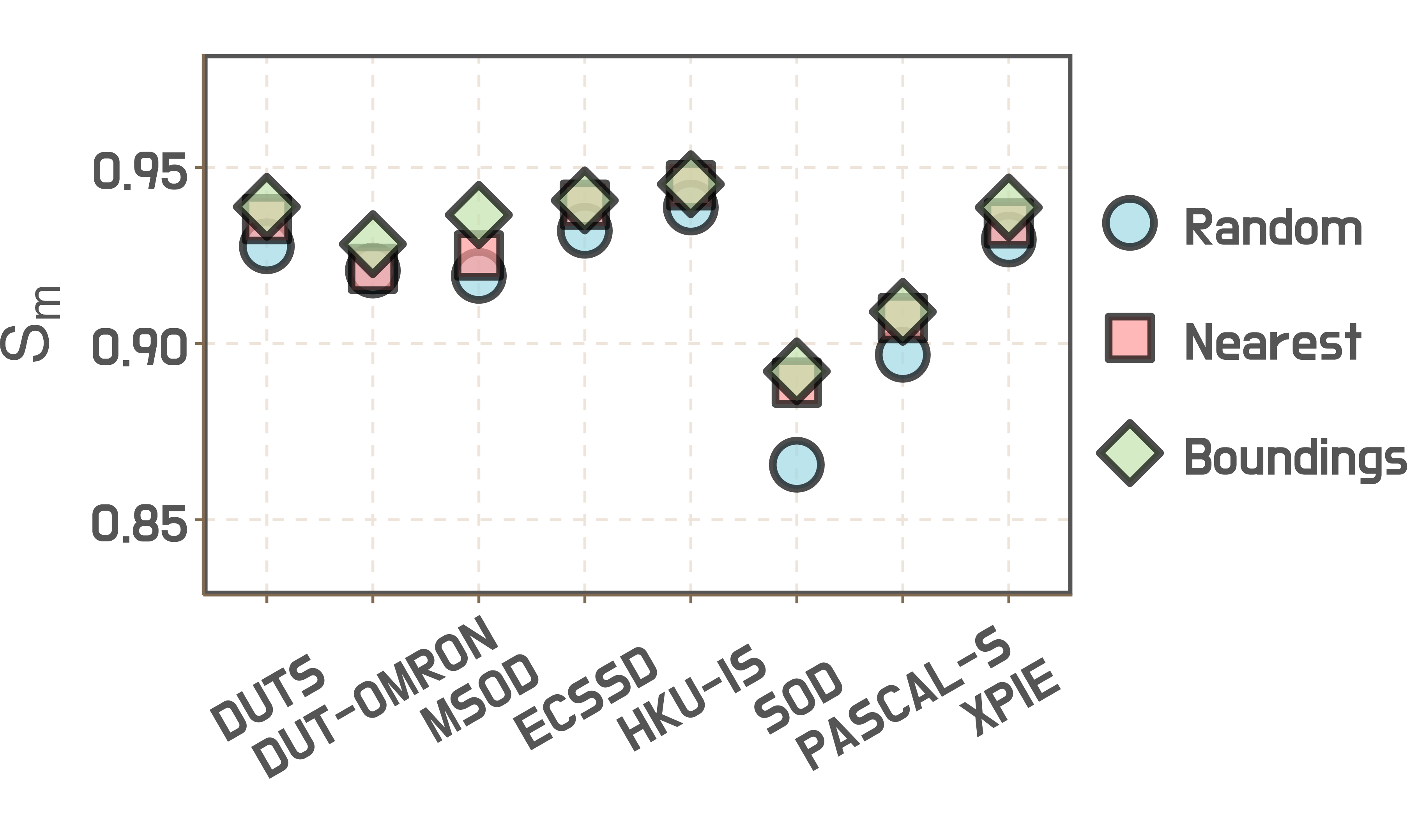} 
    \end{minipage}
    }
    \caption{Ablation performance of different $\boldsymbol{C}(\boldsymbol{X})$ functions across all benchmark datasets. Note that we do not report size-invariant metrics here, as variations in $\boldsymbol{C}(\boldsymbol{X})$ make such metrics less comparable across settings. }    
    \label{fig:app_diff_Ck}    
    \end{figure*}

    \subsection{More Evidences of different $\mathcal{C}(\boldsymbol{X})$ implementations} \label{20250411:E.4}

To investigate the impact of different implementations of $\mathcal{C}(\boldsymbol{X})$ on the final SOD performance, we adopt $\mathsf{SIOpt1}$-based EDN as the base model. Then, we compare the default strategy \textbf{Boundings} (introduced in \cref{principles of SI_eval}) with the following two alternative splitting schemes: 

\noindent (1) \textbf{Random:} In this method, the input image $\boldsymbol{X}$ is partitioned into a series $8 \times 8$ non-overlapping patches.

\noindent (2) \textbf{Nearest:} Under this context, we first extract all connected salient regions in $\boldsymbol{X}$, and then assign each non-salient pixel to its nearest salient component based on Euclidean distance between coordinates, i.e., leading $|\mathcal{C}(X)|=M$. A simple illustration is provided in \cref{fig:vis_diffK} to clarify the difference between \textbf{Nearest} and our default \textbf{Boundings}. 

\noindent\textbf{Overall Performance.} The remaining experimental settings are consistent with those described in \cref{details_appendix}. The results, summarized in \cref{fig:app_diff_Ck} and \cref{tab:diff_CK}, show that the default \textbf{Boundings} strategy achieves the best performance in most scenarios. In contrast, the \textbf{Random} strategy yields the weakest performance in most cases, except for the AUC metric. This anomaly likely arises from the nature of AUC, which focuses on the pairwise ranking of pixel-level predictions and is thus less sensitive to structural inconsistencies. The poor performance of \textbf{Random} on other metrics can be attributed to its pitfall of accidentally partitioning salient objects into disjoint and semantically unrelated regions during optimization, leading to degraded structural integrity. In comparison, the \textbf{Boundings} approach leverages the minimum bounding boxes of individual salient objects, thereby facilitating more accurate boundary delineation between salient and non-salient regions. This advantage is further corroborated by the qualitative comparisons in \cref{Qualitative_appendix}. Meanwhile, the \textbf{Nearest} strategy can introduce boundary ambiguity due to potential misassignments of adjacent pixels belonging to different salient objects, such as \textcolor{blue}{Example 4} shown in \cref{fig:vis_diffK}.

Collectively, these results provide consistent evidence supporting the effectiveness of our proposed framework. In future work, we intend to explore more principled and adaptive strategies for constructing the partition set $\mathcal{C}(\boldsymbol{X})$ to further enhance performance.

\subsection{Fine-grained Performance Comparisons across Varying Object Sizes} \label{size-fine-grained_appendix}

\noindent\textbf{Setups.}  
We conduct a size-oriented analysis on five widely-used RGB SOD benchmark datasets: MSOD, DUTS, ECSSD, DUT-OMRON and HKU-IS, and employ PoolNet \cite{PoolNet,PoolNet+}, GateNet \cite{GateNet} and EDN \cite{EDN} as representative backbones. Since our proposed $\mathsf{SIOpt}$ framework is specifically designed to improve the detection of small objects, we partition all salient objects into \textbf{ten groups} based on their relative area with respect to the entire image, spanning from \([0\%, 10\%]\), \([10\%, 20\%]\), ..., up to \([90\%, 100\%]\). For each group, we evaluate the detection performance using the proposed $\SMAE$ metric, which is tailored to be size-invariant and less biased toward large-object dominance. Importantly, only the foreground regions defined by $\mathcal{C}(\boldsymbol{X})$ (see \cref{principles of SI_eval}) are included in the analysis, as incorporating background pixels can substantially skew the results—particularly when assessing small-object performance.

\noindent\textbf{Results.}  
Detailed results are presented in \cref{fig:fine-analysis-Pool-ratio} (PoolNet), \cref{fig:fine-analysis-GateNet-ratio} (GateNet), and \cref{fig:fine-analysis-EDN-ratio-appendix} (EDN). Across all datasets and backbone architectures, the proposed $\mathsf{SIOpt}$ framework consistently outperforms the original methods across all object-size groups. Notably, the performance improvements are most pronounced for small objects, particularly those occupying less than \(10\%\) of the image area. For instance, on the MSOD dataset, $\mathsf{SIOpt1}$ achieves a performance gain of approximately 0.024 in $\SMAE$ over the baseline EDN model in the \([0\%, 10\%]\) size group. Similar trends are consistently observed across other datasets and backbones, with the most substantial performance gains concentrated in the small-object regime. These findings provide strong empirical evidence in support of our core objective—enhancing the detection of small-scale salient objects, which remains a persistent challenge in SOD. Furthermore, it is not surprising to observe that the performance gap diminishes as object size increases. This aligns with our expectations, as larger objects are inherently more detectable and less susceptible to the limitations of existing SOD methods.

\begin{figure*}[!t]
    \centering
    \subfigure[MSOD]{   
    \begin{minipage}{0.185\linewidth}
    \includegraphics[width=\linewidth]{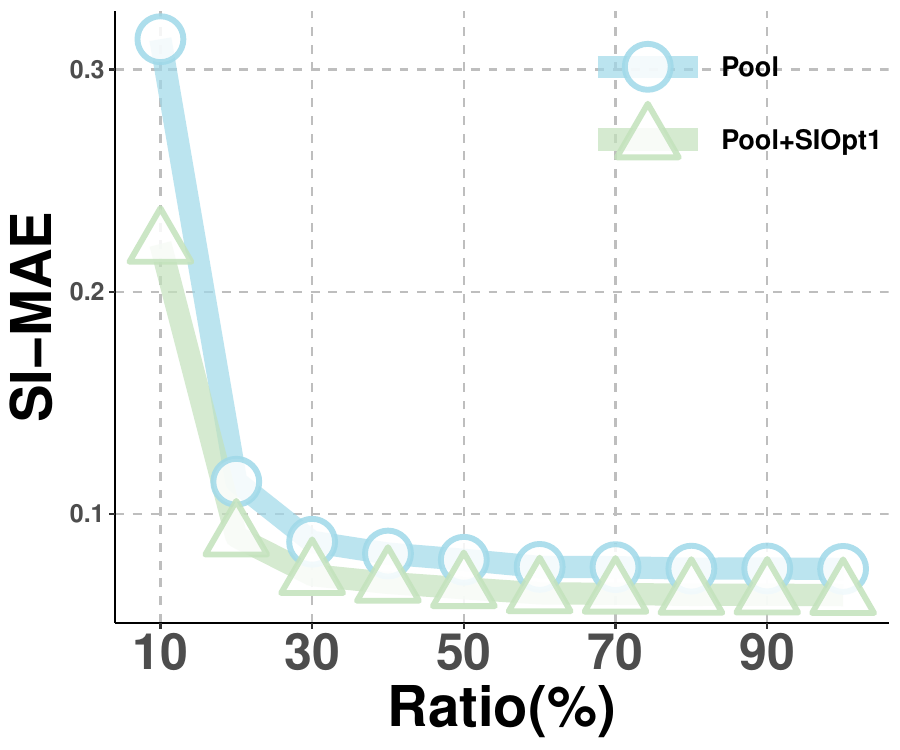}  
    \label{fig:Pool_msod_ratio_line}
    \end{minipage}
    }
    \subfigure[DUTS]{   
    \begin{minipage}{0.185\linewidth}
    \includegraphics[width=\linewidth]{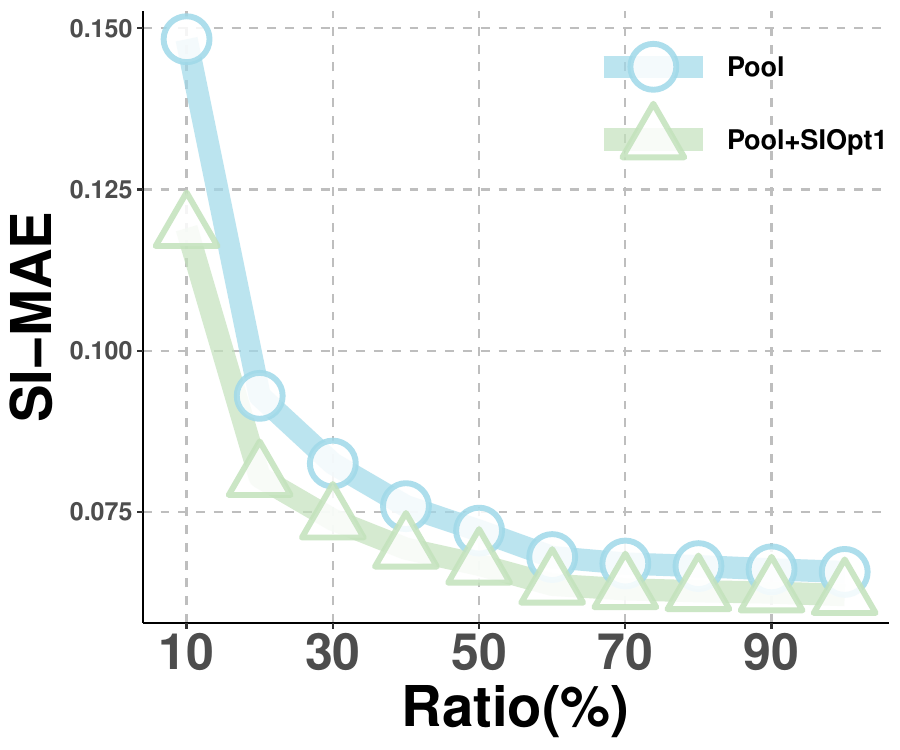}  
    \label{fig:Pool_DUTS_ratio_line}
    \end{minipage}
    }
    \subfigure[ECSSD]{   
    \begin{minipage}{0.185\linewidth}
    \includegraphics[width=\linewidth]{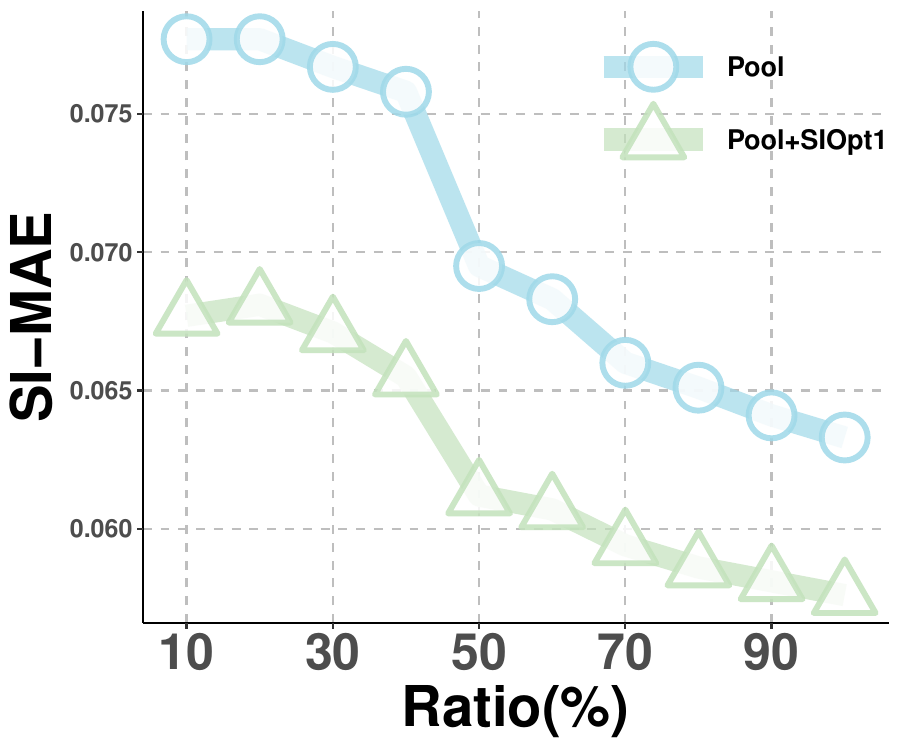}  
    \label{fig:Pool_ECSSD_ratio_line}
    \end{minipage}
    }
    \subfigure[DUT-OMRON]{   
    \begin{minipage}{0.185\linewidth}
    \includegraphics[width=\linewidth]{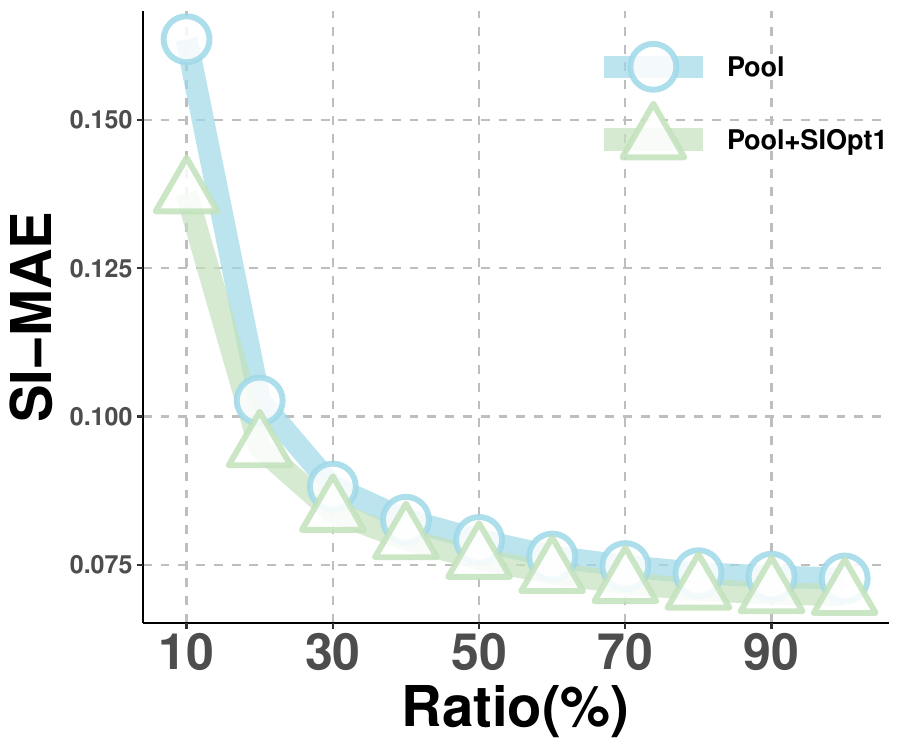}  
    \label{fig:Pool_DUT-OMRON_ratio_line}
    \end{minipage}
    }
    \subfigure[HKU-IS]{   
    \begin{minipage}{0.185\linewidth}
    \includegraphics[width=\linewidth]{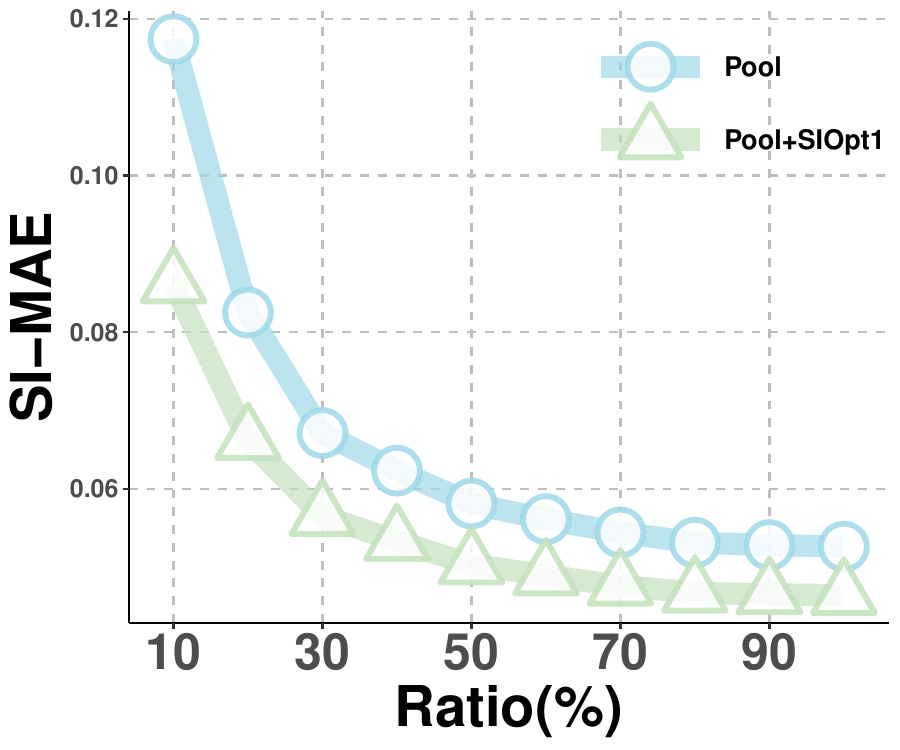}  
    \label{fig:Pool_HKU-IS_ratio_line}
    \end{minipage}
    }
    \caption{Fine-grained performance comparisons of varying object sizes, with PoolNet as the backbone.}    
    \label{fig:fine-analysis-Pool-ratio}    
    \end{figure*}

\begin{figure*}[!t]
    \centering
    \subfigure[MSOD]{   
    \begin{minipage}{0.185\linewidth}
    \includegraphics[width=\linewidth]{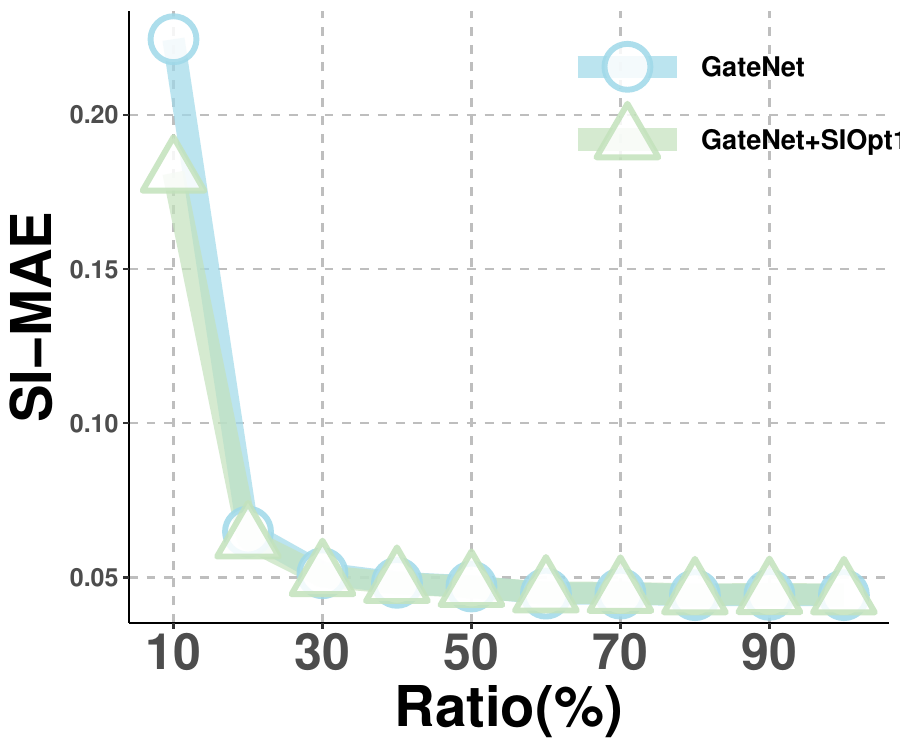}  
    \label{fig:gatenet_msod_ratio_line}
    \end{minipage}
    }
    \subfigure[DUTS]{   
    \begin{minipage}{0.185\linewidth}
    \includegraphics[width=\linewidth]{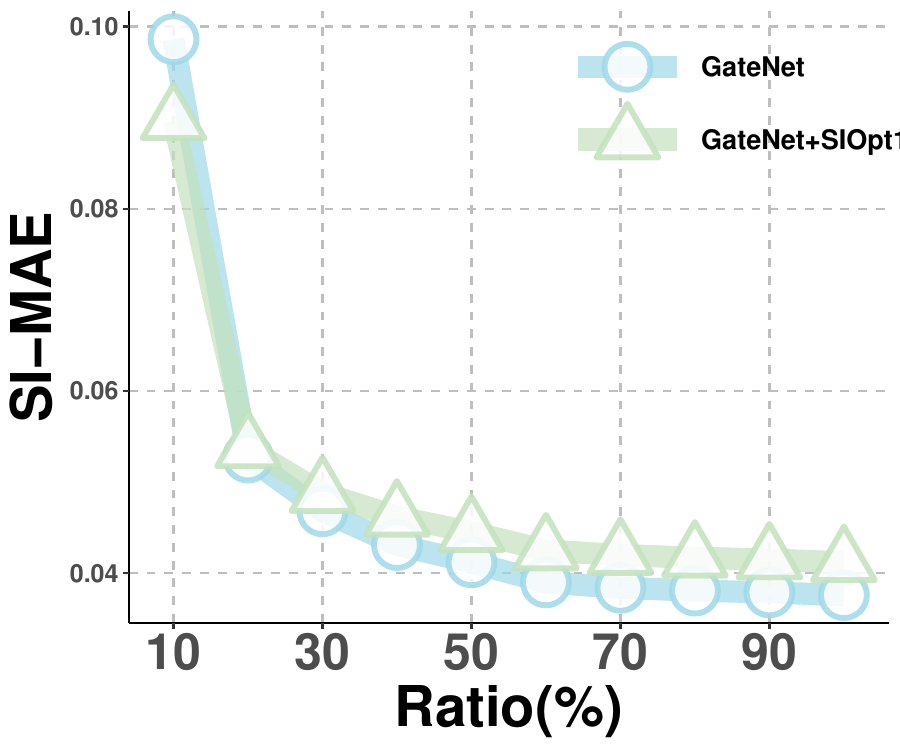}  
    \label{fig:gatenet_DUTS_ratio_line}
    \end{minipage}
    }
    \subfigure[ECSSD]{   
    \begin{minipage}{0.185\linewidth}
    \includegraphics[width=\linewidth]{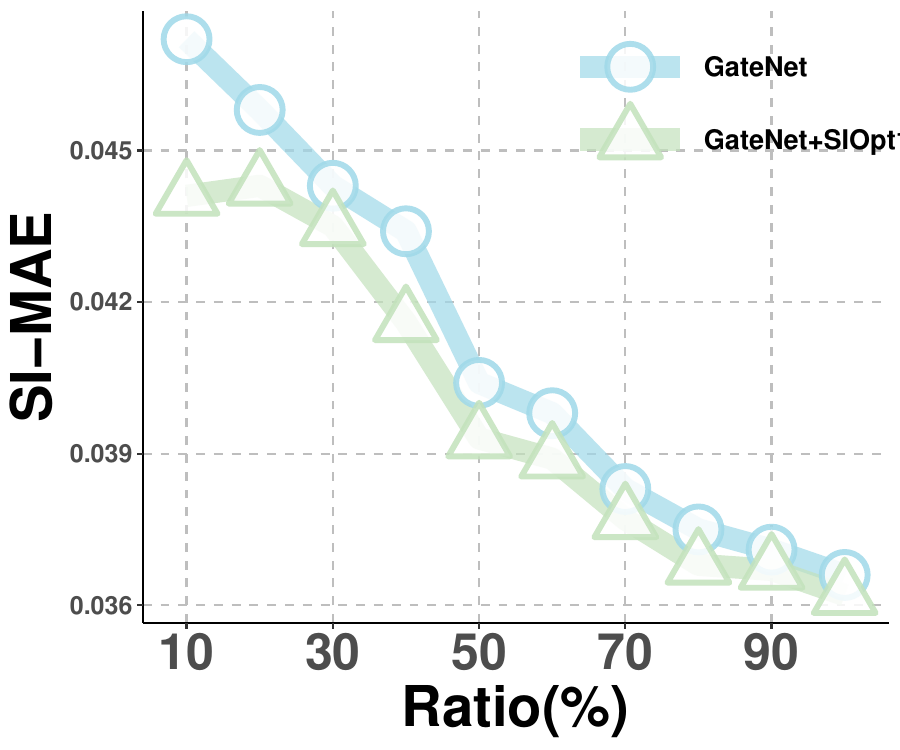}  
    \label{fig:gatenet_ECSSD_ratio_line}
    \end{minipage}
    }
    \subfigure[DUT-OMRON]{   
    \begin{minipage}{0.185\linewidth}
    \includegraphics[width=\linewidth]{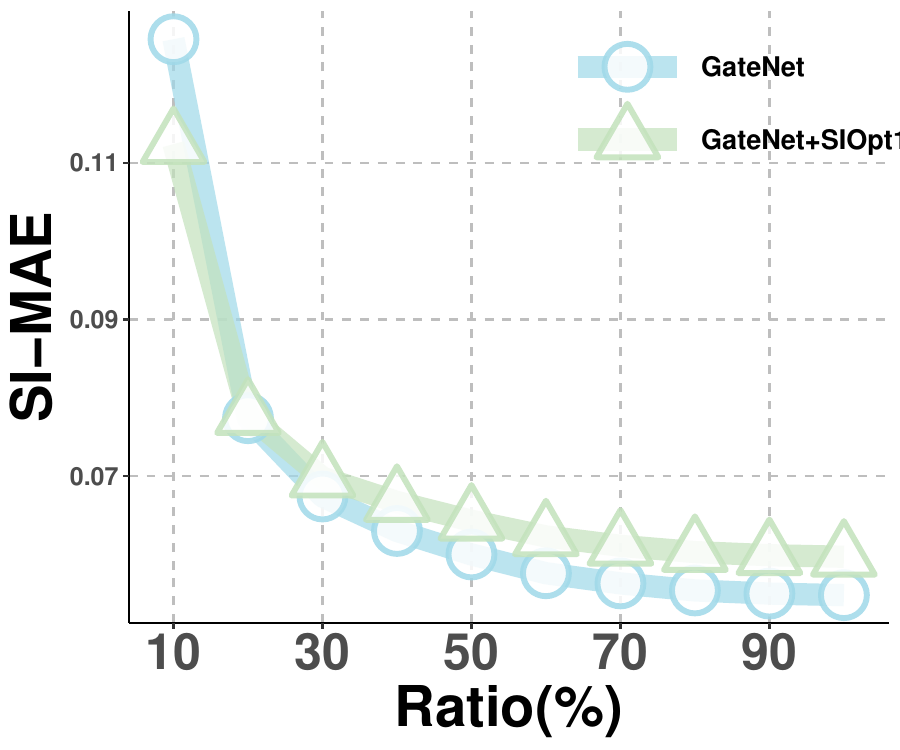}  
    \label{fig:gatenet_DUT-OMRON_ratio_line}
    \end{minipage}
    }
    \subfigure[HKU-IS]{   
    \begin{minipage}{0.185\linewidth}
    \includegraphics[width=\linewidth]{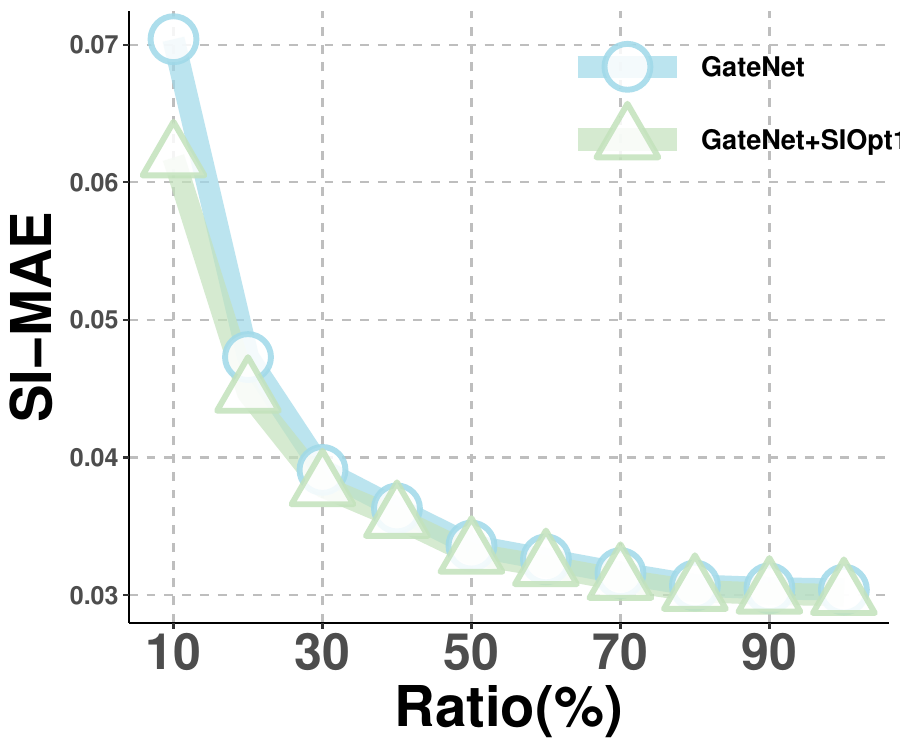}  
    \label{fig:gatenet_HKU-IS_ratio_line}
    \end{minipage}
    }
    \caption{Fine-grained performance comparisons of varying object sizes, with GateNet as the backbone.}    
    \label{fig:fine-analysis-GateNet-ratio}    
    \end{figure*}

\begin{figure*}[!t]
    \centering
    \subfigure[MSOD]{   
    \begin{minipage}{0.185\linewidth}
    \includegraphics[width=\linewidth]{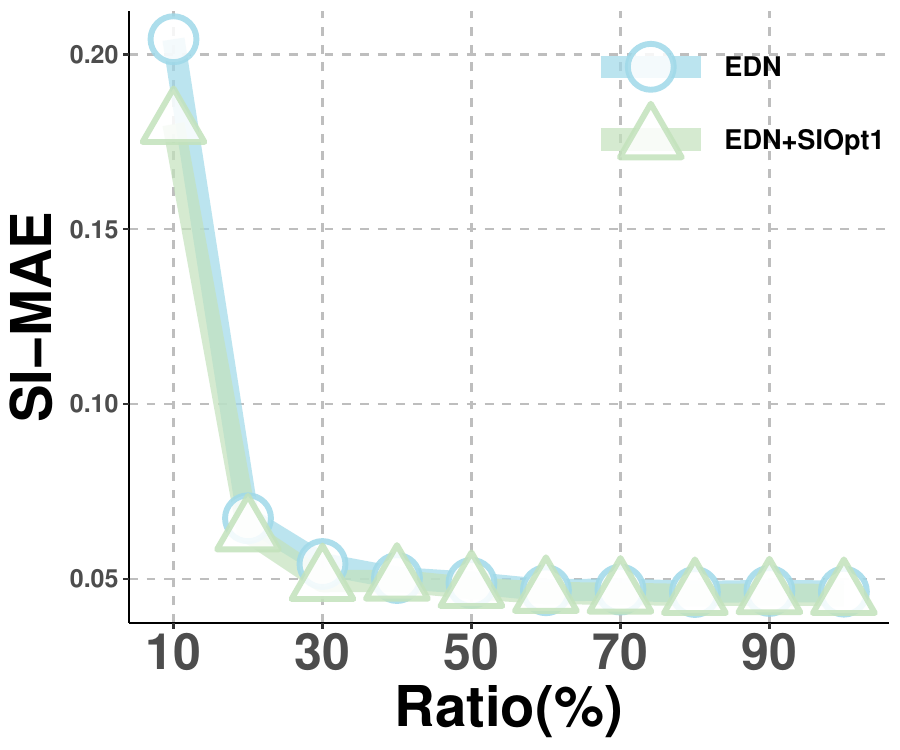}  
    \label{fig:EDN_msod_ratio_line-appendix}
    \end{minipage}
    }
    \subfigure[DUTS]{   
    \begin{minipage}{0.185\linewidth}
    \includegraphics[width=\linewidth]{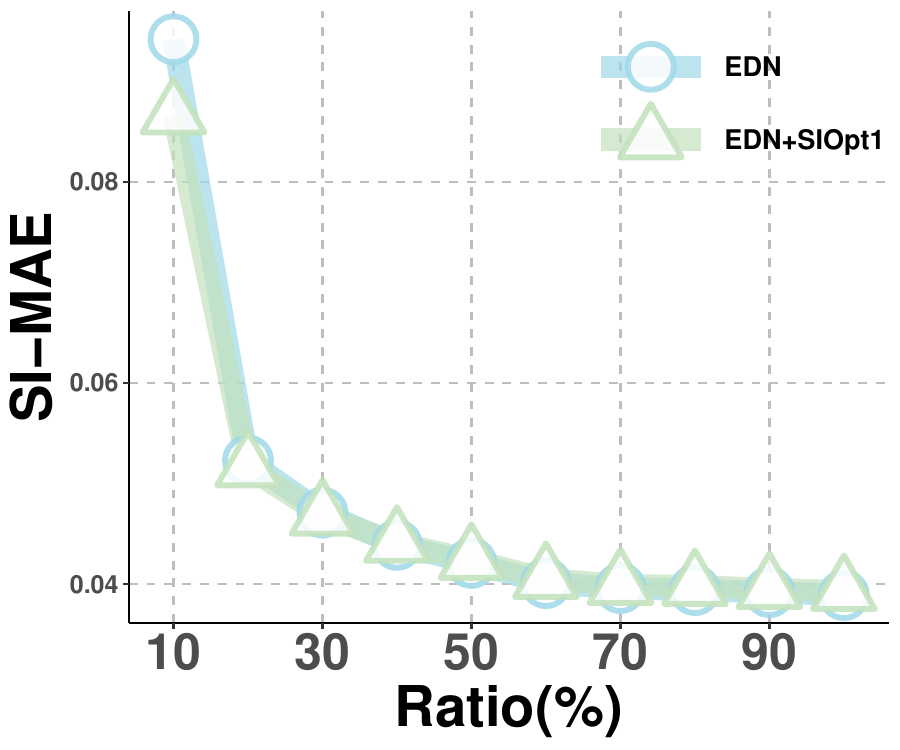}  
    \label{fig:EDN_DUTS_ratio_line-appendix}
    \end{minipage}
    }
    \subfigure[ECSSD]{   
    \begin{minipage}{0.185\linewidth}
    \includegraphics[width=\linewidth]{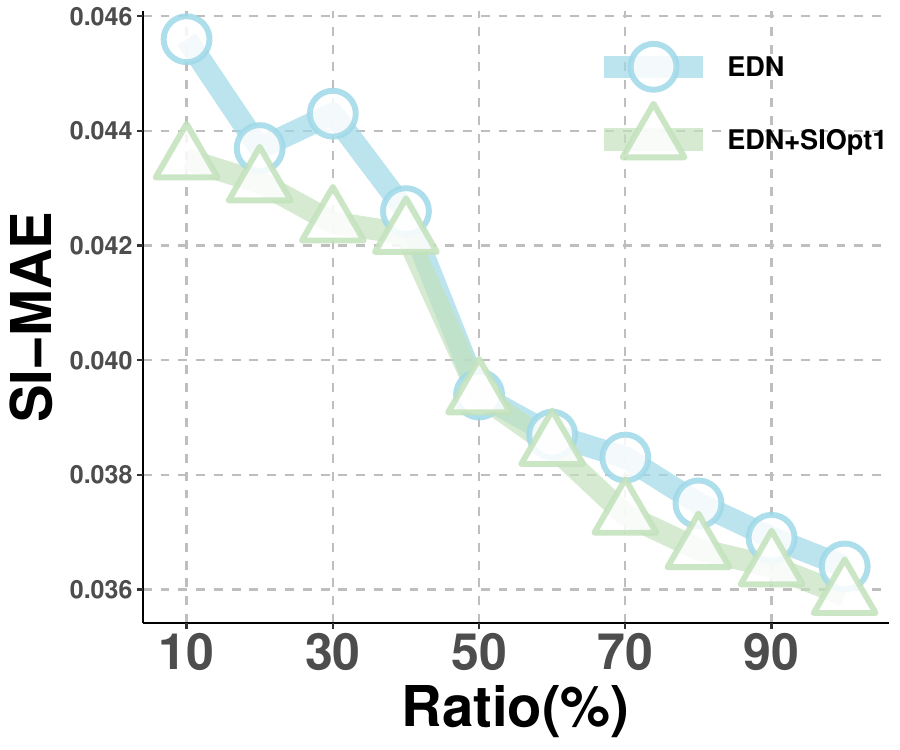}  
    \label{fig:EDN_ECSSD_ratio_line}
    \end{minipage}
    }
    \subfigure[DUT-OMRON]{   
    \begin{minipage}{0.185\linewidth}
    \includegraphics[width=\linewidth]{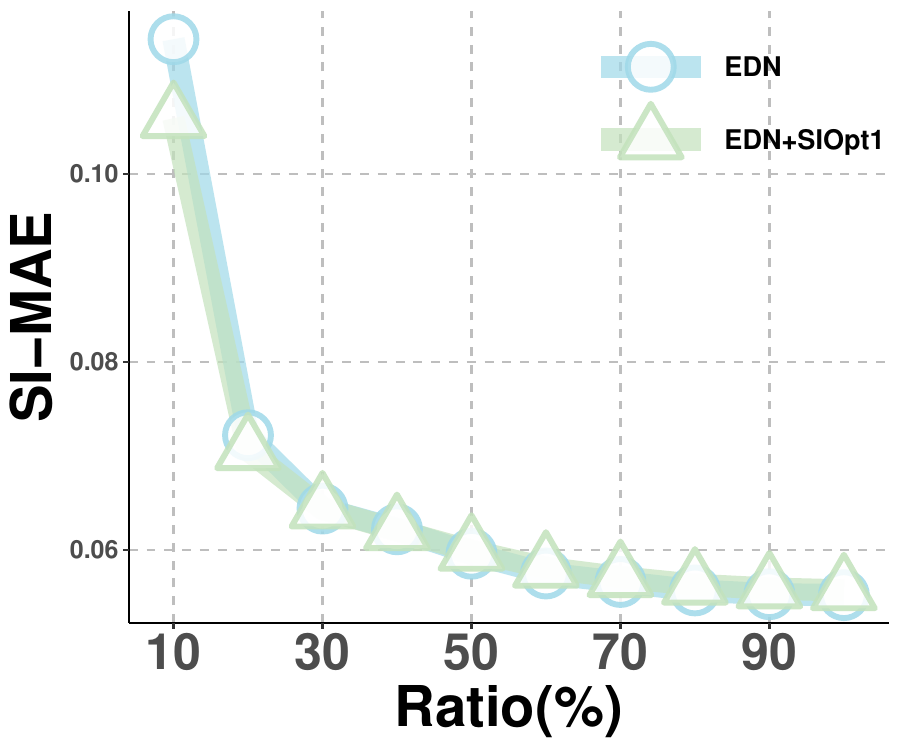}  
    \label{fig:EDN_DUT-OMRON_ratio_line}
    \end{minipage}
    }
    \subfigure[HKU-IS]{   
    \begin{minipage}{0.185\linewidth}
    \includegraphics[width=\linewidth]{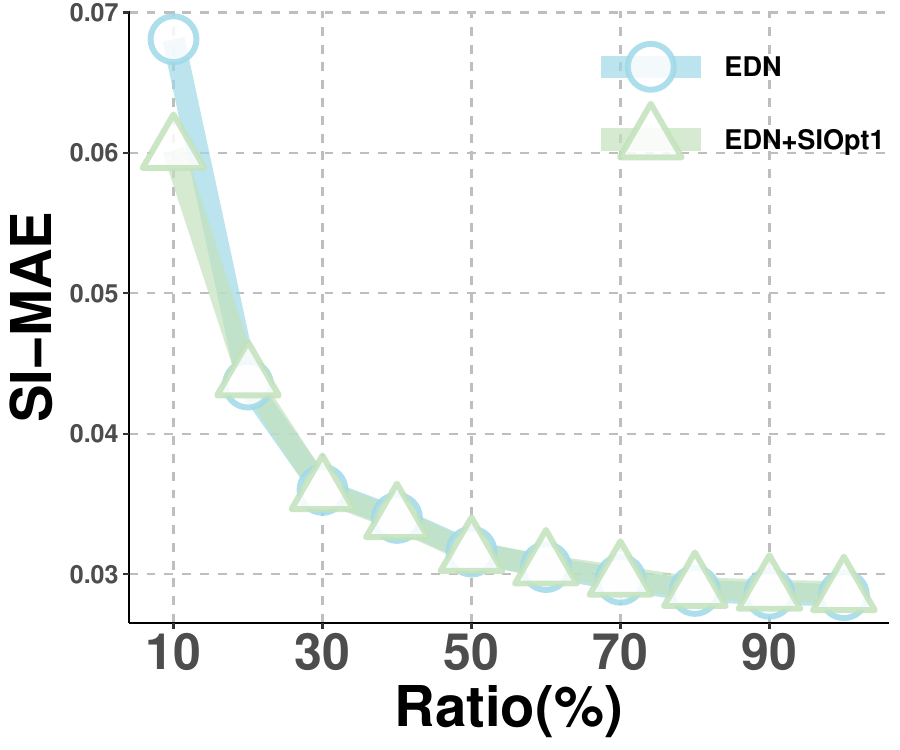}  
    \label{fig:EDN_HKU-IS_ratio_line}
    \end{minipage}
    }
    \caption{Fine-grained performance comparisons of varying object sizes, with EDN as the backbone.}
    \label{fig:fine-analysis-EDN-ratio-appendix}    
    \end{figure*}

\subsection{Fine-grained Performance Comparisons for Different Object Quantities} \label{number-fine-grained_appendix}
\noindent\textbf{Setups.} To further understand the effectiveness of our method, we conduct a fine-grained performance analysis based on the \textbf{number} of salient objects per image—a factor that can significantly influence detection outcomes in SOD tasks. This number-aware analysis is carried out on five widely adopted RGB SOD benchmark datasets: MSOD, DUTS, ECSSD, DUT-OMRON, and HKU-IS. We report results across five representative backbone models: PoolNet \cite{PoolNet,PoolNet+}, LDF \cite{LDF}, ICON \cite{ICON}, GateNet \cite{GateNet}, and EDN \cite{EDN}. Specifically, the number of salient objects in an image is determined by counting the connected components in the ground truth saliency mask. Based on this, we finally categorize each dataset into \textbf{six groups}, corresponding to the number of salient objects present: \([1], [2], [3], [4], [5], [6+]\), where the final group \([6+]\) indicates all images containing more than six salient objects.

\noindent\textbf{Results.} Comprehensive results are shown in \cref{fig:fine-analysis-Pool-num} (PoolNet), \cref{fig:fine-analysis-LDF-num} (LDF), \cref{fig:fine-analysis-ICON-num} (ICON), \cref{fig:fine-analysis-GateNet-num} (GateNet), \cref{fig:fine-analysis-EDN-num} (EDN), \cref{fig:fine-analysis-ADMNet-num} (ADMNet) and \cref{fig:fine-analysis-VST-num} (VST), respectively. We first observe that when an image contains only a single salient object, the performance of our method and the baselines is largely comparable. This observation is expected, as single-object scenarios tend to be easy to segment (consistent with our discussions in \cref{Revisting}), thereby minimizing the performance gap across different methods. However, as the number of salient objects increases, $\mathsf{SIOpt}$ framework could yield better performance than baselines in most cases, particularly on challenging datasets such as MSOD and HKU-IS. (e.g., MSOD and HKU-IS datasets). For instance, on the MSOD dataset, applying $\mathsf{SIOpt1}$ to the EDN backbone leads to a sharp improvement of approximately $0.007$ in $\SMAE$ for images containing two or more salient objects. These trends further underscore the effectiveness of our framework in handling complex scenes with multiple salient objects.

\begin{figure*}[ht]
    \centering
    \subfigure[MSOD]{   
    \begin{minipage}{0.185\linewidth}
    \includegraphics[width=\linewidth]{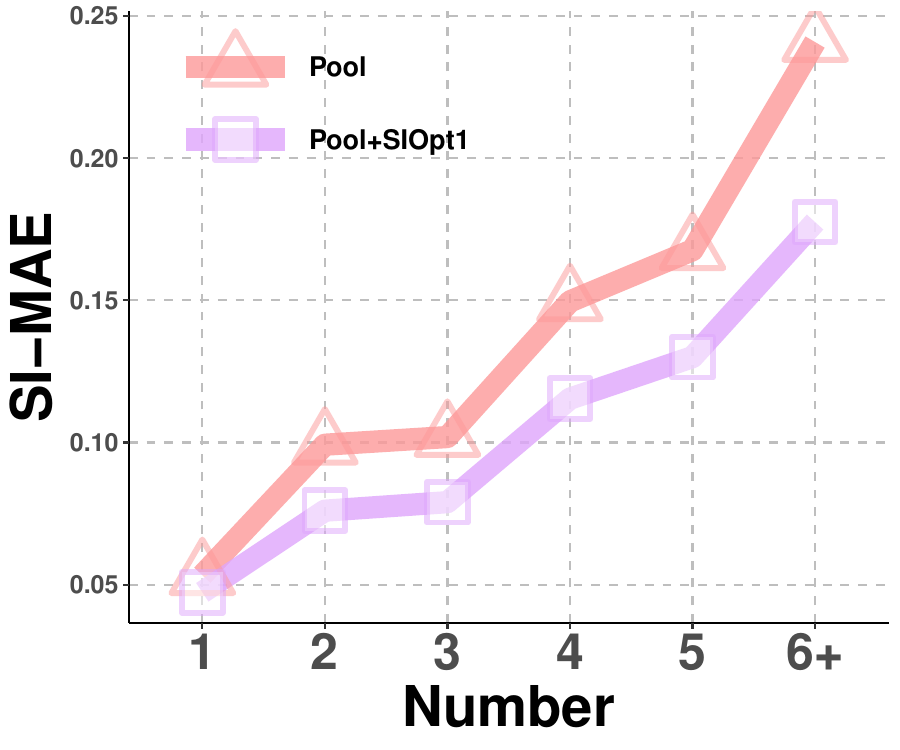}  
    \label{fig:Pool_msod_num_line}
    \end{minipage}
    }
    \subfigure[DUTS]{   
    \begin{minipage}{0.185\linewidth}
    \includegraphics[width=\linewidth]{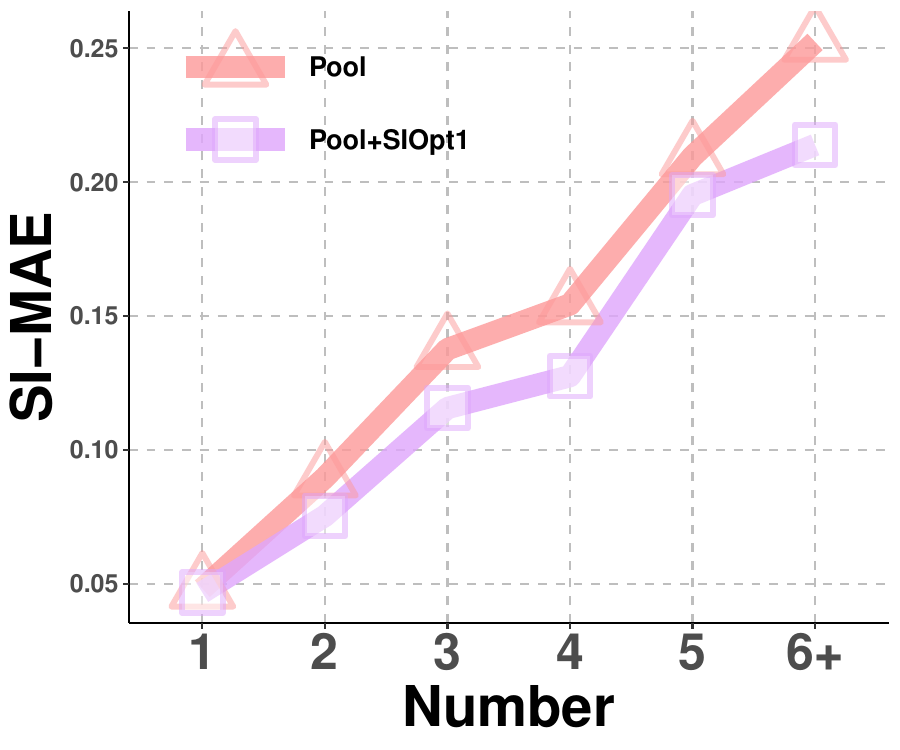}  
    \label{fig:Pool_DUTS_num_line}
    \end{minipage}
    }
    \subfigure[ECSSD]{   
    \begin{minipage}{0.185\linewidth}
    \includegraphics[width=\linewidth]{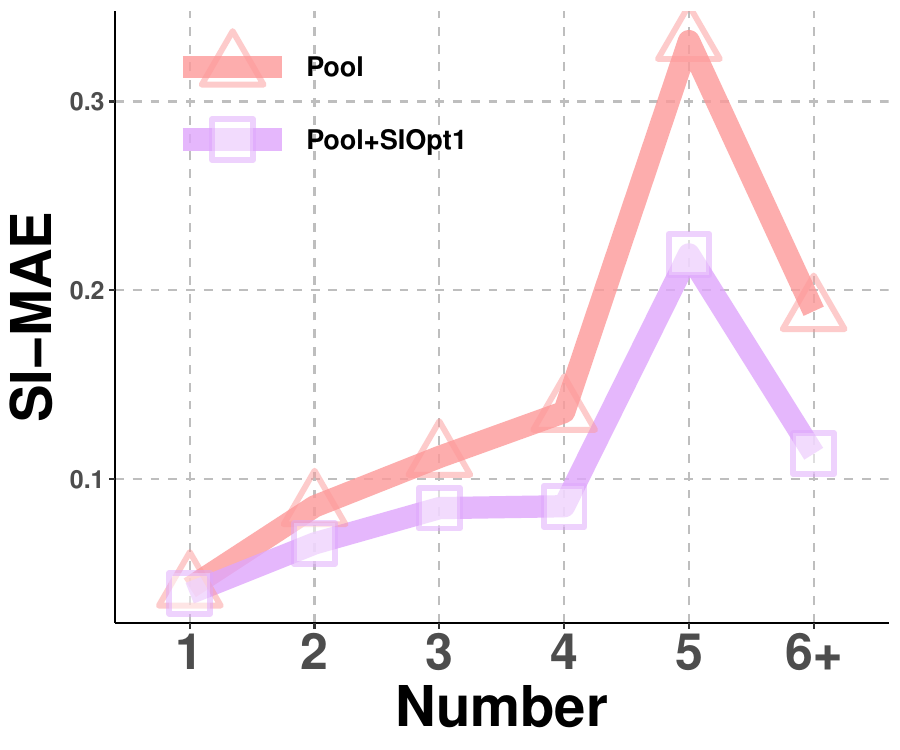}  
    \label{fig:Pool_ECSSD_num_line}
    \end{minipage}
    }
    \subfigure[DUT-OMRON]{   
    \begin{minipage}{0.185\linewidth}
    \includegraphics[width=\linewidth]{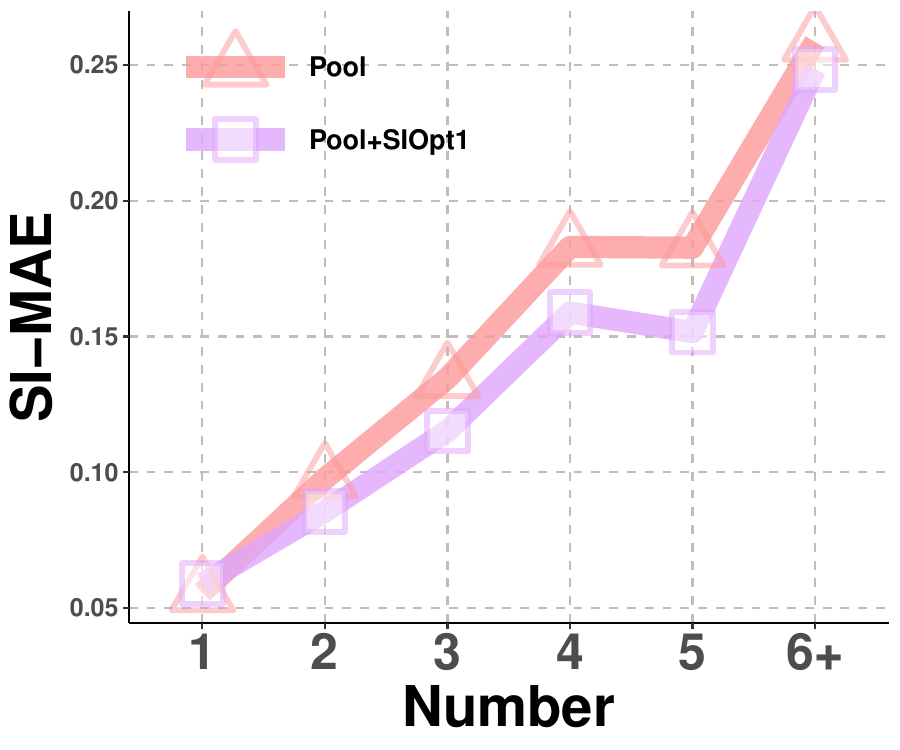}  
    \label{fig:Pool_DUT-OMRON_num_line}
    \end{minipage}
    }
    \subfigure[HKU-IS]{   
    \begin{minipage}{0.185\linewidth}
    \includegraphics[width=\linewidth]{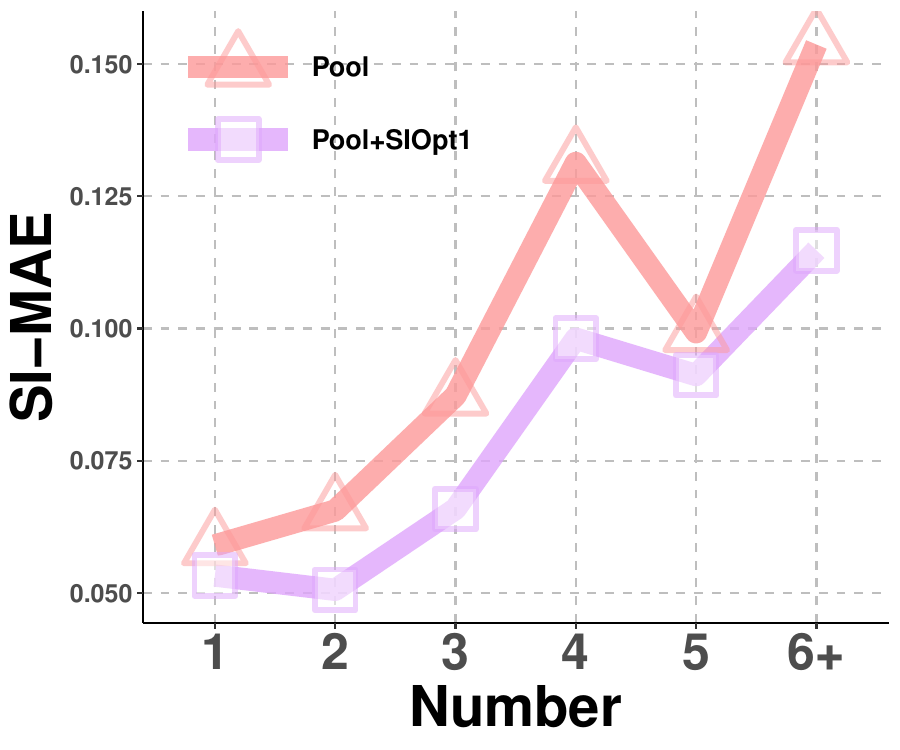}  
    \label{fig:Pool_HKU-IS_num_line}
    \end{minipage}
    }
    \caption{Fine-grained performance comparisons under different object numbers, with PoolNet as the backbone.}    
    \label{fig:fine-analysis-Pool-num}    
    \end{figure*}

\begin{figure*}[ht]
\centering
\subfigure[MSOD]{   
\begin{minipage}{0.185\linewidth}
\includegraphics[width=\linewidth]{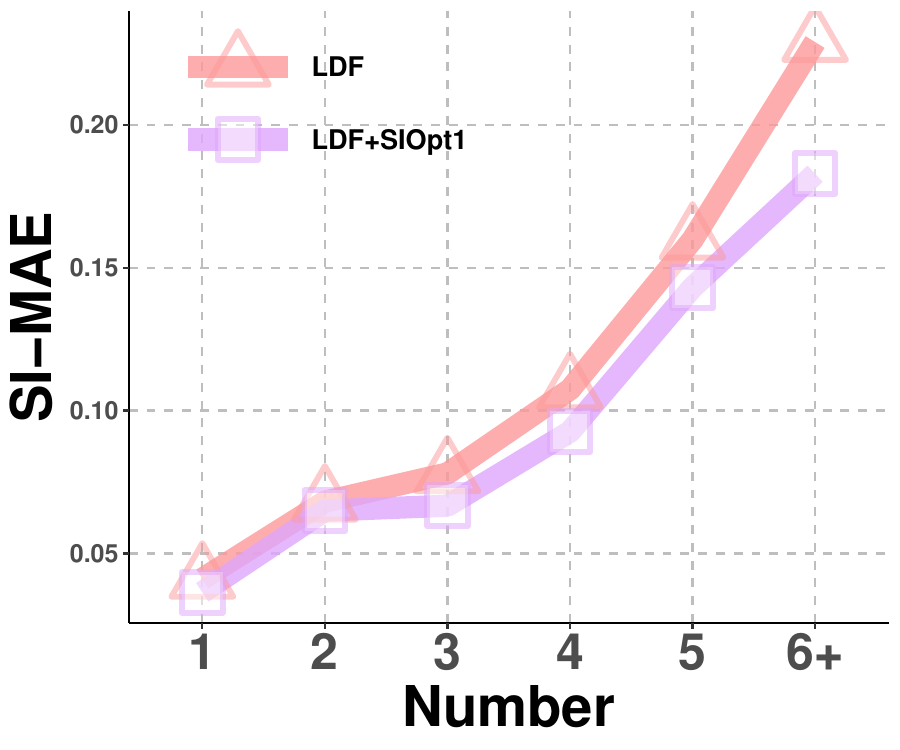}  
\label{fig:LDF_msod_num_line}
\end{minipage}
}
\subfigure[DUTS]{   
\begin{minipage}{0.185\linewidth}
\includegraphics[width=\linewidth]{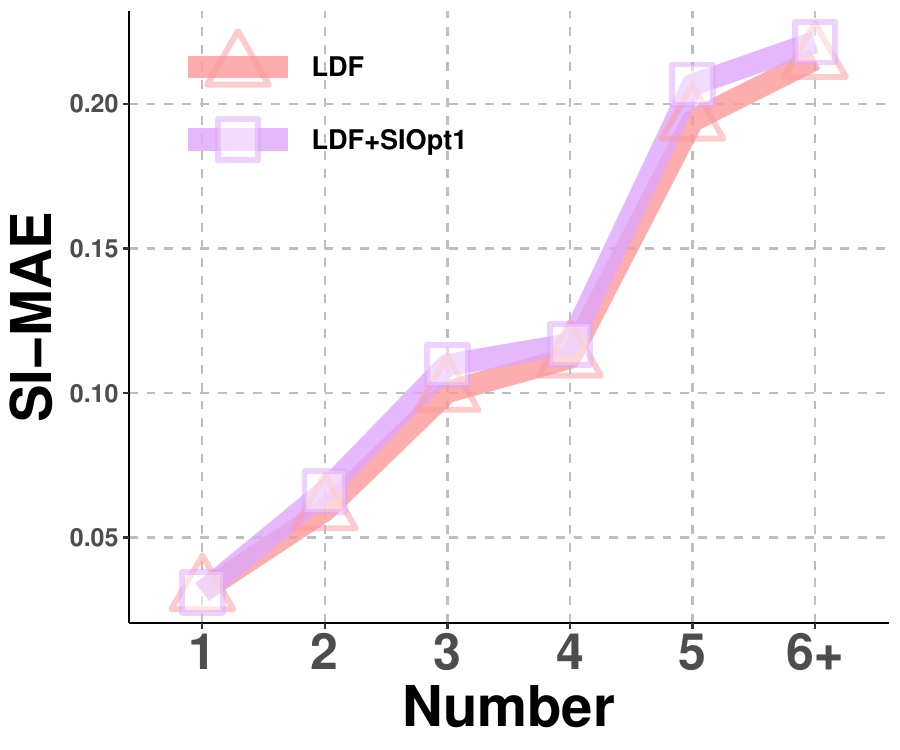}  
\label{fig:LDF_DUTS_num_line}
\end{minipage}
}
\subfigure[ECSSD]{   
\begin{minipage}{0.185\linewidth}
\includegraphics[width=\linewidth]{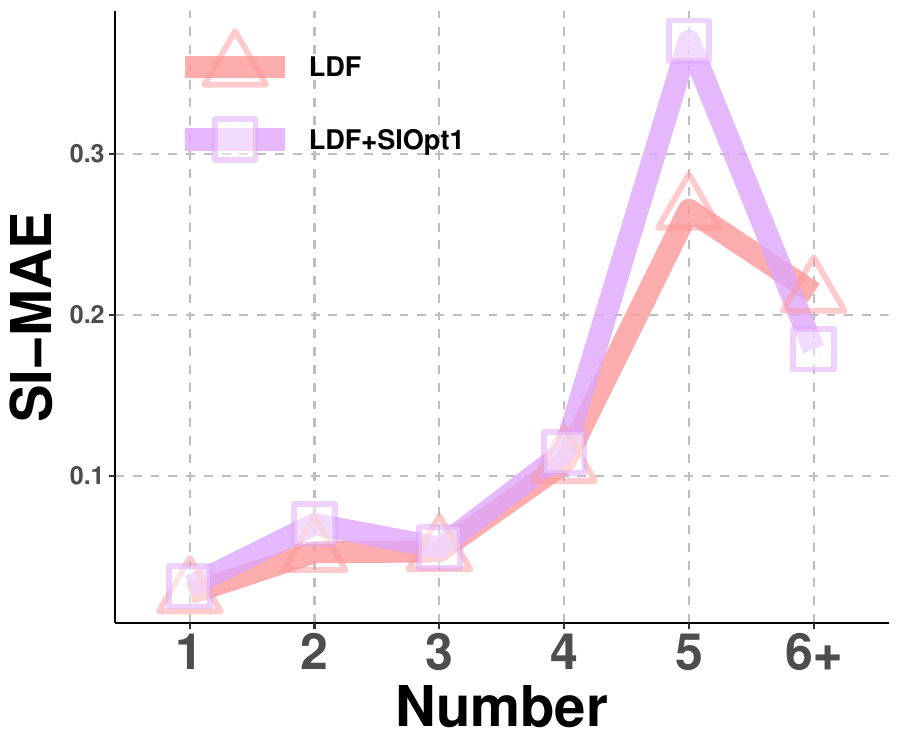}  
\label{fig:LDF_ECSSD_num_line}
\end{minipage}
}
\subfigure[DUT-OMRON]{   
\begin{minipage}{0.185\linewidth}
\includegraphics[width=\linewidth]{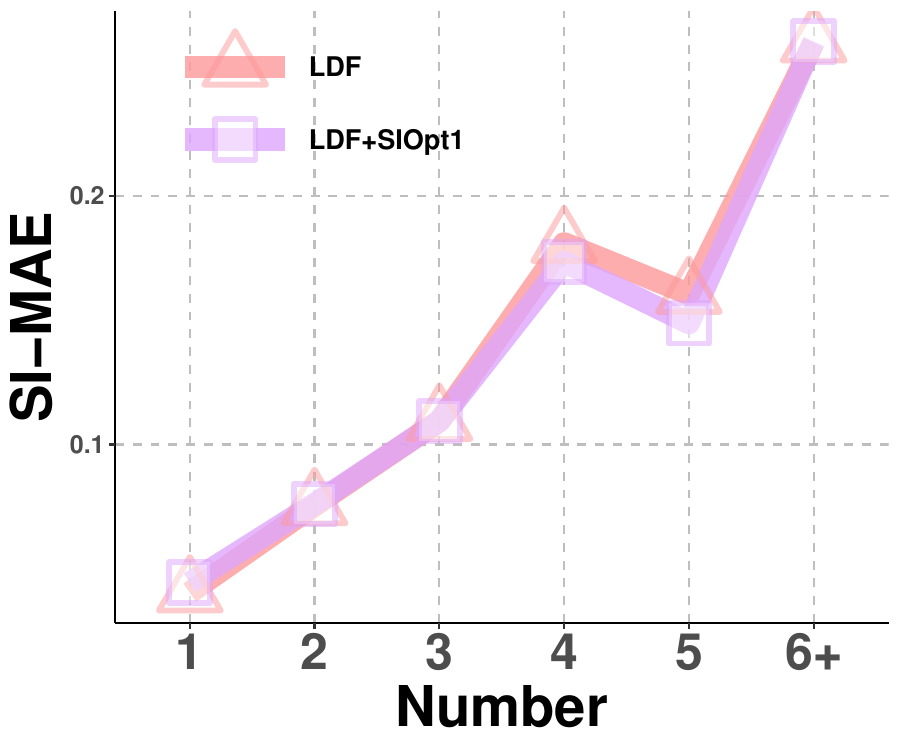}  
\label{fig:LDF_DUT-OMRON_num_line}
\end{minipage}
}
\subfigure[HKU-IS]{   
\begin{minipage}{0.185\linewidth}
\includegraphics[width=\linewidth]{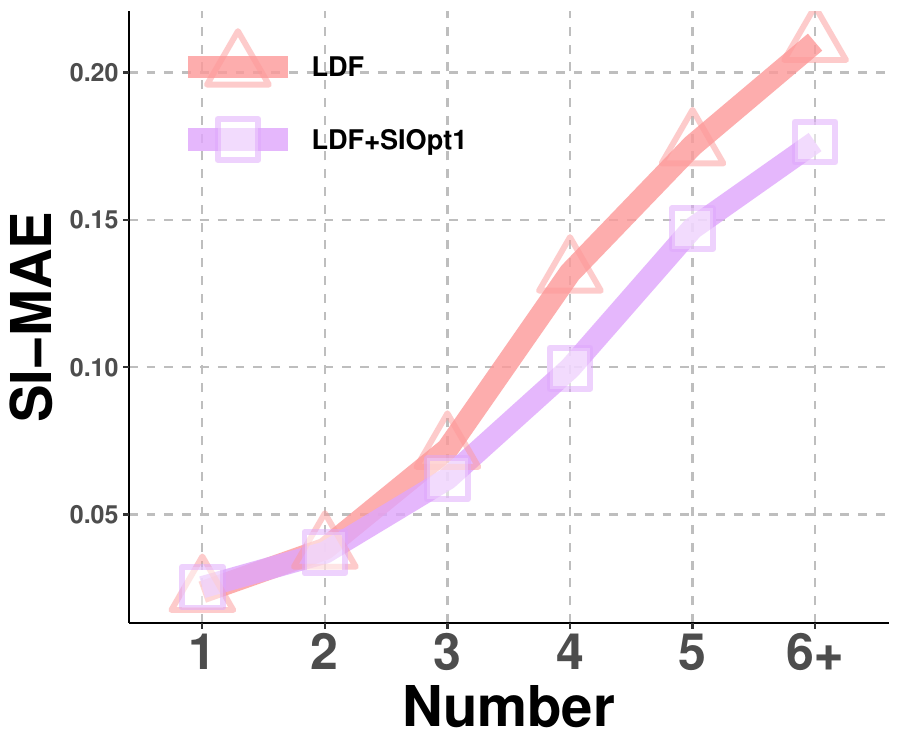}  
\label{fig:LDF_HKU-IS_num_line}
\end{minipage}
}
\caption{Fine-grained performance comparisons under different object numbers, with LDF as the backbone.}    
\label{fig:fine-analysis-LDF-num}    
\end{figure*}

\begin{figure*}[ht]
    \centering
    \subfigure[MSOD]{   
    \begin{minipage}{0.185\linewidth}
    \includegraphics[width=\linewidth]{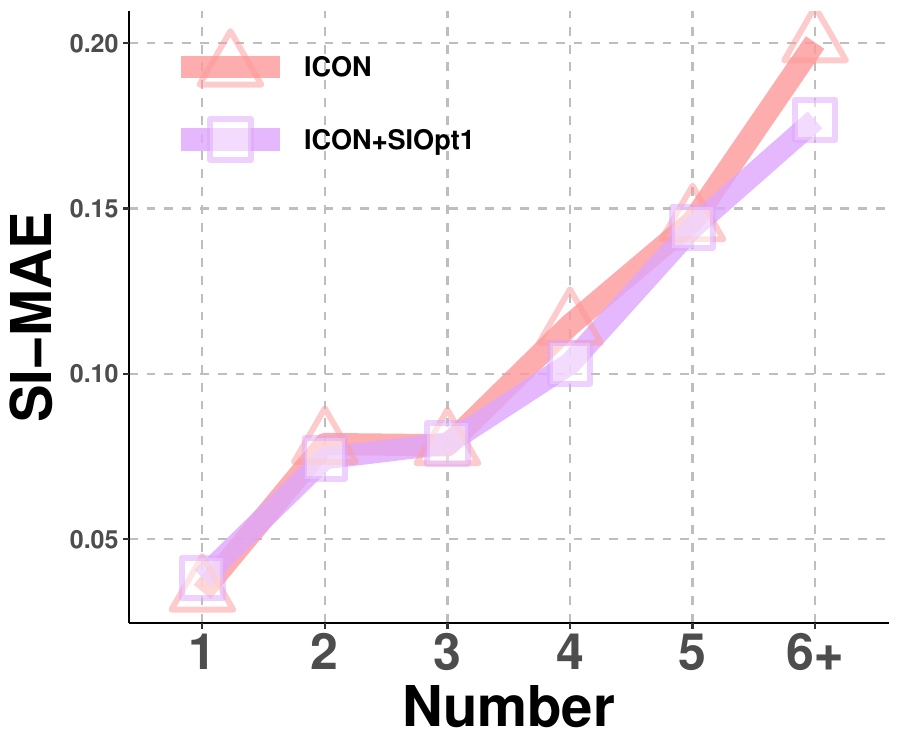}  
    \label{fig:ICON_msod_num_line}
    \end{minipage}
    }
    \subfigure[DUTS]{   
    \begin{minipage}{0.185\linewidth}
    \includegraphics[width=\linewidth]{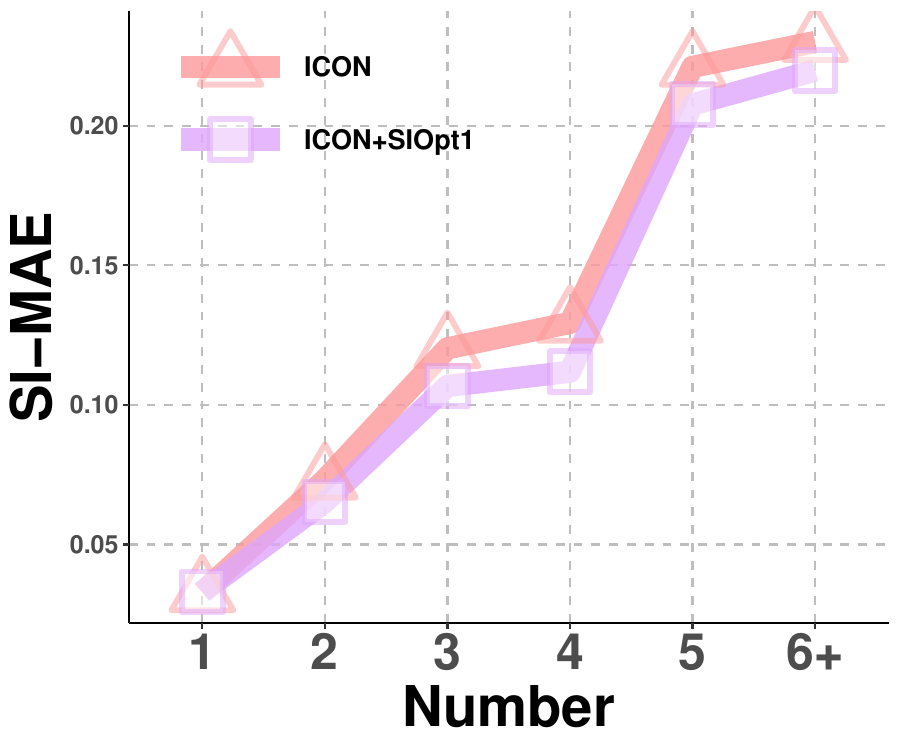}  
    \label{fig:ICON_DUTS_num_line}
    \end{minipage}
    }
    \subfigure[ECSSD]{   
    \begin{minipage}{0.185\linewidth}
    \includegraphics[width=\linewidth]{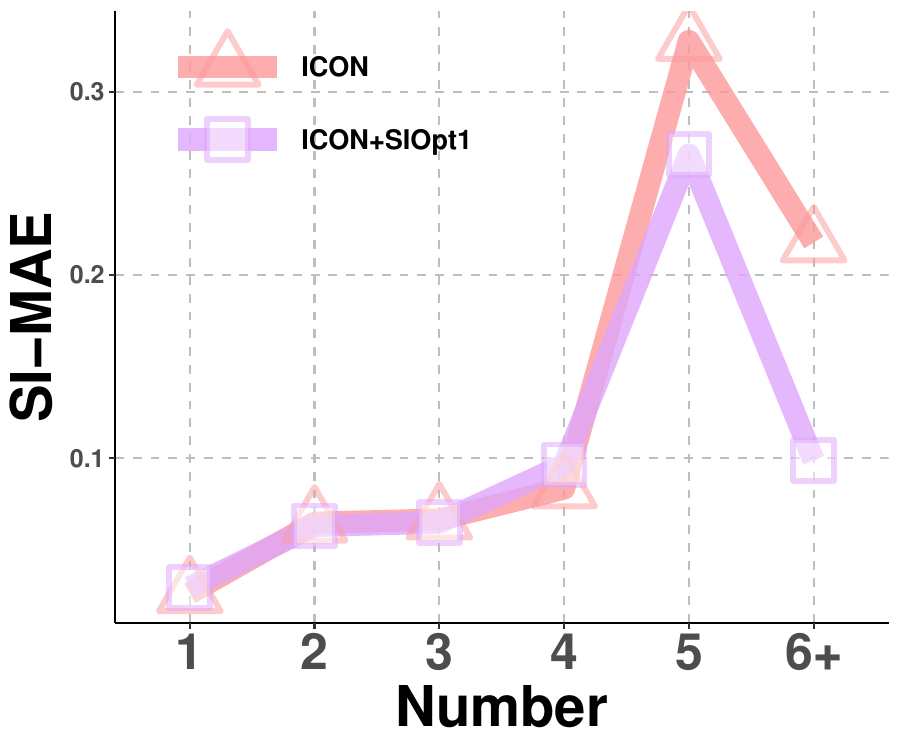}  
    \label{fig:ICON_ECSSD_num_line}
    \end{minipage}
    }
    \subfigure[DUT-OMRON]{   
    \begin{minipage}{0.185\linewidth}
    \includegraphics[width=\linewidth]{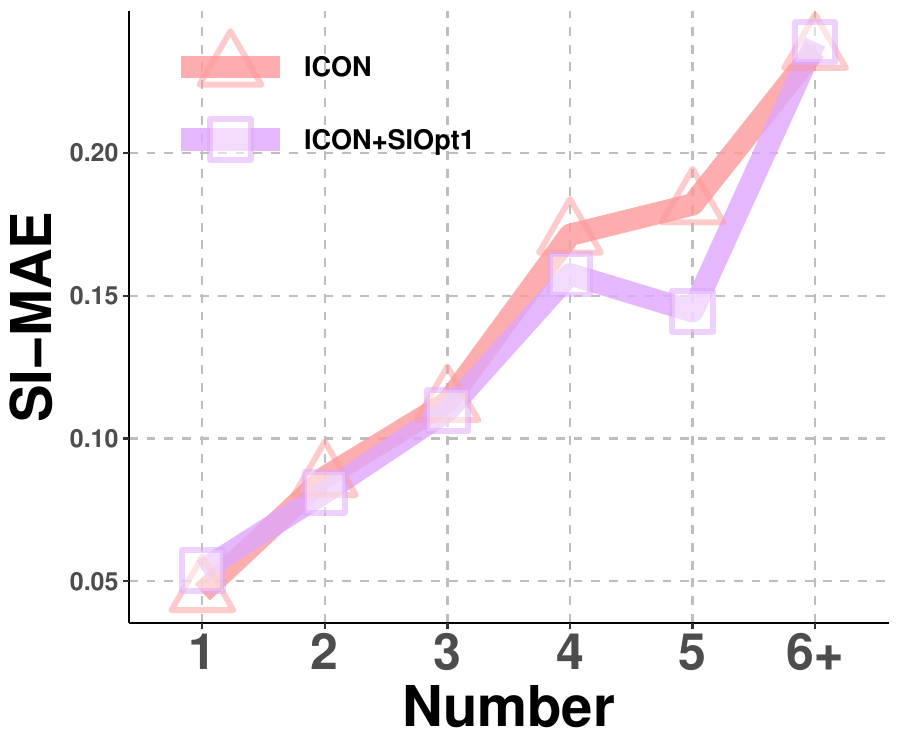}  
    \label{fig:ICON_DUT-OMRON_num_line}
    \end{minipage}
    }
    \subfigure[HKU-IS]{   
    \begin{minipage}{0.185\linewidth}
    \includegraphics[width=\linewidth]{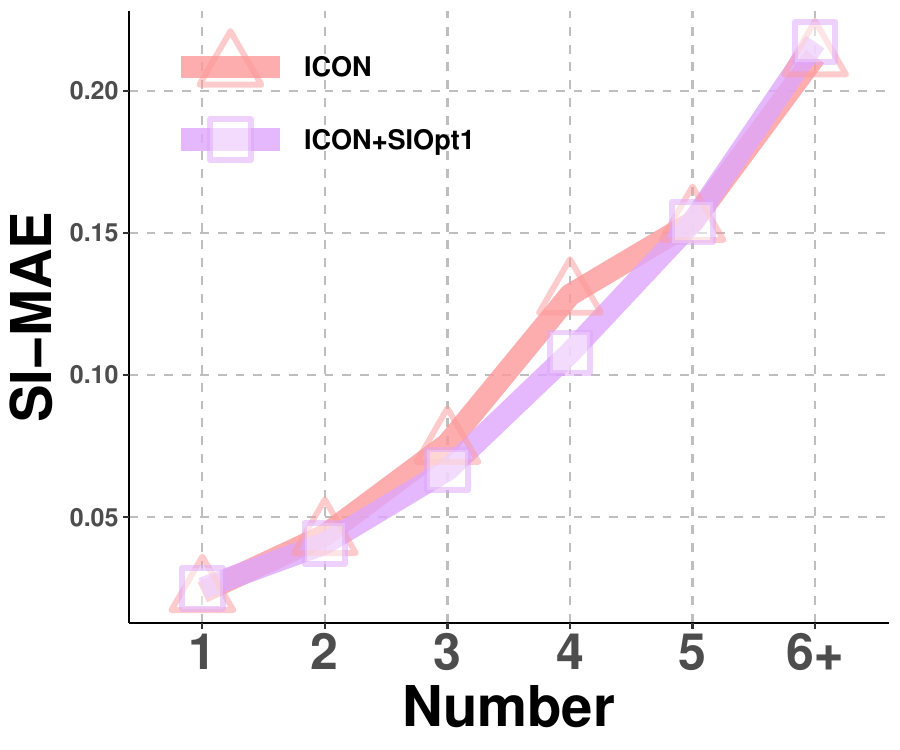}  
    \label{fig:ICON_HKU-IS_num_line}
    \end{minipage}
    }
    \caption{Fine-grained performance comparisons under different object numbers, with ICON as the backbone.}    
    \label{fig:fine-analysis-ICON-num}    
    \end{figure*}

    \begin{figure*}[!t]
        \centering
        \subfigure[MSOD]{   
        \begin{minipage}{0.185\linewidth}
        \includegraphics[width=\linewidth]{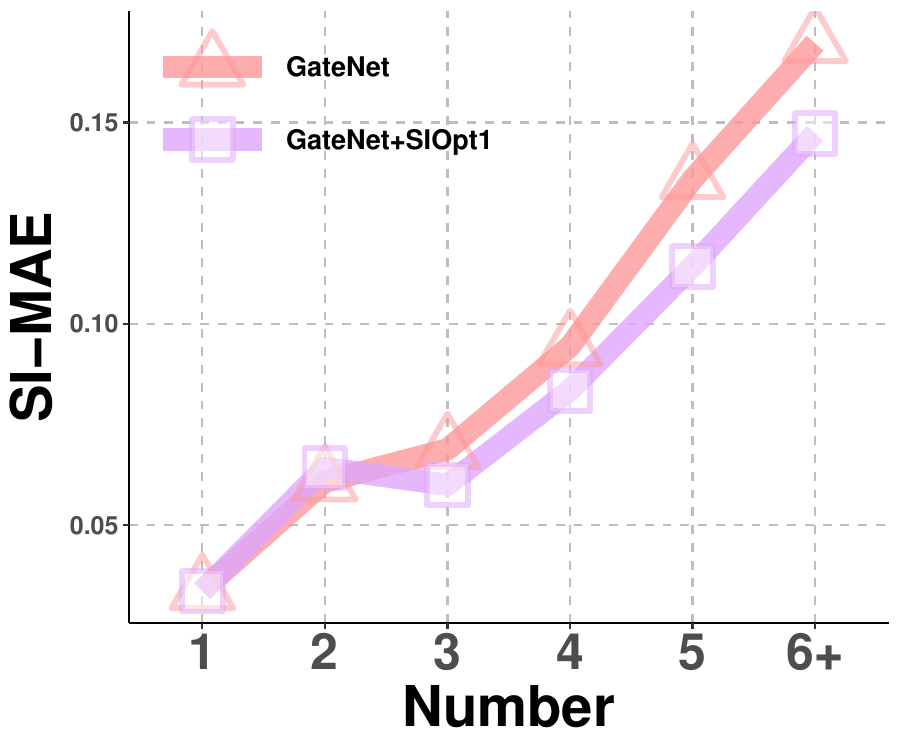}  
        \label{fig:GateNet_msod_num_line}
        \end{minipage}
        }
        \subfigure[DUTS]{   
        \begin{minipage}{0.185\linewidth}
        \includegraphics[width=\linewidth]{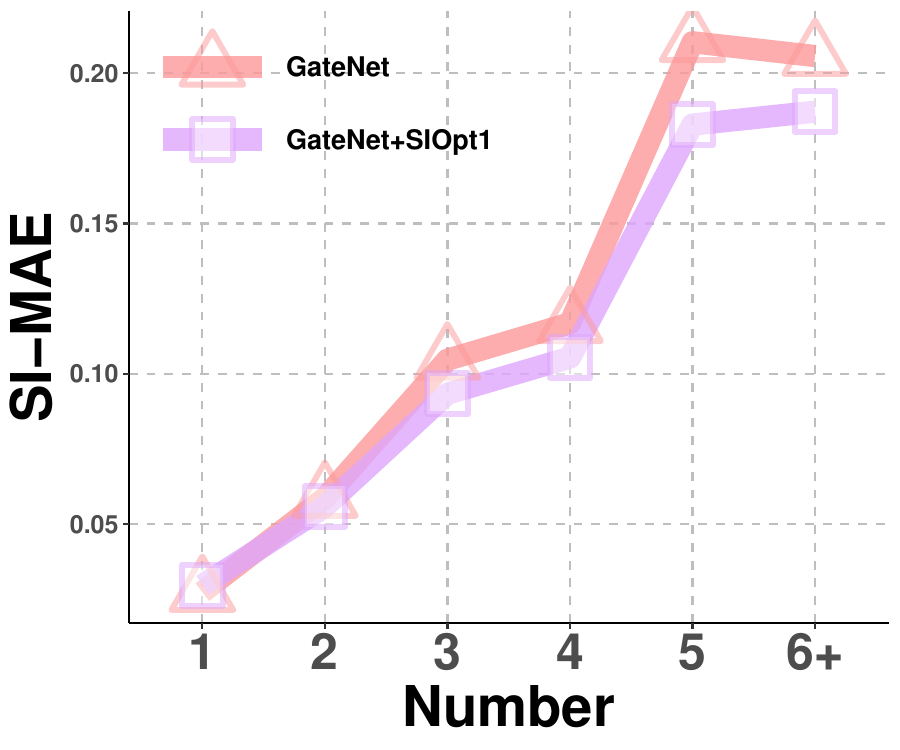}  
        \label{fig:GateNet_DUTS_num_line}
        \end{minipage}
        }
        \subfigure[ECSSD]{   
        \begin{minipage}{0.185\linewidth}
        \includegraphics[width=\linewidth]{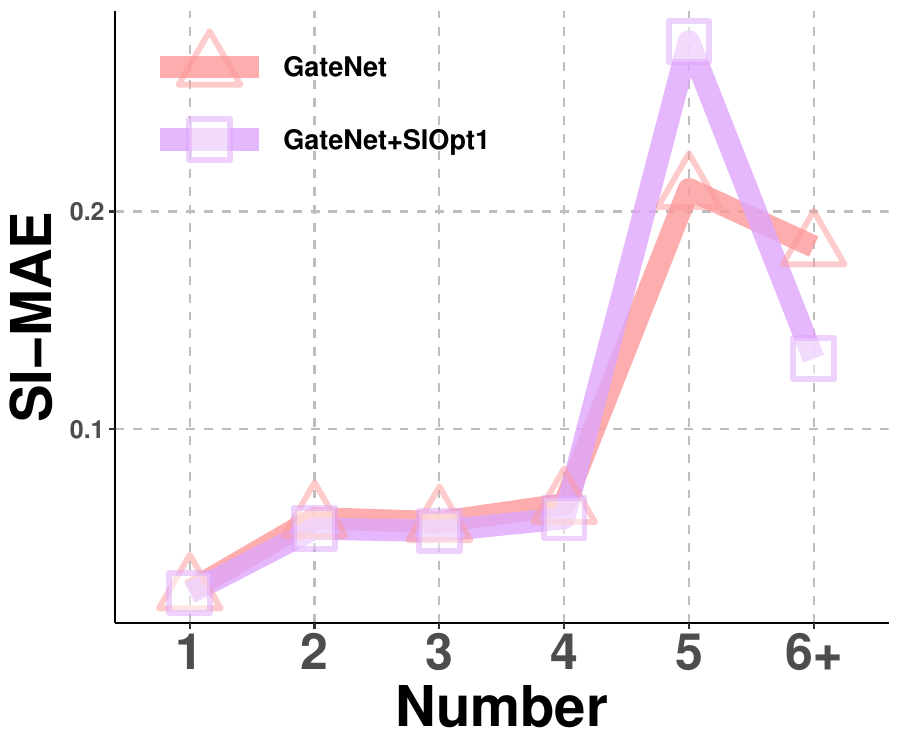}  
        \label{fig:GateNet_ECSSD_num_line}
        \end{minipage}
        }
        \subfigure[DUT-OMRON]{   
        \begin{minipage}{0.185\linewidth}
        \includegraphics[width=\linewidth]{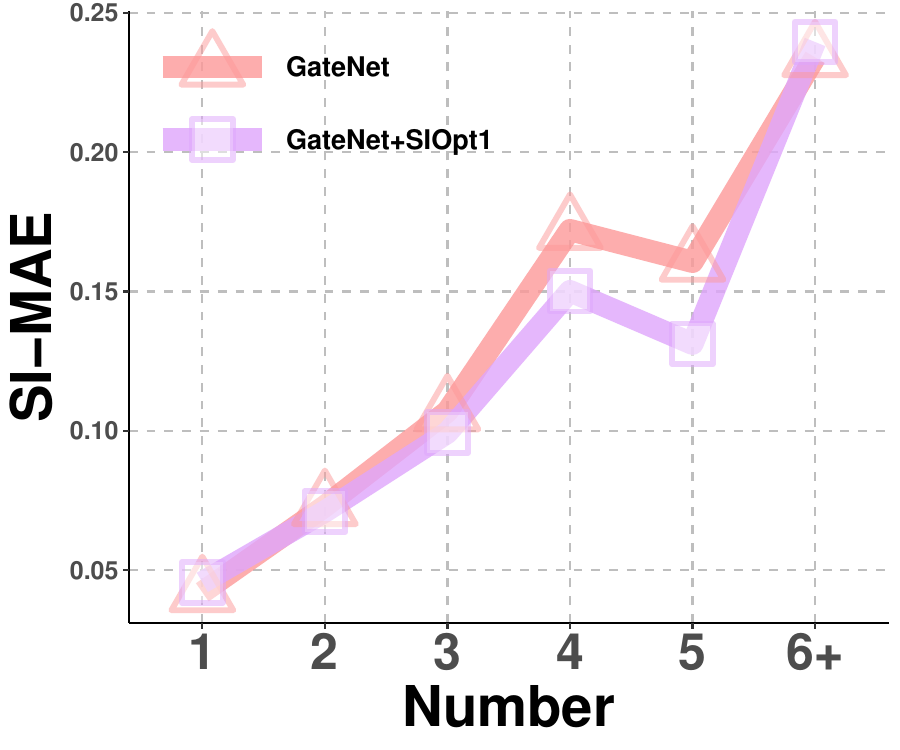}  
        \label{fig:GateNet_DUT-OMRON_num_line}
        \end{minipage}
        }
        \subfigure[HKU-IS]{   
        \begin{minipage}{0.185\linewidth}
        \includegraphics[width=\linewidth]{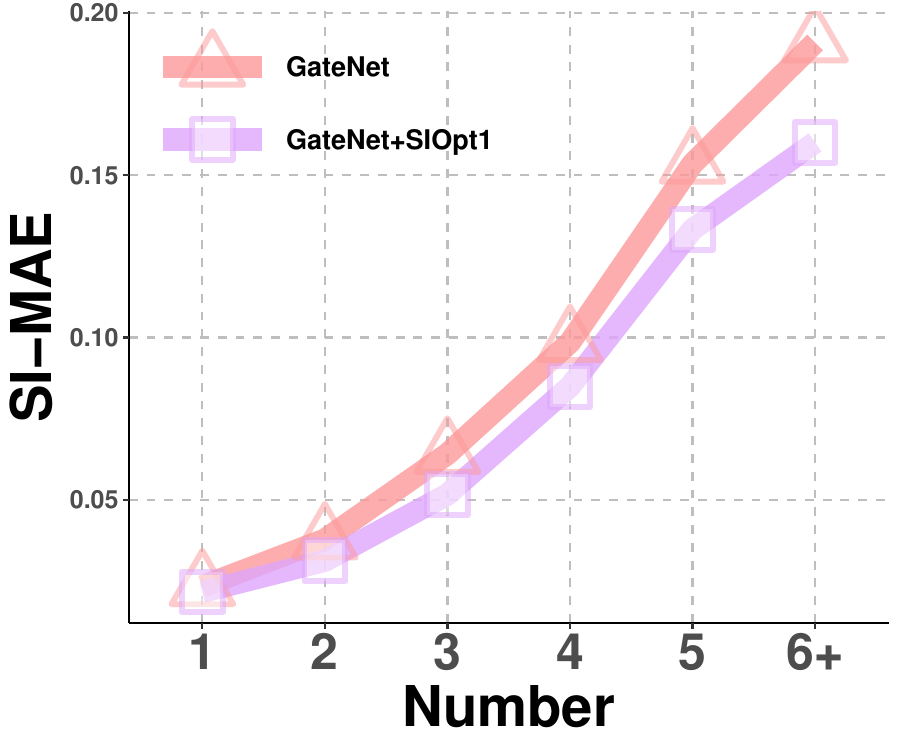}  
        \label{fig:GateNet_HKU-IS_num_line}
        \end{minipage}
        }
        \caption{Fine-grained performance comparisons under different object numbers, with GateNet as the backbone.}    
        \label{fig:fine-analysis-GateNet-num}    
        \end{figure*}

\begin{figure*}[!t]
    \centering
    \subfigure[MSOD]{   
    \begin{minipage}{0.185\linewidth}
    \includegraphics[width=\linewidth]{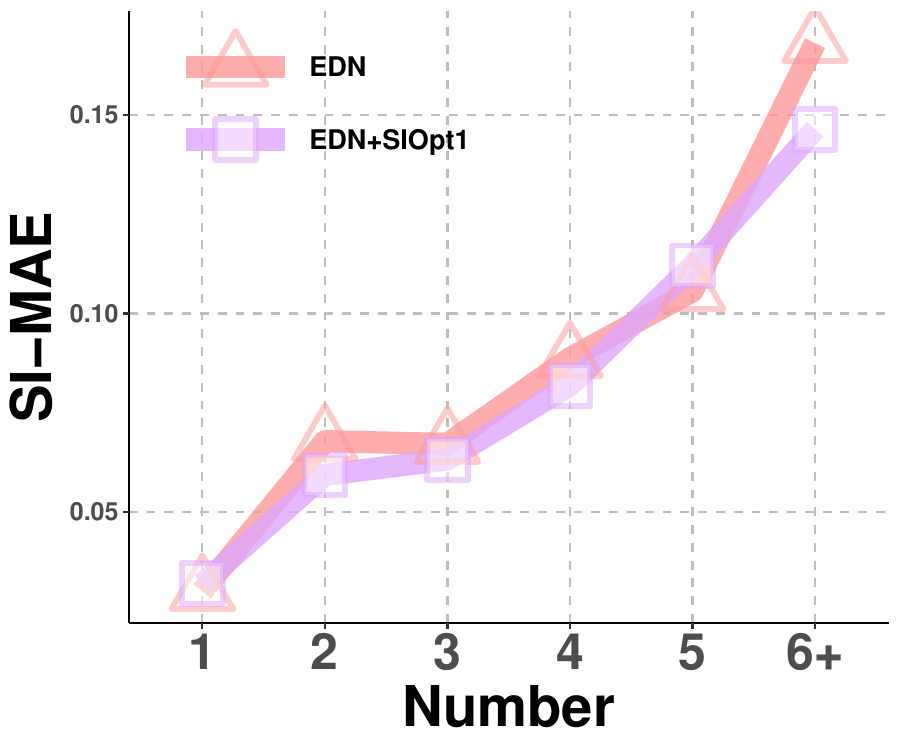}  
    \label{fig:EDN_msod_num_line}
    \end{minipage}
    }
    \subfigure[DUTS]{   
    \begin{minipage}{0.185\linewidth}
    \includegraphics[width=\linewidth]{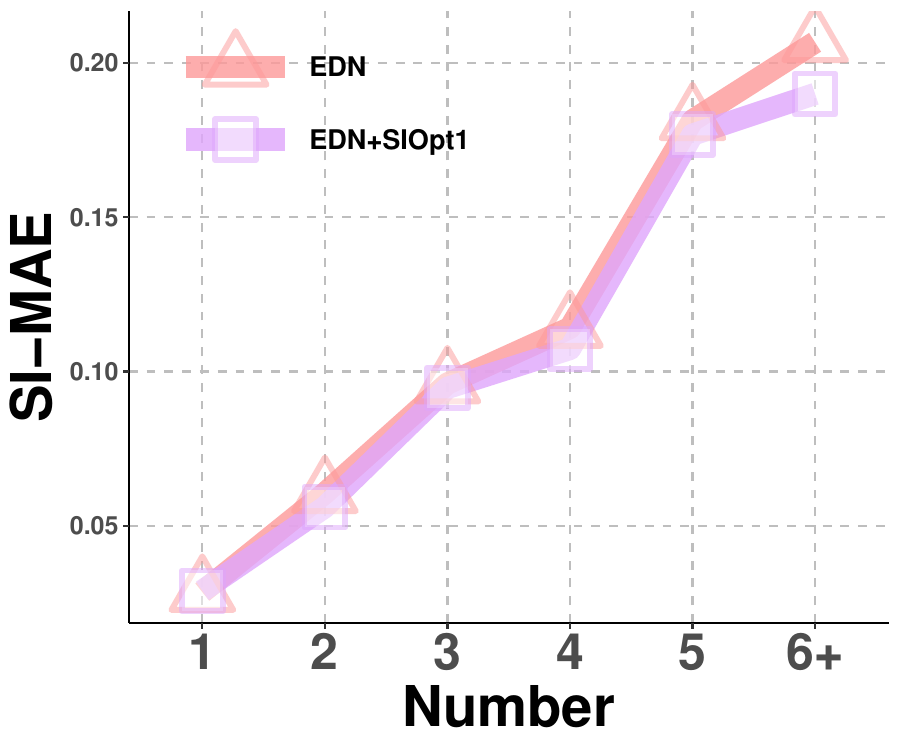}  
    \label{fig:EDN_DUTS_num_line}
    \end{minipage}
    }
    \subfigure[ECSSD]{   
    \begin{minipage}{0.185\linewidth}
    \includegraphics[width=\linewidth]{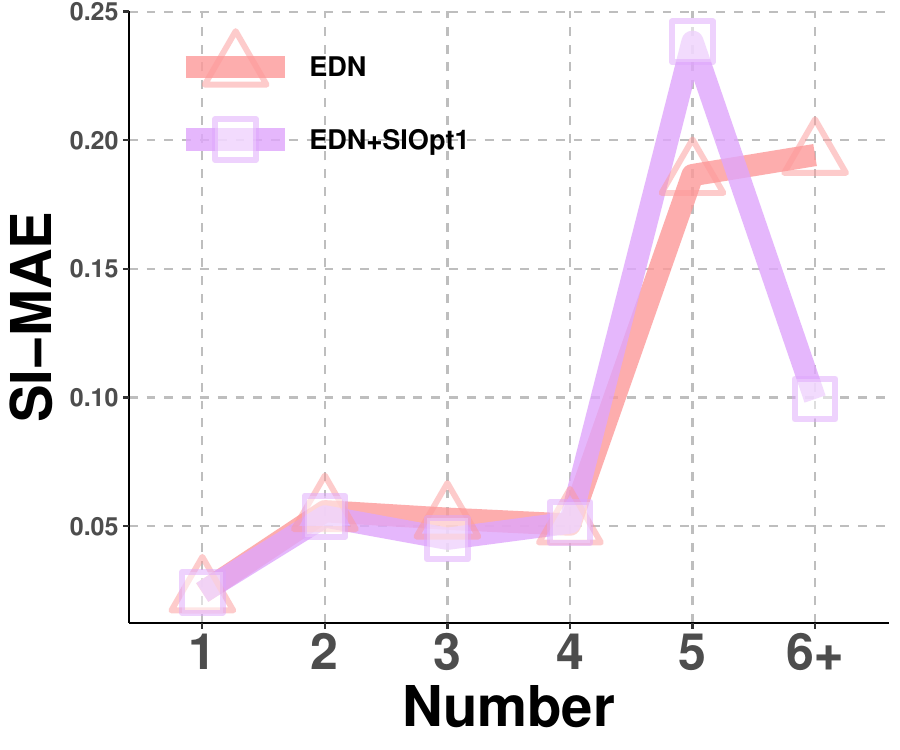}  
    \label{fig:EDN_ECSSD_num_line}
    \end{minipage}
    }
    \subfigure[DUT-OMRON]{   
    \begin{minipage}{0.185\linewidth}
    \includegraphics[width=\linewidth]{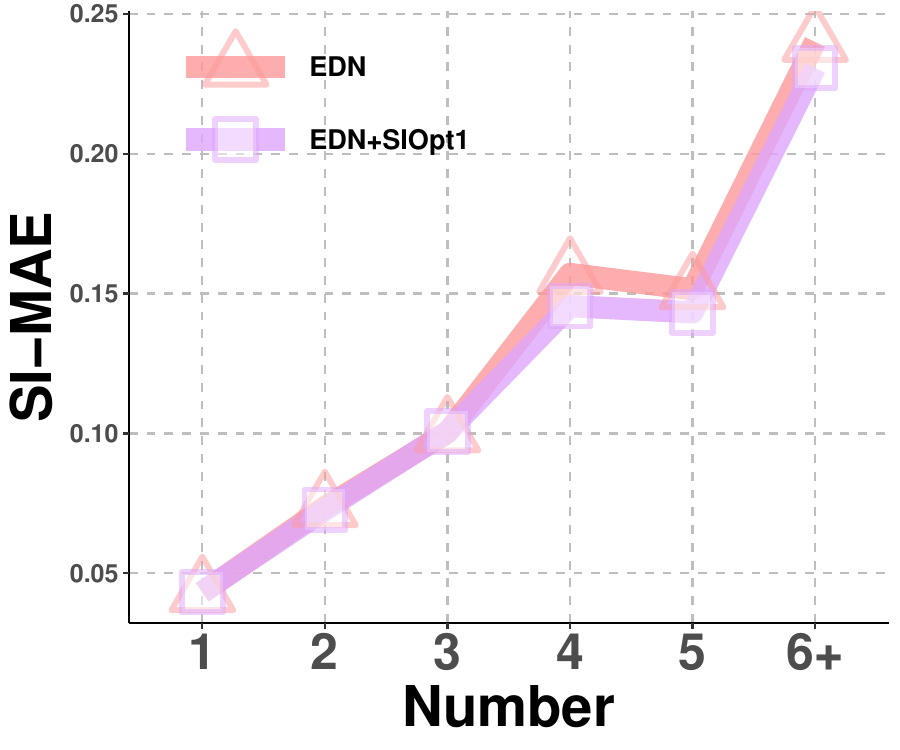}  
    \label{fig:EDN_DUT-OMRON_num_line}
    \end{minipage}
    }
    \subfigure[HKU-IS]{   
    \begin{minipage}{0.185\linewidth}
    \includegraphics[width=\linewidth]{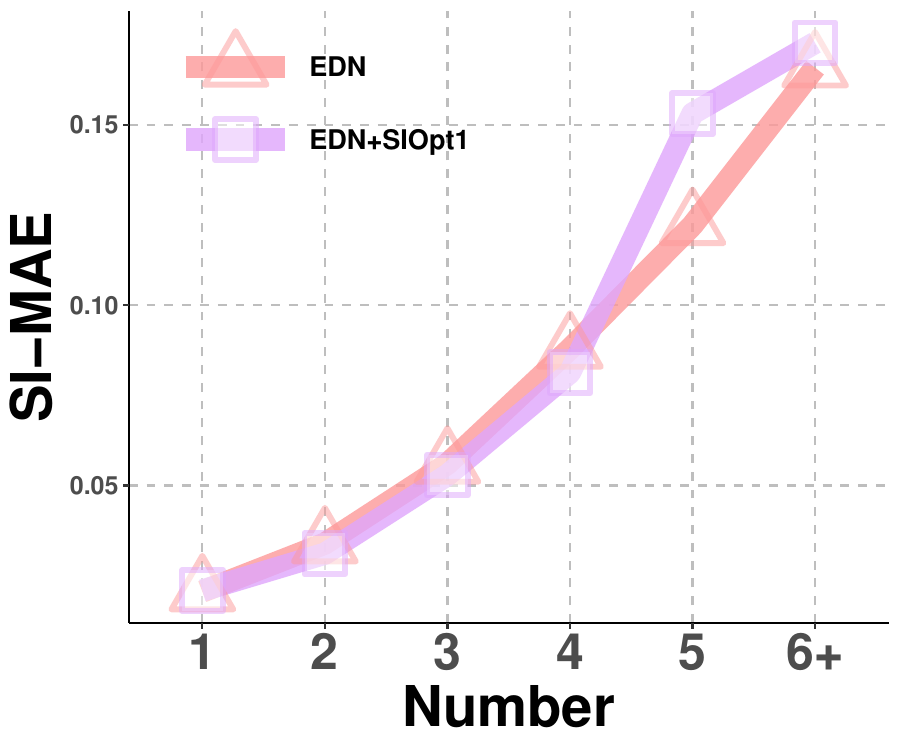}  
    \label{fig:EDN_HKU-IS_num_line}
    \end{minipage}
    }
    \caption{Fine-grained performance comparisons under different object numbers, with EDN as the backbone.}    
    \label{fig:fine-analysis-EDN-num}    
    \end{figure*}

    \begin{figure*}[!t]
        \centering
        \subfigure[DUTS]{   
        \begin{minipage}{0.21\linewidth}
        \includegraphics[width=\linewidth]{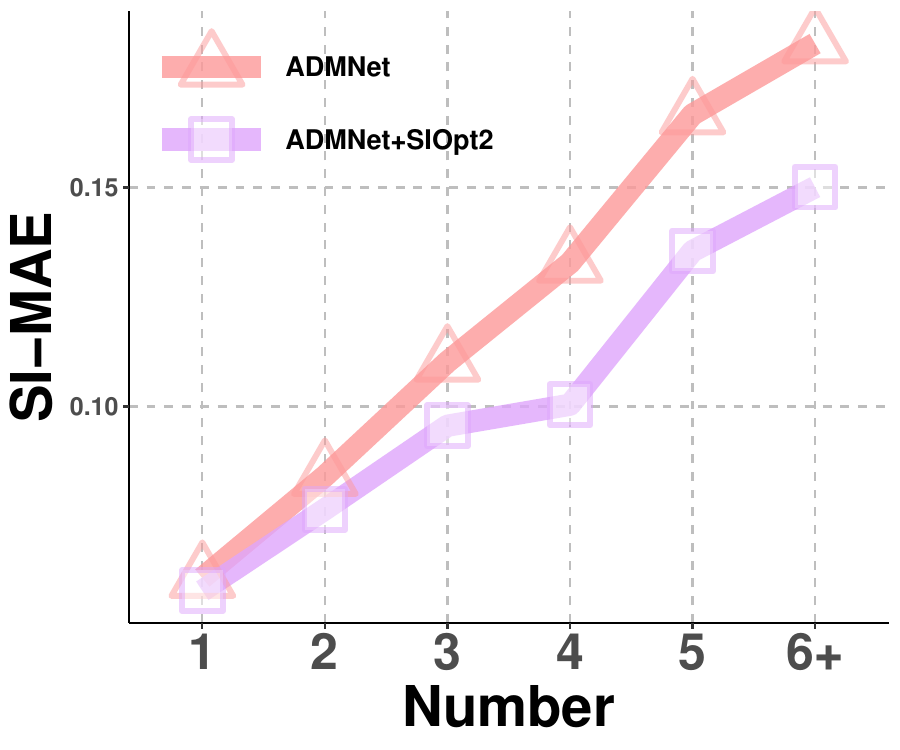}  
        \end{minipage}
        }
        \subfigure[DUTS]{   
        \begin{minipage}{0.21\linewidth}
        \includegraphics[width=\linewidth]{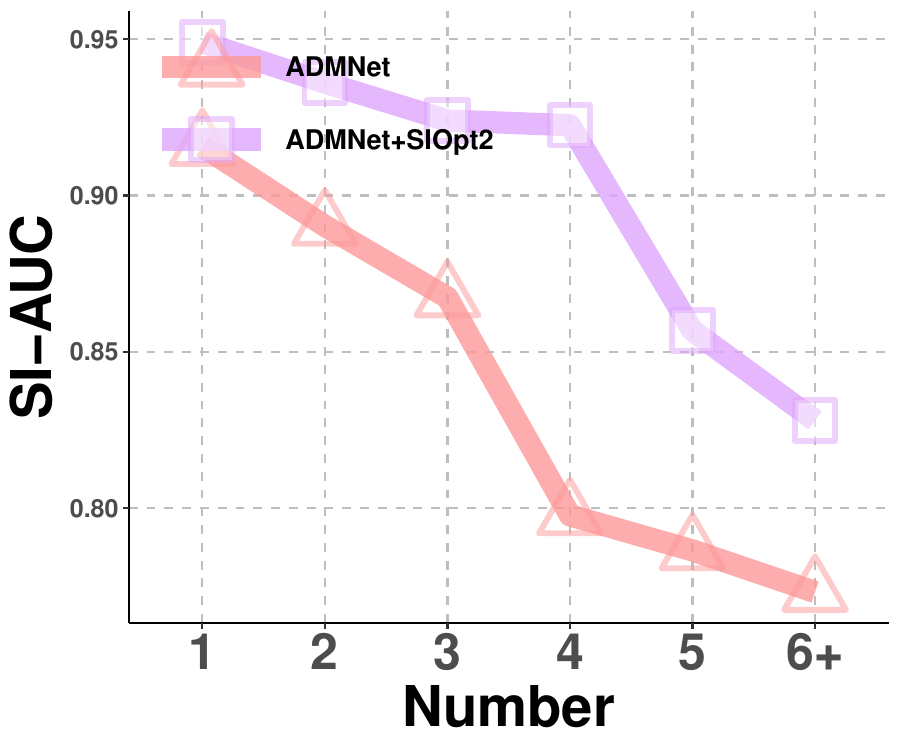}  
        \end{minipage}
        }
        \subfigure[DUTS]{   
        \begin{minipage}{0.21\linewidth}
        \includegraphics[width=\linewidth]{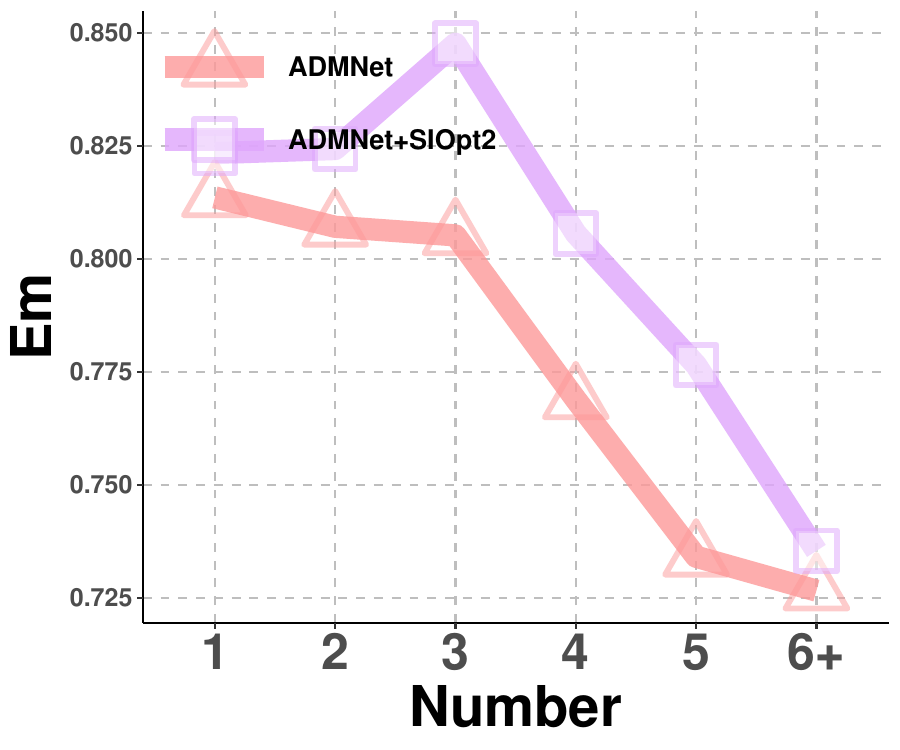}  
        \end{minipage}
        }
        \subfigure[DUTS]{   
        \begin{minipage}{0.21\linewidth}
        \includegraphics[width=\linewidth]{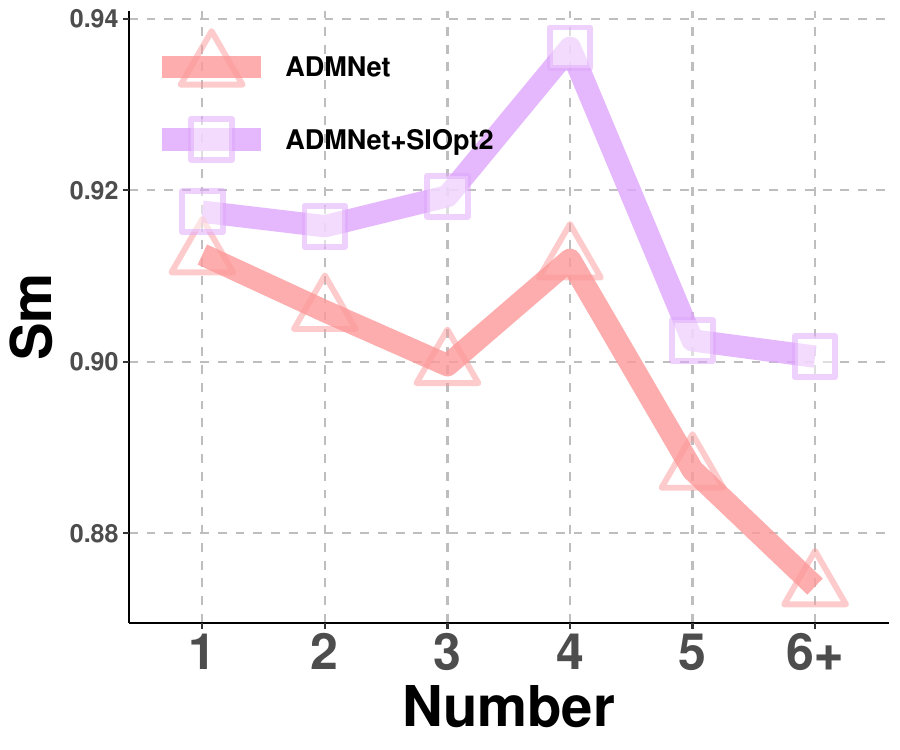}  
        \end{minipage}
        } \\
        \subfigure[MSOD]{   
        \begin{minipage}{0.21\linewidth}
        \includegraphics[width=\linewidth]{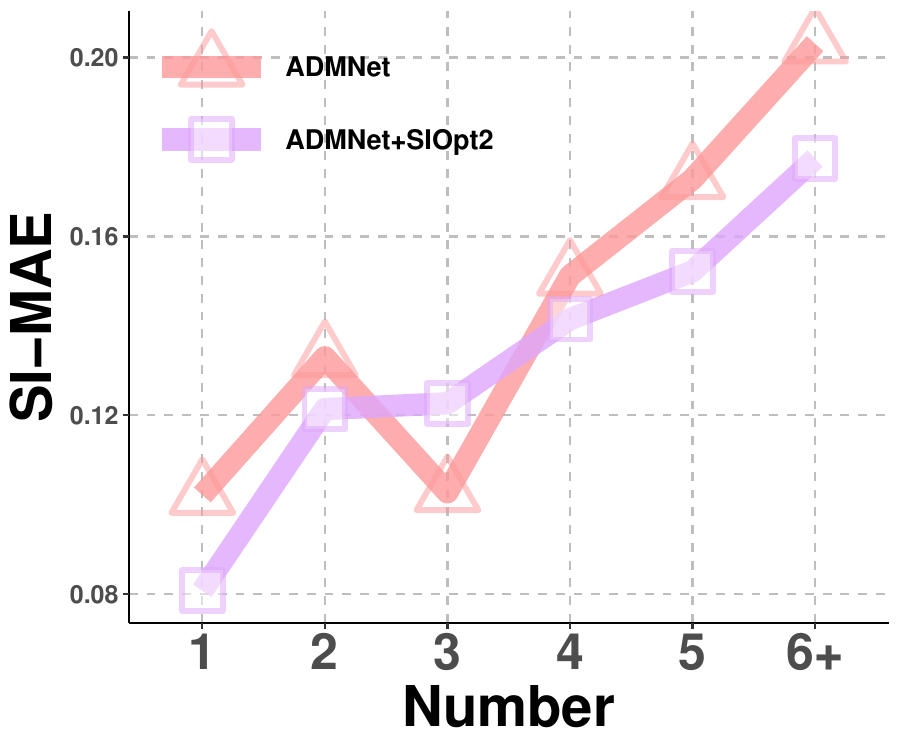}  
        \end{minipage}
        }
        \subfigure[MSOD]{   
        \begin{minipage}{0.21\linewidth}
        \includegraphics[width=\linewidth]{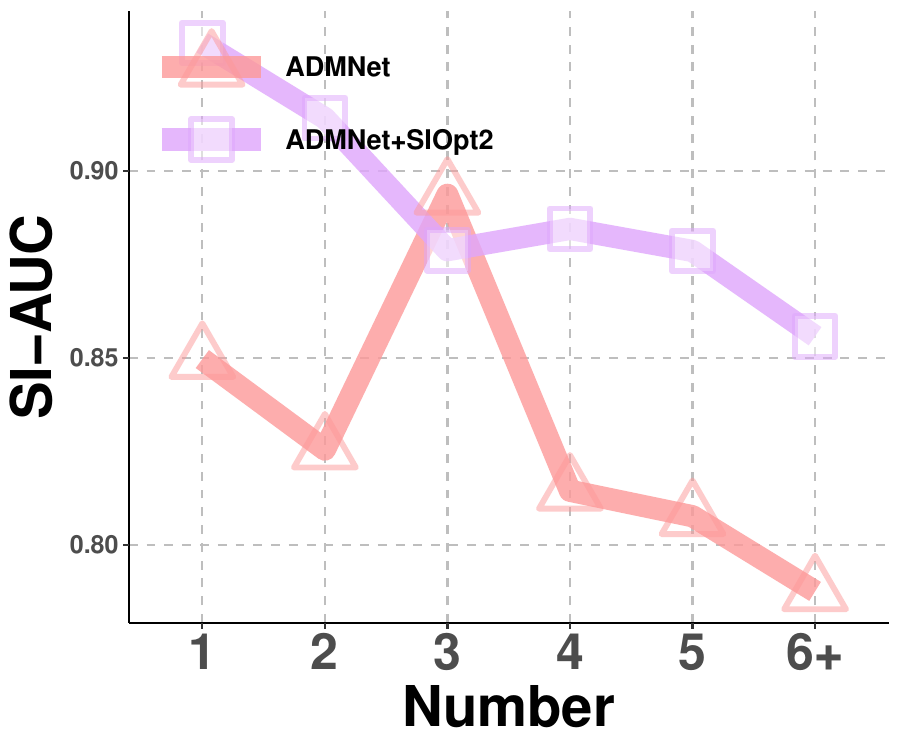}  
        \end{minipage}
        }
        \subfigure[MSOD]{   
        \begin{minipage}{0.21\linewidth}
        \includegraphics[width=\linewidth]{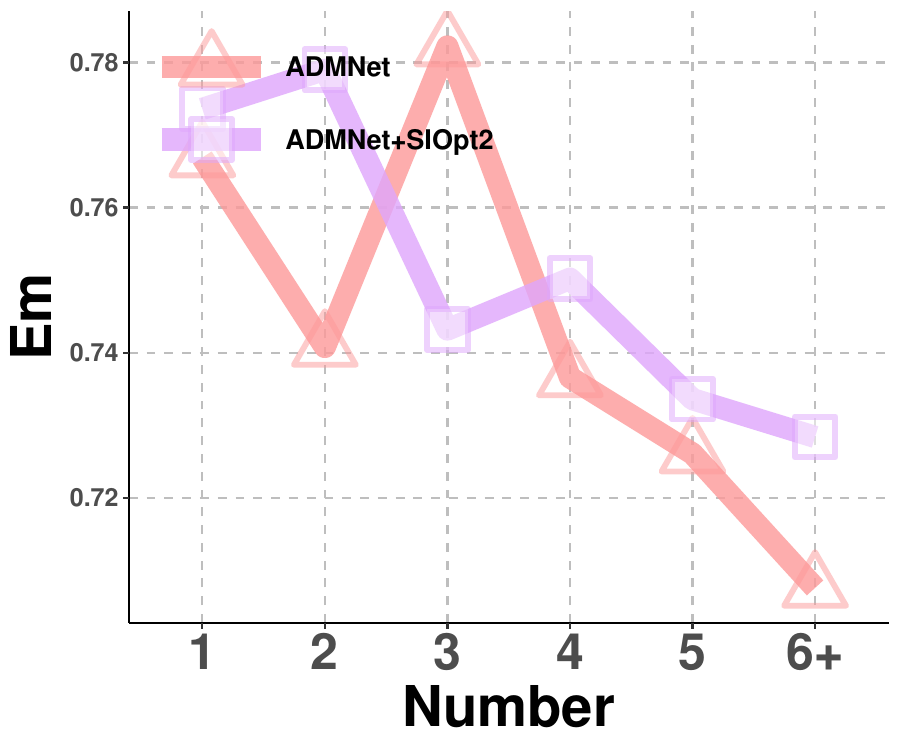}  
        \end{minipage}
        }
        \subfigure[MSOD]{   
        \begin{minipage}{0.21\linewidth}
        \includegraphics[width=\linewidth]{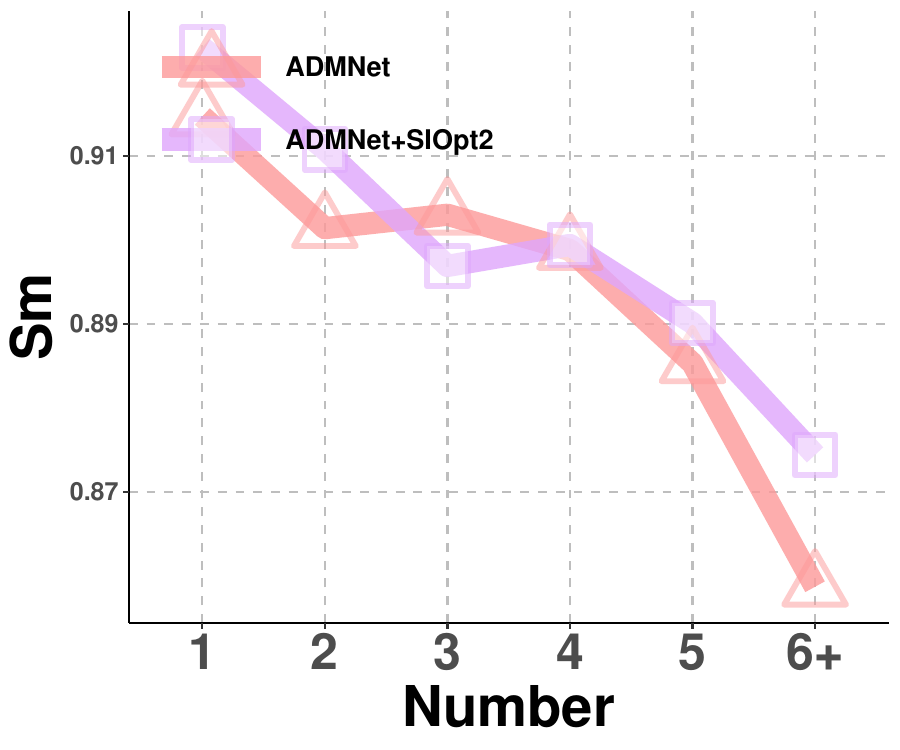}  
        \end{minipage}
        }
        \caption{Fine-grained performance comparisons under different object numbers, with ADMNet as the backbone.}    
        \label{fig:fine-analysis-ADMNet-num}    
\end{figure*}

\begin{figure*}[!t]
    \centering
    \subfigure[DUTS]{   
    \begin{minipage}{0.21\linewidth}
    \includegraphics[width=\linewidth]{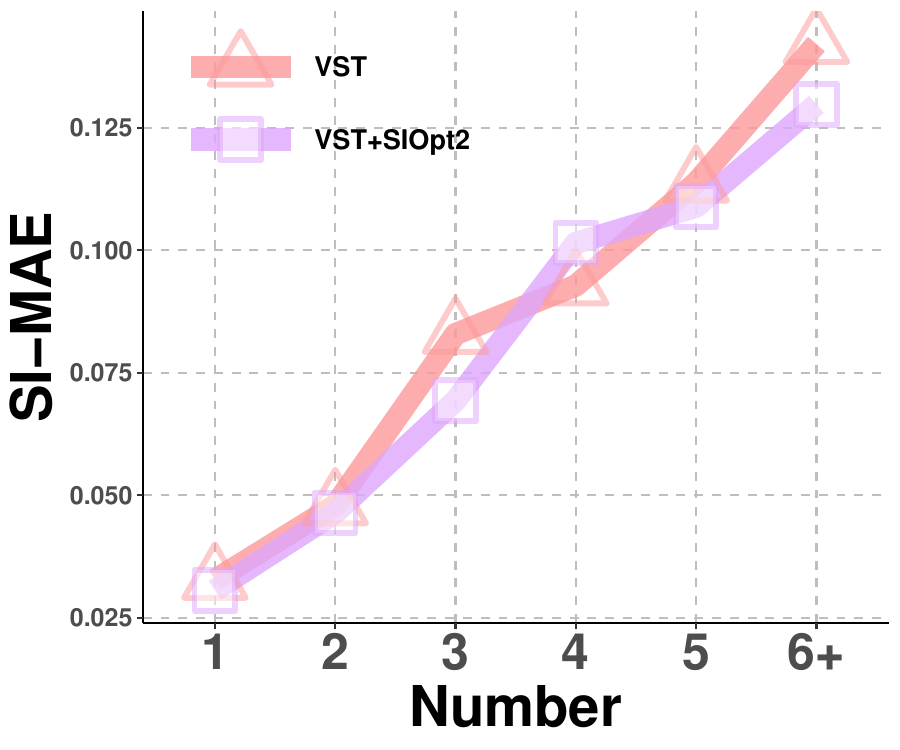}  
    \end{minipage}
    }
    \subfigure[DUTS]{   
    \begin{minipage}{0.21\linewidth}
    \includegraphics[width=\linewidth]{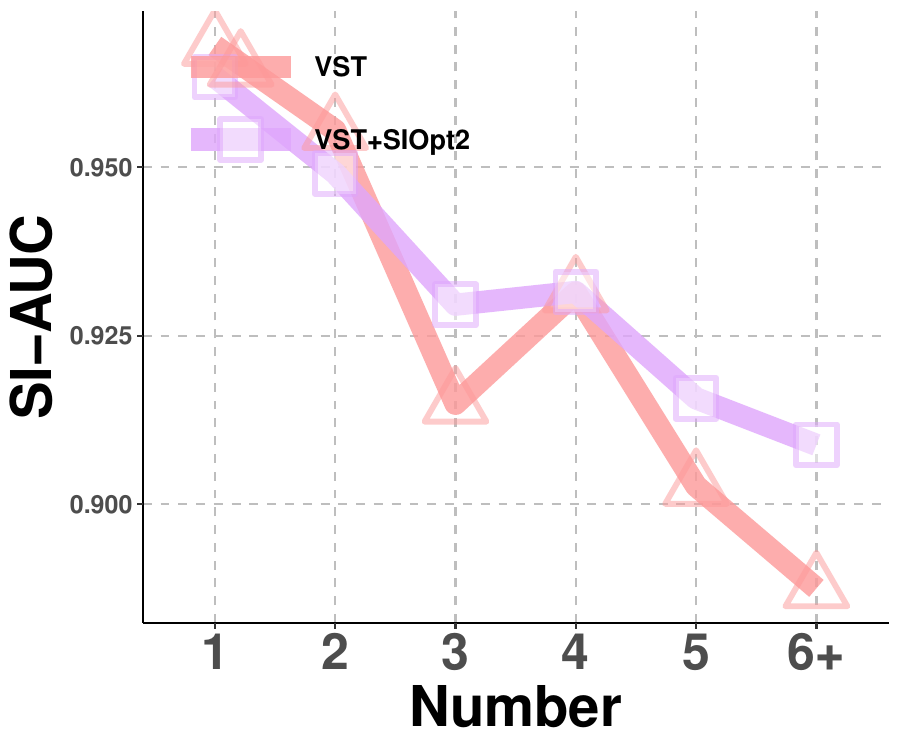}  
    \end{minipage}
    }
    \subfigure[DUTS]{   
    \begin{minipage}{0.21\linewidth}
    \includegraphics[width=\linewidth]{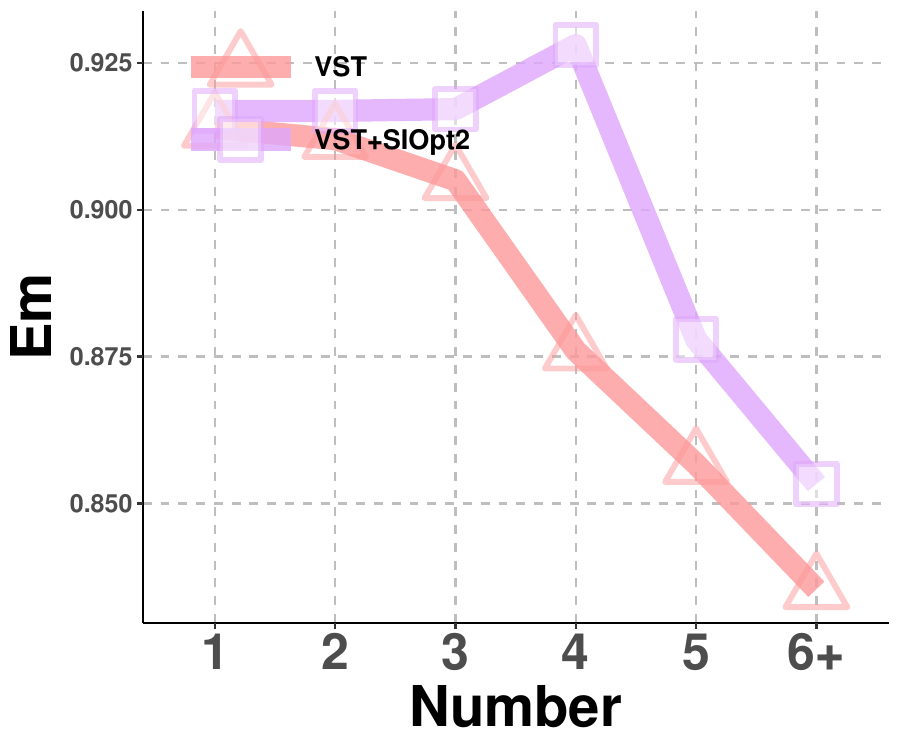}  
    \end{minipage}
    }
    \subfigure[DUTS]{   
    \begin{minipage}{0.21\linewidth}
    \includegraphics[width=\linewidth]{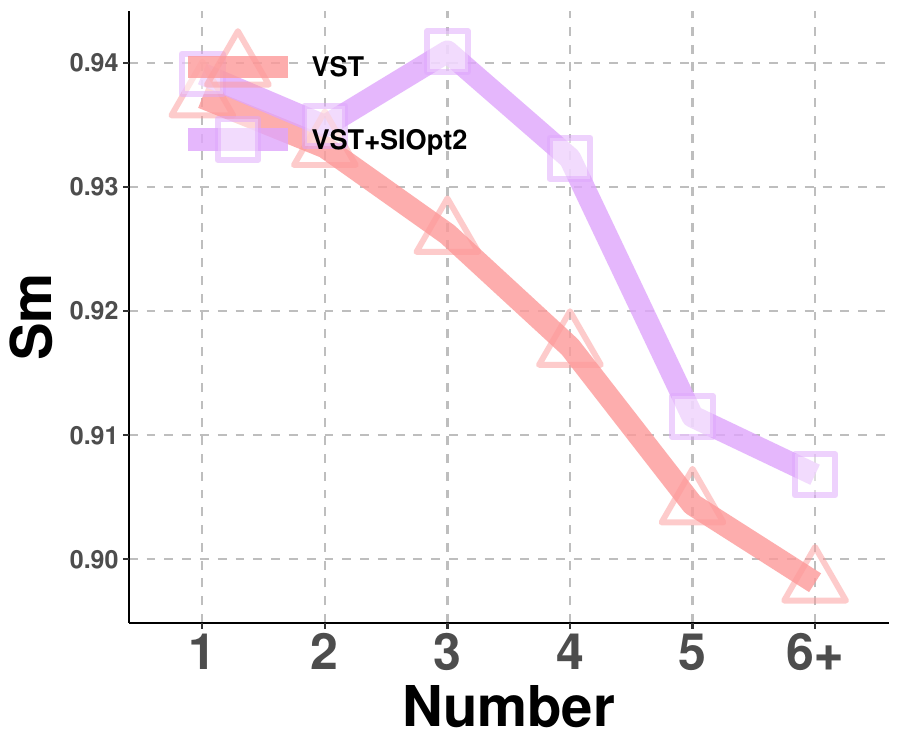}  
    \end{minipage}
    } \\
    \subfigure[MSOD]{   
    \begin{minipage}{0.21\linewidth}
    \includegraphics[width=\linewidth]{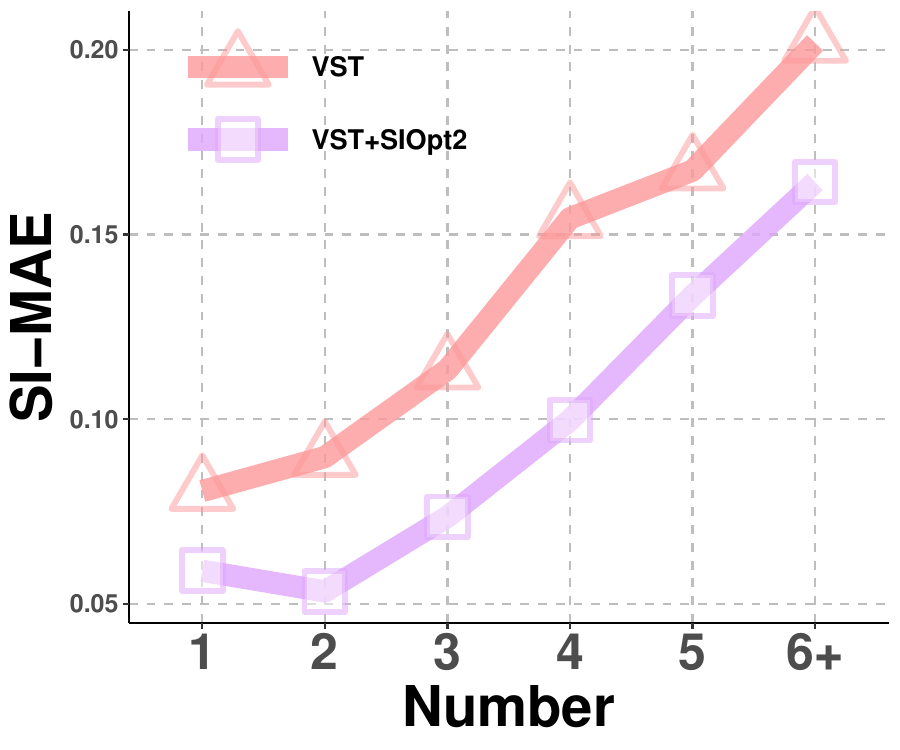}  
    \end{minipage}
    }
    \subfigure[MSOD]{   
    \begin{minipage}{0.21\linewidth}
    \includegraphics[width=\linewidth]{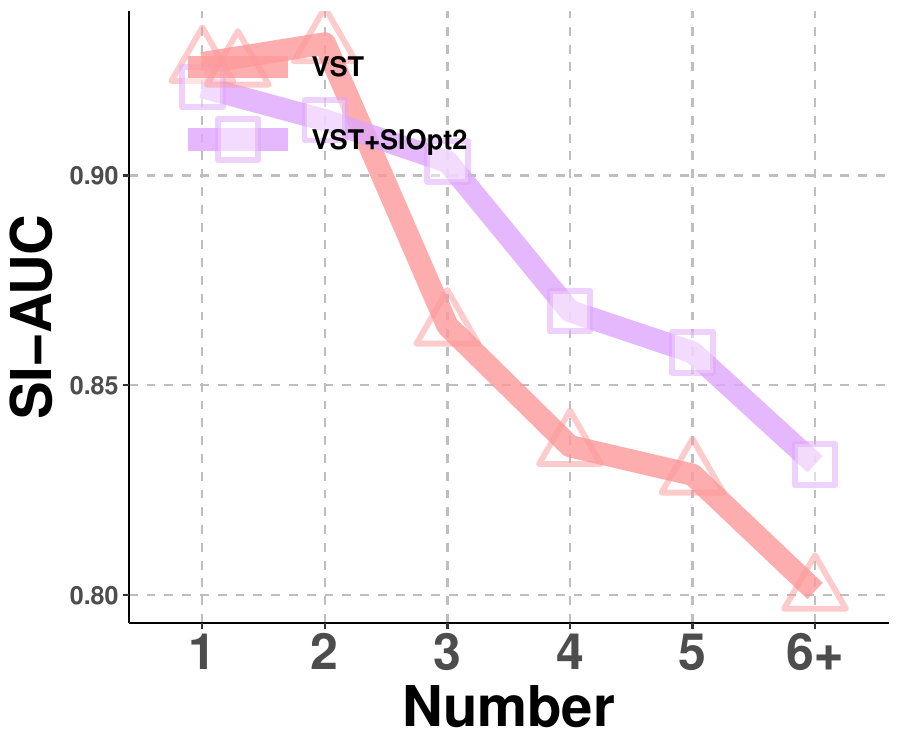}  
    \end{minipage}
    }
    \subfigure[MSOD]{   
    \begin{minipage}{0.21\linewidth}
    \includegraphics[width=\linewidth]{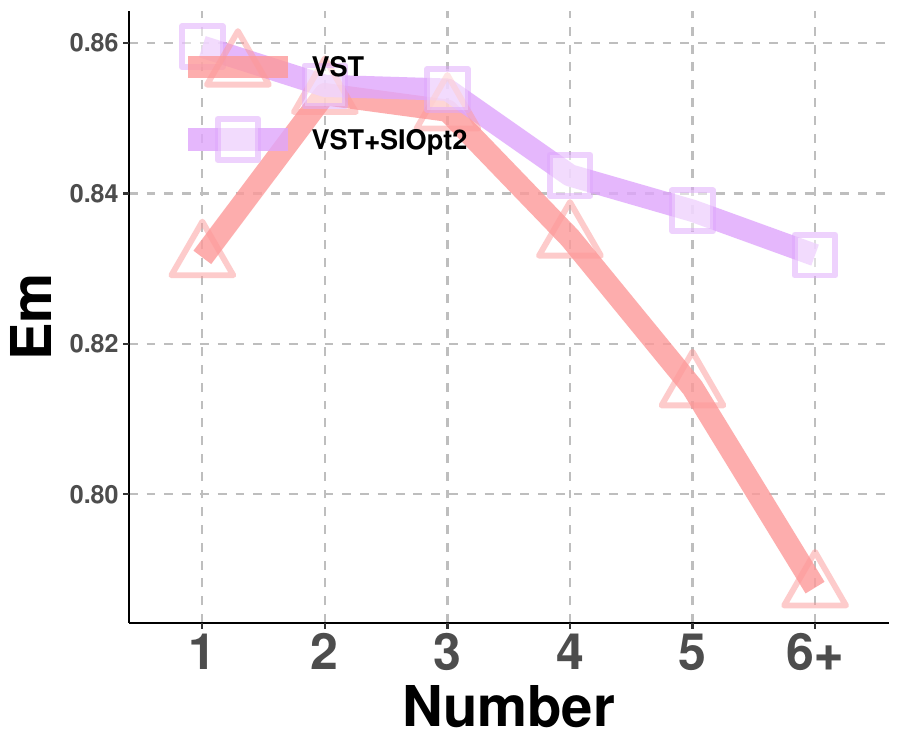}  
    \end{minipage}
    }
    \subfigure[MSOD]{   
    \begin{minipage}{0.21\linewidth}
    \includegraphics[width=\linewidth]{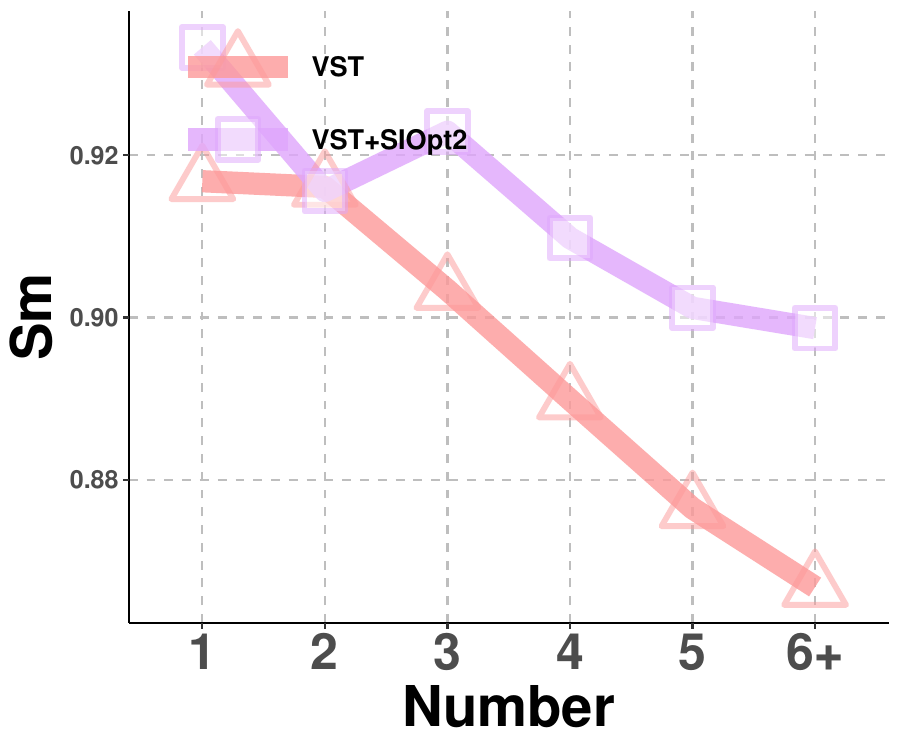}  
    \end{minipage}
    }
    \caption{Fine-grained performance comparisons under different object numbers, with VST as the backbone.}    
    \label{fig:fine-analysis-VST-num}    
\end{figure*}

\subsection{Ablation Studies of $\alpha_{SI}$} \label{ablation_appendix}
\noindent\textbf{Setups.} We conduct ablation studies on the hyperparameter $\alpha_{SI}$ to assess its impact in Eq.~\ref{eq:loss}. As previously described, $\alpha_{SI}$ controls the trade-off between the contributions of the background frame and the foreground frames in the classification-aware loss (e.g., $\BCE$). By default, we set $\alpha_{SI} = \frac{S^{back}_{M+1}}{\sum_{k=1}^{M} S_k^{fore}}$, which proportionally weights the background according to the total saliency of the foreground frames. In this ablation, we consider three variants of $\alpha_{SI}$ using EDN as the representative backbone:  

\textbf{(1)} $\alpha_{SI} = 0$, where the background frame $\boldsymbol{X}\setminus\mathcal{C}(\boldsymbol{X})$ in Eq.~\ref{20250418:eq100} is completely ignored, and only the foreground frames are used for optimization;

\textbf{(2)} $\alpha_{SI} = 1$, where the background frame is treated equally with each foreground frame; and  

\textbf{(3)} the default setting, which adaptively scales the background based on saliency proportions. 

All other experimental settings follow \cref{details_appendix}.

\noindent\textbf{Results.} The results are presented in \cref{fig:ablation_SIMAE}, \cref{tab:ablation_alpha_msod}, \cref{tab:ablation_alpha_DUTS} and \cref{tab:ablation_alpha_ECSSD}, respectively. We can see that the default of $\alpha_{SI}$ consistently delivers balanced and often superior performance across a wide range of evaluation metrics. This supports our arguments that proportionally weighting the background helps harmonize the learning signals from different regional contexts. Interestingly, while $\alpha_{SI} = 0$ sometimes yields strong results on $\F$-based metrics, it significantly underperforms on $\MAE$, $\SMAE$, and $\E_m$. The reason lies in that $\F$-based metrics emphasize the salient regions and are less sensitive to false positives in the background. In contrast, metrics like $\MAE$-based and $\E_m$ provide a more holistic assessment by accounting for both salient and non-salient areas. As a result, ignoring the background entirely (i.e., $\alpha_{SI}=0$) causes the model to overfit to the foreground regions, achieving high precision locally but poor accuracy globally. On the other hand, $\alpha_{SI} = 1$ shows better performance compared to $\alpha_{SI} = 0$, but still falls short of our adaptive strategy due to the slight penalty on the error within the background frame. These findings collectively underscore the effectiveness of our proposed $\alpha_{SI}$ formulation in achieving a well-balanced optimization objective across foreground and background regions.

\begin{table*}[!h]
    \centering
    \caption{Ablation on the parameter $\alpha_{SI}$ on MSOD (300 images). The best results are marked in bold.}
    \scalebox{0.80}{
    \begin{tabular}{c|c|ccccccc}
    \toprule
        Methods & $\alpha_{SI}$ & $\MAE \downarrow $ & $\SMAE \downarrow$ &  $\F_m^\beta \uparrow$ & $\SF_m^{\beta} \uparrow $ & $\F_{max}\uparrow$ & $\SF_{max} \uparrow$ & $\E_m \uparrow$  \\
    \midrule
        EDN (A) & ResNet50 & 0.0467 & 0.0788  & 0.7925 & 0.7635 & 0.8410 & 0.8321 & 0.8712 \\ 
        \textbf{+ $\mathsf{SIOpt1}$} (B) & 0 & 0.2340 & 0.2000  & 0.4790	& \textbf{0.8547} & 0.7695 & \textbf{0.9136} & 0.6199 \\
        \textbf{+ $\mathsf{SIOpt1}$} (C) & 1 & 0.0544 & 0.0812 & 0.7893	& 0.7850 & 0.8502 & 0.8912 & 0.8825 \\ 
        \textbf{+ $\mathsf{SIOpt1}$} (D) & $\frac{S^{fore}}{S^{back}}$ & \textbf{0.0453} & \textbf{0.0724}  & \textbf{0.8057}	& 0.7990	& \textbf{0.8555} & 0.8619 & \textbf{0.8936} \\ 
    \bottomrule
    \end{tabular}
   }
    \label{tab:ablation_alpha_msod}
\end{table*}
\begin{table*}[!h]
    \centering
    \caption{Ablation on the parameter $\alpha_{SI}$ on DUTS-TE (5,019 images). The best results are marked in bold.}
    \scalebox{0.85}{
    \begin{tabular}{c|c|ccccccc}
    \toprule
        Methods & $\alpha_{SI}$ & $\MAE \downarrow $ & $\SMAE \downarrow$  & $\F_m^\beta \uparrow$ & $\SF_m^{\beta} \uparrow $ & $\F_{max}\uparrow$ & $\SF_{max} \uparrow$ & $\E_m \uparrow$  \\
    \midrule
        EDN (A) & ResNet50 & \textbf{0.0389} & 0.0388  & \textbf{0.8288} & 0.8565 & 0.8752 & 0.9017 & 0.9033 \\
        \textbf{+ $\mathsf{SIOpt1}$} (B) & 0 & 0.2318 & 0.1975  & 0.4621 & \textbf{0.8730} & 0.7705 & \textbf{0.9235} & 0.6069 \\
        \textbf{+ $\mathsf{SIOpt1}$} (C) & 1 & 0.0489 & 0.0460  & 0.8146 & 0.8585 & 0.8807 & 0.9182 & 0.8954 \\
        \textbf{+ $\mathsf{SIOpt1}$} (D) & $\frac{S^{fore}}{S^{back}}$ & 0.0392 & \textbf{0.0381}  & 0.8260 & 0.8672 & \textbf{0.8765} & 0.9119 & \textbf{0.9072} \\ 
    \bottomrule
    \end{tabular}
   }
    \label{tab:ablation_alpha_DUTS}
\end{table*}
\begin{table*}[!h]
    \centering
    \caption{Ablation on the parameter $\alpha_{SI}$ on ECSSD (1,000 images). The best results are marked in bold.}
    \scalebox{0.85}{
    \begin{tabular}{c|c|cccccccc}    
    \toprule
        Methods & $\alpha_{SI}$ & $\MAE \downarrow $ & $\SMAE \downarrow$  & $\F_m^\beta \uparrow$ & $\SF_m^{\beta} \uparrow $ & $\F_{max}\uparrow$ & $\SF_{max} \uparrow$ & $\E_m \uparrow$  \\
    \midrule
        EDN (A) & ResNet50 & 0.0363 & 0.0271  & \textbf{0.9089} & 0.9147 & \textbf{0.9531} & 0.9560 & 0.9338 \\ 
        \textbf{+ $\mathsf{SIOpt1}$} (B) & 0 & 0.1656 & 0.1282  & 0.6557 & \textbf{0.9236} & 0.9043 & \textbf{0.9587} & 0.7431 \\
        \textbf{+ $\mathsf{SIOpt1}$} (C) & 1 & 0.0454 & 0.0340 & 0.8986 & 0.9164 & 0.9457 & 0.9556 & 0.9282 \\
        \textbf{+ $\mathsf{SIOpt1}$} (D) & $\frac{S^{fore}}{S^{back}}$ & \textbf{0.0358} & \textbf{0.0269} & 0.9084 & 0.9216 & 0.9456 & 0.9543 & \textbf{0.9375} \\
    \bottomrule
    \end{tabular}
   }
    \label{tab:ablation_alpha_ECSSD}
\end{table*}
\begin{table*}[!h]
    \centering
    \caption{Ablation on the parameter $\alpha_{SI}$ on DUT-OMRON (5,168 images). The best results are marked in bold.}
    \scalebox{0.85}{
    \begin{tabular}{c|c|ccccccc}
    \toprule
        Methods & $\alpha_{SI}$ & $\MAE \downarrow $ & $\SMAE \downarrow$  & $\F_m^\beta \uparrow$ & $\SF_m^{\beta} \uparrow $ & $\F_{max}\uparrow$ & $\SF_{max} \uparrow$ & $\E_m \uparrow$  \\
    \midrule
        EDN (A) & ResNet50 & \textbf{0.0514} & 0.0484  & 0.7529 & 0.8224 & 0.8117 & 0.8798 & 0.8514 \\
        \textbf{+ $\mathsf{SIOpt1}$} (B) & 0 & 0.2693 & 0.2305  & 0.4231 & \textbf{0.8708} & 0.7104 & \textbf{0.9231} & 0.5640 \\
        \textbf{+ $\mathsf{SIOpt1}$} (C) & 1 & 0.0642 & 0.0555 & 0.7442 & 0.8239 & \textbf{0.8190} & 0.9048 & 0.8508 \\
        \textbf{+ $\mathsf{SIOpt1}$} (D) & $\frac{S^{fore}}{S^{back}}$ & 0.0557 & \textbf{0.0483} & \textbf{0.7544} & 0.8381 & 0.8163 & 0.8912 & \textbf{0.8594} \\
    \bottomrule
    \end{tabular}
   }
    \label{tab:ablation_alpha_DUT-OMRON}
\end{table*}
\begin{table*}[!h]
    \centering
    \caption{Ablation on the parameter $\alpha_{SI}$ on HKU-IS(4,447 images). The best results are marked in bold.}
    \scalebox{0.85}{
    \begin{tabular}{c|c|cccccccc}
    \toprule
        Methods & $\alpha_{SI}$ & $\MAE \downarrow $ & $\SMAE \downarrow$ & $\F_m^\beta \uparrow$ & $\SF_m^{\beta} \uparrow $ & $\F_{max}\uparrow$ & $\SF_{max} \uparrow$ & $\E_m \uparrow$  \\
    \midrule
        EDN (A) & ResNet50 & \textbf{0.0279} & 0.0294  & \textbf{0.9004} & 0.9017 & \textbf{0.9417} & 0.9364 & 0.9429 \\
        \textbf{+ $\mathsf{SIOpt1}$} (B) & 0 & 0.1822 & 0.1431  & 0.6027 & \textbf{0.9132} & 0.8921 & \textbf{0.9541} & 0.7100 \\
        \textbf{+ $\mathsf{SIOpt1}$} (C) & 1 & 0.0383 & 0.0364 & 0.8880 & 0.9007 & 0.9377 & 0.9478 & 0.9347 \\
        \textbf{+ $\mathsf{SIOpt1}$} (D) & $\frac{S^{fore}}{S^{back}}$ & 0.0287 & \textbf{0.0289} & 0.8986 & 0.9072 & 0.9375 & 0.9443 & \textbf{0.9442} \\
    \bottomrule
    \end{tabular}
   }
    \label{tab:ablation_alpha_HKU-IS}
\end{table*}

\subsection{Time Cost Comparison} \label{time_cost}
\subsubsection{Training Time Comparison} 
We report the practical training time of different backbones when applying $\mathsf{SIOpt}$ in \cref{tab:eff}, where the results are reported as mean $\pm$ standard deviation (in seconds). Specifically, assume the time complexity per image of the baseline method (e.g., $\BCE$) is $\tilde{\mathcal{O}}(S)$. As discussed in \cref{20250411:Sec.4.2.1}, the theoretical complexity of $\mathsf{SIOpt1}$ can be nearly $\tilde{\mathcal{O}}((M+1)\bar{S})$, where $\tilde{\mathcal{O}}(\bar{S})$ represents the average computation cost per frame. This is corroborated by the empirical results in \cref{tab:eff} given that the average number of components $\bar{M}$ on the DUTS-TR training set is approximately $\bar{M} \approx 1.21$. As for $\mathsf{SIOpt2}$, although the complexity of computing $\mathcal{L}_{\SI\AUC}$ remains $\tilde{\mathcal{O}}(S)$, it typically requires combination with auxiliary losses to enhance training stability and performance (e.g., ICON) —a common practice in AUC-optimized learning \cite{MAUC,DBLP:conf/iccv/Yuan0SY21}. This combination introduces a slight increase in computational cost. In summary, the results suggest that our proposed $\mathsf{SIOpt}$ paradigm offers promising detection performance with an acceptable computational overhead.

\begin{table}[!t]
    \centering
    \caption{Practical pre-process time for each dataset.}
    \begin{tabular}{l|c|c|c}
    \toprule
        Dataset (Sample size) & Stage(a) & Stage(b) & Total \\
    \midrule
        DUTS-TE (5,019) & 474.0s & 188.2s & 658.2s \\ 
        DUT-OMRON (5,168) & 470.7s & 220.1s & 690.8s \\ 
        ECSSD (1,000) & 91.8s & 40.5s & 132.3s \\ 
        HKU-IS (4,447) & 508.8s & 190.1s & 698.9s \\ 
        XPIE (10,000) & 1432.2s & 316.5s & 1748.7s \\
    \bottomrule
    \end{tabular}
    \label{tab:pre_time}
\end{table}


\subsubsection{Pre-processing Time} It is important to note that the time required for computing the connected components of an image is excluded from the training time, as all such operations are performed during the data pre-processing stage. The pre-processing primarily involves two steps:

\textbf{Stage (a)} Identifying the connected components within each image.

\textbf{Stage (b)} Generating the corresponding weight masks based on the bounding boxes of these components.

Accordingly, we report the pre-processing time on several representative datasets in \cref{tab:pre_time}, including DUTS-TE, DUT-OMRON, ECSSD, HKU-IS and XPIE. The results demonstrate that both the connected components and bounding boxes can be computed with acceptable efficiency.

\subsubsection{Complexity of $\mathsf{SIOpt2}$ with and without PBAcc}  \label{PBAcc_appendix}
As discussed in the main paper, directly optimizing Eq.(\ref{20250323eq28}) for size-invariance presents a significant computational challenge, as each positive pixel must be paired with all negative pixels to compute \( \ell_{\SI\AUC}(f, \boldsymbol{X}_k^{\text{fore}, +}, \boldsymbol{X}^-) \). This results in a per-image complexity of approximately \(\mathcal{O}(S^2)\), which becomes prohibitive when working with high-resolution inputs, such as images of size $S = 384 \times 384$. To address this issue, we propose a Pixel-level Bipartite Acceleration (PBAcc) strategy to reduce the computational burden in \cref{20250419Sec4.2.3}. To evaluate the effectiveness of PBAcc, we has demonstrated that $\mathsf{SIOpt2}$ incurs a moderate training overhead with PBAcc in Sec.\cref{main:efficacy}. In this section, we further conduct a controlled experiment comparing $\mathsf{SIOpt2}$ with PBAcc (\textbf{w/ PBAcc}) and without PBAcc (\textbf{w/o PBAcc}). Specifically, we select several representative backbones (including PoolNet) and modify each by replacing Eq.~(\ref{20250419eq32}) with Eq.~(\ref{20250323eq28}) in \cref{loss_description}. Comprehensive experimental results are provided in \cref{tab:SIOpt2-comparability}. As shown, $\mathsf{SIOpt2}$ \textbf{w/ PBAcc} is substantially more memory-efficient than its naive counterpart (\textbf{w/o PBAcc}). For instance, on PoolNet, PBAcc reduces GPU memory consumption by approximately $15$ times, significantly enhancing the feasibility of $\mathsf{SIOpt2}$ optimization. In contrast, for other backbones such as ICON, GateNet, and EDN, even with a batch size of 1, $\mathsf{SIOpt2}$ cannot even be optimized without PBAcc due to memory constraints. These findings consistently highlight the efficiency and practicality of our proposed acceleration scheme.

\begin{table}[!htbp]
  \centering
\caption{Performance comparisons between our proposed size-invariant AUC optimization (i.e., $\mathsf{SIOpt2}$ and AUCSeg \cite{AUCSeg}). }
  \scalebox{0.85}{
    \begin{tabular}{c|c|c|cccccccccc}
    \toprule
    \multicolumn{1}{c}{Dataset} & \multicolumn{1}{|c|}{Backbone} & \multicolumn{1}{|c|}{Methods} & $\MAE \downarrow$ & $\SMAE \downarrow$ & $\AUC \uparrow $ & $\SAUC \uparrow $ & $\F_m^{\beta} \uparrow$ & $\SF_m^{\beta} \uparrow $ & $\F_{max}^{\beta} \uparrow$ & $\SF_{max}^{\beta} \uparrow$ & $\E_m \uparrow$ & $\Sm_m \uparrow$ \\
    \midrule
    \multirow{4}[4]{*}{DUTS-TE} & \multirow{2}[2]{*}{GateNet \cite{GateNet}} & AUCSeg \cite{AUCSeg} & 0.0430  & 0.0418  & \cellcolor[rgb]{ .796,  .769,  .91}0.9740  & \cellcolor[rgb]{ .796,  .769,  .91}\textbf{0.9687} & 0.8035  & 0.8326  & \cellcolor[rgb]{ .933,  .925,  .973}0.8841  & \cellcolor[rgb]{ .875,  .859,  .945}0.9100  & 0.8890  & \cellcolor[rgb]{ .886,  .871,  .949}0.9362  \\
          &       & SIOpt2 (Ours) & \cellcolor[rgb]{ .796,  .769,  .91}\textbf{0.0368} & \cellcolor[rgb]{ .796,  .769,  .91}\textbf{0.0367} & \cellcolor[rgb]{ .796,  .769,  .91}\textbf{0.9740} & \cellcolor[rgb]{ .812,  .788,  .918}0.9673  & \cellcolor[rgb]{ .804,  .776,  .914}\textbf{0.8282} & \cellcolor[rgb]{ .796,  .769,  .91}\textbf{0.8571} & \cellcolor[rgb]{ .796,  .769,  .91}\textbf{0.8956} & \cellcolor[rgb]{ .796,  .769,  .91}\textbf{0.9174} & \cellcolor[rgb]{ .804,  .776,  .914}\textbf{0.9087} & \cellcolor[rgb]{ .808,  .78,  .918}\textbf{0.9387} \\
\cmidrule{2-13}          & \multirow{2}[2]{*}{EDN \cite{EDN}} & AUCSeg \cite{AUCSeg} & \cellcolor[rgb]{ .898,  .886,  .953}0.0400  & \cellcolor[rgb]{ .929,  .922,  .969}0.0401  & 0.9596  & 0.9505  & \cellcolor[rgb]{ .796,  .769,  .91}\textbf{0.8290} & \cellcolor[rgb]{ .918,  .906,  .965}0.8428  & \cellcolor[rgb]{ .945,  .937,  .976}\textbf{0.8833} & 0.8979  & \cellcolor[rgb]{ .902,  .886,  .957}0.8989  & \cellcolor[rgb]{ .796,  .769,  .91}\textbf{0.9390} \\
          &       & SIOpt2 (Ours) & \cellcolor[rgb]{ .855,  .835,  .933}\textbf{0.0386} & \cellcolor[rgb]{ .847,  .827,  .929}\textbf{0.0380} & \cellcolor[rgb]{ .965,  .961,  .984}\textbf{0.9622} & \cellcolor[rgb]{ .957,  .953,  .98}\textbf{0.9545} & \cellcolor[rgb]{ .839,  .82,  .929}0.8237  & \cellcolor[rgb]{ .871,  .851,  .945}\textbf{0.8485} & 0.8784  & \cellcolor[rgb]{ .988,  .988,  .996}\textbf{0.8991} & \cellcolor[rgb]{ .796,  .769,  .91}\textbf{0.9091} & 0.9325  \\
    \midrule
    \multirow{4}[4]{*}{DUT-OMRON} & \multirow{2}[2]{*}{GateNet \cite{GateNet}} & AUCSeg \cite{AUCSeg} & 0.0580  & 0.0501  & \cellcolor[rgb]{ .914,  .969,  .961}0.9371  & \cellcolor[rgb]{ .906,  .969,  .957}0.9354  & 0.7211  & 0.7831  & \cellcolor[rgb]{ .98,  .992,  .992}0.8171  & \cellcolor[rgb]{ .965,  .988,  .984}0.8794  & 0.8344  & \cellcolor[rgb]{ .937,  .98,  .973}0.9235  \\
          &       & SIOpt2 (Ours) & \cellcolor[rgb]{ .847,  .945,  .929}\textbf{0.0523} & \cellcolor[rgb]{ .847,  .945,  .929}\textbf{0.0454} & \cellcolor[rgb]{ .847,  .945,  .929}\textbf{0.9417} & \cellcolor[rgb]{ .847,  .945,  .929}\textbf{0.9397} & \cellcolor[rgb]{ .898,  .965,  .953}\textbf{0.7499} & \cellcolor[rgb]{ .847,  .945,  .929}\textbf{0.8181} & \cellcolor[rgb]{ .847,  .945,  .929}\textbf{0.8302} & \cellcolor[rgb]{ .847,  .945,  .929}\textbf{0.8911} & \cellcolor[rgb]{ .867,  .953,  .937}\textbf{0.8586} & \cellcolor[rgb]{ .914,  .969,  .961}\textbf{0.9248} \\
\cmidrule{2-13}          & \multirow{2}[2]{*}{EDN \cite{EDN}} & AUCSeg \cite{AUCSeg} & \cellcolor[rgb]{ .847,  .945,  .929}\textbf{0.0522} & \cellcolor[rgb]{ .859,  .949,  .933}\textbf{0.0458} & \cellcolor[rgb]{ .984,  .996,  .996}\textbf{0.9319} & \cellcolor[rgb]{ .996,  1,  1}\textbf{0.9286} & \cellcolor[rgb]{ .847,  .945,  .929}\textbf{0.7631} & \cellcolor[rgb]{ .882,  .961,  .945}0.8106  & \cellcolor[rgb]{ .859,  .949,  .937}\textbf{0.8292} & \cellcolor[rgb]{ .973,  .992,  .988}\textbf{0.8786} & \cellcolor[rgb]{ .867,  .953,  .941}0.8583  & \cellcolor[rgb]{ .847,  .945,  .929}\textbf{0.9283} \\
          &       & SIOpt2 (Ours) & \cellcolor[rgb]{ .878,  .953,  .941}0.0534  & \cellcolor[rgb]{ .875,  .953,  .941}0.0463  & 0.9307  & 0.9282  & \cellcolor[rgb]{ .89,  .961,  .949}0.7522  & \cellcolor[rgb]{ .859,  .949,  .937}\textbf{0.8160} & 0.8148  & 0.8757  & \cellcolor[rgb]{ .847,  .945,  .929}\textbf{0.8616} & 0.9199  \\
    \midrule
    \multirow{4}[4]{*}{MSOD} & \multirow{2}[2]{*}{GateNet \cite{GateNet}} & AUCSeg \cite{AUCSeg} & \cellcolor[rgb]{ .996,  .996,  .996}0.0500  & \cellcolor[rgb]{ .996,  .992,  .992}0.0827  & \cellcolor[rgb]{ .996,  .988,  .984}0.9506  & \cellcolor[rgb]{ .996,  .996,  .996}0.9395  & \cellcolor[rgb]{ .988,  .894,  .827}0.7673  & \cellcolor[rgb]{ .988,  .898,  .835}0.7287  & \cellcolor[rgb]{ .988,  .925,  .878}0.8439  & \cellcolor[rgb]{ .996,  .988,  .98}0.8621  & \cellcolor[rgb]{ .988,  .894,  .827}0.8587  & \cellcolor[rgb]{ .996,  .996,  .992}0.9325  \\
          &       & SIOpt2 (Ours) & \cellcolor[rgb]{ .988,  .894,  .827}\textbf{0.0446} & \cellcolor[rgb]{ .988,  .894,  .827}\textbf{0.0716} & \textbf{0.9532} & \textbf{0.9403} & \textbf{0.8023} & \textbf{0.7736} & \textbf{0.8616} & \textbf{0.8682} & \textbf{0.8909} & \textbf{0.9327} \\
\cmidrule{2-13}          & \multirow{2}[2]{*}{EDN \cite{EDN}} & AUCSeg \cite{AUCSeg} & 0.0501  & 0.0832  & \cellcolor[rgb]{ .988,  .894,  .827}0.9224  & \cellcolor[rgb]{ .988,  .894,  .827}0.9006  & \cellcolor[rgb]{ .992,  .929,  .886}0.7796  & \cellcolor[rgb]{ .988,  .894,  .827}0.7262  & \cellcolor[rgb]{ .988,  .894,  .827}0.8362  & \cellcolor[rgb]{ .988,  .894,  .827}0.8118  & \cellcolor[rgb]{ .988,  .898,  .831}0.8600  & \cellcolor[rgb]{ .996,  .984,  .976}\textbf{0.9317} \\
          &       & SIOpt2 (Ours) & \cellcolor[rgb]{ .992,  .929,  .89}\textbf{0.0466} & \cellcolor[rgb]{ .988,  .91,  .851}\textbf{0.0734} & \cellcolor[rgb]{ .992,  .933,  .89}\textbf{0.9340} & \cellcolor[rgb]{ .992,  .937,  .902}\textbf{0.9181} & \cellcolor[rgb]{ .992,  .953,  .929}\textbf{0.7880} & \cellcolor[rgb]{ .996,  .969,  .953}\textbf{0.7611} & \cellcolor[rgb]{ .988,  .922,  .875}\textbf{0.8436} & \cellcolor[rgb]{ .992,  .933,  .894}\textbf{0.8344} & \cellcolor[rgb]{ .996,  .992,  .992}\textbf{0.8895} & \cellcolor[rgb]{ .988,  .894,  .827}0.9245  \\
    \midrule
    \multirow{4}[4]{*}{SOD} & \multirow{2}[2]{*}{GateNet \cite{GateNet}} & AUCSeg \cite{AUCSeg} & 0.1094  & 0.1018  & \cellcolor[rgb]{ .98,  .855,  .871}\textbf{0.9021} & \cellcolor[rgb]{ .98,  .855,  .871}\textbf{0.8938} & 0.7720  & 0.7369  & \cellcolor[rgb]{ .996,  .949,  .953}0.8621  & \cellcolor[rgb]{ .984,  .867,  .882}0.8495  & 0.7860  & \cellcolor[rgb]{ .996,  .949,  .957}0.8870  \\
          &       & SIOpt2 (Ours) & \cellcolor[rgb]{ .98,  .855,  .871}\textbf{0.0990} & \cellcolor[rgb]{ .98,  .855,  .871}\textbf{0.0925} & \cellcolor[rgb]{ .984,  .863,  .875}0.9014  & \cellcolor[rgb]{ .984,  .867,  .882}0.8916  & \cellcolor[rgb]{ .98,  .855,  .871}\textbf{0.8042} & \cellcolor[rgb]{ .98,  .855,  .871}\textbf{0.7668} & \cellcolor[rgb]{ .98,  .855,  .871}\textbf{0.8719} & \cellcolor[rgb]{ .98,  .855,  .871}\textbf{0.8524} & \cellcolor[rgb]{ .98,  .855,  .871}\textbf{0.8094} & \cellcolor[rgb]{ .98,  .855,  .871}\textbf{0.8942} \\
\cmidrule{2-13}          & \multirow{2}[2]{*}{EDN \cite{EDN}} & AUCSeg \cite{AUCSeg} & \cellcolor[rgb]{ .996,  .98,  .98}0.1081  & \cellcolor[rgb]{ .996,  .976,  .98}0.1005  & 0.8767  & 0.8627  & \cellcolor[rgb]{ .992,  .929,  .937}0.7881  & \cellcolor[rgb]{ 1,  .988,  .992}0.7395  & 0.8562  & 0.8157  & \cellcolor[rgb]{ .996,  .961,  .965}0.7926  & \cellcolor[rgb]{ .996,  .945,  .953}\textbf{0.8874} \\
          &       & SIOpt2 (Ours) & \cellcolor[rgb]{ .988,  .933,  .941}\textbf{0.1048} & \cellcolor[rgb]{ .984,  .902,  .914}\textbf{0.0957} & \textbf{0.8769} & \cellcolor[rgb]{ 1,  .992,  .992}\textbf{0.8650} & \cellcolor[rgb]{ .984,  .886,  .898}\textbf{0.7979} & \cellcolor[rgb]{ .988,  .91,  .922}\textbf{0.7559} & \cellcolor[rgb]{ .996,  .969,  .973}\textbf{0.8600} & \cellcolor[rgb]{ .996,  .973,  .976}\textbf{0.8231} & \cellcolor[rgb]{ .984,  .859,  .875}\textbf{0.8090} & 0.8830  \\
    \bottomrule
    \end{tabular}%
  \label{tab:AUCSeg}%
  }
  
\end{table}%

    \subsection{Performance Comparisons against other Pixel-level AUC-oriented Methods} \label{app:E.10}

    At the end of \cref{20250419Sec4.2.3}, we discussed the strengths and distinctions of our proposed size-invariant paradigm compared to traditional AUC-oriented approaches. To empirically validate these advantages, this section evaluates a recently proposed and effective AUC-based baseline for semantic segmentation—AUCSeg \cite{AUCSeg}—and compares its performance with our method, $\mathsf{SIOpt2}$, on RGB-based SOD tasks. 

    \noindent\textbf{Setups}. AUCSeg \cite{AUCSeg}\footnote{\url{https://github.com/boyuh/AUCSeg}} is an early work introducing AUC optimization into pixel-level long-tail semantic segmentation. It proposes a novel AUC loss function and incorporates a memory-efficient T-Memory Bank to address the high computational cost associated with large-scale datasets. To adapt AUCSeg for SOD tasks, we follow the original implementation, modifying it for binary classification, where the regularization weight $\lambda$ in AUCSeg is set to $1$. The implementation details of $\mathsf{SIOpt2}$ remain consistent with those summarized in \cref{details_appendix}. Following the RGB-based experimental settings in \cref{Experiments_appendix}, we adopt two backbone networks—GateNet \cite{GateNet} and EDN \cite{EDN}—and evaluate both AUCSeg and $\mathsf{SIOpt2}$ on four benchmarks: DUTS-TE, DUT-OMRON, MSOD, and SOD.

    \noindent \textbf{Results.} For clear comparison, we present qualitative visualizations in \cref{fig:AUCSeg} and report quantitative results in \cref{tab:AUCSeg}. Several key observations can be drawn from these results. First, $\mathsf{SIOpt2}$ consistently outperforms AUCSeg across most cases, with particularly significant improvements on the MSOD and SOD datasets, which contain multiple and imbalanced salient objects per image. Moreover, since both AUCSeg and $\mathsf{SIOpt2}$ are designed to directly optimize the AUC metric, evaluating their performance on other non-AUC metrics provides a more comprehensive view of their generalization capabilities. In this regard, $\mathsf{SIOpt2}$ demonstrates more balanced performance across various metrics compared to AUCSeg, highlighting the generalization advantages of its size-invariant design. Most importantly, the performance gap becomes sharper on size-invariant metrics. For example, using GateNet as the backbone, $\mathsf{SIOpt2}$ achieves an average improvement of $3.4\%$ on $\SF_m^{\beta}$ across the four datasets. Overall, these results consistently reinforce the superiority of our proposed size-invariant paradigm.

\stopcontents[sections]
\end{document}